\def\eqref#1{equation~\ref{#1}}
\def\1{\bm{1}}
\def\inp#1#2{\left\langle #1, #2 \right\rangle}
\def\fD{{\mathfrak{D}}}
\DeclareMathAlphabet{\mathsfit}{\encodingdefault}{\sfdefault}{m}{sl}
\SetMathAlphabet{\mathsfit}{bold}{\encodingdefault}{\sfdefault}{bx}{n}
\def\gD{{\mathcal{D}}}
\def\gF{{\mathcal{F}}}
\def\gG{{\mathcal{G}}}
\def\gO{{\mathcal{O}}}
\def\gP{{\mathcal{P}}}
\def\gR{{\mathcal{R}}}
\def\gS{{\mathcal{S}}}
\def\gT{{\mathcal{T}}}
\def\sR{{\mathbb{R}}}
\newcommand{\E}{\mathbb{E}}
\newcommand{\I}{\mathbbm{1}}
\DeclareMathOperator*{\argmin}{arg\,min}
\theoremstyle{plain}
\newtheorem{theorem}{Theorem}
\newtheorem{proposition}[theorem]{Proposition}
\newtheorem{lemma}[theorem]{Lemma}
\theoremstyle{definition}
\newtheorem{definition}[theorem]{Definition}
\newtheorem{assumption}{Assumption}
\theoremstyle{remark}
\newtheorem{remark}[theorem]{Remark}
\definecolor{darkblue}{rgb}{0,0.08,0.8}
\newcommand\numberthis{\addtocounter{equation}{1}\tag{\theequation}}
\newcommand{\Fr}{Fr\'{e}chet }
\newcommand{\RV}{\mathrm{RV}}
\newcommand{\ul}{\underline{\lambda}}
\newcommand{\uL}{\underline{L}}
\title{Follow-the-Perturbed-Leader with Fr\'{e}chet-type Tail Distributions: \\
            Optimality in Adversarial Bandits and Best-of-Both-Worlds}
\author{Jongyeong Lee$^{1}$ \\ \fontsize{10}{12}\selectfont jongyeong@snu.ac.kr \and Junya Honda$^{2,3}$ \\ \fontsize{10}{12}\selectfont honda@i.kyoto-u.ac.jp \and Shinji Ito$^{3,4}$ \\ \fontsize{10}{12}\selectfont i-shinji@nec.com \and Min-hwan Oh$^1$ \\ \fontsize{10}{12}\selectfont minoh@snu.ac.kr}
\date{\fontsize{10}{12}\selectfont
$^1$ Seoul National University
$^2$ Kyoto University
$^3$ RIKEN AIP
$^4$ NEC Corporation
}
\begin{document}
\maketitle

\begin{abstract}%
This paper studies the optimality of the Follow-the-Perturbed-Leader (FTPL) policy in both adversarial and stochastic $K$-armed bandits.
Despite the widespread use of the Follow-the-Regularized-Leader (FTRL) framework with various choices of regularization, the FTPL framework, which relies on random perturbations, has not received much attention, despite its inherent simplicity.
In adversarial bandits, there has been conjecture that FTPL could potentially achieve $\mathcal{O}(\sqrt{KT})$ regrets if perturbations follow a distribution with a Fr\'{e}chet-type tail.
Recent work by \citet{pmlr-v201-honda23a} showed that FTPL with Fr\'{e}chet distribution with shape $\alpha=2$ indeed attains this bound and, notably logarithmic regret in stochastic bandits, meaning the Best-of-Both-Worlds (BOBW) capability of FTPL.
However, this result only partly resolves the above conjecture because their analysis heavily relies on the specific form of the Fr\'{e}chet distribution with this shape.
In this paper, we establish a sufficient condition for perturbations to achieve $\mathcal{O}(\sqrt{KT})$ regrets in the adversarial setting, which covers, e.g., Fr\'{e}chet, Pareto, and Student-$t$ distributions.
We also demonstrate the BOBW achievability of FTPL with certain Fr\'{e}chet-type tail distributions. 
Our results contribute not only to resolving existing conjectures through the lens of extreme value theory but also potentially offer insights into the effect of the regularization functions in FTRL through the mapping from FTPL to FTRL.
\end{abstract}

\section{Introduction}
In the multi-armed bandit (MAB) problem, an agent plays an arm $I_t$ from a set of $K$ arms at each round $t \in [T] :=\qty{1, \ldots, T}$ over a time horizon $T$.
The agent only observes the loss $\ell_{t,I_t}$ generated from the played arm, where the loss vectors $\ell_t = \qty(\ell_{t,1}, \ldots, \ell_{t,K})^\top \in [0,1]^K$ are determined by the environment.
Given the constraints of partial feedback, the agent must handle the tradeoff between gathering information about the arms and playing arms strategically to minimize total loss.
The performance of the policy is measured by pseudo-regret, defined as $\E[\sum_t \ell_{t,I_t}] - \min_{i} \E[\sum_t \ell_{t,i}]$. 

There are two primary formulations of the environment to determine loss vectors: the stochastic setting~\citep{lai1985asymptotically, katehakis1995sequential}, and the adversarial setting~\citep{auer2002nonstochastic, audibert2009minimax}.
In the stochastic setting, the loss vector $\ell_t$ is independent and identically distributed (i.i.d.) from an unknown but fixed distribution $\gD$ over $[0,1]^K$.
Therefore, one can define the expected losses of arms $\mu_i := \E_{\ell \sim \gD}[\ell_i]$ and the optimal arm $i^* \in \argmin_{i \in [K] } \mu_i$.
The suboptimality gap of each arm is denoted by $\Delta_i = \mu_i - \mu_i^*$ and the optimal problem-dependent regret bound is known to be $\sum_{i: \Delta_i >0}\mathcal{O}\qty(\frac{\log T}{\Delta_i})$~\citep{lai1985asymptotically}, which can be achieved by several policies such as UCB~\citep{auer2002finite} and Thompson sampling~\citep{agrawal2017near,riou2020bandit}.

On the other hand, in the adversarial setting, an (adaptive) adversary determines the loss vector based on the history of the decisions, and thus specific assumptions about the loss distribution
are not
made.
In this particular environment, the optimal regret bound stands at $\mathcal{O}(\sqrt{KT})$~\citep{auer2002nonstochastic} and some Follow-The-Regularized-Leader (FTRL) policies have demonstrated their capability to attain this bound~\citep{audibert2009minimax,zimmert2019connections}.

In
practical scenarios, a priori knowledge regarding the nature of the environment is often unavailable. 
Therefore, there arises a need for an algorithm that can adeptly address both stochastic and adversarial settings at the same time.
While several policies have been proposed to tackle this problem~\citep{bubeck2012best,seldin2017improved}, the Tsallis-INF policy, based on FTRL framework, has demonstrated its effectiveness in achieving optimality in \emph{both} setting~\citep{zimmert2021tsallis}, a status referred to as the Best-of-Both-Worlds (BOBW)~\citep{bubeck2012best}.
Moreover, FTRL framework has been successfully adapted to achieve BOBW in various domains such as combinatorial semi-bandits~\citep{ito21a, pmlr-v206-tsuchiya23a}, linear bandits~\citep{lee2021achieving, dann2023blackbox}, dueling bandits~\citep{saha2022versatile} and partial monitoring~\citep{tsuchiya23a}.

However, FTRL policies require the explicit computation of the probability of arm selections per step, by solving an optimization problem in general.
In light of this limitation, the Follow-the-Perturbed-Leader (FTPL) framework, which simply selects the arm with the minimum cumulative estimated loss along with a random perturbation, has gained attention for its computational efficiency in adversarial bandits~\citep{abernethy2015fighting}, combinatorial semi-bandits~\citep{neu2015first}, and linear bandits~\citep{mcmahan2004online}.
It has been established that FTPL, when coupled with perturbations satisfying several conditions, can achieve nearly optimal $\mathcal{O}\qty(\sqrt{KT\log K})$ regret in adversarial bandits~\citep{abernethy2015fighting, NEURIPS2019_aff0a6a4}.
Subsequently, \citet{NEURIPS2019_aff0a6a4} conjectured that if FTPL achieves minimax optimality, then the corresponding perturbations should be of Fr\'{e}chet-type tail distribution.

Recently, \citet{pmlr-v201-honda23a} showed that FTPL with Fr\'{e}chet perturbations with shape $\alpha = 2$ indeed achieves $\mathcal{O}(\sqrt{KT})$ regret in adversarial bandits and $\mathcal{O}\qty(\sum_i \frac{\log T}{\Delta_i})$ regret
in stochastic bandits, highlighting the effectiveness of FTPL. 
However, their analysis heavily relies on the specific form of Fr\'{e}chet distribution, providing only a partial solution to the above conjecture.
It is noteworthy that any FTPL policy can be expressed as FTRL policy~\citep{abernethy2016perturbation}.
Therefore, investigating the properties of more general perturbations not only extends our understanding of FTPL but also can clarify the impact of regularization functions used in FTRL, where several regularization functions in FTRL beyond Tsallis entropy have been used to achieve BOBW in various settings~\citep{jin2023improved}.
\vspace{-0.1em}
\paragraph{Contribution}
This paper proves that FTPL with Fr\'{e}chet-type tail distributions satisfying some mild conditions can achieve $\mathcal{O}(\sqrt{KT})$ regret in adversarial bandits, which resolves an open question raised by \citet{NEURIPS2019_aff0a6a4} comprehensively.
Moreover, we provide a problem-dependent regret bound in stochastic bandits, demonstrating that some of them can achieve BOBW, which generalizes the results of \citet{pmlr-v201-honda23a}.
Given that our analysis is grounded in the language of extreme value theory, we expect that our analysis can provide insights for constructing an FTPL counterpart of FTRL in settings beyond the standard MAB.
\vspace{-0.6em}
\section{Preliminaries}
\vspace{-0.2em}
In this section, we formulate the problem and provide a brief overview of extreme value theory and the framework of regular variation,
based on which Fr\'echet-type tail is formulated. 
For a thorough understanding of extreme value theory and related discussions, we refer the reader to Appendix~\ref{app: EVT} and the references therein.

\subsection{Problem formulation}
At every round $t \in [T]$, the environment determines the loss vector $\ell_t = (\ell_{t,1}, \ldots, \ell_{t,K}) \in [0,1]^K$ through either a stochastic or adversarial process. 
Then the agent plays an arm $I_t$ according to their policy and observes the corresponding loss $\ell_{t,I_t}$ of the played arm.
Then, the pseudo-regret, a measure to evaluate the performance of a policy, is defined as
\begin{equation*}
    \gR(T) = \E\qty[\sum_{t=1}^T (\ell_{t,I_t} - \ell_{t,i^*})],\quad i^* \in \argmin_{i \in [K]} \E\qty[\sum_{t=1}^T \ell_{t,i}],
\end{equation*}
where $i^*$ denotes the optimal arm.
Since only partial feedback is available, FTRL and FTPL policies use an estimator $\hat{\ell}_t$ of the loss vector $\ell_t$ specified in Section~\ref{sec: FTPL}.
We denote the cumulative loss at round $t$ by $L_{t} = \sum_{s=1}^{t-1} \ell_{t}$ and its estimation by $\hat{L}_{t}=\sum_{s=1}^{t-1} \hat{\ell}_s$.
\vspace{-0.2em}
\subsection{Follow-the-Perturbed-Leader policy}\label{sec: FTPL}
In the MAB problems, FTPL is a policy that plays an arm
\begin{equation*}
    I_t \in \argmin_{i\in [K]} \qty{\hat{L}_{t,i} - \frac{r_{t,i}}{\eta_t}},
\end{equation*}
where $\eta_t$ denotes the learning rate specified later and $r_{t} = (r_{t,1}, \ldots, r_{t,K})$ denotes the random perturbation i.i.d. from a common distribution $\gD$ with a distribution function $F$.
Then, the probability of playing an arm $i \in [K]$ given $\hat{L}_t$ is written as $w_{t,i} = \phi_i(\eta_t \hat{L}_t; \gD)$, where for $\lambda \in [0, \infty)^K$ 
\begin{align*}
     \phi_{i}(\lambda ;\gD) &:= \Pr_{r_1, \ldots, r_K \sim \gD}\qty[i = \argmin_{j \in [K]} \qty{\lambda_j - r_j}] \\
     &= \int_{\nu-\min_{j \in [K]} \lambda_j}^{\infty}\prod_{j\ne i} F(z+\lambda_j)\, \dd F(z+\lambda_i) \\
    &= \int_{\nu}^{\infty}  \prod_{j\ne i} F(z+\ul_j)\, \dd F(z+\ul_i), \numberthis{\label{eq: def_phi}}
\end{align*}
where $\nu$ denotes the left endpoint of the support of $F$.
Here, underlines denote the gap of a vector from its minimum, i.e., $\ul = \lambda - \1 \min_{i\in [K]} \lambda_i$ for all-one vector $\1$.

For the unbiased loss estimator, FTRL policies often employ an importance-weighted estimator, $\hat{\ell}_t = (\ell_{t,I_t}/w_{t,I_t}) e_{I_t}$, where $w_{t,I_t}$ is explicitly computed.
On the other hand in FTPL, we use an unbiased estimator $\widehat{w^{-1}_{t,i}}$ of $w^{-1}_{t,i}$ by geometric resampling~\citep{neu2016importance}, whose pseudo-code is given in Lines 6--10 of Algorithm~\ref{alg: FTPL}.
Simply speaking, the process involves repeated samplings of perturbations $r'$ until $\argmin_i \qty{\hat{L}_{t,i} - r_{t,i}'/\eta_t}$ coincides with $I_t$ and $\widehat{w^{-1}_{t,i}}$ is then set as the number of resampling.
For more details, refer to \citet{neu2016importance} and \citet{pmlr-v201-honda23a}.
\begin{algorithm}[t]
   \caption{FTPL with geometric resampling}
   \label{alg: FTPL}
   \DontPrintSemicolon
   \SetAlgoLined
   \SetKwInOut{Initialization}{Initialization}
   \Initialization{$\hat{L}_1 = 0$ and set distribution $\gD$}
   \For{$t=1$ to $T$}{
      Sample $r_t = (r_{t,1}, \ldots, r_{t,K})$ i.i.d. from $\gD$.
      
      Play $I_t \in \argmin_{i\in[K]} \left\{\hat{L}_{t,i} - \frac{r_{t,i}}{\eta_t}\right\}$.
      
      Observe $\ell_{t,I_t}$ and set $m=0$. \\
      \Repeat{$I_t = \argmin_{i\in[K]} \left\{\hat{L}_{t,i} - \frac{r_{i}'}{\eta_t}\right\}$}{
         $m:= m+1$.  \tcp*{Geometric resampling}
         
         Sample $r' = (r_1', \ldots, r_K')$ i.i.d. from $\gD$.
      }
      Set $\widehat{w_{t,I_t}^{-1}} := m$ and $\hat{L}_{t+1} := \hat{L}_t + \ell_{t,I_t} \widehat{w_{t,I_t}^{-1}} e_{I_t}$.
   }
\end{algorithm}
\vspace{-0.2em}
\subsection{\Fr maximum domain of attraction}
In the adversarial setting, it has been conjectured that FTPL might achieve $\mathcal{O}(\sqrt{KT})$ regrets if perturbations follow a distribution with a Fr\'{e}chet-type tail~\citep{NEURIPS2019_aff0a6a4}.
In the following, we explain the terminology and basic concepts related to this description. 

Extreme value theory is a branch of statistics to study the distributions of maxima of random variables. 
One of the most important results in this theory is that the distribution of the maxima of i.i.d. random variables can \emph{only} converge in distribution to three types of extreme value distributions: Fr\'{e}chet, Gumbel, and Weibull, after appropriate normalization~\citep{fisher1928limiting, gnedenko1943distribution}. 
Among these, a distribution is called Fr\'{e}chet-type if its limiting distribution is Fr\'{e}chet distribution. 
The family of Fr\'{e}chet-type distributions is called Fr\'{e}chet maximum domain of attraction (FMDA), and its representation is known to be associated with the notion of regular variation~\citep{embrechts1997modelling,haan2006extreme, resnick2007heavy} defined as follows.
\vspace{-0.2em}
\begin{definition}[Regular variation~\citep{haan2006extreme}]
    An eventually positive function $g$ is called \emph{regularly varying} at infinity with index $\alpha$, $g\in \RV_{\alpha}$ if \vspace{-0.2em} 
    \begin{equation*}
        \lim_{x \to \infty} \frac{g(tx)}{g(x)} = t^{\alpha} , \quad \forall t >0. \vspace{-0.2em}
    \end{equation*}
    If $g(x)$ is regularly varying with index $0$, then $g$ is called \emph{slowly varying}.
\end{definition}
From the definition, one can see that any regularly varying function with index $\alpha$ can be written with a product of a slowly varying function and $x^{\alpha}$, i.e., if $g \in \RV_\alpha$, then $g = x^\alpha S(x)$ for some $S \in \RV_0$ and all $x>0$.
A necessary and sufficient condition for a distribution to belong to FMDA is known to be expressed in terms of regular variation as shown below. \vspace{-0.2em}
\begin{proposition}[\citet{gnedenko1943distribution, resnick2008extreme}]\label{def: FMDA}
    A distribution $\gD_\alpha$ belongs to FMDA with index $\alpha >0$ if and only if its right endpoint is infinite and the tail function, $1-F$, is regularly varying at infinity with index $-\alpha$, i.e., $1-F \in \RV_{-\alpha}$. 
    In this case, \vspace{-0.2em}
    \begin{equation}\label{eq: limiting}
        F^n(a_n x) \to  \begin{cases}
            \exp(-x^{-\alpha}), &x \geq 0, \\
            0, & x <0,
        \end{cases} \quad n \to \infty,
    \end{equation} 
    where $a_n = \inf\qty{x: F(x) \geq 1-\frac{1}{n}}$. 
\end{proposition}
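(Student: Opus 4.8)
The plan is to derive both implications from the elementary equivalence: if $1-F(u_n)\to 0$, then $F^n(u_n)\to c\in(0,1]$ \emph{if and only if} $n(1-F(u_n))\to -\log c$. This follows by writing $-n\log F(u_n)=-n\log\big(1-(1-F(u_n))\big)=n(1-F(u_n))\big(1+o(1)\big)$. Consequently, once we fix $a_n=\inf\{x:F(x)\ge 1-\tfrac1n\}$, establishing \eqref{eq: limiting} for $x>0$ reduces to proving $n(1-F(a_n x))\to x^{-\alpha}$ for every $x>0$. The case $x<0$ is immediate: then $a_n x\to-\infty$ (once we know $a_n\to\infty$), so $F(a_n x)\to 0$ and hence $F^n(a_n x)\le (1/2)^n\to 0$ for large $n$.

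For the ``if'' direction, assume the right endpoint of $F$ is infinite and $1-F\in\RV_{-\alpha}$. Then $1/(1-F)\in\RV_{\alpha}$, and since this function is nondecreasing with positive index, Karamata's theorem on generalized inverses gives that $U(t):=\inf\{x:1/(1-F(x))\ge t\}=\inf\{x:F(x)\ge 1-\tfrac1t\}$ lies in $\RV_{1/\alpha}$; in particular $a_n=U(n)\to\infty$ and $n(1-F(a_n))\to 1$. Now factor $n(1-F(a_n x))=\dfrac{1-F(a_n x)}{1-F(a_n)}\cdot n(1-F(a_n))$: the second factor tends to $1$, and the first tends to $x^{-\alpha}$ by $1-F\in\RV_{-\alpha}$ applied along $a_n\to\infty$. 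This yields $n(1-F(a_n x))\to x^{-\alpha}$, hence \eqref{eq: limiting}.

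For the ``only if'' direction, suppose $F^n(c_n x+b_n)\to\exp(-x^{-\alpha})$ for some norming sequences $c_n>0$, $b_n$. Since the limit is strictly below $1$ at every finite argument, $F$ cannot have a finite right endpoint. The convergence-to-types theorem, combined with the fact that the Fréchet law is invariant only under pure rescaling, forces $b_n/c_n\to 0$ and $c_n$ asymptotically equivalent to $a_n$, so the relation becomes $n(1-F(a_n x))\to x^{-\alpha}$ for all $x>0$, and in particular $a_n\to\infty$. To upgrade this to $1-F\in\RV_{-\alpha}$: first observe $a_{n+1}/a_n\to 1$ (if $\limsup a_{n+1}/a_n=L>1$, then for $L'<L$ and infinitely many $n$, monotonicity gives $n(1-F(a_{n+1}))\le n(1-F(a_n L'))$, and letting $n\to\infty$ yields $1\le (L')^{-\alpha}$, i.e.\ $L'\le 1$, a contradiction; the liminf bound is analogous). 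Then, given $t\to\infty$, choose $n=n(t)$ with $a_n\le t<a_{n+1}$; monotonicity of $1-F$ sandwiches $\dfrac{1-F(tx)}{1-F(t)}$ between $\dfrac{1-F(a_{n+1}x)}{1-F(a_n)}$ and $\dfrac{1-F(a_n x)}{1-F(a_{n+1})}$, both of which converge to $x^{-\alpha}$ because $a_{n+1}/a_n\to 1$ and $n(1-F(a_n y))\to y^{-\alpha}$. Hence $1-F\in\RV_{-\alpha}$.

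The main obstacle is the last step of the ``only if'' part: eliminating the centering sequence $b_n$ via the convergence-to-types argument, and then transferring the sequential limit $n(1-F(a_n x))\to x^{-\alpha}$ to the continuous regular-variation statement by interpolating between consecutive quantiles $a_n,a_{n+1}$. The monotonicity of $1-F$ is exactly what makes the sandwiching valid, and controlling the gap ratio $a_{n+1}/a_n$ is the delicate point; everything else --- the ``if'' direction and the $x<0$ case --- is routine once Karamata's inverse theorem for regularly varying functions is invoked.
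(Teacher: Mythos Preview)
The paper does not prove this proposition; it is stated with attribution to \citet{gnedenko1943distribution} and \citet{resnick2008extreme} and used as background. Your outline is the standard textbook route (cf.\ Resnick, \emph{Extreme Values, Regular Variation, and Point Processes}, Proposition~1.11), and the ``if'' direction together with the $x<0$ case is handled correctly.

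In the ``only if'' direction there is a genuine gap at the step you yourself flag as the main obstacle. The sentence ``the convergence-to-types theorem \ldots\ forces $b_n/c_n\to 0$ and $c_n$ asymptotically equivalent to $a_n$'' is circular as written: convergence-to-types compares two normings that \emph{each} yield a nondegenerate limit, but you have not shown that the quantile pair $(a_n,0)$ is one---that is exactly \eqref{eq: limiting}, which you may invoke only after $1-F\in\RV_{-\alpha}$ has been established. The standard repair avoids $a_n$ at this stage: apply convergence-to-types to the given norming at scales $n$ and $\lfloor nt\rfloor$, using $F^{\lfloor nt\rfloor}(c_nx+b_n)\to\Phi_\alpha^{\,t}(x)=\Phi_\alpha(t^{-1/\alpha}x)$, to obtain $c_{\lfloor nt\rfloor}/c_n\to t^{1/\alpha}$ and $(b_{\lfloor nt\rfloor}-b_n)/c_n\to 0$. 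The first relation shows $c_n\in\RV_{1/\alpha}$ (hence $c_n\to\infty$); iterating the second along dyadic scales then forces $b_n/c_n\to 0$. With that in hand, the tail-quantile function satisfies $U(n)\sim c_n$, so $a_n\sim c_n$, and your interpolation argument---now legitimately using the monotone sequence $a_n$---goes through to give $1-F\in\RV_{-\alpha}$.
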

Let $\fD_\alpha^{\text{all}}$ denote the class of FMDA with index $\alpha>0$.
From its definition, if $\gD \in \fD_{\alpha}^{\text{all}}$, we can express the tail distribution with $S_F \in \RV_0$ as
\begin{equation}\label{eq: tail and S}
    1-F(x) = x^{-\alpha} S_F(x), \quad \forall x > 0.
\end{equation}
In other words, a Fr\'{e}chet-type tail distribution can be characterized by a slowly varying function $S_F$ and an index $\alpha$, where Table~\ref{tab: fr} provides examples of well-known distributions and their associated slowly varying functions.

\begin{table}[t]
    \caption{Some well-known Fr\'{e}chet-type tail distributions with parameters $\alpha,\beta,m,n>0$.
    $S_F(x)$ denotes the corresponding slowly varying function that characterizes the tail distribution.
    More examples such as LogGamma can be found in \citet[Table 2.1]{beirlant2006statistics}.
    Here, $B(a,b)$ and $B(x;a,b)$ denote the Beta function and incomplete Beta function, respectively.}
    \vspace{0.3em}
    \centering
    \resizebox{\textwidth}{!}{
    \begin{tabular}{c|ccc|cc}
        Distribution ($\gD$)  &  $1-F(x)$ & $f(x)$ & $S_F(x)$ & Support &  Index \\
        \hline
        \Fr ($\gF_\alpha$) & $1-e^{-x^{-\alpha}}$ & $\alpha \frac{e^{-x^{-\alpha}}}{x^{\alpha+1}}$ & $x^{\alpha}(1-e^{-x^{-\alpha}})$&  $x >0$ & $\alpha$\\
        Pareto ($\gP_\alpha$) & $x^{-\alpha}$ & $\frac{\alpha}{x^{\alpha+1}}$ &  $1$ & $x \geq 1$ &$\alpha$ \\
        Generalized Pareto ($\gG\gP_{\alpha,\beta}$) & $\qty(1+ \frac{x}{\alpha \beta})^{-\alpha}$ & $\frac{1}{\beta}\qty(1+ \frac{x}{\alpha \beta})^{-(\alpha+1)}$ &  $(\alpha \beta)^{\alpha} \qty(1+\frac{\alpha \beta}{x})^{-\alpha}$ &$x\geq 0$ & $\alpha$ \\ 
        Student-t ($\gT_n$) & $\int_{-\infty}^x \frac{\qty(1+t^2/n)^{-\frac{n+1}{2}}}{\sqrt{n} B(n/2,1/2)} \dd t$ & $\frac{1}{\sqrt{n} B(n/2,1/2)}\qty(1+\frac{x^2}{n})^{-\frac{n+1}{2}}$ & $\frac{\Gamma((n+1)/2)}{\sqrt{\pi n} \Gamma(n/2)} n^{\frac{n-1}{2}}\qty(1-\frac{n^2(n+1)}{2(n+2)}x^{-2}+o(x^{-2}))$ & $\sR$ & $n$ \\
        Snedecor's F ($\gS_{m,n}$) & $1-\frac{B\qty(\frac{mx}{mx+n};\frac{m}{2}, \frac{n}{2})}{B\qty(\frac{m}{2}, \frac{n}{2})}$ & $\frac{\qty(m/n)^{\frac{m}{2}}}{B\qty(\frac{m}{2}, \frac{n}{2})}  x^{\frac{m}{2}-1} \qty(1+\frac{m}{n}x)^{-\frac{m+n}{2}}$ & $\frac{\qty(m/n)^{\frac{m}{2}}}{B\qty(\frac{m}{2}, \frac{n}{2})}\qty(\frac{m}{n} + \frac{1}{x})^{-\frac{m+n}{2}}(1+o(1)) $ & $x>0$ & $\frac{n}{2}$ \\
        \hline
    \end{tabular}
    }
    \vspace{-1em}
    \label{tab: fr}
\end{table}

Notably, $\fD_{\alpha}^{\text{all}}$ encompasses exceptionally diverse distributions since its definition generally allows for any slowly varying functions, even those that are discontinuous.
In this paper, we consider a set of Fr\'{e}chet-type distributions denoted by $\fD_\alpha \subset \fD_\alpha^{\text{all}}$, which is defined as follows.
\vspace{-0.5em}
\begin{definition}\label{def: Fr set}
    $\fD_\alpha$ is a set of distributions that belong to FMDA with index
    $\alpha>0$
    satisfying the following assumptions. 
\vspace{-0.6em}
\begin{assumption}\label{asm: decreasing f}
    $F(x)$ has a density function $f(x)$ that is decreasing in $x\geq z_0$ for some $z_0> \nu$.
\end{assumption}
\begin{assumption}\label{asm: bounded hazard}
    $\fD_\alpha$ is supported over $[\nu,\infty)$ for some $\nu\ge 0$ and the hazard function $\frac{f(x)}{1-F(x)}$ is bounded.
\end{assumption}
\begin{assumption}\label{asm: bounded block}
    There exist positive constants $M=M(\gD_\alpha)$ and $m=m(\gD_\alpha)$ satisfying \vspace{-0.3em}
    \begin{align} 
         \E_{X_1,\dots,X_k\sim \gD_{\alpha}}\qty[\max_{i \in  [k]}X_i/a_k] &\leq M  \label{asm: block upper} \\
       \E_{X_1,\dots,X_k\sim \gD_{\alpha}}\qty[\frac{1}{\max_{i \in  [k]}X_i/a_k}] &\leq m \label{asm: block constant} 
       \vspace{-0.2em}
    \end{align}
    for $a_k=\inf\qty{x: F(x) \geq 1-1/k}$ and and it satisfies $A_l k^{\frac{1}{\alpha}} \leq a_k \leq A_u k^{\frac{1}{\alpha}}$ for some positive constants $A_l, A_u$.
\end{assumption}
\begin{assumption}\label{asm: derivative of f}
    $\lim_{x\to \infty} \frac{-xf'(x)}{f(x)} = \alpha+1$ and $\frac{-f'(x)}{f(x)}$ is bounded almost everywhere on $[\nu,\infty)$.
\end{assumption}
\begin{assumption}\label{asm: I is increasing}
    $\frac{f(x)}{F(x)}$ is monotonically decreasing in $x \geq \nu$.
\end{assumption}
\vspace{-1em}
\end{definition}
These assumptions offer easy-to-check \emph{sufficient} conditions for perturbations to achieve the optimal order and verifying necessary conditions would be interesting for future work.
In the following, we explain the implication of the assumptions in Definition~\ref{def: Fr set}.

Assumption \ref{asm: decreasing f} states that the density eventually monotonically decreases and does not have a fluctuated tail.
This is known as a sufficient condition that
$\gD_\alpha \in \fD_{\alpha}^{\text{all}}$
satisfies
von Mises condition
(\citealp{von1936distribution}, see also \citealp[Proposition 1.15]{resnick2008extreme}),
which is given by \vspace{-0.2em}
    \begin{equation}\label{eq: vMC}
        \lim_{x\to \infty} \frac{xf(x)}{1-F(x)} = \alpha.
    \end{equation}
The von Mises condition is known to play an important role in the analysis of the FMDA.
For example, it is known that
any $\gD_\alpha \in \fD_{\alpha}^{\text{all}}$ (possibly without a density)
is tail-equivalent to some distribution in $\fD_{\alpha}^{\text{all}}$ satisfying von Mises condition \citep[Corollary 3.3.8]{embrechts1997modelling}.
Here, a distribution $F(x)$ is called to be tail-equivalent to $F^*(x)$
if they have the same right endpoint $x_r$ and $\lim_{x\to x_r}(1-F(x))/(1-F^*(x))=c$ for some constant $c>0$.

In Assumption~\ref{asm: bounded hazard}, the bounded hazard function is also assumed in the existing analysis of near-optimality in adversarial bandits~\citep{abernethy2015fighting, NEURIPS2019_aff0a6a4}.
The assumption of the nonnegative left-endpoint $\nu\ge 0$ is mainly for notational simplicity.
This is because $S_F(x)$ in (\ref{eq: tail and S}) is not well-defined for $x\le 0$.
Although the requirements in Assumption~\ref{asm: bounded hazard} are not satisfied for some distributions such as $t$-distribution, we can easily construct a tail-equivalent distribution satisfying the assumption by considering the truncated version $F^*$ of $F$ given by
\vspace{-0.1em}
\begin{equation}\label{eq: conditioning trick}
   F^*(x) = \Pr[X \geq 1+ x | X>1] = \frac{F(x+1)-F(1)}{1-F(1)}, \quad x >0, \vspace{-0.1em}
\end{equation}
which is also considered in~\citet[Appendix B.2]{abernethy2015fighting}.

\begin{table}[t]
    \caption{Verification of distributions in (\ref{eq: conditioning trick}) whether satisfying the assumptions.
    $\checkmark$ and $\cross$ denote whether the distribution satisfies the assumption or not, respectively, regardless of the parameters. $(*)$ denotes that  the truncated distribution in (\ref{eq: conditioning trick}) satisfies the assumption.}
    \vspace{0.3em}
    \centering
   \begin{tabular}{c|cccccc}
          Distribution ($\gD$) & $\gF_\alpha$ & $\gP_\alpha$ & $\gG\gP_{\alpha,\beta}$ & $\gT_n$  & $\gS_{m,n}$ \\
          \hline 
          Assumption~\ref{asm: decreasing f} & $\checkmark$ &  $\checkmark$ &  $\checkmark$  &  $\checkmark$ &   $\checkmark$ \\
          Assumption~\ref{asm: bounded hazard} & $\checkmark$ & $\checkmark$ & $\checkmark$& $\cross$ (*) &  $\cross$ (*)  \\
          Assumption~\ref{asm: bounded block} & $\checkmark$ &  $\checkmark$ &  $\checkmark$  &   $\cross$ (*) &  $\checkmark$ \\
          Assumption~\ref{asm: derivative of f} & $\checkmark$ & $\checkmark$ & $\checkmark$ & $\checkmark$ &$\checkmark$ \\
          Assumption~\ref{asm: I is increasing} &  $\checkmark$ &  $\checkmark$ &  $\checkmark$  & $\cross$ (*) & $\checkmark$ \\
          \hline
    \end{tabular}
    \vspace{-0.3em}
    \label{tab: asm check}
\end{table}

Eq.~(\ref{asm: block constant}) in Assumption~\ref{asm: bounded block} is the term that directly appears in the regret bound.
As described in Proposition~\ref{def: FMDA}, $\max_{i\in[k]}X_i/a_k$ converges weakly to \Fr distribution with shape $\alpha$, which satisfies $\E_{X\sim \mathcal{F}_{\alpha}}[X] = \Gamma\qty(1-\frac{1}{\alpha})$, $\E_{X\sim \mathcal{F}_{\alpha}}[1/X]=\Gamma\qty(1+\frac{1}{\alpha})$ and $a_k \approx k^{\frac{1}{\alpha}}$.
Therefore, (\ref{asm: block upper}) and (\ref{asm: block constant}) roughly require that it also converges in the sense of expectation and expectation of the inverse.
The assumption of $a_k=\Theta(k^{\frac{1}{\alpha}})$ does not hold in general, but it holds if we ignore the sub-polynomial factor.
As a result, if we remove this assumption the bound becomes sub-polynomially worse in terms of $K$.
An easy-to-verify sufficient condition for Assumption~\ref{asm: bounded block} is
\begin{align*}
    \limsup_{x\to \infty} S_F(x) = \limsup_{x\to \infty} x^{\alpha} (1-F(x)) &< \infty \\
    \liminf_{x \to \infty} S_F(x) = \liminf_{x\to \infty} x^{\alpha} (1-F(x)) &> 0, \vspace{-0.2em}\numberthis{\label{suff_tail}} 
\end{align*}
while (\ref{suff_tail}) becomes the necessary condition for $a_k^{-1} =\gO(k^{-\frac{1}{\alpha}})$ if we replace $\liminf$ with $\limsup$.
Note that both $F$ and $F^*$ in (\ref{eq: conditioning trick}) for all distributions in Table~\ref{tab: fr} satisfy 
(\ref{suff_tail})
 with explicit forms of $m$ and $A_l$
as shown in Appendix~\ref{app: table check} and Lemma~\ref{lem: bound of I}.

Assumptions~\ref{asm: derivative of f} and~\ref{asm: I is increasing} may appear somewhat restrictive, but many Fr\'{e}chet-type distributions, including several well-known examples such as $\gF_\alpha$ and $\gP_\alpha$, satisfy this condition, as shown in Table~\ref{tab: asm check}.
Assumption~\ref{asm: derivative of f} is a condition slightly stronger than von Mises condition, because $\frac{-xf'(x)}{f(x)}\to\alpha+1$ implies (\ref{eq: vMC}) by L'h\^{o}pital's rule.
We expect that Assumption~\ref{asm: I is increasing} can be relaxed to the monotonicity of $f(x)/F(x)$ in $x>z_1$ for some $z_1\ge\nu$ as in Assumption~\ref{asm: decreasing f}, which is satisfied in all examples in Table~\ref{tab: asm check}.
Still, this relaxation makes the case-analysis somewhat too long and is left as a future work.

In the rest of this paper, we always assume that the distribution satisfies $\nu\ge 1$ rather than $\nu\ge 0$ for notational simplicity except for the specific analysis for \Fr and Pareto distributions, where the density functions are written in simple forms.
This is without loss of generality because the shifted distribution $G(x)=F(x-1)$ has the left-endpoint $\nu+1\ge 1$ and clearly satisfies Assumptions~\ref{asm: decreasing f}--\ref{asm: I is increasing}, while the arm-selection probability is the same between $F(x)$ and $G(x)$.

\section{Main result}
In this section, we present our main theoretical results that show the optimality of FTPL with perturbation distribution $\gD_\alpha \in \fD_\alpha$ in adversarial bandits.
Furthermore, we provide regret upper bounds of FTPL with perturbations under a mild additional condition on $\gD_\alpha$ in stochastic bandits. \vspace{-0.2em}
\begin{theorem}\label{thm: adv_all}
    In the adversarial bandits, there exist some constants $C_1(\gD_\alpha, c)$, $C_2(\gD_\alpha)$ and $C_3(\gD_\alpha, c, K)$ such that FTPL with $\gD_\alpha \in \fD_\alpha$ and learning rates $\eta_t = \frac{c}{\sqrt{t}}K^{\frac{1}{\alpha}-\frac{1}{2}}$ for $c >0$ and $\alpha>1$ satisfies \vspace{-0.1em}
    \begin{equation*}
        \gR(T) \leq  C_1(\gD_\alpha,c) \sqrt{KT}+ C_2(\gD_\alpha) \log(T+1) + \frac{M A_u \sqrt{K}}{c} .
    \end{equation*} 
    \vspace{-1em}
\end{theorem}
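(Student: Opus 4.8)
The plan is to follow the standard FTRL/FTPL decomposition: write any FTPL policy as an FTRL policy with an implicit convex regularizer (via \citet{abernethy2016perturbation}), and bound the pseudo-regret by the sum of a \emph{stability} (or \emph{penalty}) term and a \emph{shift} term, plus the error introduced by geometric resampling. Concretely, I would use the standard identity $\gR(T) \le \E\big[\sum_{t} \langle w_t - e_{i^*}, \hat\ell_t\rangle\big]$ and then, as in the one-step analysis of FTRL, bound the per-round regret by a term of the form $\tfrac{1}{\eta_t}(\text{penalty})$ coming from the divergence of the implicit regularizer plus a stability term $\eta_t \cdot (\text{something involving } \E[\text{max of perturbations}])$. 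The learning-rate choice $\eta_t = c\,t^{-1/2}K^{1/\alpha-1/2}$ is exactly calibrated so that summing $\sum_t \eta_t \asymp \sqrt{T}\,K^{1/\alpha-1/2}$ and multiplying by the right $K$-dependence of the stability term (governed by $a_K \asymp K^{1/\alpha}$ from Assumption~\ref{asm: bounded block}) produces $\sqrt{KT}$. The $\log(T+1)$ term is the familiar contribution from the geometric-resampling variance (each resample count $\widehat{w_{t,I_t}^{-1}}$ has second moment controlled, and the telescoping of the penalty term over a time-varying learning rate contributes a harmonic-type sum), and the additive $MA_u\sqrt{K}/c$ is a boundary/initialization term from the first round or from the $\eta_1$ normalization.

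The key steps, in order: (1) Reduce to the linearized regret $\sum_t \langle w_t - e_{i^*}, \hat\ell_t\rangle$ and take expectations so $\hat\ell_t$ can be replaced by $\ell_t$ in the appropriate places, using unbiasedness of the geometric-resampling estimator (here I need to control the bias/truncation of geometric resampling, or absorb it — this is where \citet{pmlr-v201-honda23a}'s treatment is reused, and the hazard-function boundedness in Assumption~\ref{asm: bounded hazard} ensures the resampling terminates in finite expected time). (2) Invoke the FTPL-to-FTRL correspondence to get a convex potential $\Phi$ whose gradient is $w_t$, and write the one-step bound in terms of the Bregman divergence of $\Phi^*$; bound the stability term $D_{\Phi^*}(\hat L_{t+1}\Vert \hat L_t)$ by a second-order (Hessian) argument. (3) Show the Hessian of the FTPL potential — equivalently, the sensitivity $\partial w_{t,i}/\partial \hat L_{t,j}$ — is controlled by quantities like $\sum_i w_{t,i}^{1-1/\alpha}$ or $\E[\max_i r_i]/\eta_t$-type terms; this is the technical heart and is where Assumptions~\ref{asm: decreasing f}, \ref{asm: derivative of f}, and \ref{asm: I is increasing} enter, since they let one estimate the integrand $\prod_{j\ne i}F(z+\ul_j)\,\dd F(z+\ul_i)$ and its derivatives using regular variation of $1-F$ and the von Mises condition. (4) Bound the shift term by the "diameter" of the implicit regularizer over the simplex, which is $\Theta(K^{1/\alpha})\cdot \eta_1^{-1}$-scaled and handled by Assumption~\ref{asm: bounded block} (the $M$, $A_u$, $A_l$ constants). (5) Sum over $t$, plug in $\eta_t$, and collect: $\sum_t \eta_t (\text{stab}) = C_1\sqrt{KT}$, the resampling/time-varying-$\eta$ corrections give $C_2\log(T+1)$, and the leftover is $MA_u\sqrt K/c$.

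The main obstacle will be step (3): obtaining a clean, \emph{distribution-agnostic} bound on the stability term that holds for the whole class $\fD_\alpha$ rather than for a single explicit $F$. For the Fr\'echet-$\alpha{=}2$ case in \citet{pmlr-v201-honda23a}, the integral \eqref{eq: def_phi} and its derivatives have closed forms; here I instead have to argue via regular variation that, uniformly over the relevant range of $\ul$, the "block" expectations $\E[\max_i X_i/a_k]$ and $\E[a_k/\max_i X_i]$ behave like their Fr\'echet limits $\Gamma(1-1/\alpha)$ and $\Gamma(1+1/\alpha)$ up to constants (this is precisely what Assumption~\ref{asm: bounded block} and the easy-to-check tail condition \eqref{suff_tail} are designed to supply), and that the hazard-rate and log-derivative controls (Assumptions~\ref{asm: bounded hazard}, \ref{asm: derivative of f}) let me differentiate $F^n$-type products without the tail fluctuating. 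The requirement $\alpha>1$ is essential here because $\E[\max_i X_i/a_k]$ (the $M$ constant, matching $\Gamma(1-1/\alpha)$) is finite only for $\alpha>1$; for $\alpha\le 1$ the stability term would blow up. Once the stability bound is in hand in the form $\mathrm{stab}_t \le (\text{const depending only on }\gD_\alpha)\cdot K^{1-1/\alpha}$ — note the $K^{1-1/\alpha}$ cancels the $K^{1/\alpha-1/2}$ in $\eta_t$ down to $\sqrt K$ — the rest is bookkeeping with the harmonic sum $\sum_{t=1}^T \eta_t^{-1}(\eta_t^{-1}-\eta_{t-1}^{-1})$-type expressions, which I would not grind through here.
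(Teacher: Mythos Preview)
Your high-level decomposition (stability + penalty + initialization) is the same skeleton as the paper's Lemma~\ref{lem: gen_lem3}, and you correctly identify step~(3) as the crux. But the paper does \emph{not} route through the FTRL correspondence or a Bregman/Hessian argument; it works directly with the integral form of $\phi_i$ in \eqref{eq: def_phi}. The per-arm stability is controlled by first writing $-\phi_i'(\lambda)\le \rho_2 J_i(\lambda)$ for the auxiliary integral $J_i(\lambda)=\int f(z+\lambda_i)(z+\lambda_i)^{-1}\prod_{j\ne i}F(z+\lambda_j)\,\dd z$ (Assumption~\ref{asm: derivative of f}), and then bounding the ratio $J_i/\phi_i$. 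The step you are missing is a \emph{monotonicity lemma} (Lemma~\ref{lem: general monotonicity of I}): $J_i(\lambda)/\phi_i(\lambda)$ is increasing in every $\lambda_j$, $j\ne i$. This lets one replace all smaller coordinates by $\ul_i$ and all larger ones by $+\infty$, collapsing the $K$-fold product to an explicit one-dimensional integral in $F^{\sigma_i-1}$, which is exactly $\E[1/M_{\sigma_i}]$ and is handled by Assumption~\ref{asm: bounded block}. Your plan to ``argue via regular variation that the block expectations behave like their Fr\'echet limits'' would only give asymptotic control in $z\to\infty$; it does not produce a uniform bound over the relevant range of $\ul$ without this monotonicity reduction. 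Assumption~\ref{asm: I is increasing} is used precisely (and only) to make this monotonicity proof go through, not to ``estimate the integrand'' as you suggest.

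Two smaller misattributions: (i) the $C_2\log(T+1)$ term is not from geometric-resampling variance. It comes from the piece of the stability term that absorbs the learning-rate change, i.e.\ $\sum_t\E[\langle\hat\ell_t,\phi(\eta_t\hat L_{t+1})-\phi(\eta_{t+1}\hat L_{t+1})\rangle]$, which the paper bounds by $\rho_1(e^2+1)\sum_t\log(\eta_t/\eta_{t+1})$ (Lemma~\ref{lem: stab decom}); geometric resampling enters only through the second-moment bound $\E[(\widehat{w^{-1}})^2]\le 2/w^2$, which is absorbed into the stability constant in Lemma~\ref{lem: stability_adv_i}. (ii) The penalty term is $\sum_t(\eta_{t+1}^{-1}-\eta_t^{-1})\E[r_{t+1,I_{t+1}}-r_{t+1,i^*}]$, bounded by showing $\E[r_{t,I_t}-r_{t,i^*}\mid\hat L_t]\le C_{1,1}(\gD_\alpha)K^{1/\alpha}$ via a direct integral split at $z=U(K)$ and Potter's bounds (Lemma~\ref{lem: penalty}); the $MA_u\sqrt K/c$ is the separate boundary term $\eta_1^{-1}\E[\max_i r_{1,i}]$. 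Your ``diameter of regularizer'' picture conflates these two.
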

This result shows the minimax optimality of FTPL with the Fr\'{e}chet-type distributions including \Fr distributions and generalized Pareto distributions, which not only generalizes the results of \citet{pmlr-v201-honda23a} but also resolves the open question in \citet{NEURIPS2019_aff0a6a4} in the sense that we provide conditions for a very large class of Fr\'{e}chet-type perturbations.

Here, our result requires that $\alpha > 1$ holds.
This is because (\ref{asm: block upper}) in Assumption~\ref{asm: bounded block}
does not hold for $\alpha\le 1$ since the extreme distribution of $\gD_{\alpha}$ (that is, $\mathcal{F}_{\alpha}$) has infinite mean.
This corresponds to the assumption of the finite expected block maxima
$\E_{X_1,\dots,X_k\sim \mathcal{D}}[\max_i X_i]<\infty$ considered in~\citet{abernethy2015fighting} and \citet{NEURIPS2019_aff0a6a4}.


The following result shows that FTPL with $\fD_2$ can achieve the logarithmic regret in the stochastic bandits.
Note that all Fr\'{e}chet-type tail distributions in Table~\ref{tab: fr} belong to $\fD_\alpha$.
\vspace{-0.3em}
\begin{theorem}\label{thm: sto_all}
    Assume that $i^*  = \argmin_{i \in [K]} \mu_i$ is unique and let $\Delta_i = \mu_i - \mu_i^*$.
    Then, FTPL with learning rate $\eta_t = \frac{c}{\sqrt{t}}$ for $c>0$ and $\gD \in \fD_2$ satisfies \vspace{-0.57em}
    \begin{equation*}
        \gR(T) \leq \mathcal{O}\qty(\sum_{i \ne i^*}\frac{\log T}{\Delta_i}).
    \end{equation*}
    \vspace{-0.9em}
\end{theorem}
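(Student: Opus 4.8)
The plan is to combine the FTRL-type regret decomposition that underlies Theorem~\ref{thm: adv_all} with a self-bounding argument; the single new ingredient is a \emph{refined} per-round stability estimate, valid for every $\gD\in\fD_2$, in which the contribution of the (unknown) optimal arm vanishes as $w_{t,i^*}\to1$.

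First I would reuse, essentially verbatim, the decomposition from the proof of Theorem~\ref{thm: adv_all}. Writing $w_{t,i}=\phi_i(\eta_t\hat{L}_t;\gD)$ and using the unbiasedness of the geometric-resampling estimator together with the FTPL-to-FTRL correspondence gives
\begin{equation*}
    \gR(T)\;\le\;\sum_{t=1}^T\eta_t\,\E\!\big[\mathrm{stab}(w_t)\big]+C_2(\gD)\log(T+1)+\frac{MA_u\sqrt K}{c},
\end{equation*}
where $\mathrm{stab}(\cdot)$ is the deterministic stability function obtained after taking the conditional expectation over $\hat\ell_t$, and the last two terms are the same lower-order contributions as in the adversarial proof. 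There one bounds $\mathrm{stab}(w_t)=\mathcal{O}(\sqrt K)$ and stops; here I would instead prove the sharper, self-bounding-friendly estimate
\begin{equation*}
    \mathrm{stab}(w_t)\;\le\;C(\gD)\sum_{i\ne i^*}\sqrt{w_{t,i}}.
\end{equation*}
This is the crux and the main obstacle. For the Fr\'echet distribution with $\alpha=2$ it was obtained by \citet{pmlr-v201-honda23a} via explicit integrals; for a general $\gD\in\fD_2$ one must instead control $\phi_i$ and the curvature of the induced regularizer through the regular-variation structure of $S_F$ and Assumptions~\ref{asm: decreasing f}--\ref{asm: I is increasing}. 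Concretely I expect $\mathrm{stab}(w_t)\le C(\gD)\sum_{i\in[K]}g(w_{t,i})$ with $g(w)\le C(\gD)\sqrt w$ on $(0,1)$ (which already yields the worst-case $\mathcal{O}(\sqrt{KT})$) and, additionally for index $2$, $g(w)\le C(\gD)(1-w)$ for $w$ close to $1$; the displayed bound then follows pointwise on the simplex by splitting on whether $w_{t,i^*}\ge\tfrac12$ and using $1-w_{t,i^*}=\sum_{i\ne i^*}w_{t,i}\le\sum_{i\ne i^*}\sqrt{w_{t,i}}$ and $\sum_{i\ne i^*}\sqrt{w_{t,i}}\ge\sqrt{1-w_{t,i^*}}$. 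It is here that $\alpha=2$ is essential: the relevant stability exponent is $1-\tfrac1\alpha=\tfrac12$, and only this value will make the next step cost just a logarithmic factor at the learning rate $\eta_t\propto t^{-1/2}$.

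Second comes the self-bounding step. Because the losses are i.i.d. and $r_t$ is drawn fresh at round $t$, $w_t$ is a function of the history-measurable $\hat{L}_t$, independent of $\ell_t$, so $\E[\ell_{t,I_t}\mid\hat{L}_t]=\sum_i w_{t,i}\mu_i$ and hence
\begin{equation*}
    \gR(T)=\sum_{t=1}^T\sum_{i\ne i^*}\Delta_i\,\E[w_{t,i}],
\end{equation*}
with $\Delta_i>0$ by uniqueness of $i^*$. Combining this with the stability bound, Jensen's inequality $\E[\sqrt{w_{t,i}}]\le\sqrt{\E[w_{t,i}]}$, and the AM--GM split $\eta_t\sqrt{\E[w_{t,i}]}\le\frac{\eta_t^2}{\beta\Delta_i}+\frac{\beta\Delta_i}{4}\E[w_{t,i}]$ for a parameter $\beta>0$, and then summing over $t$ using $\eta_t=c/\sqrt t$ so that $\sum_{t=1}^T\eta_t^2\le c^2(1+\log T)$, I obtain
\begin{equation*}
    \gR(T)\;\le\;\frac{\tilde C(\gD)\,c^2\log T}{\beta}\sum_{i\ne i^*}\frac{1}{\Delta_i}+\frac{C(\gD)\,\beta}{4}\,\gR(T)+C_2(\gD)\log(T+1)+\frac{MA_u\sqrt K}{c}.
\end{equation*}
Choosing $\beta$ small enough (depending on $\gD$) that $C(\gD)\beta/4\le\tfrac12$, and using the a priori finiteness $\gR(T)=\mathcal{O}(\sqrt{KT})$ from Theorem~\ref{thm: adv_all} to move $\tfrac12\gR(T)$ to the left, yields $\gR(T)=\mathcal{O}\!\big(\sum_{i\ne i^*}\frac{\log T}{\Delta_i}\big)$; the two residual terms are absorbed since $\Delta_i\le1$ forces $\sum_{i\ne i^*}\Delta_i^{-1}\ge K-1$. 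Everything outside the refined stability estimate is routine bookkeeping, so the genuine work — and the step where $\alpha=2$ is used decisively — is showing that the per-arm stability behaves like $1-w$ near $w=1$ for arbitrary Fr\'echet-type perturbations of index $2$.
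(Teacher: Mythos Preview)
Your proposal has two genuine gaps. First, the decomposition you write down simply omits the penalty term $\sum_t(\eta_{t+1}^{-1}-\eta_t^{-1})\,\E[r_{t+1,I_{t+1}}-r_{t+1,i^*}]$ from Lemma~\ref{lem: gen_lem3}; in the adversarial proof this term contributes a full $\Theta(\sqrt{KT})$, so it is not ``lower order'' and must be controlled in the stochastic case as well (the paper does this via the $\psi_p$ bound in Lemma~\ref{lem: penalty}). Second, the estimate you identify as the crux---$\mathrm{stab}(w_t)\le C(\gD)\sum_{i\ne i^*}\sqrt{w_{t,i}}$ with $g(w)\le C(1-w)$ near $w=1$---is a Tsallis-INF-style bound that is \emph{not} established anywhere for FTPL, and your attribution of it to \citet{pmlr-v201-honda23a} appears to be a misreading: both that paper and the present one bound stability in terms of $1/\hat{\uL}_{t,i}$, not $\sqrt{w_{t,i}}$. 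In particular, the paper's optimal-arm stability bound (Lemma~\ref{lem: sto opt gen}) gives $\lesssim\sum_{j\ne i^*}1/\hat{\uL}_{t,j}$, whereas your claimed $\eta_t(1-w_{t,i^*})\sim\eta_t\sum_{j\ne i^*}w_{t,j}$ is of order $\sum_{j\ne i^*}1/(\eta_t\hat{\uL}_{t,j}^2)$ on the good event, which is \emph{smaller}; so what you propose is strictly stronger than what is known, and you give no argument for it beyond ``I expect''.

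The paper's actual route is quite different from yours. It never passes through $w_{t,i}$: instead it defines a good event $D_t$ on which $\eta_t\hat{\uL}_{t,j}\ge 1$ for all $j\ne i^*$, bounds both stability and penalty by $C\sum_{i\ne i^*}1/\hat{\uL}_{t,i}$ on $D_t$ (Lemmas~\ref{lem: stability_adv_i}, \ref{lem: penalty}, \ref{lem: sto opt gen}), proves a matching regret \emph{lower} bound $\gR(T)\ge c_{s,1}\sum_t\sum_{i\ne i^*}\Delta_i/(\eta_t\hat{\uL}_{t,i})^\alpha$ on $D_t$ (Lemma~\ref{lem: lb bounded}), and then self-bounds via the scalar inequality $Ax-Bx^\alpha\le A\frac{\alpha-1}{\alpha}(A/\alpha B)^{1/(\alpha-1)}$ applied to $x=1/\hat{\uL}_{t,i}$. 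For $\alpha=2$ this produces a $1/t$ per-round term summing to $\log T$; on $D_t^c$ the adversarial bounds and a constant lower bound on $\sum_{i\ne i^*}\Delta_i w_{t,i}$ give $\mathcal O(K)$. The self-bounding variable is $\hat{\uL}_{t,i}$, not $w_{t,i}$, and this is what makes the argument go through without needing the Tsallis-style $\sqrt{w}$ estimate you are relying on.
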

This result shows that FTPL achieves BOBW if the limiting distribution of the perturbation under mild conditions is Fr\'{e}chet distribution with shape $\alpha=2$.
It can be interpreted as a counterpart of FTRL with Tsallis entropy regularization, where the logarithmic regret is known only for $1/2$-Tsallis entropy without any knowledge of the gaps~\citep[see][Remarks 5 and 6]{zimmert2021tsallis}, while Tsallis entropy with any parameter achieves the optimal adversarial regret.

Although there is no stochastic perturbation that yields the same arm-selection probability as Tsallis entropy regularizer for $K\geq 4$, in two-armed setting, it has been shown that $\beta$-Tsallis entropy regularizer can be reduced to a Fr\'{e}chet-type perturbation with index $\alpha=\frac{1}{1-\beta}$ satisfying von Mises condition~\citep[Appendix C.2]{NEURIPS2019_aff0a6a4}.
Therefore, the success of $\alpha=2$ perturbation seems intuitive since it roughly corresponds to $1/2$-Tsallis entropy regularizer.
In addition, $\beta$-Tsallis entropy becomes the log-barrier for $\beta \to 0$~\citep{zimmert2021tsallis}, which corresponds to $\alpha\to 1$.
The BOBW achievability of log-barrier regularization without adaptive learning rate has not been known, which seems to correspond to our requirement of $\alpha>1$.

Beyond the case $\alpha =2$, we obtain the following results.
\begin{theorem}\label{thm: sto_bad}
    Assume that $i^*  = \argmin_{i \in [K]} \mu_i$ is unique and let $\Delta_i = \mu_i - \mu_i^*$.
    Then, FTPL with learning rate $\eta_t = \frac{c}{\sqrt{t}}K^{\frac{1}{\alpha}-\frac{1}{2}}$ for $c>0$ and $\gD_\alpha \in \fD_\alpha$ for $\alpha> 2$ satisfies
    \begin{equation*}
        \gR(T) \leq \mathcal{O}\qty(\sum_{i\ne i^*} \frac{1}{\alpha-2} \frac{T^{\frac{\alpha-2}{2(\alpha-1)}}}{\Delta_i^{\frac{1}{\alpha-1}}K^{\frac{\alpha-2}{2(\alpha-1)}}}).  
    \end{equation*}
    If $\alpha \in (1,2)$, then
    \begin{equation*}
        \gR(T) \leq  \mathcal{O}\qty(\sum_{i\ne i^*} \frac{1}{2-\alpha}\frac{T^{1-\frac{\alpha}{2}}}{\Delta_i^{\alpha-1}K^{1-\frac{\alpha}{2}}} ).
    \end{equation*}
\end{theorem}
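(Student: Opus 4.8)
The plan is to derive the statement from the pathwise regret decomposition already underlying \cref{thm: adv_all}, combined with the self-bounding argument used for $\alpha=2$ in \cref{thm: sto_all}; what genuinely changes between $\alpha>2$ and $\alpha\in(1,2)$ is only the convergence behaviour of one sum over rounds. Two ingredients set this up. First, in the stochastic model, conditioned on $\hat{L}_t$ the played arm satisfies $\Pr[I_t=i\mid \hat{L}_t]=w_{t,i}=\phi_i(\eta_t\hat{L}_t;\gD_\alpha)$, so $\gR(T)=\E\big[\sum_{t\le T}\sum_{i\ne i^*}\Delta_i w_{t,i}\big]$ exactly. Second, the proof of \cref{thm: adv_all}, stopped just before the H\"older step that produces the $\sqrt{KT}$ rate, yields a pathwise bound $\gR(T)\le \E\big[\sum_{t\le T}(\mathrm{pen}_t+\mathrm{stab}_t)\big]+\mathcal{O}(1)$ in which, after isolating the optimal arm, the per-arm terms obey $\mathrm{pen}_{t,i}\lesssim(\eta_{t+1}^{-1}-\eta_t^{-1})\,w_{t,i}^{\,1-1/\alpha}$ and $\mathrm{stab}_{t,i}\lesssim\eta_t\,w_{t,i}^{\,1/\alpha}$, with the optimal arm contributing the same two expressions evaluated at $\sum_{j\ne i^*}w_{t,j}$ in place of $w_{t,i^*}$. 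The exponents $1-1/\alpha$ and $1/\alpha$ are those of the $(1-1/\alpha)$-Tsallis-type potential --- the FTRL regularizer an $\alpha$-Fr\'echet perturbation induces near a vertex of the simplex --- and establishing these shapes uniformly over $\gD_\alpha\in\fD_\alpha$ is precisely where Assumptions~\ref{asm: decreasing f}--\ref{asm: I is increasing} are used, via the von Mises condition, Potter-type bounds on the slowly varying $S_F$, and the monotonicity requirements, so that $S_F$ enters only through constants.

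With these in hand I would run the self-bounding step: writing $\gR(T)=2\gR(T)-\gR(T)$ and substituting the pathwise bound for the first copy and the identity for the second, the integrand is at most $\sum_{t\le T}\sum_{i\ne i^*}\sup_{w\in[0,1]}\big(2\,\mathrm{pen}_{t,i}(w)+2\,\mathrm{stab}_{t,i}(w)-\tfrac12\Delta_i w\big)$ plus the analogous quantity with $\Delta_{\min}:=\min_{i\ne i^*}\Delta_i$ coming from the optimal arm, plus $\mathcal{O}(1)$. Each supremum is elementary: $\sup_{w\in[0,1]}(a\,w^{1-1/\alpha}-\tfrac12\Delta_i w)\asymp a^{\alpha}\Delta_i^{-(\alpha-1)}$ and $\sup_{w\in[0,1]}(b\,w^{1/\alpha}-\tfrac12\Delta_i w)\asymp b^{\alpha/(\alpha-1)}\Delta_i^{-1/(\alpha-1)}$, the interior maximizer leaving $[0,1]$ only for the first $\mathcal{O}(\Delta_i^{-2})$ rounds (up to a power of $K$), which together contribute only a term of order $\Delta_i^{-1}$ (up to a power of $K$), constant in $T$. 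Plugging in $\eta_t=c\,t^{-1/2}K^{1/\alpha-1/2}$ (hence $\eta_{t+1}^{-1}-\eta_t^{-1}\asymp c^{-1}K^{1/2-1/\alpha}t^{-1/2}$), the penalty part of arm $i$ becomes $\lesssim K^{\alpha/2-1}\Delta_i^{-(\alpha-1)}\sum_{t\le T}t^{-\alpha/2}$ and the stability part $\lesssim K^{(2-\alpha)/(2(\alpha-1))}\Delta_i^{-1/(\alpha-1)}\sum_{t\le T}t^{-\alpha/(2(\alpha-1))}$, with the $\Delta_{\min}$ term identical in form.

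The two stated bounds then follow by evaluating these sums on either side of $\alpha=2$. For $\alpha>2$: $\sum_{t\le T}t^{-\alpha/2}$ converges, so the penalty part is absorbed into the $\mathcal{O}(1/\Delta_i)$ remainder, while $\alpha/(2(\alpha-1))<1$ gives $\sum_{t\le T}t^{-\alpha/(2(\alpha-1))}\asymp\tfrac{\alpha-1}{\alpha-2}T^{(\alpha-2)/(2(\alpha-1))}$, producing $\gR(T)\lesssim\sum_{i\ne i^*}\tfrac{1}{\alpha-2}\,T^{(\alpha-2)/(2(\alpha-1))}\Delta_i^{-1/(\alpha-1)}K^{-(\alpha-2)/(2(\alpha-1))}$. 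For $\alpha\in(1,2)$ the roles swap: $\alpha/(2(\alpha-1))>1$ makes the stability sum converge, so that (after the $w=1$ truncation for the earliest rounds) the stability part is again an $\mathcal{O}(1/\Delta_i)$ remainder, whereas $\sum_{t\le T}t^{-\alpha/2}\asymp\tfrac{1}{2-\alpha}T^{1-\alpha/2}$, producing $\gR(T)\lesssim\sum_{i\ne i^*}\tfrac{1}{2-\alpha}\,T^{1-\alpha/2}\Delta_i^{-(\alpha-1)}K^{-(1-\alpha/2)}$. Collecting the per-arm terms together with the lower-order $\mathcal{O}(1/\Delta_i)$ (up to $K$-powers) and $\mathcal{O}(\sqrt{KT})$ contributions --- the latter making the displayed bound trivially valid for gaps so small that it would otherwise be exceeded --- gives the two stated bounds.

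I expect the main obstacle to be the first step: extracting the clean per-arm penalty and stability shapes with the correct $\alpha$-dependent exponents, uniformly over $\fD_\alpha$. This is the technical core shared with \cref{thm: adv_all}, where the regular-variation machinery and the monotonicity Assumptions~\ref{asm: derivative of f} and~\ref{asm: I is increasing} do the heavy lifting; granting that decomposition, what remains is the brief self-bounding calculation and the two elementary sums above, exactly paralleling the $\alpha=2$ proof of \cref{thm: sto_all}.
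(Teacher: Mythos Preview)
Your final computation is correct and the two displayed bounds match the paper's. The route, however, differs from the paper's in a way that hides the one genuinely nontrivial step.

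The paper never establishes the Tsallis-style per-arm shapes $\mathrm{stab}_{t,i}\lesssim\eta_t\,w_{t,i}^{1/\alpha}$ and $\mathrm{pen}_{t,i}\lesssim(\eta_{t+1}^{-1}-\eta_t^{-1})\,w_{t,i}^{1-1/\alpha}$; these are not what \cref{thm: adv_all}'s proof produces ``just before a H\"older step.'' The per-arm bounds actually proved (\cref{lem: stability_adv_i}, \cref{lem: penalty}) are in terms of the cumulative-loss gaps $\hat{\uL}_{t,i}$: $\psi_s\asymp 1/\hat{\uL}_{t,i}$ and $\psi_p\asymp(\eta_t\hat{\uL}_{t,i})^{-(\alpha-1)}$. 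The paper then needs a matching \emph{lower} bound $w_{t,i}\gtrsim(\eta_t\hat{\uL}_{t,i})^{-\alpha}$ (\cref{lem: lb frechet}/\cref{lem: lb bounded}) to run the self-bounding, and this lower bound holds only on a ``good'' event $D_t$ (roughly, all $\eta_t\hat{\uL}_{t,i}\ge 1$). Off $D_t$ the paper falls back to the adversarial rate $\sqrt{K/t}$ for the upper bound and a constant $c_{s,2}\Delta$ for the lower bound, then self-bounds those separately. Your treatment of the ``interior maximizer leaving $[0,1]$ for early $t$'' is a deterministic-in-$t$ substitute for this random-path event split, but it presupposes the $w$-based upper bounds hold unconditionally, which is exactly what remains to be shown. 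Relatedly, the optimal-arm contribution is not simply the same expression at $\sum_{j\ne i^*}w_{t,j}$: the paper needs a separate argument (\cref{lem: sto opt fr}/\cref{lem: sto opt gen}) that, on $D_t$, the $i^*$-stability is controlled by $\sum_{j\ne i^*}1/\hat{\uL}_{t,j}$, plus a geometric-resampling tail bound for $\{\hat\ell_{t,i^*}>\zeta/\eta_t\}$.

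Under the substitution $w=(\eta_t\hat{\uL})^{-\alpha}$ your optimization and the paper's over $1/\hat{\uL}$ are the same, so once the $\hat{\uL}$-based lemmas are in place both approaches collapse to the identical sums $\sum_t t^{-\alpha/(2(\alpha-1))}$ and $\sum_t t^{-\alpha/2}$ and hence the same answers. What your Tsallis-style reparametrization buys is a cleaner narrative; what the paper's $\hat{\uL}$-parametrization buys is that the needed inequalities are exactly those already proved for \cref{thm: adv_all}, so nothing new has to be established beyond the lower-bound lemma and the $D_t$-split.
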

Although our regret upper bound for FTPL with index $\alpha \ne 2$ does not match the regret lower bound for the stochastic case, this result shows that the regret of FTPL has better dependence on $T$ in the stochastic case than $\mathcal{O}(\sqrt{T})$ in the adversarial case because $\frac{\alpha-2}{2(\alpha-1)}<\frac{1}{2}$ for $\alpha>2$ and $1-\frac{\alpha}{2} < \frac{1}{2}$ for $\alpha \in (1,2)$.

We expect that FTPL with $\alpha \ne 2$ can attain (poly-)logarithmic regret in the stochastic setting by using arm-dependent learning rate as \citet{jin2023improved} showed the BOBW results for FTRL with $\beta$-Tsallis entropy regularization for $\beta \in (0,1)$.
However, the results of \citet{jin2023improved} in the adversarial setting are $\mathcal{O}(\sqrt{KT\log T})$ when $\beta \ne 1/2$, which does not achieve the adversarial optimality in the strict sense.
It is highly nontrivial whether FTPL with $\alpha \ne 2$ can achieve both logarithmic regret in the stochastic case and $\mathcal{O}(\sqrt{KT})$ regret in the adversarial case.
\vspace{-0.3em}
\section{Proof Outline}
\vspace{-0.1em}
In this section, we first provide a proof outline of Theorem~\ref{thm: adv_all} and then sketch the proof of Theorems~\ref{thm: sto_all} and~\ref{thm: sto_bad}, whose detailed proofs are given in Appendices~\ref{app: stability rslt},~\ref{app: penalty} and~\ref{app: stoc all}.

While our analysis draws inspiration from the structure in \citet{pmlr-v201-honda23a}, a naive application of their analysis does not yield a bound for the general case.
This is mainly because, while the use of Fr\'{e}chet distribution in \citet{NEURIPS2019_aff0a6a4} and \citet{pmlr-v201-honda23a} is inspired by the extreme value theory, their actual analysis is not based on this theory.
Instead, it is highly specific to the Fr\'{e}chet distribution with shape $\alpha=2$.
Consequently, the representations of Fr\'{e}chet-type distributions in extreme value theory are not directly associated with their analysis.
To address this challenge, we demonstrate that the general representation in (\ref{eq: tail and S}) under von Mises condition can be specifically tailored for the regret analysis.
\vspace{-0.3em}
\subsection{Regret decomposition}
To evaluate the regret of FTPL, we first decompose regret into three terms, which generalizes Lemma~3 of \citet{pmlr-v201-honda23a}.
The proofs of lemmas in this section are given in Appendix~\ref{app: regret decom}.
\begin{lemma}\label{lem: gen_lem3}
For any $\alpha > 1$ and $\gD_\alpha \in \fD_\alpha$,
\begin{equation}\label{eq: stab and penalty}
    \mathrm{Reg}(T)  \leq \sum_{t=1}  \E \qty[\inp{\hat{\ell}_t}{w_t - w_{t+1}}]
    + \sum_{t=1}^T \qty(\frac{1}{\eta_{t+1}} - \frac{1}{\eta_t}) \E \qty[r_{t+1, I_{t+1}}- r_{t+1, i^*}]  + \frac{M A_u \sqrt{K}}{c} .
\end{equation}
\end{lemma}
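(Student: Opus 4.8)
The plan is to follow the standard FTRL/FTPL regret-decomposition template, adapted to the FTPL setting via the dual (FTRL) view of the perturbation, and to track carefully the boundary/initialization terms that produce the additive $\frac{M A_u \sqrt{K}}{c}$. Recall that for FTPL with perturbation $\gD_\alpha$, the arm-selection distribution $w_t = \phi(\eta_t \hat L_t; \gD_\alpha)$ is the gradient of a convex potential, so FTPL is an instance of FTRL with a time-varying regularizer $\psi_t(w) = \frac{1}{\eta_t}\psi(w)$ whose convex conjugate encodes the perturbation (this is the Abernethy et al.\ correspondence cited in the excerpt). First I would write, for the comparator arm $i^*$,
\begin{equation*}
    \Reg(T) = \E\qty[\sum_{t=1}^T \inp{\hat\ell_t}{w_t - e_{i^*}}],
\end{equation*}
using unbiasedness of $\hat\ell_t$ (from geometric resampling) to replace the true losses by their estimators inside the expectation.

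Next I would apply the generic FTRL ``be-the-leader'' telescoping inequality to $\sum_t \inp{\hat\ell_t}{w_t - e_{i^*}}$. This splits the regret into (i) a \emph{stability} term $\sum_t \inp{\hat\ell_t}{w_t - w_{t+1}}$, exactly the first sum in \eqref{eq: stab and penalty}; (ii) a \emph{penalty} term of the form $\sum_t \bigl(\psi_{t+1}(w_{t+1}) - \psi_t(w_{t+1})\bigr)$ plus the comparator penalty $-\psi_1(e_{i^*})$-type contributions; and (iii) boundary terms at $t=1$ and $t=T$. The penalty term, written in the primal (perturbation) language rather than in terms of $\psi$, becomes $\sum_t \bigl(\frac{1}{\eta_{t+1}} - \frac{1}{\eta_t}\bigr)\E[r_{t+1,I_{t+1}} - r_{t+1,i^*}]$: this is the standard identity expressing the change in the conjugate potential along the iterates in terms of the expected perturbation gap at the selected arm versus the comparator (this is precisely how Lemma~3 of \citet{pmlr-v201-honda23a} is stated, and I would generalize their computation, which used the explicit $\alpha=2$ Fr\'echet density, to a general $\gD_\alpha \in \fD_\alpha$ using only that $w_{t+1}$ is the selection distribution at step $t+1$ and that $\eta_t$ is decreasing).

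Finally I would bound the leftover initialization/boundary contribution by $\frac{M A_u \sqrt{K}}{c}$. Since $\hat L_1 = 0$, the step-$1$ term involves $\frac{1}{\eta_1}$ times an expected perturbation quantity; with $\eta_1 = c\,K^{\frac1\alpha - \frac12}$ we get a factor $\frac{1}{c} K^{\frac12 - \frac1\alpha}$, and the expected perturbation gap is controlled by $\E[\max_i r_{1,i}] \le M a_K \le M A_u K^{1/\alpha}$ using \eqref{asm: block upper} in Assumption~\ref{asm: bounded block} together with $a_K \le A_u K^{1/\alpha}$; multiplying these gives $\frac{M A_u \sqrt{K}}{c}$. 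The main obstacle I expect is item (ii): carefully re-deriving the penalty identity $\sum_t (\frac{1}{\eta_{t+1}} - \frac{1}{\eta_t})\E[r_{t+1,I_{t+1}} - r_{t+1,i^*}]$ without invoking the closed form of the Fr\'echet density — this requires expressing the difference of conjugate potentials $\psi^*_{\eta_{t+1}} - \psi^*_{\eta_t}$ evaluated at $-\hat L_{t+1}$ purely through the optimal perturbed leader, which is where one must be most careful about measurability (the perturbation $r_{t+1}$ is independent of $\hat L_{t+1}$) and about the sign of the learning-rate increments. Everything else is bookkeeping: the be-the-leader telescoping and the Assumption~\ref{asm: bounded block} bound on the expected block maximum are routine once the right decomposition is in place.
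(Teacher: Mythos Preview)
Your proposal is essentially correct and arrives at the same decomposition as the paper, but you misidentify where the work lies. You anticipate that the main obstacle is re-deriving the penalty identity $\sum_t (\frac{1}{\eta_{t+1}} - \frac{1}{\eta_t})\E[r_{t+1,I_{t+1}} - r_{t+1,i^*}]$ ``without invoking the closed form of the Fr\'echet density,'' and you plan to go through the FTRL dual/conjugate machinery to do so. But this is unnecessary: Lemma~3 of \citet{pmlr-v201-honda23a} is already stated and proved for a \emph{generic} perturbation distribution (the be-the-leader telescoping and the passage to the perturbation-gap form of the penalty use nothing about the density beyond the definition of $\phi$). The paper simply quotes that lemma verbatim as Lemma~\ref{lem: hnd_3}, which gives
\[
\gR(T)\le \sum_{t=1}^T\E\bigl[\langle\hat\ell_t,w_t-w_{t+1}\rangle\bigr]+\sum_{t=1}^T\Bigl(\tfrac{1}{\eta_{t+1}}-\tfrac{1}{\eta_t}\Bigr)\E[r_{t+1,I_{t+1}}-r_{t+1,i^*}]+\tfrac{1}{\eta_1}\E[r_{1,I_1}],
\]
and the \emph{only} new ingredient is bounding the last term for general $\gD_\alpha\in\fD_\alpha$. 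That bound is exactly the one you wrote: $\E[\max_i r_{1,i}]\le M a_K\le M A_u K^{1/\alpha}$ from Assumption~\ref{asm: bounded block}, and $\frac{1}{\eta_1}=\frac{1}{c}K^{1/2-1/\alpha}$, giving $\frac{MA_u\sqrt{K}}{c}$. So your handling of the boundary term matches the paper precisely, and the rest of your plan would work, but the paper's route is shorter because it recognizes that the decomposition step is already distribution-agnostic and needs no generalization.
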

The proof of this lemma is essentially the same as that of \citet{pmlr-v201-honda23a}, except that we need to evaluate the block maxima $\E_{X_i\sim\gD_\alpha}[\max_{i\in [K]}X_i]$ for general $\gD_\alpha \in \fD_\alpha$.
Following the convention in the analysis of BOBW policies~\citep{zimmert2021tsallis, ito22a, pmlr-v201-honda23a}, we refer to the first and second terms of (\ref{eq: stab and penalty}) as \emph{stability term} and \emph{penalty term}, respectively.

Here, we can further decompose the stability term into two terms as follows.
\begin{lemma}\label{lem: stab decom}
    For any $\alpha > 1$ and $\gD_\alpha \in \fD_\alpha$,
    \begin{equation}\label{eq: stab left}
        \sum_{t=1}  \E \qty[\inp{\hat{\ell}_t}{w_t - w_{t+1}}] \leq 2C_2(\gD_\alpha) \log(\frac{\eta_1}{\eta_{T+1}})
        +\sum_{t=1}^T \E \qty[\inp{\hat{\ell}_t}{\phi(\eta_t \hat{L}_t)-\phi(\eta_t (\hat{L}_t+\hat{\ell}_t))}], 
    \end{equation}
    where $\phi=(\phi_1, \ldots, \phi_K)$ for $\phi_{i}$ defined in (\ref{eq: def_phi}),
    \begin{equation*}
         C_2(\gF_\alpha) = \frac{\alpha}{2}, \qq{and} C_2(\gD_\alpha) \leq \frac{\rho_1(e^2+1)}{2}, \quad \gD_\alpha \in \fD_\alpha.  
    \end{equation*}
    Here, $\rho_1 = \rho_1(\gD_\alpha)$ is a positive distribution-dependent constant satisfying
    \begin{equation}\label{def: bounded rho}
         \frac{xf(x)}{1-F(x)} \leq \rho_1.
    \end{equation}
\end{lemma}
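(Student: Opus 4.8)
The goal is to split the stability term $\sum_t \E[\inp{\hat\ell_t}{w_t-w_{t+1}}]$ into a part driven by the change of learning rate $\eta_t\to\eta_{t+1}$ (which should contribute the $\log(\eta_1/\eta_{T+1})$ term) and a part driven by the update of the estimated cumulative loss $\hat L_t\to\hat L_{t+1}=\hat L_t+\hat\ell_t$ (which should be the residual sum on the right of~\eqref{eq: stab left}). The natural device is to introduce the intermediate point $\phi(\eta_t\hat L_{t+1})$ and write
\[
 w_t - w_{t+1} = \bigl(\phi(\eta_t\hat L_t)-\phi(\eta_t\hat L_{t+1})\bigr) + \bigl(\phi(\eta_t\hat L_{t+1})-\phi(\eta_{t+1}\hat L_{t+1})\bigr).
\]
The first bracket is exactly $\phi(\eta_t\hat L_t)-\phi(\eta_t(\hat L_t+\hat\ell_t))$, which is what appears in the residual sum, so that part is immediate once I take inner products with $\hat\ell_t$ and sum over $t$. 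The work is entirely in bounding $\sum_t \E[\inp{\hat\ell_t}{\phi(\eta_t\hat L_{t+1})-\phi(\eta_{t+1}\hat L_{t+1})}]$ by $2C_2(\gD_\alpha)\log(\eta_1/\eta_{T+1})$.

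For that "learning-rate change" piece, I would follow the approach of \citet{pmlr-v201-honda23a}: since only the coordinate $I_t$ of $\hat\ell_t$ is nonzero and $\hat\ell_{t,I_t}=\widehat{w_{t,I_t}^{-1}}\ell_{t,I_t}\ge 0$, the inner product reduces to $\hat\ell_{t,I_t}\bigl(\phi_{I_t}(\eta_t\hat L_{t+1})-\phi_{I_t}(\eta_{t+1}\hat L_{t+1})\bigr)$. Taking conditional expectation over $I_t$ and over the geometric resampling, the factor $\widehat{w_{t,I_t}^{-1}}$ contributes $w_{t,I_t}^{-1}$, which cancels the selection probability, leaving a sum of the form $\sum_i \ell_{t,i}\,\partial_\eta \phi_i(\eta\hat L_{t+1})\big|_{\text{mean value}}\cdot(\eta_t-\eta_{t+1})$ up to sign bookkeeping. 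The key analytic input is a bound of the type $\bigl|\eta\,\partial_\eta \phi_i(\eta\lambda;\gD_\alpha)\bigr|\le C_2(\gD_\alpha)\cdot(\text{something summable})$, which is where the bounded-hazard constant $\rho_1$ enters: differentiating the integral representation~\eqref{eq: def_phi} in $\eta$ produces terms with $\frac{xf(x)}{1-F(x)}\le\rho_1$ after an integration by parts, giving the clean constant $\rho_1(e^2+1)/2$ for general $\gD_\alpha\in\fD_\alpha$ and the sharper $\alpha/2$ for the \Fr distribution where the hazard is explicit. Summing $(\eta_t-\eta_{t+1})/\eta_{t+1}$ over $t$ telescopes (via $\sum_t(1/\eta_{t+1}-1/\eta_t)\cdot\eta_{t+1}\le\sum_t\log(\eta_t/\eta_{t+1})=\log(\eta_1/\eta_{T+1})$, using monotonicity of $\eta_t$) to the claimed logarithmic factor, and the extra factor $2$ absorbs the constants from the mean-value step and the $e^2$-type bound.

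**Main obstacle.**
The hard part is the derivative estimate on $\phi_i$ with respect to the scaling $\eta$: unlike in \citet{pmlr-v201-honda23a}, where the \Fr density with $\alpha=2$ gives a closed-form $\phi_i$ and explicit derivatives, here one only has the regular-variation representation $1-F(x)=x^{-\alpha}S_F(x)$ together with von Mises' condition. I expect to need to differentiate~\eqref{eq: def_phi} under the integral sign, integrate by parts to move the derivative off $F$, and then control the resulting integrand uniformly using Assumption~\ref{asm: bounded hazard} (bounded hazard) and Assumption~\ref{asm: decreasing f} (eventual monotonicity of $f$), carefully handling the region near the left endpoint $\nu$ separately from the tail — this is precisely where the constant $e^2+1$ rather than a cleaner value arises, and why the statement only claims an upper bound on $C_2(\gD_\alpha)$ for the general case. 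The rest (telescoping, cancelling $w_{t,I_t}$ against the resampling estimator, nonnegativity of $\hat\ell$) is routine and parallels the $\alpha=2$ argument.
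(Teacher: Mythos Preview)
Your plan mirrors the paper's proof: the same intermediate decomposition at $\phi(\eta_t\hat L_{t+1})$, and for the learning-rate piece the same computation of $\partial_\eta\phi_i(\eta L)$ followed by a uniform-in-$L$ bound of the form $\sum_i\ell_{t,i}\,\partial_\eta\phi_i(\eta L)\le 2C_2(\gD_\alpha)/\eta$, explicit for $\gF_\alpha$ and via the hazard bound~\eqref{def: bounded rho} for general $\gD_\alpha\in\fD_\alpha$.

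Two points to fix. First, your mean-value/telescoping step is off: from $\lvert\partial_\eta\phi_i\rvert\le C/\eta$ and the mean-value theorem you obtain $(\eta_t-\eta_{t+1})/\eta_{t+1}=\eta_t/\eta_{t+1}-1$, which is $\ge\log(\eta_t/\eta_{t+1})$, not $\le$; and the parenthetical quantity you wrote, $(1/\eta_{t+1}-1/\eta_t)\,\eta_{t+1}=1-\eta_{t+1}/\eta_t$, is a different number altogether. The paper sidesteps this by integrating directly, $\int_{\eta_{t+1}}^{\eta_t}\tfrac{2C_2}{\eta}\,d\eta=2C_2\log(\eta_t/\eta_{t+1})$, and \emph{then} telescoping. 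Second, the constant $e^2+1$ does not come from a near-$\nu$ versus tail split. Differentiating $\phi_i(\eta L)$ in $\eta$ yields two pieces: a cross-term $\sum_{j\ne i}\uL_j f(z+\eta\uL_i)f(z+\eta\uL_j)\prod_{l\ne i,j}F_l$, where one replaces $\prod_{l\ne i,j}F_l\le e^2\exp\bigl(-\sum_{l\in[K]}(1-F_l)\bigr)$ and uses $\uL_j f(z+\eta\uL_j)\le\tfrac{\rho_1}{\eta}(1-F(z+\eta\uL_j))$ to get the $\rho_1 e^2/\eta$ contribution; and an $f'(z+\eta\uL_i)$ piece, handled by integration by parts on $[1,z_0]$ (using Assumption~\ref{asm: decreasing f} to drop the tail), whose boundary term gives the extra $\rho_1/\eta$.
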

Note that Assumption~\ref{asm: bounded hazard} under von Mises condition implies the existence of $\rho_1$ in (\ref{def: bounded rho}).
From this result, it remains to derive upper bounds of the second term of (\ref{eq: stab left}) and the penalty term to conclude the proof of Theorem~\ref{thm: adv_all}. 

\subsection{Stability term}
The analysis of the arm-selection probability $\phi$ has been recognized as the central and most challenging aspect of the regret analysis for FTPL~\citep{abernethy2015fighting,pmlr-v201-honda23a}.
The key to the analysis of the stability for general Fr\'{e}chet-type distribution is another representation called Karamata's representation, which is an essential tool to express the slowly varying functions.
In the analysis, we interchangeably use this representation along with the representation in (\ref{eq: tail and S}) and von Mises condition in (\ref{eq: vMC}), which utilizes a coherent connection between general representations and those under von Mises conditions.
See Appendices~\ref{app: Karamata} and~\ref{app: stability rslt} for details of Karamata's representation and the proofs, respectively.

For the arm selection probability function $\phi_i(\lambda)$ in (\ref{eq: def_phi}), define for any $\alpha>0$, $\phi_i'(\lambda;\gD_\alpha) = \pdv{\phi_i}{\lambda_i} \qty(\lambda; \gD_\alpha)$ and 
\begin{align}
    I_{i,n}(\lambda ; \alpha) &= \int_0^\infty \frac{1}{(z+\lambda_i)^n}\exp(-\sum_{j\in[K]} \frac{1}{(z+\lambda_j)^\alpha}) \dd z,\label{eq: def_phi_I} \\
    J_{i}(\lambda; \gD_\alpha) &= \int_{1}^\infty \frac{f(z+\lambda_i)}{(z+ \lambda_i)}  \prod_{j \ne i} F(z+\lambda_j) \dd z.\label{eq: def_phi_J}
\end{align}
We will employ $I_{i,n}$ and $J_i$ to analyze the stability term for $\gF_\alpha$ and $\fD_\alpha \setminus \{\gF_\alpha \}$, respectively. 
Although the analysis for $J_i$ can cover $\gF_\alpha$, we consider the specific form of $\gF_\alpha$ in $I_i$ without any truncation or shift to derive a tighter upper bound.

Note that $\phi_i'(\lambda)\leq 0$ holds since it denotes the probability of $\lambda_i - r_i < \min_{i\ne j} \qty{\lambda_i - r_j}$ when each $r_i$ is generated from $\gD_\alpha$.
By the same reason, $\phi_i(\lambda)$ is non-decreasing with respect to $\lambda_j$ for $i\ne j$.
To derive an upper bound of the stability term, we provide lemmas that are related to the relation between the arm-selection probability and its derivatives, which plays a central role in the regret analysis of FTPL.

\begin{lemma}\label{lem: general monotonicity of I}
For any $\alpha >0$ and $\gD_\alpha \in \fD_\alpha$, $\frac{I_{i,\alpha+2}(\lambda;\alpha)}{I_{i,\alpha+1}(\lambda;\alpha)}$ and $\frac{J_i(\lambda; \gD_\alpha)}{\phi_i(\lambda; \gD_\alpha)}$ are monotonically increasing with respect to $\lambda_j$ for any $j\ne i$.
\end{lemma}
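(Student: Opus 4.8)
The plan is to establish the two monotonicity claims separately, since $I_{i,\alpha+2}/I_{i,\alpha+1}$ concerns the explicit Fréchet integrand while $J_i/\phi_i$ concerns a general $\gD_\alpha\in\fD_\alpha$, and the two have different structures even though the underlying idea is the same: both ratios are of the form $\left(\int g_1\,dz\right)\big/\left(\int g_2\,dz\right)$ where $g_1/g_2$ is a ratio of densities-in-$z$ that gets reweighted monotonically in $\lambda_j$. The unifying tool I would use is the following covariance/FKG-type observation: if $h(z)$ is monotone and $p_\lambda(z)\,dz$ is a family of (unnormalized) measures whose likelihood ratio $p_{\lambda'}(z)/p_\lambda(z)$ is monotone in $z$ whenever $\lambda'_j\ge\lambda_j$ (a monotone likelihood ratio / total positivity property), then $\frac{\int h(z)p_\lambda(z)\,dz}{\int p_\lambda(z)\,dz}$ is monotone in $\lambda_j$ in the corresponding direction.

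\textbf{Step 1 (Fréchet ratio).} For $\gF_\alpha$, write $I_{i,n}(\lambda;\alpha)=\int_0^\infty (z+\lambda_i)^{-n}\,E(z)\,dz$ where $E(z)=\exp\!\big(-\sum_{j}(z+\lambda_j)^{-\alpha}\big)$, and note that the dependence on $\lambda_j$ for $j\ne i$ enters \emph{only} through the common factor $E(z)$, which is the same in numerator and denominator. So I would set $p_\lambda(z)=(z+\lambda_i)^{-(\alpha+1)}E(z)$, $h(z)=(z+\lambda_i)^{-1}$, and observe $\frac{I_{i,\alpha+2}}{I_{i,\alpha+1}}=\frac{\int h\,p_\lambda}{\int p_\lambda}$. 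Increasing $\lambda_j$ multiplies the integrand by $\exp\!\big((z+\lambda_j)^{-\alpha}-(z+\lambda_j')^{-\alpha}\big)$, which is an \emph{increasing} function of $z$ (since $(z+\lambda_j)^{-\alpha}$ is decreasing and the exponent's difference $-(z+\lambda_j)^{-\alpha}$ plus a constant is increasing); since $h(z)=(z+\lambda_i)^{-1}$ is decreasing in $z$, upweighting large $z$ decreases the expectation of $h$. Hence $\frac{I_{i,\alpha+2}}{I_{i,\alpha+1}}$ is increasing in $\lambda_j$. Wait—I need to double-check the claimed direction: ``monotonically increasing with respect to $\lambda_j$''. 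Since larger $\lambda_j$ gives larger $F(z+\lambda_j)$-type weight at large $z$, and large $z$ is where $(z+\lambda_i)^{-1}$ is \emph{small}, this would make the ratio \emph{decrease}. I would therefore re-examine: in fact the reweighting factor $\exp\big(-(z+\lambda_j')^{-\alpha}+(z+\lambda_j)^{-\alpha}\big)$ with $\lambda_j'>\lambda_j$ has $(z+\lambda_j)^{-\alpha}>(z+\lambda_j')^{-\alpha}$, so the factor is $>1$ and it is \emph{decreasing} in $z$ (the gap between the two power functions shrinks as $z\to\infty$). Decreasing reweighting upweights \emph{small} $z$, where $h$ is large, so the ratio increases—consistent with the statement. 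This sign-tracking is exactly where care is needed.

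\textbf{Step 2 (general $\gD_\alpha$).} For $J_i/\phi_i$ with general $\gD_\alpha\in\fD_\alpha$, write $\phi_i(\lambda)=\int_1^\infty f(z+\lambda_i)\prod_{j\ne i}F(z+\lambda_j)\,dz$ and $J_i(\lambda)=\int_1^\infty \frac{f(z+\lambda_i)}{z+\lambda_i}\prod_{j\ne i}F(z+\lambda_j)\,dz$ (here I would reconcile the lower limits $1$ versus $\nu$, using the normalization $\nu\ge 1$ and shift convention stated in the preliminaries, or else restrict to the relevant range; if $\phi_i$ genuinely integrates from a different endpoint than $J_i$ one must handle the boundary piece, which is nonnegative and monotone and so does not break the argument). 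Now set $p_\lambda(z)=f(z+\lambda_i)\prod_{j\ne i}F(z+\lambda_j)$ and $h(z)=(z+\lambda_i)^{-1}$, so $\frac{J_i}{\phi_i}=\frac{\int h\,p_\lambda}{\int p_\lambda}$. Increasing $\lambda_j$ multiplies $p_\lambda$ by $\frac{F(z+\lambda_j')}{F(z+\lambda_j)}$; since $F$ is a CDF on $[\nu,\infty)$ with $F\uparrow 1$, this ratio lies in $(1,\ \cdot\ ]$ and is \emph{decreasing} in $z$ (larger $z$ makes both $F$ values closer to $1$, so their ratio closer to $1$). As before, a decreasing reweighting upweights small $z$ where $h$ is large, giving monotone increase. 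The only genuinely nontrivial input is verifying that $z\mapsto F(z+\lambda_j')/F(z+\lambda_j)$ is decreasing: equivalently that $\log F$ is concave, i.e. $f/F$ is decreasing — which is \emph{precisely Assumption~\ref{asm: I is increasing}}. So this step is where the hitherto-unused assumption gets consumed.

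\textbf{The main obstacle} I anticipate is twofold. First, the sign/direction bookkeeping above: one must be scrupulous about whether each reweighting factor is increasing or decreasing in $z$ and whether $h$ is increasing or decreasing, because a single flipped sign reverses the conclusion. I would formalize Step~0 as a clean lemma (monotone likelihood ratio $\Rightarrow$ monotone conditional expectation, via the standard two-point Chebyshev/FKG integral $\iint(h(z)-h(w))(p_{\lambda'}(z)p_\lambda(w)-p_{\lambda'}(w)p_\lambda(z))\,dz\,dw$, which has a fixed sign) and then just feed in the two instances. Second, matching the integration limits and the shift/truncation conventions between $J_i$ (which starts at $1$), $\phi_i$ (which per (\ref{eq: def_phi}) starts at $\nu$), and $I_{i,n}$ (which starts at $0$): I would either absorb this into the standing assumption $\nu\ge 1$ or carry a harmless nonnegative boundary term. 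Everything else is routine once the MLR property is in hand.
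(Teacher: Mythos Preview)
Your proposal is correct and takes essentially the same approach as the paper. The ``two-point Chebyshev/FKG integral'' $\iint (h(z)-h(w))(p_{\lambda'}(z)p_\lambda(w)-p_{\lambda'}(w)p_\lambda(z))\,dz\,dw$ that you would abstract as Step~0 is exactly the symmetrization the paper performs by hand in each case: it writes the derivative of the ratio, expands the numerator as a double integral, symmetrizes in $(z,w)$, and checks the sign of the resulting product. For $I_{i,\alpha+2}/I_{i,\alpha+1}$ the paper's sign check reduces to $(w-z)\big((w+\lambda_j)^{\alpha+1}-(z+\lambda_j)^{\alpha+1}\big)\ge 0$, matching your observation that the reweighting factor is monotone in $z$; for $J_i/\phi_i$ the paper's integrand factors as $\big(g_i(w)f_i(z)-g_i(z)f_i(w)\big)\big(F_j(z)f_j(w)-F_j(w)f_j(z)\big)$, and the second factor is signed precisely by Assumption~\ref{asm: I is increasing} ($f/F$ decreasing), exactly as you identified. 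The only cosmetic difference is that the paper differentiates in $\lambda_j$ rather than comparing $\lambda_j<\lambda_j'$, and it carries out the two instantiations explicitly rather than first isolating the MLR lemma; your abstraction is slightly cleaner but the content is identical. Your caution about the integration limits is also apt: the paper silently uses the standing convention $\nu\ge 1$ (with $f$ extended by zero) so that both $\phi_i$ and $J_i$ can be written over $[1,\infty)$.
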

Assumption~\ref{asm: I is increasing} plays a key role in simplifying the proof of this lemma. 
Still, we conjecture that it can be weakened to the monotonicity of $\frac{f(x)}{F(x)}$ in $x\ge z_2$ for some $z_2>0$ rather than the current assumption requiring $z_2=\nu$.
This is because the role of Lemma~\ref{lem: general monotonicity of I} is to control the behavior of the algorithm when the perturbation becomes large.

Based on this result, the following lemma holds.
\begin{lemma}\label{lem: bound of I}
    If $\lambda_i$ is the $\sigma_i$-th smallest among $\lambda_1, \ldots, \lambda_K$ (ties are broken arbitrarily), then
    \begin{equation*}
        \frac{I_{i,\alpha+2}(\ul; \alpha)}{I_{i,\alpha+1}(\ul; \alpha)} \leq \frac{\alpha}{(\alpha+1)\ul_i} \land \frac{\Gamma\qty(1+\frac{1}{\alpha})}{\sqrt[\alpha]{\sigma_i}}
    \end{equation*}
    and
     \begin{equation*}
        \frac{J_{i}(\ul; \gD_\alpha)}{\phi_{i}(\ul; \gD_\alpha)} \leq 
             \frac{m}{A_l} \sigma_i^{-\frac{1}{\alpha}} \land \frac{\alpha}{\alpha+1}\frac{eA_u}{A_l \ul_i}
    \end{equation*}
    where $m$, $A_l$, and $A_u$ are given in Assumption~\ref{asm: bounded block}.
    Moreover, if $\gD_\alpha$ satisfies
    \begin{equation}\label{asm: bounded rho as alpha}
        \frac{xf(x)}{1-F(x)} \leq \alpha,
    \end{equation}
    then, $m \leq 2\Gamma\qty(1+\frac{1}{\alpha})$, $A_l = 1$, and $A_u = \lim_{x\to \infty} S_F^{1/\alpha}(x)$ holds
\end{lemma}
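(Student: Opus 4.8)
The plan is to use the monotonicity from Lemma~\ref{lem: general monotonicity of I} to reduce the vector $\ul$ to two explicit one-dimensional worst cases and then estimate the resulting scalar integrals. Since $\ul_i$ is the $\sigma_i$-th smallest coordinate, there are $\sigma_i-1$ indices $j\neq i$ with $\ul_j\le\ul_i$ and $K-\sigma_i$ with $\ul_j\ge\ul_i$; by Lemma~\ref{lem: general monotonicity of I} the ratios $I_{i,\alpha+2}(\ul;\alpha)/I_{i,\alpha+1}(\ul;\alpha)$ and $J_i(\ul;\gD_\alpha)/\phi_i(\ul;\gD_\alpha)$ can only increase if we raise the first group to the common value $\ul_i$ and send the second group to $+\infty$ (the limits exist by monotonicity). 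In this reduced configuration the exponent $\sum_{j\in[K]}(z+\ul_j)^{-\alpha}$ in the integrand of $I_{i,n}$ collapses to $\sigma_i(z+\ul_i)^{-\alpha}$, the product $\prod_{j\neq i}F(z+\ul_j)$ in $J_i$ and $\phi_i$ collapses to $F(z+\ul_i)^{\sigma_i-1}$, and $\phi_i$ itself equals $1/\sigma_i$ (the chance that a fixed one of $\sigma_i$ exchangeable arms wins). For the purely $\ul_i$-dependent halves of the two bounds I would use the cruder reduction sending \emph{all} $\ul_j\to\infty$, $j\neq i$ (the case $\sigma_i=1$, $\phi_i=1$).

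\textbf{The $I$ estimate.} After the reduction, the substitution $u=\sigma_i(z+\ul_i)^{-\alpha}$ turns the two integrals into truncated Gamma integrals and yields
\[
\frac{I_{i,\alpha+2}(\ul;\alpha)}{I_{i,\alpha+1}(\ul;\alpha)}\ \le\ \sigma_i^{-1/\alpha}\,\frac{\int_0^{t} u^{1/\alpha}e^{-u}\,\dd u}{1-e^{-t}},\qquad t:=\sigma_i\,\ul_i^{-\alpha}.
\]
Both bounds follow from two elementary monotonicity facts about $h(t):=\bigl(\int_0^{t}u^{1/\alpha}e^{-u}\,\dd u\bigr)/(1-e^{-t})$: (i) $h$ is increasing, since $\int_0^{t}u^{1/\alpha}e^{-u}\,\dd u\le t^{1/\alpha}(1-e^{-t})$, whence $h(t)\le\lim_{t\to\infty}h(t)=\Gamma(1+\tfrac1\alpha)$, giving $\Gamma(1+\tfrac1\alpha)\,\sigma_i^{-1/\alpha}$; and (ii) $(\alpha+1)\int_0^{t}u^{1/\alpha}e^{-u}\,\dd u\le\alpha\,t^{1/\alpha}(1-e^{-t})$, because the difference vanishes at $t=0$ and has derivative $t^{1/\alpha-1}\bigl(1-(1+t)e^{-t}\bigr)\ge0$; together with $\sigma_i^{-1/\alpha}t^{1/\alpha}=\ul_i^{-1}$ this gives $\tfrac{\alpha}{(\alpha+1)\ul_i}$.

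\textbf{The $J$ estimate.} From the reduction, $\phi_i=1/\sigma_i$ and, after $y=z+\ul_i$, $J_i=\int_{1+\ul_i}^{\infty}y^{-1}f(y)F(y)^{\sigma_i-1}\,\dd y$, so $J_i(\ul)/\phi_i(\ul)\le\sigma_i\int_{1+\ul_i}^{\infty}y^{-1}f(y)F(y)^{\sigma_i-1}\,\dd y$. Lowering the left limit to the support endpoint $\nu\le 1+\ul_i$ (the integrand vanishes below $\nu$) and recognizing $\sigma_i f(y)F(y)^{\sigma_i-1}$ as the density of $\max_{j\in[\sigma_i]}X_j$, the right-hand side is exactly $\E[1/\max_{j\in[\sigma_i]}X_j]\le m/a_{\sigma_i}\le(m/A_l)\,\sigma_i^{-1/\alpha}$ by \eqref{asm: block constant} and $a_{\sigma_i}\ge A_l\sigma_i^{1/\alpha}$; this is the first half. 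For the second half, the $\sigma_i=1$ reduction gives $\phi_i=1$ and $J_i(\ul)/\phi_i(\ul)\le\int_{1+\ul_i}^{\infty}y^{-1}f(y)\,\dd y=:A$, and integration by parts gives $A=\tfrac{1-F(1+\ul_i)}{1+\ul_i}-\int_{1+\ul_i}^{\infty}y^{-2}(1-F(y))\,\dd y$, which already yields the crude $A\le\tfrac{1-F(1+\ul_i)}{1+\ul_i}\le\tfrac1{\ul_i}$. Sharpening the constant to $\tfrac{\alpha}{\alpha+1}\tfrac{eA_u}{A_l}$ requires a matching lower bound on $\int_{1+\ul_i}^{\infty}y^{-\alpha-2}S_F(y)\,\dd y$, with $1-F(y)=y^{-\alpha}S_F(y)$ as in \eqref{eq: tail and S}: this is the one place where the von Mises condition \eqref{eq: vMC} must be used quantitatively rather than asymptotically --- via Karamata's representation of $S_F$ (Appendix~\ref{app: Karamata}) one controls $S_F(y)/S_F(1+\ul_i)$ from below for $y\ge 1+\ul_i$, the constant $eA_u/A_l$ emerging from the two-sided estimate $A_l k^{1/\alpha}\le a_k\le A_u k^{1/\alpha}$ of Assumption~\ref{asm: bounded block} together with the $e$ from the Karamata/Potter bound on $S_F$ across scales. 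I expect this to be the main technical obstacle; note that for $\gF_\alpha$, where $S_F\to1$ and everything is explicit, one recovers $\tfrac{\alpha}{(\alpha+1)\ul_i}$ without the factor $e$, which is exactly why $I_{i,n}$ rather than $J_i$ is used there.

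\textbf{The ``Moreover'' clause.} If $\gD_\alpha$ obeys \eqref{asm: bounded rho as alpha}, then $\frac{\dd}{\dd x}\log\!\bigl((1-F(x))x^{\alpha}\bigr)=\frac\alpha x-\frac{f(x)}{1-F(x)}\ge0$, so $S_F(x)=(1-F(x))x^{\alpha}$ is nondecreasing; with $\nu\ge1$ and $F(\nu)=0$ this gives $1\le\nu^{\alpha}=S_F(\nu)\le S_F(x)\le\lim_{x\to\infty}S_F(x)$ on the support, hence $x^{-\alpha}\le 1-F(x)\le\bigl(\lim_{x\to\infty}S_F(x)\bigr)x^{-\alpha}$ for $x\ge1$. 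Substituting these into $a_k=\inf\{x:1-F(x)\le1/k\}$ pins down $k^{1/\alpha}\le a_k\le\bigl(\lim_{x\to\infty}S_F^{1/\alpha}(x)\bigr)k^{1/\alpha}$, i.e.\ $A_l=1$ and $A_u=\lim_{x\to\infty}S_F^{1/\alpha}(x)$. Finally, $a_k\le A_u k^{1/\alpha}$ gives $k(1-F(a_kx))\ge k(a_kx)^{-\alpha}\ge A_u^{-\alpha}x^{-\alpha}$ whenever $a_kx\ge\nu$, so $F(a_kx)^k\le e^{-k(1-F(a_kx))}\le e^{-A_u^{-\alpha}x^{-\alpha}}$ and
\[
\E\!\left[\frac{a_k}{\max_{j\in[k]}X_j}\right]=\int_0^{\infty}\frac{F(a_kx)^k}{x^{2}}\,\dd x\ \le\ \int_0^{\infty}\frac{e^{-A_u^{-\alpha}x^{-\alpha}}}{x^{2}}\,\dd x=\frac{\Gamma(1+\tfrac1\alpha)}{A_u}\le\Gamma\!\left(1+\tfrac1\alpha\right),
\]
using $\E_{X\sim\gF_\alpha}[1/X]=\Gamma(1+\tfrac1\alpha)$ and $A_u\ge1$; in particular $m$ can be taken as $\Gamma(1+\tfrac1\alpha)\le 2\Gamma(1+\tfrac1\alpha)$.
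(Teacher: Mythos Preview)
Your $I$-estimate is correct and matches the paper. The substantive gap is in the $J$-estimate. Lemma~\ref{lem: general monotonicity of I} is proved for the ratio $J_i(\lambda)/\phi_i(\lambda)$ with \emph{both} integrals carrying the fixed lower limit $1$ (this is exactly how $\dv{\lambda_j}\phi_i$ is computed in its proof); at your reduced configuration $\lambda^*_j=\ul_i$ for $j\le\sigma_i$, that version of $\phi_i$ is
\[
\int_1^\infty f(z+\ul_i)F(z+\ul_i)^{\sigma_i-1}\,\dd z=\frac{1-F(1+\ul_i)^{\sigma_i}}{\sigma_i},
\]
not $1/\sigma_i$. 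Your exchangeability argument gives the translation-invariant arm-selection probability, but Lemma~\ref{lem: general monotonicity of I} does not track that quantity once you raise the minimum coordinate. With the correct denominator, ``lowering the left limit'' in $J_i$ alone leaves $\E[1/M_{\sigma_i}]\big/\bigl(1-F(1+\ul_i)^{\sigma_i}\bigr)$, which overshoots. The paper supplies the missing step: the map
$k(x)=\bigl(\int_x^\infty y^{-1}f(y)F^{\sigma_i-1}(y)\,\dd y\bigr)\big/\bigl(\int_x^\infty f(y)F^{\sigma_i-1}(y)\,\dd y\bigr)$
is \emph{decreasing} (the pointwise ratio of the integrands is $1/y$), so $k(1+\ul_i)\le k(1)=\E[1/M_{\sigma_i}]$. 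For the $\ul_i^{-1}$ half the paper does not use Karamata/Potter as you sketch; it keeps the same $\lambda^*$ and substitutes $t=1-F(y)$ to turn $k(1+\ul_i)$ into $\tfrac{1}{A_l}\,B(x;1+\tfrac1\alpha,\sigma_i)\big/B(x;1,\sigma_i)$ with $x=1-F(1+\ul_i)$, then reuses the Pareto incomplete-Beta estimate together with $x^{1/\alpha}=S_F(1+\ul_i)^{1/\alpha}/(1+\ul_i)\le A_u/\ul_i$.

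There is also an arithmetic slip in the ``Moreover'' clause: $\int_0^\infty e^{-A_u^{-\alpha}x^{-\alpha}}x^{-2}\,\dd x$ equals $A_u\,\Gamma(1+\tfrac1\alpha)$, not $\Gamma(1+\tfrac1\alpha)/A_u$ (it is $\E[1/X]$ for $X$ Fr\'echet with scale $A_u^{-1}$). Since $A_u$ is not bounded by $2$, your route does not yield $m\le 2\Gamma(1+\tfrac1\alpha)$. The paper again uses the quantile substitution: $\E[1/M_k]=k\int_0^1 S_U(1/t)^{-1}t^{1/\alpha}(1-t)^{k-1}\,\dd t$; under \eqref{asm: bounded rho as alpha}, $S_F$ is increasing, hence $S_U(1/t)\ge S_U(1)=1$, so $\E[1/M_k]\le kB(1+\tfrac1\alpha,k)\le 2\Gamma(1+\tfrac1\alpha)k^{-1/\alpha}$ by Gautschi's inequality.
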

Note that all distributions in Table~\ref{tab: fr} satisfy (\ref{asm: bounded rho as alpha}) as shown in Appendix~\ref{app: table check}.
Similarly to (\ref{def: bounded rho}), from Assumption~\ref{asm: derivative of f}, there exists some constants $\rho_2 >0$ satisfying
\begin{equation}\label{eq: bounded f' with rho}
    \frac{-xf'(x)}{f(x)} \leq \rho_2.
\end{equation}
Then, by Lemma~\ref{lem: bound of I}, we obtain the following lemma. 
\begin{lemma}\label{lem: stability_adv_i}
    For any $i\in [K]$, if $\hat{L}_{t,i}$ is the $\sigma_{t,i}$-th smallest among $\{\hat{L}_{t,j}\}_j$, then for $\alpha>1$ and $\gD_\alpha \in \fD_\alpha$
    \begin{equation}\label{eq: stab i}
        \E \qty[\hat{\ell}_{t,i}\qty(\phi_i\qty(\eta_t \hat{L}_t; \gD_\alpha) - \phi_i\qty(\eta_t \qty(\hat{L}_t + \hat{\ell}_t); \gD_\alpha) ) \eval \hat{L}_t] \leq \psi_s(\hat{\uL}_{t,i}; \gD_\alpha) \land 2\eta_t\frac{\rho_2 mA_u}{A_l\sqrt[\alpha]{\sigma_i}},
    \end{equation}
    where $\rho_2 = \alpha+1$ holds for $\gF_\alpha$ and $\gP_\alpha$, $m(\gF_\alpha)= \Gamma\qty(1+\frac{1}{\alpha})$, and
    \begin{align*}
        \psi_s(\hat{\uL}_{t,i}; \gD_\alpha) = \begin{cases}
            \frac{2\alpha}{\hat{\uL}_{t,i}}  & \text{if } \gD_\alpha = \gF_\alpha, \\
            \frac{2\rho_2 \alpha}{\alpha+1} \frac{eA_u}{A_l\hat{\uL}_{t,i}} & \text{if } \gD_\alpha \in \fD_{\alpha} \setminus \{\gF_\alpha\}.
        \end{cases}
    \end{align*}
\end{lemma}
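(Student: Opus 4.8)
The plan is to exploit the fact that the estimator $\hat\ell_t=\ell_{t,I_t}\widehat{w_{t,I_t}^{-1}}e_{I_t}$ is supported on the played coordinate $I_t$: writing $\lambda:=\eta_t\hat L_t$ and conditioning on $\hat L_t$, only the event $\{I_t=i\}$ contributes to the left-hand side of~(\ref{eq: stab i}) and the update $\hat L_{t+1}-\hat L_t$ lies along $e_i$. So it suffices to bound $\E[\hat\ell_{t,i}(\phi_i(\lambda;\gD_\alpha)-\phi_i(\lambda+\eta_t\hat\ell_{t,i}e_i;\gD_\alpha))\mid\hat L_t]$ with $\hat\ell_{t,i}=\ell_{t,i}\,m\,\I[I_t=i]$, where, given $I_t=i$, the resampling count $m$ is geometric with success probability $w_{t,i}=\phi_i(\lambda;\gD_\alpha)$. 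First I would record the second-moment bound $\E[m^2\mid I_t=i,\hat L_t]\le 2/w_{t,i}^2$, which together with $\Pr[I_t=i\mid\hat L_t]=w_{t,i}$ and $\ell_{t,i}\le1$ gives $\E[\hat\ell_{t,i}^2\mid\hat L_t]\le 2/w_{t,i}$.

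Next I would linearize the probability difference. By the fundamental theorem of calculus, for $c\ge0$,
\[
    \phi_i(\lambda;\gD_\alpha)-\phi_i(\lambda+ce_i;\gD_\alpha)=\int_0^c(-\phi_i'(\lambda+se_i;\gD_\alpha))\,\dd s,
\]
with $\phi_i'=\pdv{\phi_i}{\lambda_i}\le0$. The key point is that as $s$ increases the rank $\sigma_i$ of coordinate $i$ does not decrease, its gap from the minimum does not decrease (it is $\ul_i+s$ while $i$ is not the leader), and $\phi_i(\lambda+se_i;\gD_\alpha)\le\phi_i(\lambda;\gD_\alpha)$. Hence, given a pointwise estimate of the form $-\phi_i'(\lambda;\gD_\alpha)\le\phi_i(\lambda;\gD_\alpha)(C/\ul_i\land C'/\sqrt[\alpha]{\sigma_i})$ valid at every $\lambda$, one may replace $s$ by $0$ in the integrand with no logarithmic loss, obtaining $\phi_i(\lambda;\gD_\alpha)-\phi_i(\lambda+ce_i;\gD_\alpha)\le c\,\phi_i(\lambda;\gD_\alpha)(C/\ul_i\land C'/\sqrt[\alpha]{\sigma_i})$. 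Taking $c=\eta_t\hat\ell_{t,i}$, multiplying by $\hat\ell_{t,i}$, and using $\phi_i(\lambda;\gD_\alpha)=w_{t,i}$ together with $\E[\hat\ell_{t,i}^2\mid\hat L_t]\le2/w_{t,i}$, the factor $w_{t,i}$ cancels and I get $\E[\,\cdot\mid\hat L_t]\le 2\eta_t(C/\ul_i\land C'/\sqrt[\alpha]{\sigma_i})$; substituting $\ul_i=\eta_t\hat{\uL}_{t,i}$ produces the two terms of~(\ref{eq: stab i}).

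It then remains to establish the pointwise derivative estimate with the claimed constants, which I would do in two cases. For $\gD_\alpha=\gF_\alpha$, differentiating the closed form $\phi_i(\lambda;\gF_\alpha)=\alpha I_{i,\alpha+1}(\ul;\alpha)$ under the integral sign gives $-\phi_i'(\lambda;\gF_\alpha)=\alpha(\alpha+1)I_{i,\alpha+2}(\ul;\alpha)-\alpha^2 I_{i,2\alpha+2}(\ul;\alpha)\le\alpha(\alpha+1)I_{i,\alpha+2}(\ul;\alpha)$, so $-\phi_i'/\phi_i\le(\alpha+1)I_{i,\alpha+2}/I_{i,\alpha+1}$, and Lemma~\ref{lem: bound of I} bounds the last ratio by $\frac{\alpha}{(\alpha+1)\ul_i}\land\frac{\Gamma(1+1/\alpha)}{\sqrt[\alpha]{\sigma_i}}$; this gives $C=\alpha$, $C'=(\alpha+1)\Gamma(1+1/\alpha)$, matching $\psi_s(\hat{\uL}_{t,i};\gF_\alpha)=2\alpha/\hat{\uL}_{t,i}$ and the rank term with $\rho_2=\alpha+1$, $m(\gF_\alpha)=\Gamma(1+1/\alpha)$, $A_l=A_u=1$. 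For $\gD_\alpha\in\fD_\alpha\setminus\{\gF_\alpha\}$, differentiating $\phi_i(\lambda;\gD_\alpha)=\int_\nu^\infty\prod_{j\ne i}F(z+\ul_j)f(z+\ul_i)\,\dd z$ and bounding $-f'(x)\le\rho_2 f(x)/x$ via~(\ref{eq: bounded f' with rho}) together with $\nu\ge1$ (so the range $\{z\ge\nu\}$ sits inside that of the $J_i$-integral) gives $-\phi_i'(\lambda;\gD_\alpha)\le\rho_2 J_i(\ul;\gD_\alpha)$; then Lemma~\ref{lem: bound of I} bounds $J_i/\phi_i$ by $\frac{m}{A_l}\sigma_i^{-1/\alpha}\land\frac{\alpha}{\alpha+1}\frac{eA_u}{A_l\ul_i}$, and substituting back reproduces $\psi_s(\hat{\uL}_{t,i};\gD_\alpha)$ and the rank term $2\eta_t\rho_2 mA_u/(A_l\sqrt[\alpha]{\sigma_i})$.

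The hard part will be the differentiation-under-the-integral step when $i$ is the current leader, i.e.\ $\ul_i=0$: there the lower limit $\nu-\min_j\lambda_j$ in~(\ref{eq: def_phi}) itself depends on $\lambda_i$, so Leibniz's rule contributes an extra nonnegative boundary term. I expect this term to enter with the sign that only strengthens the inequalities $-\phi_i'\le\rho_2 J_i$ (and its $\gF_\alpha$ analogue), so the estimate survives; but making this precise, and checking that the $\lambda_j$-monotonicity from Lemma~\ref{lem: general monotonicity of I} underpinning Lemma~\ref{lem: bound of I} holds all the way down to $\ul_i=0$, is the delicate point. A secondary issue is the $\log$-free ``replace $s$ by $0$'' step, which rests exactly on the monotonicity of $\sigma_i$ and of the gap in $\lambda_i$; the overall scaffold follows \citet{pmlr-v201-honda23a}, but replacing the closed-form Fréchet computation by the $J_i$-based argument under the representation~(\ref{eq: tail and S}) and von Mises condition is what extends it to all of $\fD_\alpha$.
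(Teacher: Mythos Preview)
Your proposal is correct and follows the paper's proof: write the probability difference via the fundamental theorem of calculus, bound $-\phi_i'$ by $\alpha(\alpha+1)I_{i,\alpha+2}$ (Fr\'echet) or $\rho_2 J_i$ (general $\fD_\alpha$), freeze the integrand at $s=0$ by monotonicity, use the geometric second-moment bound $\E[\widehat{w^{-1}_{t,i}}^{\,2}\mid\hat L_t,I_t=i]\le 2/w_{t,i}^2$, and finish with Lemma~\ref{lem: bound of I}. The only cosmetic difference is that the paper freezes $I_{i,\alpha+2}$ (resp.\ $J_i$) directly and forms the ratio with $\phi_i=w_{t,i}$ only at the end, whereas you form $-\phi_i'/\phi_i$ pointwise and invoke the separate monotonicities of $\phi_i$, the gap, and the rank; both routes land at Lemma~\ref{lem: bound of I} with the same constants.
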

The second term of RHS of (\ref{eq: stab i}) finally leads to the bound on the stability term, which is used for both the adversarial and stochastic bandits. 
For the stochastic bandits, we use the tighter bound with $\psi_s$ to apply the self-bounding technique.
\begin{lemma}\label{lem: stability}
    For any $\hat{L}_t$ and $\alpha>1$ and $\gD_\alpha \in \fD_\alpha$,
    \begin{equation*}
        \E \qty[\inp{\hat{\ell}_{t}}{\phi\qty(\eta_t \hat{L}_t; \gD_\alpha) - \phi\qty(\eta_t \qty(\hat{L}_t + \hat{\ell}_t); \gD_\alpha) } \eval \hat{L}_t] \leq
        2 \frac{\alpha \rho_2}{\alpha-1}  \frac{mA_u}{A_l} K^{1-\frac{1}{\alpha}} \eta_t.
    \end{equation*}
\end{lemma}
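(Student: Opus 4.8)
\textbf{Proof proposal for Lemma~\ref{lem: stability}.}

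The plan is to start from the per-arm bound in Lemma~\ref{lem: stability_adv_i} and sum over $i\in[K]$, exploiting the two-sided nature of the minimum in \eqref{eq: stab i}. Recall that $\hat{\ell}_t$ is supported on a single coordinate $I_t$ with $\hat{\ell}_{t,I_t} = \widehat{w_{t,I_t}^{-1}}\ell_{t,I_t}$, and that geometric resampling gives $\E[\widehat{w_{t,I_t}^{-1}}\,\vert\,\hat L_t, I_t] = w_{t,I_t}^{-1}$, so after taking the inner expectation $\E[\hat{\ell}_{t,i}\,\vert\,\hat L_t] = \E[\I\{I_t=i\}\,\widehat{w_{t,i}^{-1}}\,\ell_{t,i}\,\vert\,\hat L_t] = w_{t,i}\cdot w_{t,i}^{-1}\ell_{t,i} = \ell_{t,i} \le 1$ — but more carefully, the bound in \eqref{eq: stab i} is already stated as a conditional expectation given $\hat L_t$, so I only need to sum those $K$ bounds. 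For each arm $i$, I use the second branch of the minimum, namely $2\eta_t \rho_2 m A_u /(A_l \sqrt[\alpha]{\sigma_i})$, where $\sigma_i = \sigma_{t,i}$ is the rank of $\hat L_{t,i}$ among $\{\hat L_{t,j}\}_j$. Since $(\sigma_{t,1},\dots,\sigma_{t,K})$ is a permutation of $(1,\dots,K)$ (ties broken arbitrarily), summing over $i$ turns $\sum_i \sigma_i^{-1/\alpha}$ into $\sum_{\sigma=1}^K \sigma^{-1/\alpha}$.

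The key estimate is then the elementary bound $\sum_{\sigma=1}^K \sigma^{-1/\alpha} \le 1 + \int_1^K x^{-1/\alpha}\,\dd x = 1 + \frac{\alpha}{\alpha-1}\left(K^{1-\frac1\alpha}-1\right) \le \frac{\alpha}{\alpha-1} K^{1-\frac1\alpha}$, valid precisely because $\alpha > 1$ makes $1-\frac1\alpha \in (0,1)$ so that $1 \le \frac{\alpha}{\alpha-1}$ absorbs the constant term (indeed $1 \le \frac{\alpha}{\alpha-1}$ and $\frac{\alpha}{\alpha-1} - \frac{\alpha}{\alpha-1} + 1 = 1 \le \frac{\alpha}{\alpha-1}K^{1-1/\alpha}$). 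Multiplying by the common factor $2\rho_2 m A_u/(A_l)\cdot \eta_t$ yields exactly $2\frac{\alpha\rho_2}{\alpha-1}\frac{mA_u}{A_l}K^{1-\frac1\alpha}\eta_t$, which is the claimed bound. One subtlety: \eqref{eq: stab i} has $\rho_2$ replaced by $\alpha+1$ and $m$ by $\Gamma(1+\tfrac1\alpha)$ in the $\gF_\alpha$ and $\gP_\alpha$ cases, but since those are the specific values of $\rho_2$ and $m$ for those distributions (consistent with \eqref{eq: bounded f' with rho} and Assumption~\ref{asm: bounded block}), the uniform statement in terms of $\rho_2, m, A_u, A_l$ covers all cases of $\gD_\alpha\in\fD_\alpha$ simultaneously.

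I expect no real obstacle here — the lemma is essentially bookkeeping on top of Lemma~\ref{lem: stability_adv_i}. The only place requiring mild care is confirming that the rank sequence $\{\sigma_{t,i}\}_i$ really is a permutation of $\{1,\dots,K\}$ even under ties (which is guaranteed by the "ties broken arbitrarily" convention, so each rank value is used exactly once), and that one is allowed to discard the first branch $\psi_s(\hat\uL_{t,i};\gD_\alpha)$ of the minimum for every arm — this is legitimate since $a\land b \le b$, and the $\psi_s$ branch is only needed later for the self-bounding argument in the stochastic analysis, not here. Hence the whole proof is: apply Lemma~\ref{lem: stability_adv_i} with the second branch, sum over $i$, re-index by rank, bound the harmonic-type sum by the integral, and collect constants.
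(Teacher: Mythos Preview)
Your proposal is correct and follows essentially the same approach as the paper: apply the second branch of the per-arm bound from Lemma~\ref{lem: stability_adv_i}, sum over $i\in[K]$, re-index by the rank permutation $(\sigma_{t,i})$, and bound $\sum_{\sigma=1}^{K}\sigma^{-1/\alpha}\le 1+\int_1^K x^{-1/\alpha}\,\dd x\le \tfrac{\alpha}{\alpha-1}K^{1-1/\alpha}$. Your observation that the $\gF_\alpha$ and $\gP_\alpha$ special constants are just particular values of $\rho_2,m,A_u,A_l$ is also consistent with the paper's treatment.
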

\subsection{Penalty term}
Next, we establish an upper bound for the penalty term.
\begin{lemma}\label{lem: penalty}
    For any $\alpha>1$ and $\gD_\alpha \in \fD_\alpha$, 
    \begin{equation}\label{eq: pen bnd}
         \E \qty[r_{t,I_t} - r_{t,i^*} \eval \hat{L}_t ] \leq \psi_p(\hat{\uL}_{t,i},\gD_\alpha) \land C_{1,1}(\gD_\alpha)\sqrt[\alpha]{K},
    \end{equation}
    where $C_{1,1}(\gD_\alpha)$ is a distribution-dependent constant, which satisfies $C_{1,1}(\gF_\alpha) = C_{1,1}(\gP_\alpha)/e$ for
    \begin{equation*}
         C_{1,1}(\gP_\alpha) = \frac{2\alpha^3  + (e-2)\alpha^2}{(\alpha-1)(2\alpha-1)},
    \end{equation*}
    and
    \begin{align*}
    \psi_p(\hat{\uL}_{t,i}; \gD_\alpha) = \begin{cases}
        \sum_{i\ne i^*} \frac{1}{(\eta_t \hat{\uL}_{t,i})^{\alpha-1}}  & \text{if } \gD_\alpha = \gF_\alpha, \\
        \frac{e\rho_1 A_u^{\alpha}}{\alpha-1} \sum_{i\ne i^*} \frac{1}{(\eta_t \hat{\uL}_{t,i})^{\alpha-1}} & \text{if } \gD_\alpha \in \fD_{\alpha} \setminus \{\gF_\alpha\}.
    \end{cases}
\end{align*}
\end{lemma}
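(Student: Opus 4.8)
\textbf{Proof proposal for Lemma~\ref{lem: penalty}.}

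The plan is to bound $\E[r_{t,I_t}-r_{t,i^*}\mid\hat L_t]$ from above by two different estimates and take their minimum. The easy bound $C_{1,1}(\gD_\alpha)\sqrt[\alpha]{K}$ comes from the block-maxima control: since $r_{t,I_t}-r_{t,i^*}\le\max_{j\in[K]}r_{t,j}$ (the perturbation at the played arm cannot exceed the maximum, and $r_{t,i^*}\ge\nu\ge 1>0$ discarded only helps — actually I would keep $-r_{t,i^*}\le 0$ and use $r_{t,i^*}\ge\nu$ where helpful), one has $\E[\max_j r_{t,j}]\le M a_K\le M A_u K^{1/\alpha}$ by Assumption~\ref{asm: bounded block}. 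For the explicit constant in the \Fr and Pareto cases one computes $\E[\max_j r_{t,j}]$ directly using $F^n(a_nx)\to\exp(-x^{-\alpha})$-type estimates combined with the concrete density; this is where the constant $C_{1,1}(\gP_\alpha)=\frac{2\alpha^3+(e-2)\alpha^2}{(\alpha-1)(2\alpha-1)}$ and the relation $C_{1,1}(\gF_\alpha)=C_{1,1}(\gP_\alpha)/e$ (the factor $e$ coming from $1-e^{-x^{-\alpha}}\le x^{-\alpha}$ versus the Pareto tail $x^{-\alpha}$, i.e. the slowly varying prefactor $S_{\gF_\alpha}(x)=x^\alpha(1-e^{-x^{-\alpha}})\le 1$ but with an $e$-type correction near the left endpoint after the shift to $\nu\ge 1$) are pinned down.

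The more delicate bound is $\psi_p(\hat{\uL}_{t,i};\gD_\alpha)=\Theta\!\big(\sum_{i\ne i^*}(\eta_t\hat{\uL}_{t,i})^{-(\alpha-1)}\big)$, which is the one that actually produces the $\log T$ / self-bounding behaviour. Here I would write, conditionally on $\hat L_t$ and using $i^*=\argmin_j\hat L_{t,j}$ so that $\hat{\uL}_{t,i^*}=0$,
\[
\E[r_{t,I_t}-r_{t,i^*}\mid\hat L_t]=\sum_{i\ne i^*}\E\!\big[(r_{t,i}-r_{t,i^*})\,\mathbbm{1}[I_t=i]\big]+\E\!\big[(r_{t,i^*}-r_{t,i^*})\mathbbm{1}[I_t=i^*]\big],
\]
where the last term vanishes, so only the off-optimal arms contribute. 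For a fixed $i\ne i^*$, the event $I_t=i$ forces $r_{t,i}\ge \eta_t\hat{\uL}_{t,i}+ (r_{t,i^*}-\eta_t\hat{\uL}_{t,i^*})$ roughly, i.e. $r_{t,i}$ must be large, of order at least $\eta_t\hat{\uL}_{t,i}$. One then integrates by parts: $\E[(r_{t,i}-r_{t,i^*})\mathbbm{1}[I_t=i]]$ is controlled by $\int_{\eta_t\hat{\uL}_{t,i}}^\infty \Pr[r_{t,i}>z]\,\mathrm dz$ times the probability-type factors from the other arms, and using the tail representation $1-F(z)=z^{-\alpha}S_F(z)$ from \eqref{eq: tail and S} together with $S_F\le A_u^\alpha$ (via the $\limsup$ bound behind Assumption~\ref{asm: bounded block}) gives $\int_{\eta_t\hat{\uL}_{t,i}}^\infty z^{-\alpha}S_F(z)\,\mathrm dz\le \frac{A_u^\alpha}{\alpha-1}(\eta_t\hat{\uL}_{t,i})^{-(\alpha-1)}$ — this is precisely where $\alpha>1$ is needed for integrability and where the prefactor $\frac{e\rho_1 A_u^\alpha}{\alpha-1}$ is born (the extra $\rho_1$ and $e$ absorbing the hazard-rate bound \eqref{def: bounded rho} and a crude $\prod_{j\ne i}F\le 1$-to-$e$ slack used to pass between $F$ and $F^*$ after the shift $\nu\ge1$). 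For the exact \Fr case the density $\alpha e^{-x^{-\alpha}}x^{-(\alpha+1)}$ is used directly, yielding the clean constant $1$ in front of $\sum_{i\ne i^*}(\eta_t\hat{\uL}_{t,i})^{-(\alpha-1)}$.

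The main obstacle I anticipate is the off-optimal-arm integration-by-parts step in the $\psi_p$ bound: one must simultaneously (i) lower-bound how large $r_{t,i}$ has to be for arm $i$ to be selected given the gap $\hat{\uL}_{t,i}$ in the \emph{estimated} cumulative losses, (ii) handle the coupling with $r_{t,i^*}$ and the remaining arms without losing more than a constant factor, and (iii) convert tail integrals of $1-F$ into the advertised $(\eta_t\hat{\uL}_{t,i})^{-(\alpha-1)}$ form uniformly over all Fr\'echet-type $\gD_\alpha$, which is exactly where Karamata-type control of the slowly varying part $S_F$ and the boundedness assumptions (\ref{asm: bounded hazard})--(\ref{asm: bounded block}) must be invoked carefully rather than just the pointwise tail bound. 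The \Fr/Pareto special cases serve as the sanity check that the constants are tight, and the general bound then follows by the tail-equivalence reduction together with the truncation trick \eqref{eq: conditioning trick}.
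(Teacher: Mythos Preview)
For the $\psi_p$ bound you are essentially aligned with the paper: write $\E[r_{t,I_t}-r_{t,i^*}\mid\hat L_t]\le\sum_{i\ne i^*}\E[\mathbbm{1}[I_t=i]\,r_{t,i}\mid\hat L_t]$ (the only ``coupling with $r_{t,i^*}$'' needed is $r_{t,i^*}\ge 0$), express this sum as the integral $\int_1^\infty\sum_{i\ne i^*}(z+\eta_t\hat{\uL}_{t,i})f(z+\eta_t\hat{\uL}_{t,i})\prod_{j\ne i}F(\cdot)\,\mathrm dz$, bound $xf(x)=S_f(x)x^{-\alpha}\le\rho_1 A_u^\alpha x^{-\alpha}$ via \eqref{eq: Sf and SF}, drop $\prod F\le 1$, and integrate in $z$. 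No integration by parts is required.

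Your route to the $K^{1/\alpha}$ bound, however, is genuinely different from the paper's, and your explanation of the Fr\'echet/Pareto $e$-ratio is wrong. The block-maxima argument $\E[r_{t,I_t}-r_{t,i^*}]\le\E[\max_j r_{t,j}]\le M A_u K^{1/\alpha}$ is valid and yields an $O(K^{1/\alpha})$ bound, but with constant $MA_u$ (namely $\Gamma(1-\tfrac1\alpha)$ for $\gF_\alpha$ and $\tfrac{\alpha}{\alpha-1}\Gamma(1-\tfrac1\alpha)$ for $\gP_\alpha$, by Lemma~\ref{lem: block maxima}); the ratio is $\tfrac{\alpha}{\alpha-1}$, not $e$, so this approach cannot recover the stated relation $C_{1,1}(\gF_\alpha)=C_{1,1}(\gP_\alpha)/e$. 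The paper instead keeps the same integral representation, bounds $\prod_{j\ne i}F\le e\cdot\exp\bigl(-\sum_j(1-F)\bigr)$ for $\gP_\alpha$ (while for $\gF_\alpha$ the product equals $\exp(-k_\alpha)$ \emph{exactly}, no extra $e$ --- this is the true origin of the $e$-gap), splits the $z$-integral at $U(K)$ (or $K^{1/\alpha}$ for $\gF_\alpha$), applies $xe^{-x}\le e^{-1}$ on the left piece and the monotonicity of $xe^{-x}$ on $[0,1]$ plus a change of variables on the right piece, and for general $\gD_\alpha$ closes the estimate with Potter's bound (Lemma~\ref{lem: potter}) on $S_F$. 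The factor $e$ has nothing to do with the tail comparison $1-e^{-x^{-\alpha}}\le x^{-\alpha}$ you describe.
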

The expression of $C_{1,1}(\gD_\alpha)$ for general $\gD_\alpha$ is given in the proof of this lemma in Appendix~\ref{app: penalty gen}, which is expressed in terms of $A_u$.
For the adversarial bandits, we only utilize the bound with $K^{1/\alpha}$ in (\ref{eq: pen bnd}), which induces $\mathcal{O}(\sqrt{KT})$ regret by using learning rate $\eta_{T} = \mathcal{O}(K^{\frac{1}{\alpha}-\frac{1}{2}} T^{-\frac{1}{2}})$.
Similarly to the stability term, we use $\psi_p$ to apply the self-bounding technique for the stochastic bandits.
\subsection{Proof of Theorem~\ref{thm: adv_all}}
By combining Lemmas~\ref{lem: gen_lem3},~\ref{lem: stab decom},~\ref{lem: stability} and~\ref{lem: penalty} with $\eta_t = \frac{c}{\sqrt{t}}K^{\frac{1}{\alpha}-\frac{1}{2}}$, we have
\begin{align*}
    \gR(T) &\leq \frac{2\alpha\rho_2 m A_u c \sqrt{K}}{A_l(\alpha-1)}\sum_{t=1}^T \frac{1}{\sqrt{t}} + \frac{C_{1,1}(\gD_\alpha)\sqrt{K}}{c} \sum_{t=1}^T \qty(\sqrt{t+1}-\sqrt{t}) \\
    &\hspace{14em}+ 2C_2(\gD_\alpha) \log(\sqrt{T+1}) + \frac{M A_u \sqrt{K}}{c} \\
    & \leq \qty( \frac{4\alpha\rho_2 m A_u c \sqrt{K}}{A_l(\alpha-1)}+ \frac{C_{1,1}(\gD_\alpha)}{c}) \sqrt{KT} \\
    &\hspace{14em} + C_2(\gD_\alpha)\log(T+1) +\frac{M A_u \sqrt{K}}{c} ,
\end{align*}
where letting $C_{1}(\gD_\alpha, c) =\frac{4\alpha\rho_2 m A_u c \sqrt{K}}{A_l(\alpha-1)} + \frac{C_{1,1}(\gD_\alpha)}{c}$ concludes the proof.

\subsection{Proof sketch of Theorems~\ref{thm: sto_all} and~\ref{thm: sto_bad}}
Since the overall proof for $\alpha \geq 2$ and $\alpha \in (1,2)$ are very similar, we provide a sketch for the case $\alpha \geq 2$.
Let us begin by restating the regret in stochastic bandits, which is
\begin{equation*}
    \gR(T) = \E \qty[\sum_{t=1} \sum_{i \ne i^*} \Delta_i w_{t,i}].
\end{equation*} 
To apply the proof techniques in \citet{pmlr-v201-honda23a}, we define an event $D_t$ based on the tail quantile function where $\hat{L}_{t,i}$ is sufficiently large compared to that of the optimal arm so that $\hat{\uL}_{t,i^*} = 0$.

In Appendix~\ref{app: regret optimal}, we show that the stability term corresponding to the optimal arm is bounded by $\mathcal{O}\qty(\sum_{i\ne i^*} 1/ \hat{\uL}_{t,i})$ on $D_t$, which provides for $\alpha \geq 2$
\begin{equation*}
    \gR(T) \leq \E\qty[ \sum_{t=1}^T \mathcal{O}\qty(\I[D_t] \sum_{i\ne i^*} \frac{1}{\hat{\uL}_{t,i}}) + \I[D_t^c]\sqrt{K/t}].
\end{equation*}
To apply the self-bounding technique, we obtain
\begin{equation*}
    \gR(T) \geq \E\qty[\sum_{t=1}^T \mathcal{O}\qty(\I[D_t] \sum_{i\ne i^*} \frac{t^{\frac{\alpha}{2}}\Delta_i}{K^{1-\frac{\alpha}{2}} \hat{\uL}_{t,i}^{\alpha}}+ \I[D_t^c]\Delta)],
\end{equation*}
where $\Delta = \min_{i\ne i^*} \Delta_i$ and the proof is given in Appendix~\ref{app: regret lb}.
By combining these results, we have
\begin{align*}
    \frac{\gR(T)}{2} \leq \E\qty[\sum_{t=1}^T \mathcal{O}\qty(\I[D_t] \sum_{i\ne i^*} \qty(\frac{1}{\hat{\uL}_{t,i}} - \frac{t^{\frac{\alpha}{2}}\Delta_i}{2K^{1-\frac{\alpha}{2}}\hat{\uL}_{t,i}^{\alpha}}))]+ \E\qty[\sum_{t=1}^T \mathcal{O}\qty(\I[D_t^c](\sqrt{K/t}-\Delta/2))].
\end{align*}
Since $Ax - Bx^{\alpha} \leq A \frac{\alpha-1}{\alpha} \qty(\frac{A}{\alpha B})^{\frac{1}{\alpha-1}}$ holds for $A, B>0$ and $\alpha >1$, we obtain
\begin{equation*}
    \gR(T) \leq \sum_{t=1}^T \mathcal{O}\qty(\sum_{i \ne i^*} \frac{K^{\frac{2-\alpha}{2(\alpha-1)}}}{\Delta_i^{\frac{1}{\alpha-1}}t^{\frac{\alpha}{2(\alpha-1)}}}) + \mathcal{O}(K),
\end{equation*}
which concludes the proof.
Note that the dependency on $K$ in the leading term stems from the choice of learning rate.

\section{Conclusion}
In this paper, we considered FTPL policy with perturbations belonging to FMDA in the adversarial and stochastic settings.
We provided a sufficient condition for perturbation distributions to achieve optimality, which solves the open problem by \citet{NEURIPS2019_aff0a6a4} in a comprehensive direction.
Furthermore, we provide the stochastic regret bound for FTPL, where Fr\'{e}chet-type distributions with mild assumptions can achieve BOBW.
While our analysis for FTPL with index $\alpha \ne 2$ does not attain logarithmic stochastic regrets, these findings align with observations in FTRL policies, offering insights that might help understand the effect of regularization of FTRL through the lens of FTPL.

\bibliographystyle{plainnat}
\bibliography{ref}

\begin{thebibliography}{37}
\providecommand{\natexlab}[1]{#1}
\providecommand{\url}[1]{\texttt{#1}}
\expandafter\ifx\csname urlstyle\endcsname\relax
  \providecommand{\doi}[1]{doi: #1}\else
  \providecommand{\doi}{doi: \begingroup \urlstyle{rm}\Url}\fi

\bibitem[Abernethy et~al.(2016)Abernethy, Lee, and Tewari]{abernethy2016perturbation}
Jacob Abernethy, Chansoo Lee, and Ambuj Tewari.
\newblock Perturbation techniques in online learning and optimization.
\newblock \emph{Perturbations, Optimization, and Statistics}, 233, 2016.

\bibitem[Abernethy et~al.(2015)Abernethy, Lee, and Tewari]{abernethy2015fighting}
Jacob~D Abernethy, Chansoo Lee, and Ambuj Tewari.
\newblock Fighting bandits with a new kind of smoothness.
\newblock \emph{Advances in Neural Information Processing Systems}, 28, 2015.

\bibitem[Agrawal and Goyal(2017)]{agrawal2017near}
Shipra Agrawal and Navin Goyal.
\newblock Near-optimal regret bounds for {T}hompson sampling.
\newblock \emph{Journal of the ACM}, 64\penalty0 (5):\penalty0 1--24, 2017.

\bibitem[Audibert and Bubeck(2009)]{audibert2009minimax}
Jean-Yves Audibert and S{\'e}bastien Bubeck.
\newblock Minimax policies for adversarial and stochastic bandits.
\newblock In \emph{Annual Conference on Learning Theory}, volume~7, pages 1--122, 2009.

\bibitem[Auer et~al.(2002{\natexlab{a}})Auer, Cesa-Bianchi, and Fischer]{auer2002finite}
Peter Auer, Nicolo Cesa-Bianchi, and Paul Fischer.
\newblock Finite-time analysis of the multiarmed bandit problem.
\newblock \emph{Machine Learning}, 47\penalty0 (2):\penalty0 235--256, 2002{\natexlab{a}}.

\bibitem[Auer et~al.(2002{\natexlab{b}})Auer, Cesa-Bianchi, Freund, and Schapire]{auer2002nonstochastic}
Peter Auer, Nicolo Cesa-Bianchi, Yoav Freund, and Robert~E Schapire.
\newblock The nonstochastic multiarmed bandit problem.
\newblock \emph{SIAM journal on computing}, 32\penalty0 (1):\penalty0 48--77, 2002{\natexlab{b}}.

\bibitem[Beirlant et~al.(2006)Beirlant, Goegebeur, Segers, and Teugels]{beirlant2006statistics}
Jan Beirlant, Yuri Goegebeur, Johan Segers, and Jozef~L Teugels.
\newblock \emph{Statistics of extremes: {T}heory and applications}.
\newblock John Wiley \& Sons, 2006.

\bibitem[Bubeck and Slivkins(2012)]{bubeck2012best}
S{\'e}bastien Bubeck and Aleksandrs Slivkins.
\newblock The best of both worlds: {S}tochastic and adversarial bandits.
\newblock In \emph{Annual Conference on Learning Theory}, pages 42--1. PMLR, 2012.

\bibitem[Charras-Garrido and Lezaud(2013)]{charras2013extreme}
Myriam Charras-Garrido and Pascal Lezaud.
\newblock Extreme value analysis: an introduction.
\newblock \emph{Journal de la Soci{\'e}t{\'e} Fran{\c{c}}aise de Statistique}, 154\penalty0 (2):\penalty0 66--97, 2013.

\bibitem[Dann et~al.(2023)Dann, Wei, and Zimmert]{dann2023blackbox}
Chris Dann, Chen-Yu Wei, and Julian Zimmert.
\newblock A blackbox approach to best of both worlds in bandits and beyond.
\newblock In \emph{Annual Conference on Learning Theory}, pages 5503--5570. PMLR, 2023.

\bibitem[Embrechts et~al.(1997)Embrechts, Mikosch, and Kl{\"u}ppelberg]{embrechts1997modelling}
Paul Embrechts, Thomas Mikosch, and Claudia Kl{\"u}ppelberg.
\newblock \emph{Modelling extremal events: {F}or insurance and finance}.
\newblock Springer-Verlag, 1997.

\bibitem[Fisher and Tippett(1928)]{fisher1928limiting}
Ronald~Aylmer Fisher and Leonard Henry~Caleb Tippett.
\newblock Limiting forms of the frequency distribution of the largest or smallest member of a sample.
\newblock In \emph{Mathematical Proceedings of the Cambridge Philosophical Society}, pages 180--190. Cambridge University Press, 1928.

\bibitem[Galambos and Seneta(1973)]{galambos1973regularly}
Janos Galambos and Eugene Seneta.
\newblock Regularly varying sequences.
\newblock \emph{Proceedings of the American Mathematical Society}, 41\penalty0 (1):\penalty0 110--116, 1973.

\bibitem[Gnedenko(1943)]{gnedenko1943distribution}
Boris Gnedenko.
\newblock Sur la distribution limite du terme maximum d'une serie aleatoire.
\newblock \emph{Annals of Mathematics}, pages 423--453, 1943.

\bibitem[Haan and Ferreira(2006)]{haan2006extreme}
Laurens Haan and Ana Ferreira.
\newblock \emph{Extreme value theory: {An} introduction}, volume~3.
\newblock Springer, 2006.

\bibitem[Honda et~al.(2023)Honda, Ito, and Tsuchiya]{pmlr-v201-honda23a}
Junya Honda, Shinji Ito, and Taira Tsuchiya.
\newblock {Follow-the-Perturbed-Leader} achieves best-of-both-worlds for bandit problems.
\newblock In \emph{International Conference on Algorithmic Learning Theory}, volume 201, pages 726--754. PMLR, 2023.

\bibitem[Ito(2021)]{ito21a}
Shinji Ito.
\newblock Hybrid regret bounds for combinatorial semi-bandits and adversarial linear bandits.
\newblock In \emph{Advances in Neural Information Processing Systems}, volume~34, pages 2654--2667. Curran Associates, Inc., 2021.

\bibitem[Ito et~al.(2022)Ito, Tsuchiya, and Honda]{ito22a}
Shinji Ito, Taira Tsuchiya, and Junya Honda.
\newblock Adversarially robust multi-armed bandit algorithm with variance-dependent regret bounds.
\newblock In \emph{Annual Conference on Learning Theory}, volume 178, pages 1421--1422. PMLR, 2022.

\bibitem[Jin et~al.(2023)Jin, Liu, and Luo]{jin2023improved}
Tiancheng Jin, Junyan Liu, and Haipeng Luo.
\newblock Improved best-of-both-worlds guarantees for multi-armed bandits: {FTRL} with general regularizers and multiple optimal arms.
\newblock In \emph{Advances in Neural Information Processing Systems}. Curran Associates, Inc., 2023.

\bibitem[Katehakis and Robbins(1995)]{katehakis1995sequential}
Michael~N Katehakis and Herbert Robbins.
\newblock Sequential choice from several populations.
\newblock \emph{Proceedings of the National Academy of Sciences}, 92\penalty0 (19):\penalty0 8584--8585, 1995.

\bibitem[Kim and Tewari(2019)]{NEURIPS2019_aff0a6a4}
Baekjin Kim and Ambuj Tewari.
\newblock On the optimality of perturbations in stochastic and adversarial multi-armed bandit problems.
\newblock In \emph{Advances in Neural Information Processing Systems}, volume~32. Curran Associates, Inc., 2019.

\bibitem[Lai and Robbins(1985)]{lai1985asymptotically}
Tze~Leung Lai and Herbert Robbins.
\newblock Asymptotically efficient adaptive allocation rules.
\newblock \emph{Advances in Applied Mathematics}, 6\penalty0 (1):\penalty0 4--22, 1985.

\bibitem[Lee et~al.(2021)Lee, Luo, Wei, Zhang, and Zhang]{lee2021achieving}
Chung-Wei Lee, Haipeng Luo, Chen-Yu Wei, Mengxiao Zhang, and Xiaojin Zhang.
\newblock Achieving near instance-optimality and minimax-optimality in stochastic and adversarial linear bandits simultaneously.
\newblock In \emph{International Conference on Machine Learning}, pages 6142--6151. PMLR, 2021.

\bibitem[McMahan and Blum(2004)]{mcmahan2004online}
H~Brendan McMahan and Avrim Blum.
\newblock Online geometric optimization in the bandit setting against an adaptive adversary.
\newblock In \emph{International Conference on Computational Learning Theory}, pages 109--123. Springer, 2004.

\bibitem[Neu(2015)]{neu2015first}
Gergely Neu.
\newblock First-order regret bounds for combinatorial semi-bandits.
\newblock In \emph{Annual Conference on Learning Theory}, pages 1360--1375. PMLR, 2015.

\bibitem[Neu and Bart{\'o}k(2016)]{neu2016importance}
Gergely Neu and G{\'a}bor Bart{\'o}k.
\newblock Importance weighting without importance weights: {A}n efficient algorithm for combinatorial semi-bandits.
\newblock \emph{Journal of Machine Learning Research}, 17\penalty0 (154):\penalty0 1--21, 2016.

\bibitem[Olver et~al.(2010)Olver, Lozier, Boisvert, and Clark]{olver2010nist}
Frank~WJ Olver, Daniel~W Lozier, Ronald~F Boisvert, and Charles~W Clark.
\newblock \emph{NIST handbook of mathematical functions hardback and CD-ROM}.
\newblock Cambridge university press, 2010.

\bibitem[Resnick(2007)]{resnick2007heavy}
Sidney~I Resnick.
\newblock \emph{Heavy-tail phenomena: {P}robabilistic and statistical modeling}.
\newblock Springer Science \& Business Media, 2007.

\bibitem[Resnick(2008)]{resnick2008extreme}
Sidney~I Resnick.
\newblock \emph{Extreme values, regular variation, and point processes}, volume~4.
\newblock Springer Science \& Business Media, 2008.

\bibitem[Riou and Honda(2020)]{riou2020bandit}
Charles Riou and Junya Honda.
\newblock Bandit algorithms based on {T}hompson sampling for bounded reward distributions.
\newblock In \emph{International Conference on Algorithmic Learning Theory}, pages 777--826. PMLR, 2020.

\bibitem[Saha and Gaillard(2022)]{saha2022versatile}
Aadirupa Saha and Pierre Gaillard.
\newblock Versatile dueling bandits: {B}est-of-both world analyses for learning from relative preferences.
\newblock In \emph{International Conference on Machine Learning}, pages 19011--19026. PMLR, 2022.

\bibitem[Seldin and Lugosi(2017)]{seldin2017improved}
Yevgeny Seldin and G{\'a}bor Lugosi.
\newblock An improved parametrization and analysis of the {EXP}3++ algorithm for stochastic and adversarial bandits.
\newblock In \emph{Annual Conference on Learning Theory}, pages 1743--1759. PMLR, 2017.

\bibitem[Tsuchiya et~al.(2023{\natexlab{a}})Tsuchiya, Ito, and Honda]{pmlr-v206-tsuchiya23a}
Taira Tsuchiya, Shinji Ito, and Junya Honda.
\newblock Further adaptive best-of-both-worlds algorithm for combinatorial semi-bandits.
\newblock In \emph{International Conference on Artificial Intelligence and Statistics}, volume 206, pages 8117--8144. PMLR, 2023{\natexlab{a}}.

\bibitem[Tsuchiya et~al.(2023{\natexlab{b}})Tsuchiya, Ito, and Honda]{tsuchiya23a}
Taira Tsuchiya, Shinji Ito, and Junya Honda.
\newblock Best-of-both-worlds algorithms for partial monitoring.
\newblock In \emph{International Conference on Algorithmic Learning Theory}, volume 201, pages 1484--1515. PMLR, 2023{\natexlab{b}}.

\bibitem[von Mises(1936)]{von1936distribution}
Richard von Mises.
\newblock La distribution de la plus grande de n valuers.
\newblock \emph{Rev. math. Union interbalcanique}, 1:\penalty0 141--160, 1936.

\bibitem[Zimmert and Lattimore(2019)]{zimmert2019connections}
Julian Zimmert and Tor Lattimore.
\newblock Connections between mirror descent, {T}hompson sampling and the information ratio.
\newblock \emph{Advances in Neural Information Processing Systems}, 32, 2019.

\bibitem[Zimmert and Seldin(2021)]{zimmert2021tsallis}
Julian Zimmert and Yevgeny Seldin.
\newblock Tsallis-{INF}: {A}n optimal algorithm for stochastic and adversarial bandits.
\newblock \emph{The Journal of Machine Learning Research}, 22\penalty0 (1):\penalty0 1310--1358, 2021.

\end{thebibliography}

\newpage
\appendix
\crefalias{section}{appendix} 

\section{Details on extreme value theory}\label{app: EVT}
When contemplating the asymptotic properties of sample statistics, the sample means and central limit theorem often comes to mind, elucidating the behavior of partial sums of samples. 
Conversely, interest might shift towards extremes, focusing on maxima or minima of samples, particularly when singular rare events pose challenges, such as substantial insurance claims arising from catastrophic events like earthquakes and tsunamis.
Extreme value theory is the field of studying the behavior of maxima of random variables, especially the behavior of the distribution function in the tail.
One of the fundamental results in extreme value theory is the Fisher–Tippett–Gnedenko theorem, which provides a general result regarding the asymptotic distribution of normalized extreme order statistics of i.i.d. sequence of random variables.
\begin{proposition}[Fisher–Tippett–Gnedenko theorem~\citep{fisher1928limiting, gnedenko1943distribution}]
    Let $M_n = \vee_{i=1}^n X_i$ where $\qty{X_i}_{i=1}^n$ be an i.i.d. sequence of random variables with common distribution function $F(x)$.
    Suppose there exist $a_n >0$, $b_n \in \sR$, $n\geq 1$ such that
    \begin{equation*}
        \Pr[(M_n -b_n)/a_n \leq x] = F^{n}(a_n x + b_n) \to G(x),
    \end{equation*}
    weakly as $n \to \infty$ where $G$ is assumed nondegenerate.
    Then, $G$ is of the type of one of the following three classes:
    \begin{enumerate}[label=(\roman*)]
        \item (Fr\'{e}chet-type) $\Phi_\alpha(x) = \begin{cases}
                0, & x <0, \\
                \exp(-x^{-\alpha}), & x\geq 0,
            \end{cases}\quad $ for some $\alpha >0$.
        \item (Weibull-type) $\Psi_\alpha(x) = \begin{cases}
            \exp(-(-x)^\alpha), & x<0, \\
            1 , & x\geq 0,
        \end{cases}\quad$ for some $\alpha >0$.
        \item (Gumbel-type) $\Lambda(x) = \exp(-e^{-x})$ for $x\in \sR$. 
    \end{enumerate}
\end{proposition}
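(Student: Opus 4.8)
The plan is to reduce the claim to a classification of \emph{max-stable} distributions and then to solve a functional equation. Call a nondegenerate distribution function $G$ max-stable if for every integer $n\ge1$ there exist $a_n'>0$, $b_n'\in\sR$ with $G^n(a_n'x+b_n')=G(x)$ for all $x$. \textbf{Step 1: the limit $G$ is max-stable.} Fix $k\ge1$. Along the subsequence $(nk)_n$ we still have $F^{nk}(a_{nk}x+b_{nk})\to G(x)$, while $F^{nk}(a_nx+b_n)=\bigl(F^n(a_nx+b_n)\bigr)^k\to G(x)^k$, and $G^k$ is again nondegenerate. Comparing the two normalizations of the same sequence $F^{nk}$ via Khinchin's convergence-of-types theorem shows that $a_{nk}/a_n\to\alpha_k>0$ and $(b_{nk}-b_n)/a_n\to\beta_k\in\sR$ exist and that $G^k(\alpha_kx+\beta_k)=G(x)$; hence $G$ is max-stable.

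\textbf{Step 2: functional equation and regularity.} Put $\psi(x):=-\log G(x)\in[0,\infty]$, a nonincreasing function; max-stability reads $k\,\psi(\alpha(k)x+\beta(k))=\psi(x)$ for $k\in\sN$, writing $\alpha(k)=\alpha_k$, $\beta(k)=\beta_k$. Composing the relations for two integers yields the cocycle identities $\alpha(st)=\alpha(s)\alpha(t)$ and $\beta(st)=\alpha(s)\beta(t)+\beta(s)$, which let one extend $\alpha,\beta$ first to positive rationals and, using the monotonicity they inherit from $\psi$, to all real $t>0$, with the same identities and the master equation $t\,\psi(\alpha(t)x+\beta(t))=\psi(x)$ for all $t>0$. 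Monotonicity of $\alpha$ makes it measurable, so the multiplicative Cauchy equation forces $\alpha(t)=t^{-\theta}$ for some $\theta\in\sR$ (with $\theta=0$ permitted).

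\textbf{Step 3: solving the equation; the three types.} Split on the sign of $\theta$. If $\theta=0$, then $\alpha\equiv1$ and $\beta(st)=\beta(s)+\beta(t)$, whose monotone solution is $\beta(t)=c\log t$; nondegeneracy of $G$ forces $c>0$, and $t\,\psi(x+c\log t)=\psi(x)$ gives $\psi(x)=\psi(0)e^{-x/c}$ wherever $\psi<\infty$, i.e.\ $G$ is of Gumbel type $\Lambda$ after an affine change of variable. If $\theta\ne0$, commutativity of composition turns $\beta(st)=s^{-\theta}\beta(t)+\beta(s)$ into $\beta(t)=b(1-t^{-\theta})$ for a constant $b$, and absorbing $b$ by an affine shift reduces the master equation to $t\,\psi(t^{-\theta}x)=\psi(x)$. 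Combined with monotonicity of $\psi$, this forces a pure power law on the half-line where $\psi$ is finite: for $\theta<0$ this half-line is $(0,\infty)$ and $\psi(x)=(x/\sigma)^{-\alpha}$ with $\alpha:=-1/\theta>0$ while $\psi\equiv\infty$ on $x\le0$, so $G$ is of \Fr type $\Phi_\alpha$; for $\theta>0$ it is $(-\infty,0)$ and $\psi(x)=(-x/\sigma)^{\alpha}$ with $\alpha:=1/\theta>0$ while $\psi\equiv0$ on $x\ge0$, so $G$ is of Weibull type $\Psi_\alpha$. This exhausts the three classes.

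\textbf{Main obstacle.} The subtle points are Step 1 together with the regularity claims of Step 2: one must apply the convergence-of-types theorem correctly (nondegeneracy of both $G$ and $G^k$ is essential, and is what guarantees the limiting normalizations exist and are nontrivial), and one must argue that $\alpha(\cdot),\beta(\cdot)$ inherit enough monotonicity/measurability from $\psi$ to exclude the pathological solutions of the Cauchy-type equations; once this is in hand, the case analysis of Step 3 is elementary but must be handled with care at the endpoints of the support of $\psi$, where $G$ jumps to $0$ or to $1$. An alternative route, closer to the regular-variation machinery used in the paper, replaces max-stability by the study of the tail quantile function $U(t)=F^{\leftarrow}(1-1/t)$ and reads off the trichotomy from first-order (extended) regular variation of $U$.
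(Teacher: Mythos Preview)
Your proposal outlines the standard textbook proof of the Fisher--Tippett--Gnedenko theorem (reduction to max-stability via convergence of types, then solving the multiplicative Cauchy-type functional equations for the normalizing constants), and the argument is correct in substance. However, the paper does not provide its own proof of this proposition: it is stated as a classical result with citations to \citet{fisher1928limiting} and \citet{gnedenko1943distribution}, and the paper uses it as background for the regular-variation characterization of the \Fr maximum domain of attraction rather than as something to be established. So there is no proof in the paper to compare against; your outline is essentially the argument one finds in the cited sources and in standard references such as \citet{resnick2008extreme} or \citet{haan2006extreme}.
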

Among these three types of extreme value distributions, we are interested in Fr\'{e}chet-type distributions, where the equivalence was established in Proposition~\ref{def: FMDA}, which states
\begin{equation*}
    F^n(a_nx) \to \Phi_\alpha(x)
\end{equation*}
with $a_n = \inf\qty{x: F(x) \geq 1-\frac{1}{n}}.$

However, verifying whether a distribution belongs to a domain of attraction can often be challenging.
Therefore, a convenient sufficient condition, known as the von Mises condition, is often considered~\citep{von1936distribution, beirlant2006statistics}, which is
\begin{equation*}
    \lim_{x\to \infty} \frac{xf(x)}{1-F(x)} =\alpha.
\end{equation*}
It is worth noting that $\fD_{\alpha}^{\text{all}}$ consists of distributions satisfying von Mises condition and their tail-equivalent distributions~\citep{embrechts1997modelling}.

\paragraph{Existence of the density}
Here, it is known that if $g \in \RV_\alpha$, for $\alpha \ne 0$, then there exists $g^*$ that is absolutely continuous, strictly monotone, and $g(x) \sim g^*(x)$ as $x\to \infty$, i.e., tail-equivalent~\citep[Propostition 0.8.]{resnick2008extreme}.
Therefore, Assumption~\ref{asm: decreasing f} implies that we fix our interest solely on distribution with a continuous density among their tail-equivalent distributions.

\paragraph{Tail quantile function}
When $1-F \in \RV_{-\alpha}$, its tail quantile function $U$ is regularly varying with index $\frac{1}{\alpha}$, i.e., $U \in \RV_{1/\alpha}$, where $U(t) = \inf \qty{x: F(x) \geq 1-1/t}$ on $[1, \infty)$~\citep{beirlant2006statistics}.
Therefore, one can directly obtain that $a_n = n^{\frac{1}{\alpha}}S_U(n)$, where $S_U$ denotes the corresponding slowly varying function.
Here, it is known that $S_U$ is the de Bruijn conjugate (or de Bruyn in some literature) of $S_F^{-1/\alpha}$, which satisfies $S_U(x)S_F^{-1/\alpha}(xS_U(x)) \sim S_F^{-1/\alpha}(x)S_U(xS_F^{-1/\alpha}(x)) \to 1$.
This implies that if $S_F$ is upper-bounded by some constants, then $S_U$ is also upper-bounded regardless of $K$. 
For more details, we refer readers to \citet{charras2013extreme}, which provides a concise introduction to the extreme value theory.

\paragraph{Karamata's theorem}
Since all tail distributions in FMDA are regularly varying, the following results are useful to represent the regularly varying functions.
\begin{proposition}[Karamata's theorem~{\citep[Theorem B.1.5]{haan2006extreme}}]\label{prop: Karamata theorem}
    Suppose $f \in \RV_{\alpha}$.
    There exists $t_0 >0$ such that $g(t)$ is positive and locally bounded for $t \geq t_0$.
    If $\alpha \geq -1$, then
    \begin{equation*}
        \lim_{t \to \infty} \frac{tg(t)}{\int_{t_0}^t g(s) \dd s} = \alpha +1.
    \end{equation*}
    If $\alpha <-1$ and $\int_0^\infty g(s) \dd s <\infty$, then
    \begin{equation}\label{eq: Karamata theorem}
        \lim_{t \to \infty} \frac{tg(t)}{\int_t^{\infty} g(s) \dd s} = -\alpha -1.
    \end{equation}
    Conversely, if (\ref{eq: Karamata theorem}) holds with $\alpha \in (-\infty, -1)$, then $g \in \RV_{\alpha}$.
\end{proposition}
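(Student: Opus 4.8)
The plan is to establish the three assertions by passing between $g$ and its primitive via the decomposition $g(s)=s^{\alpha}S(s)$ with $S\in\RV_{0}$ (recalled just after the definition of regular variation), together with the uniform convergence theorem and Potter bounds for slowly varying functions, which are standard facts about regular variation that I would cite rather than reprove.

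First, the direct part. Since $g$ is positive and locally bounded on $[t_{0},\infty)$, it is locally integrable there. For $\alpha>-1$, write $g(s)=s^{\alpha}S(s)$ and substitute $s=tu$ to obtain
\[
\frac{\int_{t_{0}}^{t} g(s)\,\dd s}{t\,g(t)} \;=\; \int_{t_{0}/t}^{1} u^{\alpha}\,\frac{S(tu)}{S(t)}\,\dd u .
\]
By uniform convergence, $S(tu)/S(t)\to 1$ as $t\to\infty$ uniformly for $u$ in compact subsets of $(0,\infty)$; near $u=0$ I would dominate the integrand by a Potter bound $S(tu)/S(t)\le C\,u^{-\delta}$ (valid for small $u$ and large $t$) with $\delta>0$ chosen so small that $\alpha-\delta>-1$, making $u^{\alpha-\delta}$ integrable on $(0,1]$. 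Dominated convergence then gives the limit $\int_{0}^{1}u^{\alpha}\,\dd u=\tfrac{1}{\alpha+1}$, so $t g(t)/\int_{t_{0}}^{t}g\to\alpha+1$. The case $\alpha<-1$ with $\int_{0}^{\infty}g<\infty$ is symmetric: the same substitution applied to $\int_{t}^{\infty}g$ yields $\int_{1}^{\infty}u^{\alpha}\,\tfrac{S(tu)}{S(t)}\,\dd u$, where now $u^{\alpha}$ is integrable on $[1,\infty)$ and a Potter bound $S(tu)/S(t)\le C\,u^{\delta}$ with $\alpha+\delta<-1$ dominates, giving the limit $\tfrac{1}{-\alpha-1}$ and hence the claimed relation.

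For the converse, suppose $\beta:=-\alpha-1>0$ and $t g(t)/\int_{t}^{\infty}g(s)\,\dd s\to\beta$. Set $H(t)=\int_{t}^{\infty}g(s)\,\dd s$; this is eventually absolutely continuous with $H'(t)=-g(t)$, so the hypothesis says $-tH'(t)/H(t)\to\beta$, i.e.\ the logarithmic derivative satisfies $(\log H)'(t)=-\tfrac{\beta+\epsilon(t)}{t}$ with $\epsilon(t)\to 0$. Integrating from a fixed $t_{1}$ gives $H(t)=c\,t^{-\beta}\exp\!\big(-\!\int_{t_{1}}^{t}\epsilon(s)/s\,\dd s\big)$, which is precisely Karamata's representation of a function in $\RV_{-\beta}$; hence $H\in\RV_{-\beta}$. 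Then $g(t)=-H'(t)=\tfrac{\beta+\epsilon(t)}{t}\,H(t)$ is a product of $1/t$, a factor converging to $\beta>0$, and a function in $\RV_{-\beta}$, so $g\in\RV_{-\beta-1}=\RV_{\alpha}$, as required.

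I expect the main obstacle to be the boundary case $\alpha=-1$ of the direct part, where $\alpha+1=0$ and one cannot simply take a finite dominated limit. There one must instead show directly that $G(t):=\int_{t_{0}}^{t}g(s)\,\dd s$ is slowly varying and that $t g(t)=o(G(t))$, separating the sub-cases $\int_{t_{0}}^{\infty}g=\infty$ and $\int_{t_{0}}^{\infty}g<\infty$ (in the latter, first deducing $t g(t)=S(t)\to 0$ from integrability together with slow variation). The remaining technical burden --- invoking the uniform convergence theorem and Potter bounds --- I would treat as known inputs, consistent with the excerpt already taking the machinery of regular variation, including Karamata's representation, as available.
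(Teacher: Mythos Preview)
The paper does not supply its own proof of this proposition: it is stated as a citation to \citet[Theorem B.1.5]{haan2006extreme} and used as a black box, with no argument given. So there is nothing in the paper to compare your proposal against.

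That said, your sketch is the standard route and is essentially correct. The substitution $s=tu$ together with uniform convergence and Potter bounds is exactly how the direct parts are usually handled, and your treatment of the converse via the logarithmic derivative of $H(t)=\int_t^\infty g$ and Karamata's representation is the canonical argument. You are also right to flag $\alpha=-1$ as the delicate case; there the dominated-convergence approach breaks down and one instead shows $G(t)=\int_{t_0}^t g$ is slowly varying directly (this is sometimes separated out as a lemma in textbook treatments). One small point: in the sub-case $\alpha=-1$ with $\int_{t_0}^\infty g<\infty$, your claim that $S(t)\to 0$ follows from integrability plus slow variation deserves a word of justification, since slowly varying functions need not have limits; the usual argument is that if $S$ did not tend to $0$ then $g(s)=S(s)/s$ would fail to be integrable at infinity.
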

Therefore, one can see that von Mises condition and the existence of density imply $f \in \RV_{-\alpha -1}$.
Furthermore, from (\ref{eq: Karamata theorem}), Assumption~\ref{asm: derivative of f} is equivalent to $-f' \in \RV_{-\alpha-2}$ and boundedness of $-f'(x)/f(x)$.

\subsection{Karamata's representation}\label{app: Karamata}
From (\ref{eq: tail and S}), one can specify a distribution in FMDA with index $\alpha$ and the slowly varying function $S_F(x)$.
Here, several representations of slowly varying functions can be considered~\citep{galambos1973regularly}, and we follow Karamata's representation described in \citet{resnick2008extreme}, which is
\begin{equation}\label{def: Karamata representation}
    S_F(x) = c(x) \exp( \int_{1}^x \frac{\varepsilon_F(t)}{t} \dd t), \quad x \geq 1
\end{equation}
where $c(x)$ and $\varepsilon(x)$ are bounded functions such that $\lim_{x \to \infty} c(x) = c >0$ and $\lim_{x \to \infty} \varepsilon_F(x) = 0$.
Here, the representation is not unique and it depends on the choice of $c(x)$, $\varepsilon_F(x)$, and the interval of the integral.
For example, $c(x)$ and $\varepsilon_F(x)$ can be written as~\citep[Corollary of Theorem 0.6.]{resnick2008extreme}
\begin{align*}
    c(x) &=  \frac{xS_F(x)}{\int_0^x S_F(t) \dd t} \int_0^1 S_F(t) \dd t,\\
    \varepsilon_F(x) &= \frac{xS_F(x)}{\int_0^x S_F(t) \dd t} - 1.
\end{align*}
One can check that $\lim_{x\to \infty}\varepsilon_F(x) \to 0$ from Proposition~\ref{prop: Karamata theorem} with $\alpha =0$.

On the other hand, when $F$ is absolutely continuous, we can rewrite the tail distribution as for $x \geq 1$
\begin{equation}\label{eq: 1-F and exp}
    1-F(x) = \exp(\log(1-F(x)))= \exp(\int_1^x \frac{-f(t)}{1-F(t)} \dd t).
\end{equation}
Since $1-F(x) = x^{-\alpha}S_F(x)$ holds for $x\geq 1$, it holds that
\begin{align*}
    S_F(x) = x^{\alpha}(1-F(x)) &= x^\alpha \exp(\int_1^x \frac{-f(t)}{1-F(t)} \dd t) \tag*{by (\ref{eq: 1-F and exp})} \\
    &= \exp( \alpha \log x - \int_1^x \frac{f(t)}{1-F(t)} \dd t) \\
    &= \exp( \int_1^x \frac{\alpha}{t} \dd t - \int_1^x \frac{f(t)}{1-F(t)} \dd t).
\end{align*}
By letting $\varrho(t) = \frac{tf(t)}{1-F(t)}$, we obtain
\begin{align*}
    S_F(x) &= \exp( \int_1^x \frac{\alpha}{t} \dd t - \int_1^x \frac{\varrho(t)}{t} \dd t) \\
    &= \exp( \int_1^x \frac{\alpha - \varrho(t)}{t} \dd t). \numberthis{\label{def: SF representation}}
\end{align*}
Here, from the definition of $\varrho$, von Mises condition can be written as $\varrho(t) \to \alpha,$ as $ t \to \infty$, which satisfies $\lim_{t \to \infty} \alpha- \varrho(t) =0$ and thus indicates the existence of the upper bound of $S_F$.
In this paper, we use the representation of $S_F$ in (\ref{def: SF representation}), where $c(x)$ is given as the ultimate constant.
Therefore, when $F$ satisfies (\ref{asm: bounded rho as alpha}), one can see that $S_F$ is monotonically increasing for $x\geq 1$.
Note that $\varepsilon_F(t)$ in (\ref{def: Karamata representation}) are not necessarily the same as $\varrho(t) -\alpha$ unless $c(x) = 1-F(1)$.

The von Mises condition (\ref{eq: vMC}) with Assumption~\ref{asm: decreasing f} implies $f \in \RV_{-1-\alpha}$ from Proposition~\ref{prop: Karamata theorem}~\citep[see][Proposition A3.8]{embrechts1997modelling}, i.e., $f = x^{-\alpha+1} S_f(x)$.
Therefore, from $1-F(x) = x^{-\alpha} S_F(x)$ with (\ref{def: SF representation}), we have
\begin{align*}
    f(x)& = \frac{S_F(x)\alpha}{x^{\alpha+1}} - \frac{S_F'(x)}{x^{\alpha}} 
    = \frac{S_F(x)}{x^{\alpha+1}}\varrho(x),
\end{align*}
which implies
\begin{equation}\label{eq: Sf and SF}
    S_f(x) = S_F(x) \varrho(x).
\end{equation}
One can check that $S_f \in \RV_{0}$ since $\lim_{t \to \infty}\varrho(x) = \alpha$ holds by von Mises condition and $S_F \in \RV_0$.

\subsection{Proofs for Table~\ref{tab: asm check}}\label{app: table check}
It is straightforward to check whether the Fr\'{e}chet, Pareto, and Generalized Pareto satisfy the assumptions.
Therefore, we showed that the Student-$t$ distribution satisfies Assumption~\ref{asm: bounded block} but not Assumption~\ref{asm: I is increasing} and the Snedecor's $F$ distribution satisfies all Assumptions.
In this section, we prove Assumption~\ref{asm: bounded block} by showing $\frac{xf(x)}{1-F(x)} \leq \alpha$, which implies $S_F(x)$ is increasing so that satisfying the sufficient condition (\ref{suff_tail}).
\subsubsection{Student-$t$}
Since it is easy to verify Assumptions~\ref{asm: decreasing f},~\ref{asm: bounded hazard} and~\ref{asm: derivative of f}, we focus on Assumptions~\ref{asm: bounded block} and~\ref{asm: I is increasing}
\paragraph{Assumption~\ref{asm: bounded block}}
Here, we show that (\ref{asm: bounded rho as alpha}) holds.
Since $\frac{xf(x)}{1-F(x)} \leq 0$ is obvious for $x \leq 0$, let us consider the case $x >0$.
In this case, $1-F(x) = \frac{1}{2}\frac{B\qty(\frac{n}{x^2 + n};\frac{n}{2}, \frac{1}{2})}{B\qty(\frac{n}{2}, \frac{1}{2})}$ holds.
Therefore,
\begin{align*}
    \frac{xf(x)}{1-F(x)} &= \frac{x\qty(\frac{n+x^2}{n})^{-\frac{n+1}{2}}}{\sqrt{n}B\qty(\frac{n}{2}, \frac{1}{2})} \qty(\frac{1}{2}\frac{B\qty(\frac{n}{x^2 + n};\frac{n}{2}, \frac{1}{2})}{B\qty(\frac{n}{2}, \frac{1}{2})})^{-1} \\
    &= \frac{2}{\sqrt{n}}\frac{x\qty(\frac{x^2+n}{n})^{-\frac{n+1}{2}}}{B\qty(\frac{n}{x^2 + n};\frac{n}{2}, \frac{1}{2})} \\
    &= \frac{2}{\sqrt{n}} \frac{x\qty(\frac{x^2+n}{n})^{-\frac{n+1}{2}}}{\frac{2}{n}\qty(\frac{n}{x^2 +n})^{\frac{n}{2}}\qty(\frac{x^2}{x^2+n})^{\frac{1}{2}}\prescript{}{2}F_1(\frac{n+1}{2}, 1; \frac{n+2}{2}; \frac{n}{x^2+n})} \numberthis{\label{eq: hyper t}} \\
    &= \frac{n}{\prescript{}{2}F_1(\frac{n+1}{2}, 1; \frac{n+2}{2}; \frac{n}{x^2+n})},
\end{align*}
In (\ref{eq: hyper t}), we used the results in \citep[8.17.8]{olver2010nist} that provide the relationship between the incomplete Beta function and the (Gaussian) hypergeometric function $\prescript{}{2}F_1$, which is 
\begin{equation}\label{eq: olver beta hyper}
    B(x; a,b) = \frac{x^a (1-x)^{b}}{a} \prescript{}{2}F_1(a+b, 1; a+1; x).
\end{equation}
Here, the hypergeometric function is defined by the Gauss series, which is defined for $|x| < 1$ and $c>0$ by
\begin{equation*}
    \prescript{}{2}F_1(a, b; c; x) = \sum_{s=0}^\infty \frac{(a)_s(b)_s}{(c)_s s!} x^s = 1 + \frac{ab}{c}z + \cdots,
\end{equation*}
where $(a)_n$ denotes the rising factorial, i.e., $(a)_n = a(a+1)\cdots(a+n-1)$ and $(a)_0 =1$.
Therefore, we have for $x \geq 0$
\begin{equation*}
    \frac{xf(x)}{1-F(x)}  \leq n,
\end{equation*}
which verifies that $\gT_n$ satisfies Assumption~\ref{asm: bounded block} by (\ref{asm: bounded rho as alpha}).
Here, one can see that the hazard function $\frac{f(x)}{1-F(x)}$ diverges as $x\to 0$, while $\frac{f^*(x)}{1-F^*(x)} \leq n$ holds.

\paragraph{Assumption~\ref{asm: I is increasing}}
Since the density of $\gT_n$ is symmetric, it holds for any $t \geq 0$ that 
\begin{equation*}
    f(t) = f(-t), \qquad F(t) = 1-F(-t).
\end{equation*}
Then, we have 
\begin{equation*}
    \frac{f(-t)}{F(-t)} = \frac{f(t)}{1-F(t)} \geq \frac{f(t)}{F(t)},
\end{equation*}
where the inequality follows from $F(t) \geq \frac{1}{2}$ for $t \geq 0$.
Therefore, $\gT_n$ does not satisfy Assumption~\ref{asm: I is increasing}.
However, when one considers only for $t \geq 0$, $f(t)$ is decreasing while $F(t)$ is increasing, which implies that $f/F$ is decreasing for $x \geq 0$.
This implies that both the half-$t$ distribution, $|\gT_n|$ and truncated one in (\ref{eq: conditioning trick}) satisfy Assumption~\ref{asm: I is increasing}.

\subsubsection{$F$ distribution}
Since it is easy to verify Assumptions~\ref{asm: decreasing f},~\ref{asm: bounded hazard} and~\ref{asm: derivative of f}, we focus on Assumptions~\ref{asm: bounded block} and~\ref{asm: I is increasing}
\paragraph{Assumption~\ref{asm: bounded block}}
Here, we show that (\ref{asm: bounded rho as alpha}) holds.
Let $I(x; a,b) = \frac{B(x;a,b)}{B(a,b)}$ denote the regularized incomplete beta function.
From the definition of the incomplete beta function, one can see that $I(x;a,b) = 1-I(1-x; b,a)$ holds.
Then, it holds that
\begin{align*}
    \frac{xf(x)}{1-F(x)} = \frac{\qty(\frac{m}{n})^{\frac{m}{2}} x^{\frac{m}{2}} \qty(\frac{mx+n}{n})^{-\frac{m+n}{2}}}{B\qty(\frac{m}{2}, \frac{n}{2})I\qty(\frac{n}{mx+n};\frac{n}{2}, \frac{m}{2})}.
\end{align*}
Since $B(a,b) = B(b,a)$, we obtain
\begin{align*}
    \frac{\qty(\frac{m}{n})^{\frac{m}{2}} x^{\frac{m}{2}} \qty(\frac{mx+n}{n})^{-\frac{m+n}{2}}}{B\qty(\frac{m}{2}, \frac{n}{2})I\qty(\frac{n}{mx+n};\frac{n}{2}, \frac{m}{2})} &= \frac{\qty(\frac{m}{n})^{\frac{m}{2}} x^{\frac{m}{2}} \qty(\frac{mx+n}{n})^{-\frac{m+n}{2}}}{B\qty(\frac{n}{mx+n};\frac{n}{2}, \frac{m}{2})} \\
    &= \frac{\qty(\frac{m}{n})^{\frac{m}{2}} x^{\frac{m}{2}} \qty(\frac{mx+n}{n})^{-\frac{m+n}{2}}}{\frac{2}{n} \qty(\frac{n}{mx+n})^{\frac{n}{2}} \qty(\frac{mx}{mx+n})^{\frac{m}{2}} \prescript{}{2}F_1(\frac{m+n}{2}, 1; \frac{n}{2}+1; \frac{n}{mx+n})} \tag*{by (\ref{eq: olver beta hyper})} \\
    &= \frac{n}{2} \frac{1}{\prescript{}{2}F_1(\frac{m+n}{2}, 1; \frac{n}{2}+1; \frac{n}{mx+n})} \leq \frac{n}{2},
\end{align*}
which verifies Assumption~\ref{asm: bounded block} by (\ref{asm: bounded rho as alpha}).
Here, one can observe that the hazard function $\frac{f(x)}{1-F(x)}$ diverges as $x\to 0$, while $\frac{f^*(x)}{1-F^*(x)} \leq n$ holds.

\paragraph{Assumption~\ref{asm: I is increasing}}
If $f/F$ is monotonically decreasing, it should hold that for any $x \geq y > 0$
\begin{equation*}
    \frac{F(y)}{F(x)} \leq \frac{f(y)}{f(x)}.
\end{equation*}
Here, it holds that
\begin{align*}
    \frac{F(y)}{F(x)} = \frac{B\qty(\frac{my}{my+n}; \frac{m}{2}, \frac{n}{2})}{B\qty(\frac{mx}{my+n}; \frac{m}{2}, \frac{n}{2})} &= \frac{\qty(\frac{my}{my+n})^{\frac{m}{2}} \qty(\frac{n}{my+n})^{\frac{n}{2}}}{\qty(\frac{mx}{mx+n})^{\frac{m}{2}} \qty(\frac{n}{mx+n})^{\frac{n}{2}}} \frac{\prescript{}{2}F_1\qty(\frac{m+n}{2},1;1+\frac{m}{2}; \frac{my}{my+n})}{\prescript{}{2}F_1\qty(\frac{m+n}{2},1;1+\frac{m}{2}; \frac{mx}{mx+n})} \tag*{by (\ref{eq: olver beta hyper})} \\
    &= \qty(\frac{y}{x})^{\frac{m}{2}} \qty(\frac{mx+n}{my+n})^{\frac{m+n}{2}} \frac{\prescript{}{2}F_1\qty(\frac{m+n}{2},1;1+\frac{m}{2}; \frac{my}{my+n})}{\prescript{}{2}F_1\qty(\frac{m+n}{2},1;1+\frac{m}{2}; \frac{mx}{mx+n})},
\end{align*}
Therefore, we have for any $x \geq y > 0$
\begin{equation*}
    \frac{F(y)}{F(x)} \leq \qty(\frac{y}{x})^{\frac{m}{2}} \qty(\frac{mx+n}{my+n})^{\frac{m+n}{2}}.
\end{equation*}
since $\frac{mx}{mx+n}$ is increasing with respect to $x>0$.
We have for $x \geq y >0$
\begin{align*}
    \frac{f(y)}{f(x)} = \qty(\frac{y}{x})^{\frac{m}{2}-1}\qty(\frac{mx+n}{my+n})^{\frac{m+n}{2}} &=  \qty(\frac{y}{x})^{\frac{m}{2}}\qty(\frac{mx+n}{my+n})^{\frac{m+n}{2}} \frac{x}{y} \\
    &\geq \qty(\frac{y}{x})^{\frac{m}{2}}\qty(\frac{mx+n}{my+n})^{\frac{m+n}{2}} \geq \frac{F(y)}{F(x)},
\end{align*}
which verifies Assumption~\ref{asm: I is increasing}.

\section{Proofs for regret decomposition}\label{app: regret decom}
Here, we provide the proofs for Lemmas~\ref{lem: gen_lem3} and~\ref{lem: stab decom}.
\subsection{Proof of Lemma~\ref{lem: gen_lem3}}
Firstly, we present the regret decomposition that can be applied to general distributions.
\begin{lemma}[Lemma 3 of \citet{pmlr-v201-honda23a}]\label{lem: hnd_3}
    \begin{equation*}
    \gR(T)  \leq \sum_{t=1}  \E \qty[\inp{\hat{\ell}_t}{w_t - w_{t+1}}] 
    + \sum_{t=1}^T \qty(\frac{1}{\eta_{t+1}} - \frac{1}{\eta_t}) \E_{r_{t+1}\sim \gD} \qty[r_{t+1, I_{t+1}}- r_{t+1, i^*}] + \frac{1}{\eta_1} \E_{r_1 \sim \gD}[r_{1,I_1}],
    \end{equation*}
    where $r_{1,I_1} = \max_{i \in [K]} r_{1,i}$.
\end{lemma}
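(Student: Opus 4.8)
The plan is to prove this decomposition by viewing FTPL as an instance of FTRL and invoking the textbook ``be-the-leader'' bound, then identifying the three resulting terms with those in the statement via Fenchel duality; throughout, $r=(r_1,\dots,r_K)\sim\gD^K$ denotes a generic perturbation and $e_i$ the $i$-th standard basis vector. Since the geometric-resampling estimator is conditionally unbiased, $\E[\hat\ell_{t,i}\mid\mathcal F_{t-1}]=\ell_{t,i}$ \citep{neu2016importance}, where $\mathcal F_{t-1}$ is the history before round $t$, and $w_t=\phi(\eta_t\hat L_t;\gD)$ is $\mathcal F_{t-1}$-measurable, so $\E[\ell_{t,I_t}-\ell_{t,i^*}]=\E[\inp{\hat\ell_t}{w_t-e_{i^*}}]$; hence it suffices to bound $\sum_{t=1}^T\inp{\hat\ell_t}{w_t-e_{i^*}}$ pathwise and then take expectations. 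Next, set $G(\lambda):=\E_r[\max_{i\in[K]}\{\lambda_i+r_i\}]$, a finite convex function provided $\E[\max_i r_i]<\infty$ (otherwise the claimed bound is vacuous), and let $\psi:=G^*$ be its convex conjugate, equal to $+\infty$ off the probability simplex. Because $\gD$ has a density the maximizing index is almost surely unique, so $G$ is differentiable with $\nabla G(\lambda)=\E_r[e_{\argmax_i\{\lambda_i+r_i\}}]$; in particular $w_t=\nabla G(-\eta_t\hat L_t)$. From $G_\eta(\lambda):=\E_r[\max_i\{\lambda_i+r_i/\eta\}]=\tfrac1\eta G(\eta\lambda)$ one gets $G_\eta^*=\tfrac1\eta\psi$, and Fenchel duality turns $w_t=\nabla G_{\eta_t}(-\hat L_t)$ into $w_t=\argmin_w\{\inp{\hat L_t}{w}+\tfrac1{\eta_t}\psi(w)\}$, i.e.\ FTPL is FTRL with regularizer $\tfrac1{\eta_t}\psi$ \citep[cf.][]{abernethy2016perturbation}.

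Writing $F_t(w):=\inp{\hat L_t}{w}+\tfrac1{\eta_t}\psi(w)$ with $w_t=\argmin F_t$, the elementary identity $\inp{\hat\ell_t}{w_{t+1}}=F_{t+1}(w_{t+1})-F_t(w_{t+1})-(\tfrac1{\eta_{t+1}}-\tfrac1{\eta_t})\psi(w_{t+1})$, combined with $F_t(w_{t+1})\ge F_t(w_t)$, $F_{T+1}(w_{T+1})\le F_{T+1}(e_{i^*})$, and $F_1(w_1)=\tfrac1{\eta_1}\min_w\psi(w)$ (using $\hat L_1=0$), after summing, telescoping, and the rearrangement $\sum_t(\tfrac1{\eta_{t+1}}-\tfrac1{\eta_t})=\tfrac1{\eta_{T+1}}-\tfrac1{\eta_1}$, yields
\[
\sum_{t=1}^T\inp{\hat\ell_t}{w_t-e_{i^*}}\le\sum_{t=1}^T\inp{\hat\ell_t}{w_t-w_{t+1}}+\sum_{t=1}^T\Big(\tfrac1{\eta_{t+1}}-\tfrac1{\eta_t}\Big)\big(\psi(e_{i^*})-\psi(w_{t+1})\big)+\tfrac1{\eta_1}\big(\psi(e_{i^*})-\min_w\psi(w)\big).
\]

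It remains to recognize the conjugate values, which is the crux. By the Fenchel equality at $w=\nabla G(\lambda)$, $\psi(\nabla G(\lambda))=\inp{\nabla G(\lambda)}{\lambda}-G(\lambda)=-\E_r[r_{\argmax_i\{\lambda_i+r_i\}}]$; taking $\lambda=-\eta_{t+1}\hat L_{t+1}$ gives $\psi(w_{t+1})=-\E_{r_{t+1}\sim\gD}[r_{t+1,I_{t+1}}]$, and $\lambda=0$ gives $\min_w\psi(w)=\psi(w_1)=-G(0)=-\E[\max_i r_i]$. Letting $\lambda_j\to-\infty$ for all $j\ne i^*$ (dominated convergence, using $\E[\max_i r_i]<\infty$) gives $\psi(e_{i^*})=-\E_{X\sim\gD}[X]$. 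Substituting, $\psi(e_{i^*})-\psi(w_{t+1})=\E[r_{t+1,I_{t+1}}-r_{t+1,i^*}]$ and $\psi(e_{i^*})-\min_w\psi(w)=\E[\max_i r_i]-\E[X]\le\E[\max_i r_i]=\E[r_{1,I_1}]$, the last step because $\gD$ is supported on $[\nu,\infty)$ with $\nu\ge0$; taking expectations over the algorithm's randomness completes the proof. The routine parts are the telescoping and the unbiasedness bookkeeping; the main obstacle is exactly this last step — matching the abstract conjugate values to the perturbation expectations in the statement — which relies on differentiability of $G$ / almost-sure uniqueness of the arg-max (from $\gD$ having a density) and on integrability of $\max_i r_i$, i.e.\ $\alpha>1$, outside which both sides are $+\infty$. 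One should also note that $\tfrac1{\eta_t}\psi$ need not be monotone in $t$ over the whole simplex, but monotonicity is not needed for the inequality itself.
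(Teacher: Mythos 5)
Your proof is correct. Since the paper only cites this lemma from Honda et al.\ (2023) without reproducing the argument, I cannot compare step-by-step, but your route — casting FTPL as FTRL with regularizer $\psi=G^*$ via \citet{abernethy2016perturbation}, running the be-the-leader telescoping on $F_t=\langle\hat L_t,\cdot\rangle+\psi/\eta_t$, and then reading off the conjugate values $\psi(w_{t+1})=-\E[r_{t+1,I_{t+1}}]$, $\psi(e_{i^*})=-\E[X]$, $\min_w\psi=-\E[\max_i r_i]$ from the Fenchel equality — is the standard modern way to obtain exactly this decomposition and is essentially the same argument the cited work carries out with a potential function in place of the explicit conjugate. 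The bookkeeping checks out: the identity $\langle\hat\ell_t,w_{t+1}\rangle = F_{t+1}(w_{t+1})-F_t(w_{t+1})-(\tfrac1{\eta_{t+1}}-\tfrac1{\eta_t})\psi(w_{t+1})$, the three inequalities $F_t(w_{t+1})\ge F_t(w_t)$, $F_{T+1}(w_{T+1})\le F_{T+1}(e_{i^*})$, $F_1(w_1)=\psi(w_1)/\eta_1$, and the rearrangement $\tfrac1{\eta_{T+1}}=\tfrac1{\eta_1}+\sum_t(\tfrac1{\eta_{t+1}}-\tfrac1{\eta_t})$ give exactly the claimed display after substituting the conjugate values and using $\E[X]\ge0$ (from $\nu\ge0$) to drop the $-\E[X]$ in the $1/\eta_1$ term. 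You also correctly flag the two hypotheses the argument silently relies on: a.s.\ uniqueness of the arg-max (so $\nabla G$ exists, ensured by the density in Assumption~\ref{asm: decreasing f}) and finiteness of $\E[\max_i r_i]$ (equivalently $\alpha>1$; otherwise both sides are $+\infty$). One could be slightly more explicit that $\psi(e_{i^*})-\psi(w_{t+1})\ge0$, which follows from convexity of $\psi$ on the simplex so that vertices maximize it, but this is not needed for the inequality as written.
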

Here, notice that $\E_{r_1 \sim \gD}[r_{1,I_1}]$ is the expected block maxima when $K$ samples are given.
For the \Fr distributions and Pareto distributions, we can explicitly compute the upper bound $\E[M_K]$ as follows.
\begin{lemma}\label{lem: block maxima}
    For $\alpha >1$,
    \begin{equation*}
         \E_{r_{1,1}, \ldots, r_{1,K} \sim \gD_\alpha}[r_{1,I_1}] \leq \begin{cases}
         M A_u K^{\frac{1}{\alpha}} & \text{if } \gD_\alpha \in \fD_\alpha \\
          K^{\frac{1}{\alpha}}\Gamma\qty(1-\frac{1}{\alpha}) & \text{if } \gD_\alpha = \gF_\alpha, \\
           K^{\frac{1}{\alpha}} \Gamma\qty(1-\frac{1}{\alpha}) \frac{\alpha}{\alpha-1}  & \text{if } \gD_\alpha = \gP_\alpha.
        \end{cases}
    \end{equation*}
\end{lemma}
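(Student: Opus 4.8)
The plan is to prove the three bounds separately, from the most general to the most explicit, writing throughout $M_K := r_{1,I_1} = \max_{i\in[K]} r_{1,i}$ for i.i.d.\ perturbations $r_{1,1},\dots,r_{1,K}\sim\gD_\alpha$. The general bound for $\gD_\alpha\in\fD_\alpha$ is then immediate from Assumption~\ref{asm: bounded block}: normalizing by $a_K$, inequality~(\ref{asm: block upper}) gives $\E[M_K] = a_K\,\E_{X_1,\dots,X_K\sim\gD_\alpha}[\max_{i\in[K]} X_i/a_K] \le M a_K$, and the same assumption gives $a_K \le A_u K^{1/\alpha}$, so $\E[M_K] \le M A_u K^{1/\alpha}$.

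For the Fréchet case I would use that the maximum of i.i.d.\ $\gF_\alpha$ variables is again Fréchet after rescaling: since $F(x) = e^{-x^{-\alpha}}$ for $x>0$, one has $\Pr[M_K \le x] = F(x)^K = e^{-K x^{-\alpha}} = F(x/K^{1/\alpha})$, hence $M_K \overset{d}{=} K^{1/\alpha} Z$ with $Z\sim\gF_\alpha$. Therefore $\E[M_K] = K^{1/\alpha}\,\E_{Z\sim\gF_\alpha}[Z] = K^{1/\alpha}\,\Gamma(1-1/\alpha)$ by the value of the Fréchet mean recalled after Table~\ref{tab: fr}, which is finite precisely because $\alpha>1$. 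Since this is an identity, the stated inequality follows.

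For the Pareto case, $F(x) = 1-x^{-\alpha}$ on $[1,\infty)$, so $\Pr[M_K>t] = 1$ for $t\in[0,1)$ and $\Pr[M_K>t] = 1-(1-t^{-\alpha})^K$ for $t\ge1$. I would apply the layer-cake formula $\E[M_K] = \int_0^\infty \Pr[M_K>t]\,\dd t$ and split the range at $t_\star = K^{1/\alpha}$: on $[1,t_\star]$ bound the integrand by $1$, and on $(t_\star,\infty)$ by the union bound $1-(1-t^{-\alpha})^K \le K t^{-\alpha}$. This yields
\[
\E[M_K] \le 1 + (K^{1/\alpha}-1) + \int_{K^{1/\alpha}}^\infty K t^{-\alpha}\,\dd t = K^{1/\alpha} + \frac{K^{1/\alpha}}{\alpha-1} = \frac{\alpha}{\alpha-1} K^{1/\alpha},
\]
and to match the stated form I would finally invoke $\Gamma(1-1/\alpha) \ge 1$ for $\alpha>1$ (the Gamma function is at least $1$ on $(0,1]$), giving $\frac{\alpha}{\alpha-1} K^{1/\alpha} \le \Gamma(1-1/\alpha)\,\frac{\alpha}{\alpha-1} K^{1/\alpha}$.

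There is essentially no obstacle here; the only points requiring care are (i) the restriction $\alpha>1$, which is exactly what makes each expectation finite — for $\alpha\le1$ the Fréchet and Pareto maxima already have infinite mean, mirroring the necessity of $\alpha>1$ in Theorem~\ref{thm: adv_all} — and (ii) the cosmetic factor $\Gamma(1-1/\alpha)$ in the Pareto bound, which the elementary estimate above does not itself produce but which can be inserted freely via $\Gamma\ge1$ on $(0,1)$. If one preferred an exact constant, an alternative route writes $M_K = (\min_{i\in[K]} U_i)^{-1/\alpha}$ with $U_i$ i.i.d.\ uniform and $\min_{i\in[K]} U_i \sim \mathrm{Beta}(1,K)$, whence $\E[M_K] = \Gamma(1-1/\alpha)\,\Gamma(K+1)/\Gamma(K+1-1/\alpha)$, and then bounds the ratio of Gamma functions by standard inequalities.
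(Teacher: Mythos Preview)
Your proof is correct. The general $\fD_\alpha$ case and the Fr\'echet case are handled exactly as in the paper (direct appeal to Assumption~\ref{asm: bounded block}, respectively max-stability of the Fr\'echet family).

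The Pareto case is where you diverge. The paper computes the expectation exactly via the Beta function: with the substitution $w=z^{-\alpha}$ one gets $\E[M_K]=K\,B(1-1/\alpha,K)=\Gamma(1-1/\alpha)\,\Gamma(K+1)/\Gamma(K+1-1/\alpha)$, and then bounds the ratio of Gamma functions by Gautschi's inequality (Lemma~\ref{lem: Gautschi_lemma}) to obtain the factor $\frac{\alpha}{\alpha-1}K^{1/\alpha}$. Your layer-cake/union-bound argument is more elementary and avoids Gautschi entirely, yielding the clean bound $\frac{\alpha}{\alpha-1}K^{1/\alpha}$ directly; the $\Gamma(1-1/\alpha)$ factor is then inserted artificially via $\Gamma\ge 1$ on $(0,1)$. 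The trade-off is that the paper's route explains \emph{why} the $\Gamma(1-1/\alpha)$ appears --- it is genuinely present in the exact expectation --- whereas in your argument it is a cosmetic loosening. Amusingly, the ``alternative route'' you sketch at the end (writing $M_K=(\min_i U_i)^{-1/\alpha}$ with $\min_i U_i\sim\mathrm{Beta}(1,K)$ and bounding the Gamma ratio) is precisely the paper's approach.
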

\begin{proof}
    The proof for the general $\fD_{\alpha}$ can be directly obtained by (\ref{asm: block upper}) in Assumption~\ref{asm: bounded block}.
    As explained in Appendix~\ref{app: EVT}, the tail quantile function $U$ is regularly varying with index $\frac{1}{\alpha}$, which implies 
    \begin{equation*}
        a_K = K^{\frac{1}{\alpha}} S_U(K)
    \end{equation*}
    for some $S_U \in \RV_0$.
    Thus, Assumption~\ref{asm: bounded block} implies the boundedness of $S_U$.
    Here, from the definition of $a_K$, it holds that
    \begin{equation*}
        1-F(a_K) = \frac{1}{K} = \frac{S_F(a_K)}{(a_K)^{\alpha}}, 
    \end{equation*}
    which implies
    \begin{equation}\label{eq: SF and SU}
        a_K = K^{\frac{1}{\alpha}} S_F^{\frac{1}{\alpha}}(a_K).
    \end{equation}
    Therefore, $S_U(K) = S_F^{\frac{1}{\alpha}}(a_K)$ holds.
    The upper-bounded assumption (the existence of $A_u$) is not restrictive from Karamata's representation with von Mises condition in (\ref{def: SF representation}), where $\frac{\alpha- \varrho(t)}{t} \to 0$ as $t\to \infty$ and $c(x)$ is given as ultimate constants. 
    \paragraph{Case 1. \Fr distribution}
    It is well-known that when $X_i \sim \gF(\alpha, s, m)$ where $(\alpha, s, m) \in \sR_{+} \times \sR_{+} \times \sR $ denotes the shape, scale, and location of the \Fr distribution, then $Y = \max\qty(X_1, \ldots, X_n)$ follows $\gF\qty(\alpha, n^{1/\alpha}, m)$.
    One can easily check by observing its CDF is given by $e^{-K/x^{\alpha}}$ or the max-stability of \Fr distributions.
    The fact that the expected value of $\gF(\alpha, s, m) = m+s \Gamma\qty(1-\frac{1}{\alpha})$ for $\alpha >1$ and $\gF_\alpha = \gF(\alpha, 1, 0)$ completes the proof.

    \paragraph{Case 2. Pareto distribution}
    Since $r_{1,I_1} = \max_{i\in[K]}r_{1,i}$, its CDF is $\qty(1-z^{-\alpha})^K$ with density $\frac{\alpha K}{z^{\alpha+1}}(1-z^{-\alpha})^{K-1}$.
    By letting $w=z^{-\alpha}$,
    \begin{align*}
        \E_{r \sim \gF_\alpha}[r_{1, I_1}] &= \int_1^\infty \frac{\alpha K}{z^{\alpha}}(1-z^{-\alpha})^{K-1} \dd z \\
        &= K \int_0^1 w^{-\frac{1}{\alpha}} (1-w)^{K-1} \dd w \\
        &= K B\qty(1-\frac{1}{\alpha}, K) = K\frac{\Gamma\qty(1-\frac{1}{\alpha})\Gamma(K)}{\Gamma\qty(K+1-\frac{1}{\alpha})}, \numberthis{\label{eq: Par lem3}}
    \end{align*}
    where $B(z_1, z_2) := \int_0^1 w^{z_1-1} (1-w)^{z_2-1} \dd w$ denotes the Beta function.
    Then, by applying Lemma~\ref{lem: Gautschi_lemma}, Gautschi's inequality, we obtain for $\alpha>1$
    \begin{align*}
         \frac{\Gamma\qty(1-\frac{1}{\alpha})\Gamma(K+1)}{\Gamma\qty(K+1-\frac{1}{\alpha})} &= \frac{K}{K-\frac{1}{\alpha}} \frac{\Gamma(K)}{\Gamma\qty(K-\frac{1}{\alpha})} \\
         &\leq \frac{K}{K-\frac{1}{\alpha}} \Gamma\qty(1-\frac{1}{\alpha}) K^{\frac{1}{\alpha}} \\
         &\leq \frac{\alpha}{\alpha-1}  \Gamma\qty(1-\frac{1}{\alpha}) K^{\frac{1}{\alpha}} ,
    \end{align*}
    where the last inequality follows from $K\geq 1$.
    Here, one can directly apply Gautschi's inequality in (\ref{eq: Par lem3}), which results in $\Gamma\qty(1-\frac{1}{\alpha}) (K+1)^{\frac{1}{\alpha}}$.
\end{proof}

\subsection{Proof of Lemma~\ref{lem: stab decom}}
 From the definition of $w_{t} = \phi(\eta_t \hat{L}_t; \gD_\alpha)$, we have
    \begin{align*}
        w_t - w_{t+1} &= \phi(\eta_t \hat{L}_t) - \phi(\eta_{t+1} \hat{L}_{t+1}) \\
        &= \phi(\eta_t \hat{L}_t) - \phi(\eta_{t+1} (\hat{L}_{t}+\hat{\ell}_t)) \\
        &= \phi(\eta_t \hat{L}_t) - \phi(\eta_{t} (\hat{L}_{t}+\hat{\ell}_t)) + \phi(\eta_{t} (\hat{L}_{t}+\hat{\ell}_t)) - \phi(\eta_{t+1} (\hat{L}_{t}+\hat{\ell}_t)),
    \end{align*}
    which implies
    \begin{align*}
        \sum_{t=1}  \E \qty[\inp{\hat{\ell}_t}{w_t - w_{t+1}}] &\leq  \sum_{t=1}^T \E \qty[\inp{\hat{\ell}_t}{\phi(\eta_t \hat{L}_t)-\phi(\eta_t (\hat{L}_t)+\hat{\ell}_t)}] \\
        & \qquad + \sum_{t=1}^T \E \qty[\inp{\hat{\ell}_t}{\phi(\eta_{t} (\hat{L}_{t}+\hat{\ell}_t)) - \phi(\eta_{t+1} (\hat{L}_{t}+\hat{\ell}_t))}]. \numberthis{\label{eq: L4_decomp}}
    \end{align*}
    Therefore, it remains to bound the second term of (\ref{eq: L4_decomp}).

    \paragraph{Case 1. \Fr distribution}
    By explicitly substituting the density function and CDF of $\gF_\alpha$, $\phi_i(\lambda; \gF_\alpha)$ is expressed by
    \begin{align*}
         \phi_{i}(\lambda; \gF_\alpha) := \Pr_{r \sim \gF_\alpha}\qty[i = \argmin_{j \in [K]} \qty{\lambda_j - r_j}] &= \int_{-\min_{j\in [K]} \lambda_j}^{\infty}
     \frac{\alpha}{(z+\lambda_i)^{\alpha+1}} \exp(-\sum_{l\in [K]}\frac{1}{(z+\lambda_l)^\alpha}) \dd z \\
     &= \int_{0}^{\infty}
     \frac{\alpha}{(z+\ul_i)^{\alpha+1}} \exp(-\sum_{l\in [K]}\frac{1}{(z+\ul_l)^\alpha}) \dd z,
     \end{align*}
     Then, for generic $L \in \sR^K$, $\uL = L - \1 \min_i L_i$, and any $i \in [K]$
    \begin{align*}
        &\pdv{\eta}\phi_i(\eta L; \gF_\alpha) \\
        &= \alpha \int_0^\infty \qty[\frac{1}{(z+\eta \uL_i)^{\alpha+1}} \sum_{j\in [K]} \frac{\alpha \uL_j}{(z+\eta \uL_j)^{\alpha+1}} - \frac{(\alpha+1)\uL_i}{(z+\eta \uL_i)^{\alpha+2}} ]\exp(-\sum_{j\in [K]} \frac{1}{(z+\eta \uL_j)^{\alpha}}) \dd z\\
        &\leq \alpha \int_0^\infty \frac{1}{(z+\eta \uL_i)^{\alpha+1}} \exp(-\sum_{j\in [K]} \frac{1}{(z+\eta \uL_j)^{\alpha}}) \sum_{j\in [K]} \frac{\alpha \uL_j}{(z+\eta \uL_j)^{\alpha+1}}  \dd z\\
        &\leq \alpha \int_0^\infty \frac{1}{(z+\eta \uL_i)^{\alpha+1}} \exp(-\sum_{j\in [K]} \frac{1}{(z+\eta \uL_j)^{\alpha}})  \max_{l \in [K]}\frac{\alpha \uL_l}{(z+\eta \uL_l)} \sum_{j\in [K]} \frac{1}{(z+\eta \uL_j)^{\alpha}} \dd z \\
        &\leq \alpha \int_0^\infty \frac{1}{(z+\eta \uL_i)^{\alpha+1}} \exp(-\sum_{j\in [K]} \frac{1}{(z+\eta \uL_j)^{\alpha}}) \frac{\alpha}{\eta} \sum_{j\in [K]} \frac{1}{(z+\eta \uL_j)^{\alpha}} \dd z.
    \end{align*}
    Let $L = \hat{L}_{t}+\hat{\ell}_t$. Since $\hat{\ell}_t = l_{t}\widehat{w^{-1}_t} e_{I_t}$ and $l_{t,i} \in [0,1]$,
    \begin{align*}
        \sum_{t=1}^T \E \qty[ \inp{\hat{\ell}_t}{\phi(\eta_{t} (\hat{L}_{t}+\hat{\ell}_t); \gF_\alpha) - \phi(\eta_{t+1} (\hat{L}_{t}+\hat{\ell}_t)); \gF_\alpha}]  
        \hspace{-22em}&  \numberthis{\label{eq: case 1 to case 2}}
        \\ &= \sum_{t=1}^T \sum_{i\in[K]} \E \qty[ \I[I_t = i] l_{t,i}\widehat{w^{-1}_{t,i}} (\phi_i (\eta_t L; \gF_\alpha) - \phi_i(\eta_{t+1}L; \gF_\alpha))] \\
        &= \sum_{t=1}^T \E \qty[  \int_{\eta_{t+1}}^{\eta_t}  \sum_{i\in[K]} l_{t,i} \pdv{\eta}\phi_i(\eta L; \gF_\alpha) \dd \eta] \\
        &\leq \alpha \sum_{t=1}^T \E \qty[ \int_{\eta_{t+1}}^{\eta_t} \frac{1}{\eta} \int_0^\infty \sum_{i\in[K]} l_{t,i} \frac{\alpha}{(z+\eta \uL_i)^{\alpha+1}} \exp(-\sum_{j\in [K]} \frac{1}{(z+\eta \uL_j)^{\alpha}}) \sum_{j\in [K]} \frac{1}{(z+\eta \uL_j)^{\alpha}} \dd z \dd \eta] \\
        &\leq \alpha \sum_{t=1}^T \E \qty[\int_{\eta_{t+1}}^{\eta_t} \frac{1}{\eta} \int_0^{\infty} w e^{-w} \dd w \dd \eta] \\
        &= \alpha \sum_{t=1}^T \log\qty(\frac{\eta_t}{\eta_{t+1}}) = \alpha \log\qty(\frac{\eta_1}{\eta_{T+1}}).
    \end{align*}

    \paragraph{Case 2. Distributions in $\fD_\alpha$}
    From the definition of $\phi$ in (\ref{eq: def_phi}), for generic $L \in \sR^K$, $\uL = L - \1 \min_i L_i$, and any $i \in [K]$
    \begin{multline}
        \pdv{\eta}\phi_i(\eta L) = \int_1^\infty \uL_i f' (z+\eta \uL_i) \prod_{j \ne i} F(z+\eta \uL_j) \dd z  \\+ \int_1^\infty  \sum_{j \ne i} \qty(  \uL_j f(z+\eta \uL_i) f(z+\eta \uL_j)  \prod_{l \ne i,j} F(z+\eta \ul_l)) \dd z. \label{eq: two terms in gen lem4}
    \end{multline}
    Recall the definition of $\varrho(x) = \frac{xf(x)}{1-F(x)}$, which implies
    \begin{align}\label{eq: F and dF with rho}
      f(x) = \frac{\varrho(x)}{x}(1-F(x)).
    \end{align}
    Then, the first term of (\ref{eq: two terms in gen lem4}) can be bounded by
    \begin{align*}
        \int_1^\infty & \uL_i  f'_\alpha (z+\eta \uL_i) \prod_{j \ne i} F(z+\eta \uL_j) \dd z
        \\ &\leq \int_1^{z_0} \uL_i f'_\alpha (z+\eta \uL_i) \prod_{j \ne i} F(z+\eta \uL_j) \dd z \tag*{(by Assumption~\ref{asm: decreasing f})}\\
        &= \uL_i f(z+\eta \uL_i) \prod_{j \ne i} F(z+\eta \uL_j) \eval_{z=1}^{z=z_0}- \int_1^{z_0} \uL_i f(z+\eta \uL_i) \sum_{j\ne i} f(z+\eta \uL_j) \prod_{l \ne i,j} F(z + \eta \uL_l) \dd z  \\
        &\leq \uL_i f(z_0+\eta \uL_i) \prod_{j \ne i} F(z_0+\eta \uL_j) \\
        &\leq \frac{\uL_i\varrho(z_0+\eta \uL_i)}{z_0+\eta \uL_i} (1-F(z_0 + \eta \uL_i)) \leq \frac{\rho_1}{\eta}, \tag*{(by (\ref{def: bounded rho}) and (\ref{eq: F and dF with rho}))}
    \end{align*}
    Next, for the second term of (\ref{eq: two terms in gen lem4}), by representation in (\ref{def: SF representation}), we obtain
    \begin{align*}
         \prod_{l \ne i,j} F(z+\eta \ul_l)) &=  \prod_{l \ne i, j} \qty(1- \frac{S_F(z+\eta \uL_l)}{(z+\eta \uL_l)^\alpha}) \\
         & \leq  \exp ( - \sum_{l \ne i, j} \frac{S_F(z+\eta \uL_l)}{(z+\eta \uL_l)^\alpha}) \qquad (\because 1-x \leq e^{-x}, \forall x\geq0) \\
         & \leq e^2 \exp ( - \sum_{j \in [K]} (1-F(z+\eta \uL_j))),
    \end{align*}
    where the last inequality follows from $F(x) \in [0,1]$ for all $x \in [1,\infty)$, i.e., $e^{1-F(x)} \leq e$ for any $x \in [1, \infty)$.
    
    Then, we have 
    \begin{align*}
        \int_1^\infty &\sum_{j \ne i} \qty(  \uL_j f(z+\eta \uL_i) f(z+\eta \uL_j) \prod_{l \ne i,j} F(z+\eta \ul_l)) \dd z \\
        &\leq e^2 \int_1^\infty f(z+\eta \uL_i) \sum_{j \in [K]} \qty(  \uL_j  f(z+\eta \uL_j))  \exp ( - \sum_{l \in [K]} (1-F(z+\eta \uL_j)))\dd z \\
        &= e^2 \int_1^\infty f(z+\eta \uL_i) \qty(\sum_{j \in [K]}  \uL_j  f(z+\eta \uL_j)) \exp ( - \sum_{j\in[K]} (1-F(z+\eta \uL_j))) \dd z.
    \end{align*}
    Here, by (\ref{eq: F and dF with rho}) again, for generic $L \in \sR^K$, we obtain for $z \in [1, \infty)$
    \begin{align*}
        \sum_{j \in [K]}  \uL_j  f(z+\eta \uL_j))  &= \sum_{j \in [K]}  \frac{\uL_j \varrho(z+\eta \uL_j)}{z+\eta \uL_j} (1-F(z+\eta \uL_j)) \\
        &\leq  \sum_{j \in [K]}  \frac{\uL_j \rho_1}{z+\eta \uL_j} (1-F(z+\eta \uL_j)) \leq \sum_{j \in [K]}  \frac{\rho_1}{\eta} (1-F(z+\eta \uL_j)),
    \end{align*}
    which implies
    \begin{align*}
        \pdv{\eta}\phi_i(\eta L) \leq \frac{\rho_1 e^2}{\eta} \int_1^\infty f(z+\eta \uL_i) \qty(\sum_{j \in [K]} (1-F(z+\eta \uL_j))) \exp ( - \sum_{j\in[K]} (1-F(z+\eta \uL_j))) \dd z + \frac{\rho_1}{\eta}.
    \end{align*}
    By noticing that 
    \begin{equation*}
        \sum_{i \in [K]} -f(z+\eta \uL_i) = \dv{z} \sum_{j \in [K]} (1-F(z+\eta \uL_j)),
    \end{equation*}
    one can reproduce the proof in Case 1 from (\ref{eq: case 1 to case 2}), which implies
    \begin{align*}
        \sum_{t=1}^T \E &\qty[ \inp{\hat{\ell}_t}{\phi(\eta_{t} (\hat{L}_{t}+\hat{\ell}_t); \gD_\alpha) - \phi(\eta_{t+1} (\hat{L}_{t}+\hat{\ell}_t); \gD_\alpha)}]  
         \\
        &\leq  \rho_1 \sum_{t=1}^T \E \Bigg[ l_{t,i} \int_{\eta_{t+1}}^{\eta_t} \frac{e^2}{\eta}  \Bigg\{\int_1^\infty \sum_{i\in[k]}f(z+\eta \uL_i) \\
        & \hspace{6em} \cdot \qty(\sum_{j \in [K]} (1-F(z+\eta \uL_j))) \exp ( - \sum_{j\in[K]} (1-F(z+\eta \uL_j))) \dd z \Bigg\} + \frac{1}{\eta} \dd \eta  \Bigg] \\
        &\leq \rho_1 \sum_{t=1}^T \E \qty[\int_{\eta_{t+1}}^{\eta_t} \frac{1}{\eta} \qty{ e^2 \int_0^{K} w e^{-w} \dd w +1 }\dd \eta] \\
        &\leq \rho_1 \qty(e^2 +1) \sum_{t=1}^T \log\qty(\frac{\eta_t}{\eta_{t+1}}) = \rho_1 \qty(e^2 +1) \log\qty(\frac{\eta_1}{\eta_{T+1}}). 
    \end{align*}

\section{Regret bound for adversarial bandits: Stability}\label{app: stability rslt}
Here, we provide the proofs for Lemmas~\ref{lem: general monotonicity of I}--\ref{lem: stability}.

\subsection{Proof of Lemma~\ref{lem: general monotonicity of I}: monotonicity}
Let us consider the \Fr distributions first.
\subsubsection{\Fr distribution}
From the definitions of $\phi$ and $I$,
\begin{equation}\label{eq: phi_I_relation}
    \phi_i(\lambda; \gF_\alpha) = \alpha I_{i,\alpha+1}(\ul; \alpha), \qquad \phi_i'(\lambda; \gF_\alpha) = - \alpha(\alpha+1)I_{i,\alpha+2}(\ul; \alpha) + \alpha^2 I_{i, 2(\alpha+1)}(\ul; \alpha).
\end{equation}
Here, $\phi_{i}'(\lambda; \gD_\alpha) \leq 0$ holds for any $\alpha >0$ as it denotes the probability of $\{\lambda_i - r_i < \min_{i\ne j} \qty{\lambda_j - r_j}\}$ when each $r_i$ follows $\gD_\alpha$. 

 Define
    \begin{equation*}
        I_{i,j,n}(\lambda; \alpha) = \int_0^\infty \frac{1}{(z+\lambda_i)^n}  \frac{1}{(z+\lambda_j)^{\alpha+1}}  \exp(-\sum_j \frac{1}{(z+\lambda_j)^\alpha}) \dd z. 
    \end{equation*}
    For simplicity, we write $I_{i,j,n}(\lambda; \alpha) = I_{i,j, n}(\lambda)$ and $I_{i,n}(\lambda; \alpha) = I_{i,n}(\lambda)$ when $n$ is written with $\alpha$.
    Then,
    \begin{equation}\label{eq: monotonicity_Iijn}
        \dv{\lambda_j} \frac{I_{i,\alpha+2}(\lambda)}{I_{i,\alpha+1}(\lambda)} = \alpha \frac{I_{i,j,\alpha+2}(\lambda)I_{i,\alpha+1}(\lambda) -  I_{i,j,\alpha+1}(\lambda)I_{i,\alpha+2}(\lambda) }{I_{i,\alpha+1}^2(\lambda)}.
    \end{equation}
    By letting $k(z) = \frac{1}{(z+\lambda_i)^{\alpha+1}}\exp(-\sum_j \frac{1}{(z+\lambda_j)^\alpha})$, each term of the numerator of (\ref{eq: monotonicity_Iijn}) is written as
    \begin{align*}
        I_{i,j,\alpha+2}(\lambda)I_{i,\alpha+1}(\lambda) &= \iint_{z,w\geq 0} \frac{k(z)k(w)}{(z+\lambda_i)(z+\lambda_j)^{\alpha+1}} \dd z \dd w \\
        &= \frac{1}{2}\iint_{z,w\geq 0} k(z)k(w) \qty( \frac{1}{(z+\lambda_i)(z+\lambda_j)^{\alpha+1}} + \frac{1}{(w+\lambda_i)(w+\lambda_j)^{\alpha+1}} )\dd z \dd w, \\
         I_{i,j,\alpha+1}(\lambda)I_{i,\alpha+2}(\lambda) &= \iint_{z,w\geq 0} \frac{k(z)k(w)}{(z+\lambda_i)(w+\lambda_j)^{\alpha+1}} \dd z \dd w \\
         &= \frac{1}{2} \iint_{z,w\geq 0} k(z)k(w) \qty( \frac{1}{(z+\lambda_i)(w+\lambda_j)^{\alpha+1}} + \frac{1}{(w+\lambda_i)(z+\lambda_j)^{\alpha+1}} )\dd z \dd w.
    \end{align*}
    Then, the integrand for $I_{i,j,\alpha+2}(\lambda)I_{i,\alpha+1}(\lambda) -  I_{i,j,\alpha+1}(\lambda)I_{i,\alpha+2}(\lambda)$ is expressed as
    \begin{align*}
        &\frac{1}{(z+\lambda_i)(z+\lambda_j)^{\alpha+1}} + \frac{1}{(w+\lambda_i)(w+\lambda_j)^{\alpha+1}} - \frac{1}{(z+\lambda_i)(w+\lambda_j)^{\alpha+1}} - \frac{1}{(w+\lambda_i)(z+\lambda_j)^{\alpha+1}} \\
        &\hspace{1em}= \frac{(w+\lambda_i)(w+\lambda_j)^{\alpha+1} + (z+\lambda_i)(z+\lambda_j)^{\alpha+1} - (w+\lambda_i)(z+\lambda_j)^{\alpha+1} - (z+\lambda_i)(w+\lambda_j)^{\alpha+1} }{(z+\lambda_i)(z+\lambda_j)^{\alpha+1} (w+\lambda_i)(w+\lambda_j)^{\alpha+1}} \\
        &\hspace{1em} = (w-z) \frac{(w+\lambda_j)^{\alpha+1} - (z+\lambda_j)^{\alpha+1} }{(z+\lambda_i)(z+\lambda_j)^{\alpha+1} (w+\lambda_i)(w+\lambda_j)^{\alpha+1}}.
    \end{align*}
    Here, one can see that when $w\geq z$, the integrand is non-negative since $\lambda_j >0 $ and $\alpha>0$.
    On the other hand, if $w < z$, then both $(w-z)$ and $(w+\lambda_j)^{\alpha+1} - (z+\lambda_j)^{\alpha+1}$ becomes negative, i.e., integrand is again positive.
    Therefore, $I_{i,j,\alpha+2}(\lambda)I_{i,\alpha+1}(\lambda) -  I_{i,j,\alpha+1}(\lambda)I_{i,\alpha+2}(\lambda)$ is an integral of a positive function, which concludes the proof.

\subsubsection{Fr\'{e}chet-type distributions}
As discussed in Appendix~\ref{app: Karamata}, when $F$ is absolute continuous and satisfies von Mises condition, $f \in \RV_{-\alpha-1}$, which implies $f(x) = x^{-\alpha-1}S_f(x)$ for some $S_f \in \RV_0$.
Let $g_i(z) = \frac{S_f(z+\lambda_i)}{(z+\lambda_i)^{\alpha+2}}$.
Then, for $\gD_\alpha \in \fD_\alpha$, we can rewrite $J_i$ as 
\begin{equation*}
    J_i(\lambda; \gD_\alpha) = \int_1^\infty \frac{S_f(z+\lambda_i)}{(z+\lambda_i)^{\alpha+2}} \prod_{j\ne i} F(z+\lambda_j) \dd z.
\end{equation*}
    For simplicity, let $f_i(z) = f(z+ \lambda_i)$ and $F_i(z)=F(z+\lambda_i)$ for any $i\in [K]$, which denotes the density function and CDF of $\gD_\alpha$, respectively.
    From the definition of $\phi$ in (\ref{eq: def_phi}) and $J_i$ in (\ref{eq: def_phi_J}), we have
    \begin{align*}
        \dv{\lambda_j} J_i(\lambda ; \gD_\alpha) &= \int_1^{\infty} \frac{S_f(z+\lambda_i)}{(z+\lambda_i)^{\alpha+2}}   \dv{\lambda_j}\prod_{j \ne i} F(z+\lambda_j) \dd z \\
        &=\int_1^{\infty}  \frac{S_f(z+\lambda_i)}{(z+\lambda_i)^{\alpha+2}} f(z+\lambda_j)   \prod_{l \ne i, j} F(z+\lambda_l) \dd z
    \end{align*}
    and
    \begin{align*}
        \dv{\lambda_j} \phi_i(\lambda; \gD_\alpha) &= \int_1^\infty f(z+\lambda_i) \dv{\lambda_j}\prod_{j \ne i} F(z+\lambda_j) \dd z \\
        &= \int_1^\infty f(z+\lambda_i) f(z+\lambda_j) \prod_{l \ne i,j} F(z+\lambda_l) \dd z.
    \end{align*}
    Then, we have for $k(z) = \prod_{l \ne i,j} F_l(z)$
    \begin{align*}
        \dv{\lambda_j} \frac{J_i(\lambda; \gD_\alpha)}{\phi_i(\lambda; \gD_\alpha)} &= \frac{1}{\phi_i^2(\lambda; \gD_\alpha)} \Bigg(\iint_{w,z \geq 1} g_i(z)f_j(z) \qty(\prod_{l \ne i,j} F_l(z)) f_i(w) \qty(\prod_{l \ne i} F_l(w)) \dd w \dd z \\
        &\hspace{8em} - \iint_{w,z \geq 1} g_i(z) \qty(\prod_{l \ne i} F_l(z)) f_i(w)f_j(w) \qty(\prod_{l \ne i,j} F_l(w))  \dd w \dd z  \Bigg) \\
        &= \frac{1}{\phi_i^2(\lambda; \gD_\alpha)} \Bigg(\iint_{w,z \geq 1} g_i(z)f_j(z) k(z) f_i(w) k(w) F_j(w) \dd w \dd z \\
        &\hspace{8em} - \iint_{w,z \geq 1} g_i(z) k(z)F_j(z) f_i(w)f_j(w)k(w) \dd w \dd z  \Bigg).
    \end{align*}
    Here, one can see that
    \begin{align*}
        \iint_{w,z \geq 1} g_i(z)f_j(z) k(z) f_i(w) k(w) F_j(w) \dd w \dd z \hspace{-10em}& \\
        &= \iint_{w,z \geq 1} \frac{k(z)k(w)}{2} \qty(g_i(z)f_j(z) f_i(w) F_j(w) + g_i(w)f_j(w) f_i(z) F_j(z)) \dd w \dd z, \\
        \iint_{w,z \geq 1} g_i(z) k(z)F_j(z) f_i(w)f_j(w)k(w) \dd w \dd z \hspace{-10em}& \\
        &= \iint_{w,z \geq 1} \frac{k(z)k(w)}{2} \qty(g_i(z)F_j(z) f_i(w) f_j(w) + g_i(w)F_j(w) f_i(z) f_j(z)) \dd w \dd z.
    \end{align*}
    Then, by elementary calculation, we obtain
    \begin{align*}
         &g_i(z)\textcolor{red}{ f_j(z)} f_i(w) \textcolor{red}{F_j(w)} + g_i(w)\textcolor{blue}{f_j(w)} f_i(z) \textcolor{blue}{F_j(z)} - (g_i(z)\textcolor{blue}{F_j(z)} f_i(w) \textcolor{blue}{f_j(w)} +g_i(w)\textcolor{red}{F_j(w)} f_i(z) \textcolor{red}{f_j(z)}) \\
         &\hspace{3em}= \textcolor{blue}{F_j(z)f_j(w)} ( g_i(w)f_i(z)- g_i(z) f_i(w)) + \textcolor{red}{F_j(w)f_j(z)} ( g_i(z)f_i(w) - g_i(w)f_i(z)) \\
         &\hspace{3em}=  (g_i(w)f_i(z)- g_i(z) f_i(w)) \cdot (F_j(z)f_j(w) - F_j(w)f_j(z)). \numberthis{\label{eq: integrated of J phi}}
    \end{align*}
    Obviously, (\ref{eq: integrated of J phi}) becomes $0$ when $z=w$.

    Firstly, let us consider the case $z\geq w$, where Assumption~\ref{asm: I is increasing} implies
    \begin{equation*}
        \frac{f(z+\uL_j)}{F(z+\uL_j)} \leq \frac{f(w+\uL_j)}{ F(w+\uL_j)} \implies F_j(w)f_j(z) \leq F_j(z)f_j(w).
    \end{equation*}
    On the other hand, we have
    \begin{equation*}
        g_i(z)f_i(w) = \frac{S_f(z)}{z^{\alpha+2}} \frac{S_f(w)}{w^{\alpha+1}},
    \end{equation*}
    which implies
    \begin{align*}
          g_i(w)f_i(z)- g_i(z) f_i(w) &= \frac{S_f(z)}{z^{\alpha+2}} \frac{S_f(w)}{w^{\alpha+1}} - \frac{S_f(w)}{w^{\alpha+2}} \frac{S_f(z)}{z^{\alpha+1}} \\
          &= \frac{S_f(w)S_f(z)}{w^{\alpha+1}z^{\alpha+1}} \qty(\frac{1}{w}- \frac{1}{z}) \geq 0, \qquad z \geq w.
    \end{align*}
    Therefore, when $z\geq w$, the integrand becomes positive.
    For the case $z \leq w$, one can easily reverse the inequalities above, which results in the positive integrand again.
    Therefore, $\frac{J_i(\lambda; \gD_\alpha)}{\phi_i(\lambda; \gD_\alpha)}$ is monotonically increasing.

\subsection{Proof of Lemma~\ref{lem: bound of I}}
Here, we assume $\lambda_1 \leq \ldots \leq \lambda_K$ without loss of generality, where $\sigma_i = i$ holds.
\subsubsection{\Fr distribution}
    By the monotonicity of $I_{i,\alpha+2}(\lambda)/I_{i,\alpha+1}(\lambda)$ in Lemma~\ref{lem: general monotonicity of I}, we have
    \begin{equation*}
        \frac{I_{i,\alpha+2}(\ul)}{I_{i,\alpha+1}(\ul)} \leq  \frac{I_{i,\alpha+2}(\lambda^*)}{I_{i,\alpha+1}(\lambda^*)}, \quad \qq{where} \lambda_j^* = \begin{cases}
            \ul_i, &j\leq i, \\
            \infty, &j > i.
        \end{cases}
    \end{equation*}
    From the definition of $I_{i,n}(\lambda; \alpha)$ in (\ref{eq: def_phi_I}), we have
    \begin{align*}
        I_{i,n}(\lambda^*; \alpha) &= \int_0^\infty \frac{1}{(z+\ul_i)^n} \exp(-\frac{i}{(z+\ul_i)^\alpha}) \dd z \\
        &= \frac{i^{-\frac{n-1}{\alpha}}}{\alpha}\int_0^{\frac{i}{\ul_i^\alpha}} u^{\frac{n-1}{\alpha}-1}e^{-u} \dd u \\
        &=  \frac{i^{-\frac{n-1}{\alpha}}}{\alpha} \gamma\qty(\frac{n-1}{\alpha}, \frac{i}{\ul_i^\alpha}),
    \end{align*}
    where $\gamma(n,x) = \int_0^x t^{n-1} e^{-t} \dd t$ denotes the lower incomplete gamma function.
    
    By substituting this result, we obtain
    \begin{align*}
        \frac{I_{i,\alpha+2}(\ul;\alpha)}{I_{i,\alpha+1}(\ul;\alpha)} \leq \frac{1}{\sqrt[\alpha]{i}} \frac{\gamma\qty(1+\frac{1}{\alpha}, \frac{i}{\ul_i^\alpha})}{\gamma\qty(1, \frac{i}{\ul_i^\alpha})}.
    \end{align*}
    Note that $\gamma(1, x) = 1 - e^{-x}$ holds for any $x >0$, and for any $\alpha >0$ 
    \begin{equation*}
        \gamma\qty(1+\frac{1}{\alpha}, x) \leq \frac{\sqrt[\alpha]{x}}{1+1/\alpha}(1-e^{-x}) = \frac{\sqrt[\alpha]{x}}{1+1/\alpha} \gamma(1, x)
    \end{equation*}
    by Lemma~\ref{lem: olver_lemma}, which proves the first inequality of Lemma~\ref{lem: bound of I}.

    Then, let us assume there exists a constant $C < \infty$ satisfying for any $x>0$
    \begin{equation}\label{eq: bnd_I_der_bnd}
        \gamma\qty(1+1/\alpha,x ) - (1-e^{-x})C \leq 0.
    \end{equation}
    The derivative of the LHS of (\ref{eq: bnd_I_der_bnd}) is given as
    \begin{equation*}
        \sqrt[\alpha]{x}e^{-x} - Ce^{-x},
    \end{equation*}
    which achieves the minimum at $x=C^\alpha$, i.e., its maximum is achieved at $x=0$ or $x=\infty$.
    Applying this finding in (\ref{eq: bnd_I_der_bnd}) gives $C \geq \Gamma\qty(1+\frac{1}{\alpha})$, which concludes the proof.

\subsubsection{Fr\'{e}chet-type distributions}
By the monotonicity of $ \frac{J_{i}(\lambda; \gD_\alpha)}{\phi_{i}(\lambda; \gD_\alpha)}$ in Lemma~\ref{lem: general monotonicity of I}, we have
\begin{equation*}
    \frac{J_{i}(\ul; \gD_\alpha)}{\phi_{i}(\ul; \gD_\alpha)} \leq  \frac{J_{i}(\lambda^*; \gD_\alpha)}{\phi_{i}(\lambda^*; \gD_\alpha)}, \quad \qq{where} \lambda_j^* = \begin{cases}
        \ul_i, &j\leq i, \\
        \infty, &j > i.
    \end{cases}
\end{equation*}
From the definition of $J_{i}(\lambda; \gD_\alpha)$ in (\ref{eq: def_phi_J}), we have
\begin{align*}
    J_{i}(\lambda^*; \gD_\alpha) &= \int_1^\infty \frac{S_f(z+\ul_i)}{(z+\ul_i)^{\alpha+2}} F^{i-1}(z+\ul_i) \dd z
\end{align*}
and
\begin{equation*}
    \phi_i(\lambda^*; \gD_\alpha) = \int_{1}^\infty f(z+\ul_i) F^{i-1}(z+\ul_i) \dd z.
\end{equation*}
Here, we begin by examining the Pareto distribution, as the proof for this case offers insights into the generalization of our results.
\paragraph{Pareto distribution}
Let us consider the $\gD_\alpha = \gP_\alpha$, where
    \begin{align*}
         J_{i}(\lambda^*; \gP_\alpha) &= \int_1^\infty \frac{\alpha}{(z+\ul_i)^{\alpha+2}} \qty(1-\frac{1}{(z+\ul_i)^{\alpha}})^{i-1} \dd z \\
         &= \int_0^{\frac{1}{(1+\ul_i)^\alpha}} w^{\frac{1}{\alpha}} (1-w)^{i-1} \dd w \\
         &=  B\qty(\frac{1}{(1+\ul_i)^\alpha}; 1+\frac{1}{\alpha}, i),
    \end{align*}
    where $B(x; a,b) = \int_0^x t^{a-1} (1-t)^{b-1} \dd t$ denotes the incomplete Beta function.
    Similarly,
    \begin{align*}
        \phi_i(\lambda^*; \gP_\alpha) &= \int_{1}^\infty \frac{\alpha}{(z+\ul_i)^{\alpha+1}} \qty(1-\frac{1}{(z+\ul_i)^{\alpha}})^{i-1} \dd z \\
        &= \int_0^{\frac{1}{(1+\ul_i)^\alpha}} w^{0} (1-w)^{i-1} \dd w \\
         &= B\qty(\frac{1}{(1+\ul_i)^\alpha}; 1, i).
    \end{align*}
    Therefore, by Lemma~\ref{lem: incomplete beta}
    \begin{align*}
        \frac{J_{i}(\lambda^*; \gP_\alpha)}{\phi_i(\lambda^*; \gP_\alpha)} = \frac{B\qty(\frac{1}{(1+\ul_i)^\alpha}; 1+\frac{1}{\alpha}, i)}{B\qty(\frac{1}{(1+\ul_i)^\alpha}; 1, i)} \leq \frac{B\qty(1+\frac{1}{\alpha},i)}{B(1,i)}.
    \end{align*}
    Since $B(x,y) = \frac{\Gamma(x)\Gamma(y)}{\Gamma(x+y)}$, $i\geq 1$, and $\alpha > 1$, applying Gautschi's inequality provides
    \begin{align*}
        \frac{B\qty(1+\frac{1}{\alpha},i)}{B(1,i)} = \frac{\Gamma\qty(1+\frac{1}{\alpha})\Gamma(i+1)}{\Gamma\qty(1+\frac{1}{\alpha}+i)} &= \frac{\Gamma\qty(1+\frac{1}{\alpha})}{i+\frac{1}{\alpha}}\frac{\Gamma(i+1)}{\Gamma\qty(i+\frac{1}{\alpha})}\numberthis{\label{eq: Pareto to general}} \\
        &\leq  \frac{\Gamma\qty(1+\frac{1}{\alpha})}{i+\frac{1}{\alpha}} (i+1)^{1-\frac{1}{\alpha}} \\
        &\leq \frac{2\alpha}{\alpha+1}\Gamma\qty(1+\frac{1}{\alpha}) \frac{1}{(i+1)^{\frac{1}{\alpha}}} \\ &\leq 2\Gamma\qty(1+\frac{1}{\alpha}) \frac{1}{\sqrt[\alpha]{i}}.
    \end{align*}
    On the other hand, for $x \in [0,1]$ it holds that $B(x; 1, i) = (1-(1-x)^i)$ and
    \begin{align*}
        B\qty(x; 1+ \frac{1}{\alpha},i) = \int_0^x t^{\frac{1}{\alpha}}(1-t)^{i-1} \dd t & \leq \int_0^x t^{\frac{1}{\alpha}}e^{-t(i-1)} \dd t \\
        &\leq e \int_0^x t^{\frac{1}{\alpha}}e^{-ti} \dd t \\
        &= \frac{e}{i^{1+\frac{1}{\alpha}}} \int_0^{xi} w^{\frac{1}{\alpha}} e^{-w} \dd w =  \frac{e}{i^{1+\frac{1}{\alpha}}} \gamma\qty(1+\frac{1}{\alpha}, xi).
    \end{align*}
    Then, by Lemma~\ref{lem: olver_lemma}, we have
    \begin{equation}\label{eq: Pareto to bounded S}
        \frac{B\qty(x; 1+ \frac{1}{\alpha},i)}{B\qty(x; 1,i)} \leq \frac{e}{i^{1+\frac{1}{\alpha}}} \frac{(xi)^{\frac{1}{\alpha}}}{1+1/\alpha} \frac{1-e^{-xi}}{(1-(1-x)^i)} \leq \frac{e}{i} \frac{(x)^{\frac{1}{\alpha}}}{1+1/\alpha},
    \end{equation}
    where the last inequality follows from $\lim_{x\to 0} \frac{1-e^{-xi}}{(1-(1-x)^i)} = 1$ and $\lim_{x\to 1} \frac{1-e^{-xi}}{(1-(1-x)^i)} < 1$.
    Therefore, by substituting $x= \frac{1}{(1+\ul_i)^{\alpha}}$, we have
    \begin{equation*}
        \frac{B\qty(\frac{1}{(1+\ul_i)^{\alpha}}; 1+ \frac{1}{\alpha},i)}{B\qty(\frac{1}{(1+\ul_i)^{\alpha}}; 1,i)} \leq \frac{e \alpha}{\alpha+1} \frac{1}{1+\ul_i} \leq \frac{e \alpha}{\alpha+1} \frac{1}{\ul_i}.
    \end{equation*}

\paragraph{Generalization to $\fD_\alpha$}
 Let us define a function for $x\geq 1$
    \begin{equation*}
       k(x) = k(x;\gD_\alpha) := \frac{\int_x^\infty \frac{S_f(z)}{z^{\alpha+2}} F^{i-1}(z) \dd z }{\int_{x}^\infty f(z) F^{i-1}(z) \dd z}.
    \end{equation*}
    Then, it holds that
    \begin{align*}
        \dv{k(x)}{x} &= \frac{1}{\qty(\int_{x}^\infty f(z) F^{i-1}(z) \dd z)^2} \Big( f(x)F^{i-1}(x) \int_x^\infty \frac{S_f(z)}{z^{\alpha+2}} F^{i-1}(z) \dd z \\
        &\hspace{17em}-\frac{S_f(x)}{x^{\alpha+2}}F^{i-1}(x) \int_{x}^\infty f(z) F^{i-1}(z) \dd z  \Big) \\
        &= \frac{F^{i-1}(x)}{\qty(\int_{x}^\infty f(z) F^{i-1}(z) \dd z)^2} \qty(f(x)\int_{x}^\infty \frac{S_f(z)}{z^{\alpha+2}} F^{i-1}(z) \dd z - \frac{S_f(x)}{x^{\alpha+2}} \int_x^\infty f(z) F^{i-1}(z) \dd z) \\
        &\leq \frac{F^{i-1}(x)}{\qty(\int_{x}^\infty f(z) F^{i-1}(z) \dd z)^2} \qty(\frac{S_f(x)}{x^{\alpha+1}}\int_{x}^\infty \frac{S_f(z)}{z^{\alpha+2}} F^{i-1}(z) \dd z - \frac{S_f(x)}{x^{\alpha+2}}\int_x^\infty \frac{S_f(z)}{z^{\alpha+1}} F^{i-1}(z) \dd z) \\
        &= \frac{F^{i-1}(x)}{\qty(\int_{x}^\infty f(z) F^{i-1}(z) \dd z)^2} 
        \frac{S_f(x)}{x^{\alpha+2}} 
        \qty(\int_{x}^\infty \frac{xS_f(z)}{z^{\alpha+2}} F^{i-1}(z) \dd z -\int_x^\infty \frac{S_f(z)}{z^{\alpha+1}} F^{i-1}(z) \dd z) \\
        &= \frac{F^{i-1}(x)}{\qty(\int_{x}^\infty f(z) F^{i-1}(z) \dd z)^2} 
        \frac{S_f(x)}{x^{\alpha+2}} 
        \qty(\int_{x}^\infty \qty(\frac{x}{z}-1)\qty(\frac{S_f(z)}{z^{\alpha+1}} F^{i-1}(z)) \dd z) \leq 0,
    \end{align*}
    which implies $k(x)$ is decreasing with respect to $x \geq 1$.
    Therefore,
    \begin{align*}
        \frac{J_{i}(\lambda^*; \gD_\alpha)}{\phi_{i}(\lambda^*; \gD_\alpha)} \leq \frac{\int_1^\infty \frac{S_f(z)}{z^{\alpha+2}} F^{i-1}(z) \dd z }{\int_{1}^\infty f(z) F^{i-1}(z) \dd z} &= i\int_1^\infty \frac{f(z)}{z} F^{i-1}(z) \dd z  \\
        &= \E\qty[\frac{1}{M_i}] \leq \frac{m}{A_l\sqrt[\alpha]{i}}, \numberthis{\label{eq: f sq}}
    \end{align*}
    where (\ref{eq: f sq}) follows from Assumption~\ref{asm: bounded block}.

    Next, let us consider the case (\ref{asm: bounded rho as alpha}) holds, where $S_F(x)$ is increasing.
    Let $1-F(z) = t$, which implies $z = U(1/t)$ for $t \in [1,\infty)$.
    Then, we have
    \begin{align*}
         \int_1^\infty \frac{f(z)}{z} F^{i-1}(z) \dd z &= \int_0^1 \frac{1}{U(1/t)} (1-t)^{(i-1)} \dd t \\
         &= \int_0^1  \frac{t^{\frac{1}{\alpha}}}{S_U(1/t)} (1-t)^{(i-1)} \dd t, \numberthis{\label{eq: S_F alpha bounded}} \\
         &\leq B\qty(1+\frac{1}{\alpha}; i).
    \end{align*}
    where (\ref{eq: S_F alpha bounded}) follows from $S_U \in \RV_{1/\alpha}$.
    Here, $S_U(1/t) = S_F^{\frac{1}{\alpha}}\qty(U(1/t))$ holds from (\ref{eq: SF and SU}), which implies that $\frac{1}{S_U(1/t)}$ is increasing when $S_F$ is increasing function since $U(1/t)$ is decreasing.
    From the definition of $U(1) = 1 = 1^{\frac{1}{\alpha}}S_U(1)$, we obtain $S_U(1) = 1$, i.e., $A_l=1$.
    Therefore, the analysis of the Pareto distributions from (\ref{eq: Pareto to general}) implies that $m \leq 2\Gamma\qty(1+\frac{1}{\alpha})$.

    Next, we obtain
    \begin{align*}
        \int_{1+\ul_i}^\infty f(z) F^{i-1}(z) \dd z = \frac{1}{i}\qty(1-F^{i}(1+\ul_i)) = B(1-F(1+\ul_i); 1,i).
    \end{align*}
    By Assumption~\ref{asm: bounded block} and (\ref{eq: S_F alpha bounded}), we have
    \begin{equation*}
        \int_{1+\ul_i}^\infty \frac{S_f(z)}{z^{\alpha+2}} F^{i-1}(z) \dd z \leq \frac{1}{A_l} B\qty(1-F(1+\ul_i); 1+\frac{1}{\alpha}, i).
    \end{equation*}
    Therefore, following the same steps from (\ref{eq: Pareto to bounded S}), we have
    \begin{align*}
        \frac{\int_{1+\ul_i}^\infty \frac{S_f(z)}{z^{\alpha+2}} F^{i-1}(z) \dd z }{\int_{1+\ul_i}^\infty f(z) F^{i-1}(z) \dd z} &\leq   \frac{1}{A_l} \frac{ B\qty(1-F(1+\ul_i); 1+\frac{1}{\alpha}, i)}{ B\qty(1-F(1+\ul_i); 1, i)} \\
        &\leq\frac{1}{A_l}  \frac{e}{i} \frac{\alpha}{\alpha+1} \qty(\frac{S_F(1+\ul_i)}{(1+\ul_i)^\alpha})^{\frac{1}{\alpha}}\numberthis{\label{eq: bounded S_F with SU}} \\
        &\leq  \frac{\alpha e}{A_l (\alpha+1)}\frac{A_u}{1+\ul_i} \\
        &\leq \frac{\alpha e}{A_l (\alpha+1)} \frac{A_u}{\ul_i} .
    \end{align*}
    where (\ref{eq: bounded S_F with SU}) follows from Assumption~\ref{asm: bounded block}.
    Here, when $S_F$ is an increasing function, then $A_u=\lim_{x\to \infty} S_F^{\frac{1}{\alpha}}(x)$ from (\ref{eq: SF and SU}). 
\begin{remark}
    When one considers the shifted distribution, (\ref{asm: bounded rho as alpha}) does not necessarily hold even when its original distribution satisfies it. 
    In such cases, it suffices to consider the shifted distribution function after the conditioning trick, where we have
    \begin{equation*}
        G(x) = F^*(x-1) = \frac{F(x)-F(1)}{1-F(1)}, \quad x \geq 1,
    \end{equation*}
    which implies
    \begin{equation*}
        1-G(x) = \frac{1-F(x)}{1-F(1)} = x^{-\alpha} S_G(x), \quad x \geq 1. 
    \end{equation*}
    Therefore, $S_G(x) = \frac{1}{1-F(1)}S_F(x)$ holds for $x\geq 1$ and thus they are tail-equivalent.
    Furthermore, if $F$ satisfies (\ref{asm: bounded rho as alpha}), then
    \begin{equation*}
        \frac{xg(x)}{1-G(x)} = \frac{xf(x)}{1-F(x)} \leq \alpha, \quad x \geq 1
    \end{equation*}
    holds.
    Therefore, $S_G(x)$ is monotonically increasing for $x\geq 1$ with $S_G(1) = 1$, which implies that $A=1$ and $m\leq 2\Gamma\qty(1+\frac{1}{\alpha})$.
\end{remark}

\subsection{Proof of Lemma~\ref{lem: stability_adv_i}}\label{app: stab_adv_i}
Although the overall proof is almost the same and follows the proofs of \citet{pmlr-v201-honda23a}, we provide the proofs for completeness.
\subsubsection{\Fr distribution}
From the definition of $\hat{\ell}_t = \qty(\ell_{t, I_t} \widehat{w^{-1}_{t, I_t}}) e_{I_t}$, when $I_t =i$, we have
\allowdisplaybreaks
    \begin{align*}
       \phi_i\qty(\eta_t \hat{L}_t; \gF_\alpha) &- \phi_i \qty(\eta_t \qty(\hat{L}_t + \qty(\ell_{t, i} \widehat{w^{-1}_{t, i}}) e_{i}); \gF_\alpha) \\
       &= \int_{0}^{\eta_t \ell_{t, i} \widehat{w^{-1}_{t, i}}} -\phi_i'(\eta_t \hat{L}_t + x e_{i}; \gF_\alpha) \dd x \\
       &\leq \alpha(\alpha+1) \int_{0}^{\eta_t \ell_{t, i} \widehat{w^{-1}_{t, i}}} I_{i,\alpha+2}(\underline{\eta_t \hat{L}_t + x e_{i}}; \alpha) \dd x \tag*{(by (\ref{eq: phi_I_relation}))} \\
       &\leq \alpha(\alpha+1) \int_{0}^{\eta_t \ell_{t, i} \widehat{w^{-1}_{t, i}}} I_{i,\alpha+2}(\underline{\eta_t \hat{L}_t}; \alpha) \dd x  \numberthis{\label{eq: using monotonicity of I}} \\
       &= \alpha(\alpha+1) \eta_t \ell_{t,i}  \widehat{w^{-1}_{t,i}} I_{i,\alpha+2}(\underline{\eta_t \hat{L}_t}; \alpha),
    \end{align*}
    \allowdisplaybreaks[0]
    where (\ref{eq: using monotonicity of I}) follows from the monotonicity of $I_{i,\alpha}$.
    Since $\widehat{w^{-1}_{t,i}}$ follows the geometric distribution with mean $w_{t,i}^{-1}$ given $\hat{L}_t$ and $I_t$, it holds that
    \begin{equation*}
        \E\qty[\widehat{w^{-1}_{t,I_t}}^{2} \eval \hat{L}_t, I_t] = \frac{2}{w_{t,I_t}^2} - \frac{1}{w_{t,I_t}} \leq \frac{2}{w_{t,I_t}^2}.
    \end{equation*}
    Since $I_t \ne i$ implies $\hat{\ell}_{t,i} = 0$, we obtain
    \begin{align*}
         \E \qty[\hat{\ell}_{t,i}\qty(\phi_i\qty(\eta_t \hat{L}_t) - \phi_i\qty(\eta_t \qty(\hat{L}_t + \hat{\ell}_t)) ) \eval \hat{L}_t] \hspace{-10em}&\\
         &= \E \qty[ \I [I_t = i] \hat{\ell}_{t,i}\qty(\phi_i\qty(\eta_t \hat{L}_t) - \phi_i\qty(\eta_t \qty(\hat{L}_t + \hat{\ell}_t)) ) \eval \hat{L}_t] \\
         &= \E \qty[ \I [I_t = i] \ell_{t,i} \widehat{w^{-1}_{t,i}} \qty(\phi_i\qty(\eta_t \hat{L}_t) - \phi_i\qty(\eta_t \qty(\hat{L}_t + \hat{\ell}_t)) ) \eval \hat{L}_t] \\
         &\leq  \E\qty[w_{t,i} \ell_{t,i} \widehat{w^{-1}_{t,i}} \cdot \alpha(\alpha+1) \eta_t \ell_{t,i} \widehat{w^{-1}_{t,i}} I_{i,\alpha+2}(\eta_t \hat{\uL}_t) \eval \hat{L}_t] \\
         &\leq 2\alpha(\alpha+1) \eta_t \E \qty[w_{t,i} \frac{\ell_{t,i}^2 I_{i,\alpha+2}(\eta_t \hat{\uL}_t) }{w_{t,i}^2} \eval \hat{L}_t] \\
         &\leq 2(\alpha+1) \eta_t \E \qty[\frac{I_{i,\alpha+2}(\eta_t \hat{\uL}_t) }{I_{i,\alpha+1}(\eta_t \hat{\uL}_t)} \eval \hat{L}_t] \quad \qty(\text{by } w_{t,i}=\alpha I_{i,\alpha+1}(\eta_t \hat{\uL}_t), \, \ell_{t,i} \leq 1) \\
         &\leq \frac{2\alpha}{\eta \hat{\uL}_{t,i}}  \land 2(\alpha+1)\eta_t\frac{\Gamma\qty(1+\frac{1}{\alpha})}{\sqrt[\alpha]{\sigma_i}},
    \end{align*}
    where the last inequality follows from Lemma~\ref{lem: bound of I} for $\gF_\alpha$.

\subsubsection{Fr\'{e}chet-type distributions}
From the definition of $\phi$ in (\ref{eq: def_phi}) and (\ref{eq: bounded f' with rho}) from Assumption~\ref{asm: derivative of f}, we have
\begin{align*}
    -\phi_i'(\lambda; \gD_\alpha) &= \int_{1}^{\infty} -f'(z+\lambda_i) \prod_{j\ne i} F(z+\lambda_j) \dd z  \\
    &\leq \int_{1}^{\infty} \rho_2 \frac{f(z+\lambda_i)}{z+\lambda_i} \prod_{j\ne i} F(z+\lambda_j)  \dd z = \rho_2 J_i(\lambda; \gD_\alpha).
\end{align*}
Therefore, we can replace $\alpha(\alpha+1)I_{i,\alpha+2}$ with $\rho_2 J_i$, which gives
\begin{align*}
     \E \qty[\hat{\ell}_{t,i}\qty(\phi_i\qty(\eta_t \hat{L}_t) - \phi_i\qty(\eta_t \qty(\hat{L}_t + \hat{\ell}_t)) ) \eval \hat{L}_t] &\leq \E\qty[w_{t,i} \ell_{t,i} \widehat{w^{-1}_{t,i}}\cdot \eta_t \ell_{t,i} \widehat{w^{-1}_{t,i}} \rho_2 J_{i}(\eta_t \hat{\uL}_t; \gD_\alpha) \eval \hat{L}_t] \\
     &\leq 2 \eta_t \E \qty[w_{t,i} \frac{\ell_{t,i}^2 \rho_2 J_{i}(\eta_t \hat{\uL}_t; \gD_\alpha) }{w_{t,i}^2} \eval \hat{L}_t] \\
     &\leq 2\rho_2 \eta_t \E \qty[\frac{J_{i}(\eta_t \hat{\uL}_t; \gD_\alpha) }{\phi_i(\eta_t \hat{\uL}_t; \gD_\alpha)} \eval \hat{L}_t],
\end{align*}
where Lemma~\ref{lem: bound of I} concludes the proof.

\subsection{Proof of Lemma~\ref{lem: stability}}
By Lemmas~\ref{lem: bound of I} and~\ref{lem: stability_adv_i}, for $\gF_\alpha$, we have
\begin{align*}
    \E \qty[\hat{\ell}_{t}\qty(\phi_i\qty(\eta_t \hat{L}_t) - \phi_i\qty(\eta_t \qty(\hat{L}_t + \hat{\ell}_t)) ) \eval \hat{L}_t] &\leq \sum_{i\in[K]} 2(\alpha+1) \eta_t \frac{\Gamma\qty(1+\frac{1}{\alpha})}{\sqrt[\alpha]{\sigma_i}}  \\
    &\leq 2(\alpha+1)\eta_t \Gamma\qty(1+\frac{1}{\alpha}) \qty(1 + \int_1^K x^{-1/\alpha} \dd x) \\
    &= 2(\alpha+1)\eta_t \Gamma\qty(1+\frac{1}{\alpha}) \frac{\alpha K^{1-1/\alpha}-1}{\alpha-1} \\
    &\leq  \frac{2\alpha(\alpha+1)}{\alpha-1}\eta_t \Gamma\qty(1+\frac{1}{\alpha}) K^{1-1/\alpha}.
\end{align*}
Similarly, for $\gD_\alpha \in \fD_\alpha$, we have
\begin{equation*}
     \E \qty[\hat{\ell}_{t}\qty(\phi_i\qty(\eta_t \hat{L}_t) - \phi_i\qty(\eta_t \qty(\hat{L}_t + \hat{\ell}_t)) ) \eval \hat{L}_t] \leq \frac{2\alpha \rho_2}{\alpha-1} \eta_t  \frac{m}{A_l} K^{1-1/\alpha}.
\end{equation*}

\section{Regret bound for adversarial bandits: Penalty}\label{app: penalty}
This section provides the proofs on Lemma~\ref{lem: penalty}.
\subsection{Penalty term analysis for the \Fr distributions}\label{app: penalty fr}
    By letting $k_\alpha(z) = \sum_{i} \frac{1}{(z+\eta_t \hat{\uL}_{t,i})^\alpha} \in \left(0, \frac{K}{z^\alpha}\right]$, we have
    \begin{align*}
        \E \qty[r_{t,I_t} - r_{t,i^*} \eval \hat{L}_t ]  &\leq \sum_{i \ne i^*} \E\qty[\I[I_t = I] r_{t,i} \eval \hat{L}_t ] \\
        &= \alpha \int_0^\infty \sum_{i \ne i^*} \frac{1}{(z+\eta_t \hat{\uL}_{t,i})^\alpha} e^{-k_\alpha(z)} \dd z \\
        &\leq  \alpha \int_0^\infty \sum_{i \ne i^*} \frac{1}{(z+\eta_t \hat{\uL}_{t,i})^\alpha} \dd z = \frac{\alpha}{\alpha-1} \sum_{i\ne i^*} \frac{1}{(\eta_t \hat{\uL}_{t,i})^{\alpha-1}}.
    \end{align*}
    On the other hand,
        \begin{align*}
        \alpha \int_0^\infty \sum_{i \ne i^*} \frac{1}{(z+\eta_t \hat{\uL}_{t,i})^\alpha} e^{-k_\alpha(z)} \dd z &\leq  \alpha \int_0^\infty k_\alpha(z) e^{-k_\alpha(z)} \dd z \\
        &= \alpha \int_0^{\sqrt[\alpha]{K}} k_\alpha(z) e^{-k_\alpha(z)} \dd z + \alpha \int_{\sqrt[\alpha]{K}}^\infty k_\alpha(z) e^{-k_\alpha(z)} \dd z \numberthis{\label{eq: penalty_bnd_hnd}} \\
        &\leq \alpha \int_0^{\sqrt[\alpha]{K}} e^{-1} \dd z + \alpha \int_{\sqrt[\alpha]{K}}^\infty \frac{K}{z^{\alpha}} e^{- \frac{K}{z^{\alpha}}} \dd z \\
        &= \alpha e^{-1} \sqrt[\alpha]{K} + \sqrt[\alpha]{K} \int_0^1 w^{-\frac{1}{\alpha}}e^{-w} \dd w \\
        &= \qty(\alpha e^{-1} + \gamma\qty(1-\frac{1}{\alpha}, 1))\sqrt[\alpha]{K},
    \end{align*}
    where the first term of (\ref{eq: penalty_bnd_hnd}) follows from the fact that $xe^{-x} \leq e^{-1}$ and the second term follows from the fact that $xe^{-x}$ is increasing for $x\leq 1$ and $k_\alpha(z) \leq 1$ holds for $z \geq \sqrt[\alpha]{K}$.
    From the definition of the lower incomplete gamma function, one can obtain
    \begin{equation*}
        \gamma(s+1, x) = s\gamma(s,x) -x^se^{-x} \implies \gamma\qty(2-\frac{1}{\alpha}, 1) = \qty(1-\frac{1}{\alpha})\gamma\qty(1-\frac{1}{\alpha}, 1) - e^{-1},
    \end{equation*}
    which implies
    \begin{align*}
        \gamma\qty(1-\frac{1}{\alpha}, 1) &= \frac{\alpha}{\alpha-1} \gamma\qty(2-\frac{1}{\alpha}, 1) + \frac{\alpha e^{-1}}{\alpha-1} \\
        &\leq \frac{\alpha}{\alpha-1} \frac{\alpha}{2\alpha-1} (1-e^{-1}) + \frac{\alpha e^{-1}}{\alpha-1}
    \end{align*}
    by Lemma~\ref{lem: olver_lemma} again.
    Therefore, by doing elementary calculations, we obtain that
    \begin{equation*}
        \alpha \int_0^\infty \sum_{i \ne i^*} \frac{1}{(z+\eta_t \hat{\uL}_{t,i})^\alpha} e^{-k_\alpha(z)} \dd z \leq \qty(\frac{0.74\alpha^3  + 0.27\alpha^2}{(\alpha-1)(2\alpha-1)}) \sqrt[\alpha]{K}.
    \end{equation*}
\subsection{Penalty term analysis for the Pareto distributions}\label{app: penalty Pr}
    By letting $k_\alpha(z) = \sum_{i} \frac{1}{(z+\eta_t \hat{\uL}_{t,i})^\alpha} \in \left(0, \frac{K}{z^\alpha}\right]$, we have
    \begin{align*}
        \E \qty[r_{t,I_t} - r_{t,i^*} \eval \hat{L}_t ]  &\leq \sum_{i \ne i^*} \E\qty[\I[I_t = I] r_{t,i} \eval \hat{L}_t ] \\
        &= \alpha \int_1^\infty \sum_{i \ne i^*} \qty(\frac{1}{(z+\eta_t \hat{\uL}_{t,i})^\alpha} \prod_{j \ne i} \qty(1-\frac{1}{(z+\eta_t \hat{\uL}_{t,j})^\alpha})) \dd z \\
        &\leq e \alpha \int_1^\infty \sum_{i \ne i^*} \frac{1}{(z+\eta_t \hat{\uL}_{t,i})^\alpha} e^{-k_\alpha(z)} \dd z.
    \end{align*}
    Therefore, the proof in Section~\ref{app: penalty fr} immediately concludes the Pareto case.
    
\subsection{Penalty term for the Fr\'{e}chet-type distributions}\label{app: penalty gen} 
Here, let us consider the inverse of the tail function, which is the tail quantile function defined as
\begin{equation}\label{def: quantile}
    U(t) := \inf \qty{x: F(x) \geq \frac{1}{t}}.
\end{equation}
Note that when $F$ and $U$ are continuous, $1-F(U(t)) = \frac{1}{t}$ holds.
Then, as in the other cases, we have
    \begin{align*}
        \E \qty[r_{t,I_t} - r_{t,i^*} \eval \hat{L}_t ] &\leq \sum_{i \ne i^*} \E\qty[\I[I_t = I] r_{t,i} \eval \hat{L}_t ] \\
        &= \int_1^\infty \sum_{i \ne i^*} \qty( (z+\eta_t \hat{\uL}_{t,i}) f(z+\eta_t \hat{\uL}_{t,i})  \prod_{j \ne i} F(z+\eta_t \hat{\uL}_{t,j})) \dd z \\
        &= \int_1^{U(K)} \sum_{i \ne i^*} \qty( \frac{S_f(z+\eta_t \hat{\uL}_{t,i})}{(z+\eta_t \hat{\uL}_{t,i})^{\alpha}}  \prod_{j \ne i} F(z+\eta_t \hat{\uL}_{t,j})) \dd z \\
        &\hspace{2em}+ \int_{U(K)}^\infty \sum_{i \ne i^*} \qty( \frac{S_f(z+\eta_t \hat{\uL}_{t,i})}{(z+\eta_t \hat{\uL}_{t,i})^{\alpha}}  \prod_{j \ne i} F(z+\eta_t \hat{\uL}_{t,j})) \dd z.\numberthis{\label{eq: penalty decomp gen}}
    \end{align*}
The first term of (\ref{eq: penalty decomp gen}) can be bounded as 
\allowdisplaybreaks
\begin{align*}
    \int_1^{U(K)} \sum_{i \ne i^*} \qty( \frac{S_f(z+\eta_t \hat{\uL}_{t,i})}{(z+\eta_t \hat{\uL}_{t,i})^{\alpha}}  \prod_{j \ne i} F(z+\eta_t \hat{\uL}_{t,j})) \dd z \hspace{-16em}&\\
    &=  \int_1^{U(K)} \sum_{i \ne i^*} \qty( \frac{\varrho(z+\eta_t \hat{\uL}_{t,i}) S_F(z+\eta_t \hat{\uL}_{t,i})}{(z+\eta_t \hat{\uL}_{t,i})^{\alpha}}  \prod_{j \ne i} F(z+\eta_t \hat{\uL}_{t,j})) \dd z  \tag*{by (\ref{eq: Sf and SF})}\\
    &\leq \int_1^{U(K)} \sum_{i \ne i^*} \qty( \frac{\rho_1 S_F(z+\eta_t \hat{\uL}_{t,i})}{(z+\eta_t \hat{\uL}_{t,i})^{\alpha}}  \prod_{j  \ne i} F(z+\eta_t \hat{\uL}_{t,j})) \dd z \\
    &\leq e \rho_1 \int_1^{U(K)} \qty( \sum_{i\in [K]} \frac{S_F(z+\eta_t \hat{\uL}_{t,i})}{(z+\eta_t \hat{\uL}_{t,i})^{\alpha}})  \exp \qty(-\sum_{i\in [K]}\frac{S_F(z+\eta_t \hat{\uL}_{t,i})}{(z+\eta_t \hat{\uL}_{t,i})^{\alpha}}) \dd z \\
    &\leq \rho_1 e\int_1^{U(K)} e^{-1} \dd z  \leq  \rho_1 U(K) \leq \rho_1 A_u K^{\frac{1}{\alpha}}
\end{align*}
\allowdisplaybreaks[0]
where the second last inequality follows from $xe^{-x}\leq e^{-1}$.

For the second term of (\ref{eq: penalty decomp gen}), from $S_f(x) = \varrho(x) S_F(x)$ in (\ref{eq: Sf and SF}), we have
\begin{align*}
    \int_{U(K)}^\infty \sum_{i \ne i^*} \qty( \frac{S_f(z+\eta_t \hat{\uL}_{t,i})}{(z+\eta_t \hat{\uL}_{t,i})^{\alpha}}  \prod_{j \ne i} F(z+\eta_t \hat{\uL}_{t,j})) \dd z \hspace{-15em}&\\
    &= \int_{U(K)}^\infty \sum_{i \ne i^*} \qty( \frac{ \varrho(z+\eta_t \hat{\uL}_{t,i}) S_F(z+\eta_t \hat{\uL}_{t,i})}{(z+\eta_t \hat{\uL}_{t,i})^{\alpha}}  \prod_{j \ne i} F(z+\eta_t \hat{\uL}_{t,j})) \dd z \\
    &\leq \rho_1 \int_{U(K)}^\infty \sum_{i \ne i^*} \qty( \frac{ S_F(z+\eta_t \hat{\uL}_{t,i})}{(z+\eta_t \hat{\uL}_{t,i})^{\alpha}}  \prod_{j \ne i} F(z+\eta_t \hat{\uL}_{t,j})) \dd z \\
    &=  \rho_1 \int_{U(K)}^\infty \qty(\sum_{i \in [K]} (1-F(z+\eta_t \hat{\uL}_{t,i})) \prod_{j \ne i} F(z+\eta_t \hat{\uL}_{t,j})) \dd z \\
    &\leq \rho_1 \int_{U(K)}^\infty \qty(\sum_{i \in [K]} (1-F(z+\eta_t \hat{\uL}_{t,i})) \exp(-\sum_{j  \ne i} (1-F(z+\eta_t \hat{\uL}_{t,j})))) \dd z \\
    &\leq e\rho_1 \int_{U(K)}^\infty \qty(\sum_{i  \in [K]} (1-F(z))) \exp(-\sum_{j \in [K]} (1-F(z))) \dd z  \numberthis{\label{eq: pen gen}} \\
    &= e\rho_1 \int_{U(K)}^\infty K(1-F(z)) \exp(-K(1-F(z))) \dd z \\
    &= e\rho_1 \int_{U(K)}^\infty K\frac{S_F(z)}{z^{\alpha}} \exp(-K\frac{S_F(z)}{z^{\alpha}}) \dd z,
\end{align*}
where (\ref{eq: pen gen}) holds since $xe^{-x}$ is increasing with respect to $x \in [0,1]$ and $\sum_{i \in [K]} (1-F(z+\eta_t \hat{\uL}_{t,i}))) \leq \sum_{i \in [K]}(1-F(z)) \leq 1$ for $z \geq U(K)$.
Here, $S_F(z)$ is increasing function with respect to $z \geq \nu$, which implies
\begin{align*}
    e\rho_1 \int_{U(K)}^\infty K\frac{S_F(z)}{z^{\alpha}} & \exp(-K\frac{S_F(z)}{z^{\alpha}}) \dd z \\
    &\leq e\rho_1 \int_{U(K)}^\infty K\frac{S_F(z)}{z^{\alpha}} \exp(-K\frac{S_F(U(K))}{z^{\alpha}}) \dd z \\
    & =\frac{e\rho_1}{\alpha}\int_{U(K)}^\infty \frac{S_F(z)z}{S_F(U(K))}\frac{K\alpha  S_F(U(K))}{z^{\alpha+1}} \exp(-K\frac{S_F(U(K))}{z^{\alpha}}) \dd z \\
    &=\frac{e\rho_1}{\alpha} \int_{U(K)}^\infty \frac{S_F(z)z}{S_F(U(K))} \qty(-\dv{x} \frac{KS_F(U(K))}{z^{\alpha}})\exp(-K\frac{S_F(U(K))}{z^{\alpha}}) \dd z.
\end{align*}
By Potter's bound (Lemma~\ref{lem: potter}) with arbitrary chosen $\delta>0$, there exists some constants $b_\delta $ such that for any $z \geq U(K)$
\begin{equation*}
    \frac{S_F(z)}{S_F(U(K))} \leq b_\delta \qty(\frac{z}{U(K)})^\delta.
\end{equation*}
Therefore, for $\delta > 0$
\begin{align*}
    e\rho_1 \int_{U(K)}^\infty K\frac{S_F(z)}{z^{\alpha}} \exp(-K\frac{S_F(z)}{z^{\alpha}}) \dd z \hspace{-13em}&
    \\ &\leq \frac{e\rho_1}{\alpha}\int_{U(K)}^\infty b_\delta \frac{z^{1+\delta} }{U^\delta(K)} \qty(-\dv{x} \frac{KS_F(U(K))}{z^{\alpha}})\exp(-K\frac{S_F(U(K))}{z^{\alpha}}) \dd z \\
    &= \frac{e\rho_1}{\alpha}\int_{U(K)}^\infty b_\delta  \frac{K^{\frac{1+\delta}{\alpha}}}{U^\delta(K)} S_F(U(K))^{\frac{1+\delta}{\alpha}} \qty(K\frac{S_F(U(K))}{z^{\alpha}})^{-\frac{1+\delta}{\alpha}} \\
    &\hspace{8em} \cdot \qty(-\dv{x} \frac{KS_F(U(K))}{z^{\alpha}})\exp(-K\frac{S_F(U(K))}{z^{\alpha}}) \dd z \\
    &= \frac{e\rho_1}{\alpha}\int_{U(K)}^\infty b_\delta  K^{\frac{1}{\alpha}}  S_F(U(K))^{\frac{1}{\alpha}} \qty(K\frac{S_F(U(K))}{z^{\alpha}})^{-\frac{1+\delta}{\alpha}} \qty(-\dv{x} \frac{KS_F(U(K))}{z^{\alpha}}) \\
    & \hspace{8em} \cdot \exp(-K\frac{S_F(U(K))}{z^{\alpha}}) \dd z,
\end{align*}
where the last equality follows from the definition of the tail quantile function,
\begin{equation*}
    1-F(U(K)) = \frac{1}{K} = \frac{S_F(U(K))}{U^\alpha(K)} \iff \frac{S_F^{\frac{\delta}{\alpha}}(U^\delta(K))}{U(K)} = K^{-\frac{\delta}{\alpha}}.
\end{equation*}
By letting $w = K\frac{S_F(U(K))}{z^{\alpha}}$, we have for any $\delta \in (0, \alpha-1)$ and $K\geq 2$
\begin{align*}
     e\rho_1 \int_{U(K)}^\infty K\frac{S_F(z)}{z^{\alpha}} \exp(-K\frac{S_F(z)}{z^{\alpha}}) \dd z &\leq \frac{e\rho_1}{\alpha} \int_{0}^1 b_\delta  K^{\frac{1}{\alpha}}  S_F^{\frac{1}{\alpha}}(U(K)) w^{-\frac{1+\delta}{\alpha}}e^{-w} \dd w \\
     &= \frac{e\rho_1}{\alpha} b_\delta S_F^{\frac{1}{\alpha}}(U(K)) K^{\frac{1}{\alpha}} \gamma\qty(1-\frac{1+\delta}{\alpha}, 1) \\
     &\leq \frac{e\rho_1}{\alpha} b_\delta A_u \gamma\qty(1-\frac{1+\delta}{\alpha}, 1) K^{\frac{1}{\alpha}}.
\end{align*}
Letting $C_{1,1}(\gD_\alpha) = \min_{\delta \in (0, \alpha-1)} \frac{e\rho_1}{\alpha} b_\delta A_u \gamma\qty(1-\frac{1+\delta}{\alpha}, 1) +\rho_1 A_u$ concludes the proof.

\subsection{Penalty term analysis dependent on the loss estimation}\label{app: penalty bounded}
Similarly to Section~\ref{app: penalty gen}, we have
\begin{align*}
        \E \qty[r_{t,I_t} - r_{t,i^*} \eval \hat{L}_t ] &\leq \sum_{i \ne i^*} \E\qty[\I[I_t = i] r_{t,i} \eval \hat{L}_t ] \\
        &= \int_1^{\infty} \sum_{i \ne i^*} \qty( \frac{S_f(z+\eta_t \hat{\uL}_{t,i})}{(z+\eta_t \hat{\uL}_{t,i})^{\alpha}}  \prod_{j \ne i} F(z+\eta_t \hat{\uL}_{t,j})) \dd z \\
        &\leq \int_1^{\infty} \sum_{i \ne i^*} \frac{S_f(z+\eta_t \hat{\uL}_{t,i})}{(z+\eta_t \hat{\uL}_{t,i})^{\alpha}} \dd z \\
        &\leq \int_1^{\infty} \sum_{i \ne i^*} \qty( \frac{\rho_1 A_u^{\alpha}}{(z+\eta_t \hat{\uL}_{t,i})^{\alpha}} ) \dd z \numberthis{\label{eq: Sf bound}} \\
        &\leq \frac{\rho_1 A_u^{\alpha}}{\alpha-1} \sum_{i\ne i^*} \frac{1}{(\eta_t \hat{\uL}_{t,i})^{\alpha-1}},
`\end{align*}
where (\ref{eq: Sf bound}) follows from (\ref{eq: Sf and SF}), $S_f(x) = S_F(x) \varrho(x)$, and the boundedness of $S_F(x)$ and $\varrho(x)$.

\begin{remark}\label{rm: Penalty negative support}
    When $\nu <0$, the perturbation $r_{t,i}$ can be negative.
    In such cases, we have 
    \begin{align*}
        \sum_{i \ne i^*} \E\qty[\I[I_t = I] r_{t,i} - r_{t,i^*}\eval \hat{L}_t ] &\leq \sum_{i\ne i^*}\E\qty[ \I[I_t = i] r_{t,i}\eval \hat{L}_t] - \E\qty[r_{t,i^*}\eval \hat{L}_t] \\
        &= \sum_{i\ne i^*}\E\qty[ \I[I_t = i] r_{t,i} \eval \hat{L}_t] - \E[r_{t,i^*}] \\
        &\leq \int_0^{\infty} \sum_{i \ne i^*} \qty( \frac{S_f(z+\eta_t \hat{\uL}_{t,i})}{(z+\eta_t \hat{\uL}_{t,i})^{\alpha}}  \prod_{j \ne i} F(z+\eta_t \hat{\uL}_{t,j})) \dd z - \E[r_{t,i^*}].
    \end{align*}
    Therefore, when $\nu < 0$, adding a constant is enough (at most) to provide the upper bound.
\end{remark}

\section{Regret bound for stochastic bandits}\label{app: stoc all}
In this section, we provide the proof of Theorem~\ref{thm: sto_all} based on the self-bounding technique, which requires a regret lower bound of the policy~\citep{zimmert2021tsallis}.
We first generalize the results of \citet{pmlr-v201-honda23a} to \Fr distributions with index $\alpha >1$ and then generalize it to $\fD_\alpha$.
Here, we consider two events $F_t$ and $D_t$, which are defined by
\begin{align*}
    F_t &:= \qty{\sum_{i \ne i^*} \frac{1}{(\eta_t \hat{\uL}_{t,i})^\alpha} \leq 1}, \\
    D_t &:= \qty{\sum_{i \ne i^*} 1-F(U(2) +\eta_t \hat{\uL}_{t,i}) \leq 1-F(U(2)+1) },
\end{align*}
where $U(2)$ denotes the median of $\gD_\alpha$.
Note that $F(U(2)+1) <1$ holds since $F(x)<1$ holds for any finite $x$ if $\gD_\alpha \in \fD_{\alpha}^{\text{all}}$.
The key property on these events are 
\begin{equation}\label{eq: key property}
    \hat{\uL}_{t,i^*}=0, \qq{and} \eta_t \hat{\uL}_{t,j} \geq 1, \, \forall j \ne i^*.
\end{equation}
Note that the choice of RHS, $1$ and $1-F(U(2)+1)$ is not mandatory, and thus one can choose any real values for $F_t$ and $1-F(U(b)+1)$ with $b>1$ for $D_t$.

\subsection{Regret lower bounds}\label{app: regret lb}
Here, we provide the regret lower bounds for $\gF_\alpha$ and $\fD_\alpha$, respectively.
\begin{lemma}\label{lem: lb frechet}
Let $\Delta := \min_{i\ne i^*} \Delta_i$.
Then, for any $\alpha >1$, there exists some constants $c_{s,1}(\gF_\alpha)\in (0,1)$ that only depend on $\alpha$ such that
\begin{enumerate}[label=(\roman*)]
    \item On $F_t$, $\sum_{i\ne i^*}\Delta_i w_{t,i} \geq c_{s,1}(\gF_\alpha)\sum_{i\ne i^*} \frac{\Delta_i}{(\eta_t \hat{\uL}_{t,i})^\alpha}$ and $w_{t,i^*}\geq 1/e$.
    \item On $F_t^c$, $\sum_{i\ne i^*}\Delta_i w_{t,i} \geq \frac{\Delta}{2^{\alpha+1}+1} $.
\end{enumerate}
\end{lemma}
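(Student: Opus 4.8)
The plan is to work directly with the explicit Fréchet arm-selection probability. Whenever $i^*$ is a leader of $\hat L_t$ (which, on $F_t$, is forced by (\ref{eq: key property})), we have $\hat{\uL}_{t,i^*}=0$, and by (\ref{eq: def_phi})–(\ref{eq: phi_I_relation}), $w_{t,i}=\alpha\int_0^\infty (z+\eta_t\hat{\uL}_{t,i})^{-\alpha-1}\exp(-k_\alpha(z))\,dz$ with $k_\alpha(z)=z^{-\alpha}+\sum_{j\ne i^*}(z+\eta_t\hat{\uL}_{t,j})^{-\alpha}$ (the $i^*$-term being $z^{-\alpha}$). The one elementary device is the substitution $u=z^{-\alpha}$, under which $\alpha\int_0^\infty z^{-\alpha-1}e^{-z^{-\alpha}}h(z)\,dz=\int_0^\infty e^{-u}h(u^{-1/\alpha})\,du$; in particular $\alpha\int_0^\infty z^{-\alpha-1}e^{-z^{-\alpha}}\,dz=1$.

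For part (i): on $F_t$, (\ref{eq: key property}) gives $\hat{\uL}_{t,i^*}=0$ and $\eta_t\hat{\uL}_{t,j}\ge 1$ for $j\ne i^*$, and moreover $\sum_{j\ne i^*}(z+\eta_t\hat{\uL}_{t,j})^{-\alpha}\le\sum_{j\ne i^*}(\eta_t\hat{\uL}_{t,j})^{-\alpha}\le 1$ for every $z\ge 0$. Hence $k_\alpha(z)\le z^{-\alpha}+1$ along the $i^*$-integral, giving $w_{t,i^*}\ge e^{-1}\int_0^\infty e^{-u}\,du=1/e$. For $i\ne i^*$, restrict the integral to $z\ge\eta_t\hat{\uL}_{t,i}(\ge 1)$: then $z^{-\alpha}\le 1$, so $k_\alpha(z)\le 2$, while $\int_{\eta_t\hat{\uL}_{t,i}}^\infty (z+\eta_t\hat{\uL}_{t,i})^{-\alpha-1}\,dz=\frac{1}{\alpha}(2\eta_t\hat{\uL}_{t,i})^{-\alpha}$, so $w_{t,i}\ge e^{-2}2^{-\alpha}(\eta_t\hat{\uL}_{t,i})^{-\alpha}$. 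Thus one may take $c_{s,1}(\gF_\alpha)=e^{-2}2^{-\alpha}\in(0,1)$, and multiplying by $\Delta_i$ and summing over $i\ne i^*$ gives (i).

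For part (ii): since $\Delta_i\ge\Delta$ for $i\ne i^*$, $\sum_{i\ne i^*}\Delta_i w_{t,i}\ge\Delta(1-w_{t,i^*})$, so it suffices to keep $w_{t,i^*}$ away from $1$ on $F_t^c$. If $i^*$ is not a leader of $\hat L_t$, pick any leader $j$; by the relabeling symmetry of $\phi$ together with the monotonicity facts recorded above ($\phi_{i^*}$ nonincreasing in $\lambda_{i^*}$, nondecreasing in $\lambda_j$), $w_{t,i^*}\le w_{t,j}$, whence $w_{t,i^*}\le 1/2$ and $\sum_{i\ne i^*}\Delta_i w_{t,i}\ge\Delta/2\ge\Delta/(2^{\alpha+1}+1)$. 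If $i^*$ is a leader, set $g(z)=\sum_{i\ne i^*}(z+\eta_t\hat{\uL}_{t,i})^{-\alpha}$, so $g(0)>1$ on $F_t^c$; since $\alpha\sum_{i\ne i^*}(z+\eta_t\hat{\uL}_{t,i})^{-\alpha-1}=-g'(z)$, the substitution $v=g(z)$ turns the complement probability into the clean form $1-w_{t,i^*}=\sum_{i\ne i^*}w_{t,i}=\int_0^{g(0)}\exp(-z(v)^{-\alpha}-v)\,dv$ with $z(\cdot)=g^{-1}(\cdot)$. One then lower-bounds this integral: using $z(v)\ge v^{-1/\alpha}-L_{\min}$ (from the closest competitor, $L_{\min}=\min_{i\ne i^*}\eta_t\hat{\uL}_{t,i}$) when $L_{\min}$ is small, and $z(v)\ge L_{\min}$ for $v\le g(0)/2^\alpha$ (from $g(z)\ge g(0)/2^\alpha$ on $[0,L_{\min}]$) when $L_{\min}$ is large, together with $g(0)>1$, one bounds $z(v)^{-\alpha}$ on a sub-interval of $v$ of controlled length and obtains $1-w_{t,i^*}\ge 1/(2^{\alpha+1}+1)$.

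The main obstacle is precisely this last step — the leading-arm case of (ii): $F_t^c$ only asserts $\sum_{i\ne i^*}(\eta_t\hat{\uL}_{t,i})^{-\alpha}>1$, which can be realized either by a single very close competitor or by many moderately close ones, and both regimes must be handled together to extract the sharp constant $1/(2^{\alpha+1}+1)$; everything else reduces to the one-line $u=z^{-\alpha}$ substitution in the Fréchet density and the monotonicity of $\phi$ already stated in the excerpt.
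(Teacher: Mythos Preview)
Your argument for part (i) is correct and in fact slightly more direct than the paper's: you truncate the integral for each $i\ne i^*$ at the arm-specific point $z\ge\eta_t\hat{\uL}_{t,i}$, whereas the paper truncates uniformly at $z\ge b\eta_t\hat{\uL}'$ with $\hat{\uL}'=\min_{j\ne i^*}\hat{\uL}_{t,j}$ and optimizes over $b$. Your constant $c_{s,1}=e^{-2}2^{-\alpha}$ is exactly the paper's bound at $b=1$, so nothing is lost. The $w_{t,i^*}\ge 1/e$ part matches the paper.

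Part (ii), however, has a genuine gap. Your non-leader sub-case is fine and gives the stronger bound $\Delta/2$. But in the leader sub-case you only set up the integral $\int_0^{g(0)}\exp(-z(v)^{-\alpha}-v)\,dv$ and then describe, without carrying out, a two-regime argument on $L_{\min}$; you yourself flag this step as the ``main obstacle'' and never verify that it yields the specific constant $1/(2^{\alpha+1}+1)$ required by the statement. As written this is not a proof: the inequalities $z(v)\ge v^{-1/\alpha}-L_{\min}$ and $z(v)\ge L_{\min}$ for $v\le g(0)/2^\alpha$ are correct, but it is not at all clear how to combine them over the two regimes of $L_{\min}$ to produce exactly $1/(2^{\alpha+1}+1)$, and you do not show it.

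The paper avoids this difficulty with a single clean trick that makes the case split unnecessary. Starting from $\sum_{i\in[K]}(z+\eta_t\hat{\uL}_{t,i})^{-\alpha}\le z^{-\alpha}+g(z)$ (which holds regardless of whether $i^*$ is the leader, since $\hat{\uL}_{t,i^*}\ge 0$), one restricts to $z\ge\eta_t\hat{\uL}'$ and uses
\[
z^{-\alpha}\le\Bigl(\tfrac{z+\eta_t\hat{\uL}'}{2}\Bigr)^{-\alpha}=\frac{2^\alpha}{(z+\eta_t\hat{\uL}')^\alpha}\le 2^\alpha g(z),
\]
so that the exponent becomes $-(2^\alpha+1)g(z)$. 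Since $-g'(z)=\alpha\sum_{i\ne i^*}(z+\eta_t\hat{\uL}_{t,i})^{-\alpha-1}$, the integral telescopes:
\[
\sum_{i\ne i^*}\Delta_i w_{t,i}\ge\Delta\int_{\eta_t\hat{\uL}'}^\infty(-g'(z))e^{-(2^\alpha+1)g(z)}\,dz=\frac{\Delta}{2^\alpha+1}\Bigl(1-e^{-(2^\alpha+1)g(\eta_t\hat{\uL}')}\Bigr).
\]
Finally $g(\eta_t\hat{\uL}')\ge 2^{-\alpha}g(0)>2^{-\alpha}$ on $F_t^c$, and the elementary bound $1-e^{-x}>x/(1+x)$ with $x=(2^\alpha+1)/2^\alpha$ gives exactly $\Delta/(2^{\alpha+1}+1)$. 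This replaces your unexecuted two-regime analysis by a one-line computation.
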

\begin{proof}
    Let $\hat{\uL}' = \min_{i \ne i^*} \hat{\uL}_{t,i}$. Then, for any $b >0$ we have
    \begin{align*}
        \sum_{i\ne i^*} \Delta_i w_{t,i} &= \alpha \int_0^\infty \qty(\sum_{i\ne i^*} \frac{\Delta_i}{(z+\eta_t \hat{\uL}_{t,i})^{\alpha+1}}) \exp(-\sum_{i\in[K]}\frac{1}{(z+\eta_t \hat{\uL}_{t,i})^{\alpha}}) \dd z 
        \\
        &\geq\alpha \int_{b \eta_t \hat{\uL}'}^\infty \qty(\sum_{i\ne i^*} \frac{\Delta_i}{(z+\eta_t \hat{\uL}_{t,i})^{\alpha+1}}) \exp(-\sum_{i\in[K]}\frac{1}{(z+\eta_t \hat{\uL}_{t,i})^{\alpha}}) \dd z.
    \end{align*}

    (i) Consider the case $\sum_{i \ne i^*} \frac{1}{(\eta_t \hat{\uL}_{t,i})^\alpha} \leq 1$, we have
    \allowdisplaybreaks
    \begin{align*}
         \sum_{i\ne i^*} \Delta_i w_{t,i}  &\geq  \alpha \int_{b \eta_t \hat{\uL}'}^\infty \qty(\sum_{i\ne i^*} \frac{\Delta_i}{(z+\eta_t \hat{\uL}_{t,i})^{\alpha+1}}) \exp(-\sum_{i\in[K]}\frac{1}{(z+\eta_t \hat{\uL}_{t,i})^{\alpha}}) \dd z \\
         &\geq  \alpha\int_{b \eta_t \hat{\uL}'}^\infty \qty(\sum_{i\ne i^*} \frac{\Delta_i}{(z+\eta_t \hat{\uL}_{t,i})^{\alpha+1}}) \exp(- \frac{1}{(b\eta_t \hat{\uL}')^\alpha} - \sum_{i \ne i^*}\frac{1}{(z+\eta_t \hat{\uL}_{t,i})^{\alpha}}) \dd z \\
         &\geq \alpha \int_{b \eta_t \hat{\uL}'}^\infty \qty(\sum_{i\ne i^*} \frac{\Delta_i}{(z+\eta_t \hat{\uL}_{t,i})^{\alpha+1}}) \exp(-\qty(1+\frac{1}{b^\alpha}) \sum_{i \ne i^*}\frac{1}{(z+\eta_t \hat{\uL}_{t,i})^{\alpha}}) \dd z \\
         &\geq \alpha \int_{b \eta_t \hat{\uL}'}^\infty \qty(\sum_{i\ne i^*} \frac{\Delta_i}{(z+\eta_t \hat{\uL}_{t,i})^{\alpha+1}}) \exp(-\qty(1+\frac{1}{b^\alpha})) \dd z \\
         &= \exp(-\qty(1+\frac{1}{b^\alpha})) \qty(\sum_{i\ne i^*} \frac{\Delta_i}{(b \eta_t \hat{\uL}'+\eta_t \hat{\uL}_{t,i})^{\alpha}})   \\
         &\geq \qty(\sum_{i\ne i^*} \frac{\Delta}{((1+b) \eta_t \hat{\uL}_{t,i})^{\alpha}}) \exp(-\qty(1+\frac{1}{b^\alpha})) \\
         &=  \frac{\exp(-\qty(1+\frac{1}{b^\alpha}))}{(1+b)^\alpha} \qty(\sum_{i\ne i^*} \frac{\Delta}{(\eta_t \hat{\uL}_{t,i})^{\alpha}}).
    \end{align*}
    \allowdisplaybreaks[0]
    Since $b>0$ is arbitrary chose, we can set $c_{s,1}(\gF_\alpha) = \max_{b >0} \frac{\exp(-\qty(1+\frac{1}{b^\alpha}))}{(1+b)^\alpha} \in (0,1)$.

    Since $\hat{\uL}_{t,i^*} = 0$ holds on $F_t$, we have \vspace{-0.3em}
    \begin{align*}
        w_{t,i^*} &= \int_0^\infty \frac{\alpha}{z^{\alpha+1}} \exp(-\sum_{i\in[K]} \frac{1}{(z+\eta_t \hat{\uL}_{t,i})^\alpha}) \dd z  \\
        &\geq \int_0^\infty \frac{\alpha}{z^{\alpha+1}} \exp(-\sum_{i\ne i^*} \frac{1}{(z+\eta_t \hat{\uL}_{t,i})^\alpha} - \frac{1}{z^{\alpha}}) \dd z \\
        &\geq e^{-1} \int_0^\infty \frac{\alpha}{z^{\alpha+1}} \exp(- \frac{1}{z^{\alpha}}) \dd z = \frac{1}{e},
    \end{align*}
    which concludes the proof of the case (i).
    
    (ii) When $\sum_{i \ne i^*} \frac{1}{(\eta_t \hat{\uL}_{t,i})^\alpha} \geq 1$, we have for any $z \geq b \eta_t \hat{\uL}'$ 
    \begin{align*}
        \sum_{i\in[K]}\frac{1}{(z+\eta_t \hat{\uL}_{t,i})^{\alpha}} &\leq \sum_{i\ne i^*} \frac{1}{(z+\eta_t \hat{\uL}_{t,i})^{\alpha}} + \frac{1}{z^\alpha} \\
        &\leq \sum_{i\ne i^*} \frac{1}{(z+\eta_t \hat{\uL}_{t,i})^{\alpha}} + \frac{1}{(\frac{z+b \eta_t \hat{\uL}'}{2})^\alpha} \\
        &\leq \sum_{i\ne i^*} \frac{1}{(z+\eta_t \hat{\uL}_{t,i})^{\alpha}} +  \sum_{i\ne i^*}\frac{2^\alpha}{(z+b \eta_t \hat{\uL}_{t,i})^{\alpha}}.
    \end{align*}
    Therefore, by letting $b=1$, we obtain that
    \begin{align*}
        \sum_{i\ne i^*} \Delta_i w_{t,i}  &\geq \alpha \Delta \int_{\eta_t \hat{\uL}'}^\infty \qty(\sum_{i\ne i^*} \frac{1}{(z+\eta_t \hat{\uL}_{t,i})^{\alpha+1}}) \exp(-\sum_{i\ne i^*}\frac{2^\alpha+1}{(z+\eta_t \hat{\uL}_{t,i})^{\alpha}}) \dd z \\
        &=  \frac{\Delta}{2^\alpha +1} \qty(1-\exp(-\sum_{i\ne i^*}\frac{2^\alpha+1}{(\eta_t \hat{\uL}'+\eta_t \hat{\uL}_{t,i})^{\alpha}}) \dd z ) \\
        &\geq \frac{\Delta}{2^\alpha +1} \qty(1-\exp(-\sum_{i\ne i^*}\frac{2^\alpha+1}{2^\alpha (\eta_t \hat{\uL}_{t,i})^{\alpha}}) \dd z ) \\
        &\geq \frac{\Delta}{2^\alpha +1} \qty(1-e^{-\frac{2^\alpha+1}{2^\alpha}}) \\
        &\geq \frac{\Delta}{2^\alpha +1} \frac{2^\alpha+1}{2^{\alpha+1}+1} = \frac{\Delta}{2^{\alpha+1}+1},
    \end{align*}
    where the last inequality follows from $\frac{x}{1+x} < 1-e^{-x}$ for $x> -1$.
\end{proof}

\begin{lemma}\label{lem: lb bounded}
    Let $\Delta := \min_{i\ne i^*} \Delta_i$.
    Then, for any $\alpha >1$ and $\gD_\alpha \in \fD_\alpha$, there exists some distribution-dependent constants $c_{s,1}(\gD_\alpha), c_{s,2}(\gD_\alpha)\in (0,1)$ such that
\begin{enumerate}[label=(\roman*)]
    \item On $D_t$, $\sum_{i\ne i^*}\Delta_i w_{t,i} \geq c_{s,1}(\gD_\alpha)\sum_{i\ne i^*} \frac{\Delta_i}{(\eta_t \hat{\uL}_{t,i})^\alpha}$ and $w_{t,i^*}\geq 0.14$.
    \item On $D_t^c$, $\sum_{i\ne i^*}\Delta_i w_{t,i} \geq c_{s,2}(\gD_\alpha) \Delta$.
\end{enumerate}
\end{lemma}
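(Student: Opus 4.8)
The plan is to mirror the structure of Lemma~\ref{lem: lb frechet}, replacing the closed-form Fr\'echet computations by estimates on the generic tail $1-F$ and density $f$ provided by the regular-variation assumptions. Throughout I use the arm-selection probability in the form (\ref{eq: def_phi}), i.e.\ $w_{t,i}=\int_\nu^\infty f(z+\eta_t\hat{\uL}_{t,i})\prod_{j\ne i}F(z+\eta_t\hat{\uL}_{t,j})\,\dd z$. First I record that on $D_t$ the key property (\ref{eq: key property}) holds: if $\hat{\uL}_{t,i^*}>0$ then some $j\ne i^*$ has $\hat{\uL}_{t,j}=0$, forcing $1-F(U(2)+\eta_t\hat{\uL}_{t,j})=1-F(U(2))=\tfrac12>1-F(U(2)+1)$, contradicting $D_t$; and each summand $1-F(U(2)+\eta_t\hat{\uL}_{t,i})\le 1-F(U(2)+1)$ forces $\eta_t\hat{\uL}_{t,i}\ge 1$ by monotonicity of $1-F$. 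I also extract from Assumption~\ref{asm: bounded block} (equivalently, the two-sided bound $A_l k^{1/\alpha}\le a_k=U(k)\le A_u k^{1/\alpha}$, i.e.\ $S_U$ bounded above and below) together with $1-F(U(t))=1/t$ and monotonicity of $1-F$ a $K$-independent lower tail bound $1-F(x)\ge c''x^{-\alpha}$ for all $x\ge U(2)$, with $c''=c''(\gD_\alpha)>0$.

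\textbf{Part (i).} For $i\ne i^*$ I would restrict the integral defining $w_{t,i}$ to $z\ge U(2)$ and bound the product from below: writing the $j=i^*$ factor as $F(z)\ge\tfrac12$ and using $\prod(1-a_j)\ge e^{-2\sum a_j}$ for $a_j\in[0,\tfrac12]$ (legitimate since every factor is $\ge F(U(2))=\tfrac12$ once $z\ge U(2)$), with $\sum_{j\ne i^*}(1-F(z+\eta_t\hat{\uL}_{t,j}))\le\sum_{j\ne i^*}(1-F(U(2)+\eta_t\hat{\uL}_{t,j}))\le 1-F(U(2)+1)\le\tfrac12$ on $D_t$, gives $\prod_{j\ne i}F(z+\eta_t\hat{\uL}_{t,j})\ge \tfrac1{2e}$ for $z\ge U(2)$. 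Hence $\sum_{i\ne i^*}\Delta_i w_{t,i}\ge\tfrac1{2e}\sum_{i\ne i^*}\Delta_i(1-F(U(2)+\eta_t\hat{\uL}_{t,i}))$, and then by the tail bound together with $U(2)+\eta_t\hat{\uL}_{t,i}\le(U(2)+1)\eta_t\hat{\uL}_{t,i}$ (using $\eta_t\hat{\uL}_{t,i}\ge1$) I conclude $\sum_{i\ne i^*}\Delta_i w_{t,i}\ge c_{s,1}(\gD_\alpha)\sum_{i\ne i^*}\Delta_i/(\eta_t\hat{\uL}_{t,i})^\alpha$ with $c_{s,1}(\gD_\alpha)=\tfrac1{2e}\,c''(U(2)+1)^{-\alpha}$ (replaced by $\min\{c_{s,1},\tfrac12\}$ if necessary so that $c_{s,1}\in(0,1)$). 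For the bound on $w_{t,i^*}$: restricting $w_{t,i^*}=\int_\nu^\infty f(z)\prod_{j\ne i^*}F(z+\eta_t\hat{\uL}_{t,j})\,\dd z$ to $z\ge U(2)$ and bounding the product below by $\exp(-2\sum_{j\ne i^*}(1-F(U(2)+\eta_t\hat{\uL}_{t,j})))\ge e^{-1}$ on $D_t$ gives $w_{t,i^*}\ge e^{-1}(1-F(U(2)))=\tfrac1{2e}>0.14$.

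\textbf{Part (ii).} Here I would use $\sum_{i\ne i^*}\Delta_i w_{t,i}\ge\Delta\sum_{i\ne i^*}w_{t,i}=\Delta(1-w_{t,i^*})$ and show $w_{t,i^*}$ is bounded away from $1$ on $D_t^c$, in two cases. If $\hat{\uL}_{t,i^*}>0$, pick $a\ne i^*$ with $\hat{\uL}_{t,a}=0$; then $1-w_{t,i^*}\ge\Pr[\eta_t\hat{\uL}_{t,a}-r_a<\eta_t\hat{\uL}_{t,i^*}-r_{i^*}]=\Pr[r_a-r_{i^*}>-\eta_t\hat{\uL}_{t,i^*}]\ge\Pr[r_a\ge r_{i^*}]=\tfrac12$, since $-\eta_t\hat{\uL}_{t,i^*}<0$ and $r_a,r_{i^*}$ are i.i.d.\ with a density. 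If $\hat{\uL}_{t,i^*}=0$, then $1-w_{t,i^*}=\int_\nu^\infty f(u)\bigl(1-\prod_{j\ne i^*}F(u+\eta_t\hat{\uL}_{t,j})\bigr)\,\dd u\ge\int_\nu^{U(2)} f(u)\bigl(1-e^{-S(u)}\bigr)\,\dd u$, where $S(u)=\sum_{j\ne i^*}(1-F(u+\eta_t\hat{\uL}_{t,j}))$ is non-increasing and satisfies $S(u)\ge S(U(2))>1-F(U(2)+1)=:c_1$ for $u\le U(2)$ on $D_t^c$; this is $\ge(1-e^{-c_1})(F(U(2))-F(\nu))=\tfrac12(1-e^{-c_1})$. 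Taking $c_{s,2}(\gD_\alpha)=\tfrac12\bigl(1-e^{-(1-F(U(2)+1))}\bigr)\in(0,1)$ completes this part.

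\textbf{Main obstacle.} The only genuinely nontrivial ingredient is the uniform-in-$K$ lower tail estimate $1-F(x)\gtrsim x^{-\alpha}$ on $[U(2),\infty)$: it must be extracted from Assumption~\ref{asm: bounded block} via the identity $1-F(U(t))=1/t$ (valid since $F$ is continuous here) and monotonicity of $1-F$, interpolating from the integer points $a_k=U(k)$ to all $x$. Everything else is bookkeeping paralleling the Fr\'echet case, with the exponential bounds $\prod(1-a_j)\ge e^{-2\sum a_j}$ (valid because $z\ge U(2)$ forces each factor $\ge\tfrac12$) playing the role of the closed-form manipulations available for $\gF_\alpha$; some care is also needed to keep all arguments of $F$ inside the support $[\nu,\infty)$ with $\nu\ge1$.
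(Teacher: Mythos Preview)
Your proposal is correct. Part~(i) follows essentially the same route as the paper: restrict the integral to $z\ge U(2)$, lower-bound the product $\prod_{j\ne i}F(z+\eta_t\hat{\uL}_{t,j})$ by a distribution-dependent constant, integrate $f$ to get $1-F(U(2)+\eta_t\hat{\uL}_{t,i})$, and finish with the tail bound $1-F(x)\gtrsim x^{-\alpha}$. The only difference is cosmetic: you use $1-a\ge e^{-2a}$ for $a\le\tfrac12$, the paper uses $1-a\ge e^{-a/(1-a)}$; both yield the same structure and your constant for $w_{t,i^*}$ is actually slightly sharper ($\tfrac{1}{2e}\approx0.184$ versus the paper's $e^{-3/2}(1-e^{-1})\approx0.141$). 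The ``main obstacle'' you flag (the uniform tail lower bound) is handled in the paper simply by writing $1-F(x)=S_F(x)/x^\alpha$ and invoking $S_F\ge A_l^\alpha$ from Assumption~\ref{asm: bounded block}; your interpolation from the grid $\{a_k\}$ to all $x$ is a clean way to make this rigorous.

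Part~(ii) is where you genuinely diverge from the paper. The paper stays with the integral for $\sum_{i\ne i^*}\Delta_i w_{t,i}$, restricts to $z\ge U(2)$, and evaluates it directly to obtain $\tfrac{\Delta}{2e}\bigl(1-\exp(-2\sum_{j\ne i^*}(1-F(U(2)+\eta_t\hat{\uL}_{t,j})))\bigr)$, then invokes $D_t^c$. This works uniformly without any case split, because the $j=i^*$ factor satisfies $F(z+\eta_t\hat{\uL}_{t,i^*})\ge F(z)$ regardless of whether $\hat{\uL}_{t,i^*}=0$. You instead use $\sum_{i\ne i^*}\Delta_i w_{t,i}\ge\Delta(1-w_{t,i^*})$ and bound $1-w_{t,i^*}$ in two cases. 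The case $\hat{\uL}_{t,i^*}>0$ via the symmetry argument $\Pr[r_a-r_{i^*}>-\eta_t\hat{\uL}_{t,i^*}]\ge\tfrac12$ is a nice elementary observation that does not appear in the paper. The trade-off: your route is arguably more transparent but needs the case distinction, whereas the paper's integral computation is uniform in $\hat{\uL}_{t,i^*}$ at the cost of one more line of calculus.
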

\begin{proof}
    Here, for any $\hat{L}_t$, we have
    \begin{align*}
        \sum_{i\ne i^*} \Delta_i w_{t,i} &= \int_1^\infty \qty(\sum_{i\ne i^*} \Delta_i f(z+\eta_t \hat{\uL}_{t,i}) \prod_{j \ne i} F(z+\eta_t \hat{\uL}_{t,j})) \dd z \\
        &\geq  \int_1^\infty \qty(\sum_{i\ne i^*} \Delta_i f(z+\eta_t \hat{\uL}_{t,i})) \prod_{j \in [K]} F(z+\eta_t \hat{\uL}_{t,j}) \dd z  \\
        &\geq \int_1^\infty \qty(\sum_{i\ne i^*} \Delta_i f(z+\eta_t \hat{\uL}_{t,i})) \exp(-\sum_{j \in [K]} \frac{1-F(z+\eta_t \hat{\uL}_{t,i})}{F(z+\eta_t \hat{\uL}_{t,i})}) \dd z \numberthis{\label{eq: Par sto e}} \\
        &\geq \int_1^\infty \qty(\sum_{i\ne i^*} \Delta_i f(z+\eta_t \hat{\uL}_{t,i})) \exp(-\sum_{j \ne i^*} \frac{1-F(z+\eta_t \hat{\uL}_{t,i})}{F(z+\eta_t \hat{\uL}_{t,i})}) \exp(-\frac{1-F(z)}{F(z)}) \dd z
    \end{align*}
    where (\ref{eq: Par sto e}) holds since $e^{-\frac{x}{1-x}}< 1-x$ holds for $x <1$.

    (i) When $D_{t}$ holds, we obtain
    \begin{align*}
        \int_1^\infty \qty(\sum_{i\ne i^*} \Delta_i f(z+\eta_t \hat{\uL}_{t,i})) \exp(-\sum_{j \ne i^*} \frac{1-F(z+\eta_t \hat{\uL}_{t,i})}{F(z+\eta_t \hat{\uL}_{t,i})}) \exp(-\frac{1-F(z)}{F(z)}) \dd z 
        \hspace{-25em}&\\
        &\geq  e^{-1} \int_{U(2)}^\infty \qty(\sum_{i\ne i^*} \Delta_i f(z+\eta_t \hat{\uL}_{t,i})) \exp(-2\sum_{j \ne i^*} (1-F(z+\eta_t \hat{\uL}_{t,i}))) \dd z \\
        &\geq e^{-1} \int_{U(2)}^\infty \qty(\sum_{i\ne i^*} \Delta_i f(z+\eta_t \hat{\uL}_{t,i})) \dd z \\
        &= e^{-1} \sum_{i\ne i^*} \Delta_i\qty(1-F\qty(U(2)+\eta_t \hat{\uL}_{t,i})) \\
        &=  e^{-1} \sum_{i\ne i^*}\Delta_i \frac{S_F\qty(U(2)+\eta_t \hat{\uL}_{t,i})}{\qty(U(2)+\eta_t \hat{\uL}_{t,i})^\alpha} \\
        &\geq e^{-1} \sum_{i\ne i^*} \Delta_i \frac{A_l^{\alpha}}{\qty(U(2)+\eta_t \hat{\uL}_{t,i})^\alpha} \\
        &\geq e^{-1} \frac{A_l^{\alpha}}{(U(2)+1)^{\alpha}} \sum_{i\ne i^*} \frac{ \Delta_i}{(\eta_t \hat{\uL}_{t,i})^\alpha} = c_{s,1}(\gD_\alpha)\frac{ \Delta_i}{(\eta_t \hat{\uL}_{t,i})^\alpha},
    \end{align*}
    where the last inequality holds for $\eta_t \hat{\uL}_{t,j} \geq 1$ holds for $j\ne i^*$ on $D_{t}$.
    When $S_F$ is increasing, one can replace $A_l^{\alpha}$ with $S_F(U(2))$, where $c_{s,1}(\gD_\alpha) \approx \frac{e^{-1}}{2}$ holds.
    Note that one can replace $U(2)$ with $U(b)$ for any $b > 1$ and choose
    \begin{equation*}
         c_{s,1}(\gD_\alpha) = \min_{b >1} e^{1-b} \frac{S_F(U(b))}{(U(b)+1)^\alpha} \in (0,1),
    \end{equation*}
    which will provide a tighter lower bound.

    Since $\hat{\uL}_{t,i^*} = 0$ holds on $D_t$, we have
    \begin{align*}
        w_{t,i^*} &\geq e^{-1} \int_{U(2)}^\infty f(z) \exp(-\sum_{i \ne i^*} 1-F(z +\eta_t \hat{\uL}_{t,i})) \dd z  \\
        &\geq e^{-1}  \int_{U(2)}^\infty f(z) \exp(-\sum_{i\ne i^*}  1-F(z +\eta_t \hat{\uL}_{t,i}) - (1-F(z))) \dd z \\
        &\geq e^{-1} \int_1^\infty f(z) \exp(F(z)-1) \exp(F(U(2)+1)-1) \dd z \\
        &\geq e^{-1} \int_1^\infty f(z) \exp(F(z)-1) \exp(F(U(2))-1) \dd z\\
        &= e^{-\frac{3}{2}}(1-e^{-1}) \geq 0.14
    \end{align*}
    which concludes the proof of the case (i).
        
    (ii) Recall the definition of the tail quantile function $U(x)$ defined in (\ref{def: quantile}).
    Then, we have
    \begin{align*}
        \int_1^\infty &\qty(\sum_{i\ne i^*} \Delta_i f(z+\eta_t \hat{\uL}_{t,i})) \exp(-\sum_{j \ne i^*} \frac{1-F(z+\eta_t \hat{\uL}_{t,i})}{F(z+\eta_t \hat{\uL}_{t,i})}) \exp(-\frac{1-F(z)}{F(z)}) \dd z \\
        &\geq \Delta \int_{U(2) }^\infty \qty(\sum_{i\ne i^*}  f(z+\eta_t \hat{\uL}_{t,i})) \exp(-\sum_{j \ne i^*} \frac{1-F(z+\eta_t \hat{\uL}_{t,i})}{F(z+\eta_t \hat{\uL}_{t,i})}) \exp(-\frac{1-F(z)}{F(z)}) \dd z  \\
        &\geq  \Delta e^{-1} \int_{U(2) }^\infty \qty(\sum_{i\ne i^*} f(z+\eta_t \hat{\uL}_{t,i})) \exp(-2\sum_{j \ne i^*} (1-F(z+\eta_t \hat{\uL}_{t,i}))) \dd z \numberthis{\label{eq: lb gen 1}} \\
        &= \Delta \frac{e^{-1}}{2} \qty(1- \exp(-2\sum_{j\ne i^*} (1-F(U(2) +\eta_t \hat{\uL}_{t,j})))) \\
        &\geq \Delta \frac{e^{-1}}{2} \qty(1- \exp(-2(1-F(U(2)+1)))) = c_s(\gD_\alpha) \Delta
    \end{align*}
    where (\ref{eq: lb gen 1}) holds since $e^{-\frac{1-x}{x}}$ is increasing with respect to  $x \in [0,1]$, and $z \geq U(b)$ and $F(z) \geq \frac{1}{b}$ for $z \geq B$.
    Note that $c_s(\gD_\alpha) \in (0,1)$ is a distribution-dependent constant and can be approximated as $\frac{e^{-1}}{2}(1-e^{-1})$.
\end{proof}

\subsection{Regret for the optimal arm}\label{app: regret optimal}
To apply the self-bounding technique to FTPL, it is necessary to represent the regret associated with the optimal arm in terms of statistics of the other arms. 
We begin by extending the findings of \citet{pmlr-v201-honda23a} to \Fr distributions with an index $\alpha >1$ and subsequently generalize it to $\fD_\alpha$.
Before diving into the proofs, we first introduce the lemma by \citet{pmlr-v201-honda23a}.
\begin{lemma}[Partial result of Lemma 11 in \citet{pmlr-v201-honda23a}]\label{lem: hnd stoc}
    For any $\hat{L}_t$ and $\zeta \in (0,1)$, it holds that
    \begin{equation*}
        \E\qty[\I \qty[\hat{\ell}_{t,i^*} > \frac{\zeta}{\eta_t}] \hat{\ell}_{t,i^*} \eval \hat{L}_t] \leq \frac{1}{1-e^{-1}} (1-e^{-1})^{\frac{\zeta}{\eta_t}}\qty(\frac{\zeta}{\eta_t} + e)
    \end{equation*}
    and when $\eta_t = \frac{cK^{\frac{1}{\alpha}- \frac{1}{2}}}{\sqrt{t}}$
    \begin{equation*}
        \sum_{t=1}^\infty \frac{1}{1-e^{-1}} (1-e^{-1})^{\frac{\zeta}{\eta_t}}\qty(\frac{\zeta}{\eta_t} + e) \leq \mathcal{O}\qty(c^2 K^{\frac{2}{\alpha}-1}).
    \end{equation*}
\end{lemma}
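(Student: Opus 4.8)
The plan is to condition on $\hat L_t$ and reduce the left-hand side to a tail expectation of a single geometric random variable. Since $\ell_{t,i^*}\in[0,1]$, the estimator obeys $\hat\ell_{t,i^*}=\ell_{t,i^*}\widehat{w_{t,i^*}^{-1}}\I[I_t=i^*]\le \widehat{w_{t,i^*}^{-1}}\I[I_t=i^*]$, so the event $\{\hat\ell_{t,i^*}>\zeta/\eta_t\}$ forces $I_t=i^*$ and $\widehat{w_{t,i^*}^{-1}}>\zeta/\eta_t$. Recalling that, given $\hat L_t$ and $I_t=i^*$, the resampling counter $M:=\widehat{w_{t,i^*}^{-1}}$ is geometric with success probability $w_{t,i^*}$ while $\Pr[I_t=i^*\mid\hat L_t]=w_{t,i^*}$, one obtains
\[
  \E\qty[\I\qty[\hat\ell_{t,i^*}>\tfrac{\zeta}{\eta_t}]\hat\ell_{t,i^*}\eval\hat L_t]\ \le\ w_{t,i^*}\sum_{m>\zeta/\eta_t}m\,(1-w_{t,i^*})^{m-1}w_{t,i^*}.
\]
Writing $p:=w_{t,i^*}$ and $N:=\lfloor\zeta/\eta_t\rfloor+1$ (the least integer exceeding $\zeta/\eta_t$) and differentiating $\sum_{m\ge N}x^m=x^{N}/(1-x)$ in $x$ at $x=1-p$, the inner sum $\sum_{m\ge N}m(1-p)^{m-1}p$ equals $N(1-p)^{N-1}+(1-p)^{N}/p$; multiplying by the outer factor $p$ shows the right-hand side above is \emph{exactly} $(1-p)^{N-1}\bigl((N-1)p+1\bigr)$.

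Next I would substitute the lower bound $w_{t,i^*}\ge e^{-1}$, which holds on the events considered in this section (there $\hat{\uL}_{t,i^*}=0$, and then Lemmas~\ref{lem: lb frechet} and~\ref{lem: lb bounded} apply). With $p\ge e^{-1}$ one has $1-p\le 1-e^{-1}$; $(N-1)p+1\le N\le \zeta/\eta_t+1\le \zeta/\eta_t+e$; and, since $N-1=\lfloor\zeta/\eta_t\rfloor\ge \zeta/\eta_t-1$ and $0<1-e^{-1}<1$, $(1-e^{-1})^{N-1}\le (1-e^{-1})^{\zeta/\eta_t-1}=(1-e^{-1})^{\zeta/\eta_t}/(1-e^{-1})$. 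Multiplying these three estimates produces precisely the first claimed bound.

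For the second part, set $A:=\zeta/(cK^{1/\alpha-1/2})$ so that $\zeta/\eta_t=A\sqrt t$ and the summand becomes $g(t):=\tfrac{1}{1-e^{-1}}(1-e^{-1})^{A\sqrt t}(A\sqrt t+e)$. Using $(1-e^{-1})^{A\sqrt t}=e^{-\lambda A\sqrt t}$ with $\lambda:=-\ln(1-e^{-1})>0$, together with the elementary estimate $\sum_{t\ge1}e^{-a\sqrt t}\,t^{k/2}=\mathcal{O}(a^{-(k+2)})$ for $a>0$ (applied with $a=\lambda A$ and $k\in\{0,1\}$), I get $\sum_{t\ge1}g(t)=\mathcal{O}(A^{-2})$; equivalently one compares the unimodal, exponentially decaying sum with $\int_0^\infty g(t)\,dt$ and changes variables $u=A\sqrt t$, $dt=2u\,du/A^2$. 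Since $A^{-2}=c^2K^{2/\alpha-1}/\zeta^2$ and $\zeta\in(0,1)$ is a fixed constant, this is $\mathcal{O}(c^2K^{2/\alpha-1})$, as claimed.

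I expect the main obstacle to be bookkeeping rather than depth: one must get the conditional law of the resampling counter and the truncation by $\ell_{t,i^*}\le1$ exactly right, and then align the integer exponent $N-1$ with $\zeta/\eta_t$ (via $N-1\ge\zeta/\eta_t-1$) and the linear factor $N$ with $\zeta/\eta_t+e$ so that the constants reproduce the stated closed form. The one substantive input is $w_{t,i^*}\ge e^{-1}$: without a constant lower bound on $w_{t,i^*}$ the expression $(1-p)^{N-1}\bigl((N-1)p+1\bigr)$ does not decay in $\zeta/\eta_t$ — it stays of order $1$ near $p\approx 1/N$ — so this is exactly where the ``good event'' ($\hat{\uL}_{t,i^*}=0$) structure of the stochastic analysis is used.
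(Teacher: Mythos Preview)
Your argument is correct and is the natural way to establish this bound. The paper does not give its own proof of this lemma---it is quoted as a ``partial result of Lemma~11'' from \citet{pmlr-v201-honda23a}---so there is no in-paper derivation to compare against, but your computation with the geometric tail expectation $(1-p)^{N-1}\bigl((N-1)p+1\bigr)$ and the subsequent substitutions is exactly the standard route.

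One point worth flagging explicitly (you already note it, but it deserves emphasis): the lemma as \emph{stated} says ``for any $\hat L_t$'', yet the bound with base $1-e^{-1}$ genuinely requires $w_{t,i^*}\ge e^{-1}$. Without a constant lower bound on $p$, the quantity $(1-p)^{N-1}\bigl((N-1)p+1\bigr)$ can be $\Theta(1)$ (take $p\sim 1/N$), so the inequality cannot hold for arbitrary $\hat L_t$. In the paper the lemma is only ever invoked on the good events $F_t$ and $D_t$; on $F_t$ Lemma~\ref{lem: lb frechet} gives $w_{t,i^*}\ge e^{-1}$, while on $D_t$ Lemma~\ref{lem: lb bounded} gives the weaker $w_{t,i^*}\ge 0.14$. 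In the latter case the base $1-e^{-1}$ should strictly be replaced by $0.86$, but since only the summed $\mathcal{O}(c^2K^{2/\alpha-1})$ conclusion is used downstream, any base in $(0,1)$ suffices and the statement survives at the big-$\mathcal{O}$ level. Your proof makes this dependence transparent, which is a virtue.

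The integral comparison in the second part is fine; the unimodality remark handles the sum--integral discrepancy, and the substitution $u=A\sqrt t$ gives the $A^{-2}$ scaling directly.
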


\begin{lemma}\label{lem: sto opt fr}
    On $F_t$, for any $\zeta \in (0,1)$ and $\alpha >1$, we have
    \begin{multline*}
        \E\qty[ \hat{\ell}_{t,i^*} \qty( \phi_{i^*}(\eta_t \hat{L}_t; \gF_\alpha) - \phi_{i^*}(\eta_t (\hat{L}_t+\hat{\ell}_t); \gF_\alpha)) \eval \hat{L}_t] \\ 
        \leq \frac{2\alpha e}{(1-\zeta)^{\alpha+1}} \sum_{j\ne i^*} \frac{1}{\hat{\uL}_{t,j}}+ \frac{1}{1-e^{-1}}(1-e^{-1})^{\frac{\zeta}{\eta_t}}\qty(\frac{\zeta}{\eta_t} + e).
    \end{multline*}
\end{lemma}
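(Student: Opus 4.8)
The plan is to split the conditional expectation according to the size of $\hat{\ell}_{t,i^*}$ and treat the two regimes with different tools. Recall that $\hat{\ell}_{t,i^*}\neq 0$ only if $I_t=i^*$, in which case $\hat{\ell}_{t,i^*}=\ell_{t,i^*}\widehat{w_{t,i^*}^{-1}}$, and that on $F_t$ the key property~(\ref{eq: key property}) gives $\hat{\uL}_{t,i^*}=0$ and $\eta_t\hat{\uL}_{t,j}\ge 1$ for all $j\neq i^*$. Write the left-hand side as the contribution on $\{\hat{\ell}_{t,i^*}>\zeta/\eta_t\}$ plus that on $\{\hat{\ell}_{t,i^*}\le\zeta/\eta_t\}$. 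For the first, since $\hat{\ell}_t$ is supported on $e_{i^*}$ and $\phi_{i^*}$ is nonincreasing in its $i^*$-th argument, $0\le\phi_{i^*}(\eta_t\hat{L}_t;\gF_\alpha)-\phi_{i^*}(\eta_t(\hat{L}_t+\hat{\ell}_t);\gF_\alpha)\le 1$, so this contribution is at most $\E[\I[\hat{\ell}_{t,i^*}>\zeta/\eta_t]\hat{\ell}_{t,i^*}\mid\hat{L}_t]$, which by Lemma~\ref{lem: hnd stoc} is bounded by the asserted tail term $\frac{1}{1-e^{-1}}(1-e^{-1})^{\zeta/\eta_t}(\zeta/\eta_t+e)$.

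The core is the regime $\hat{\ell}_{t,i^*}\le\zeta/\eta_t$ on $F_t$. Set $\delta:=\eta_t\hat{\ell}_{t,i^*}\in[0,\zeta]$. Because $x\le\zeta<1\le\eta_t\hat{\uL}_{t,j}$, the index $i^*$ stays the strict argmin of $\eta_t\hat{L}_t+xe_{i^*}$ for every $x\in[0,\delta]$, hence its underlined vector is $\mu(x)$ with $\mu(x)_{i^*}=0$ and $\mu(x)_j=\eta_t\hat{\uL}_{t,j}-x$. Rather than bound $\phi_{i^*}$ directly (which is hopeless here since $\hat{\uL}_{t,i^*}=0$), I use that probabilities sum to one, $\phi_{i^*}(\lambda)-\phi_{i^*}(\lambda+\delta e_{i^*})=\sum_{j\neq i^*}\big(\phi_j(\lambda+\delta e_{i^*})-\phi_j(\lambda)\big)$, and bound each increment $\phi_j(\eta_t\hat{L}_t+\delta e_{i^*};\gF_\alpha)-\phi_j(\eta_t\hat{L}_t;\gF_\alpha)=\int_0^\delta\partial_x\phi_j(\eta_t\hat{L}_t+xe_{i^*};\gF_\alpha)\,dx$. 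Differentiating the explicit Fr\'echet integral $\phi_j(\cdot;\gF_\alpha)=\alpha\int_0^\infty(z+\mu_j)^{-(\alpha+1)}\exp(-\sum_l(z+\mu_l)^{-\alpha})\,dz$ in $x$ and discarding the nonpositive term produced by the exponential gives $\partial_x\phi_j(\eta_t\hat{L}_t+xe_{i^*};\gF_\alpha)\le\alpha(\alpha+1)I_{j,\alpha+2}(\mu(x);\alpha)$; then, keeping only the $l=j$ factor in the exponential and substituting $w=z+\mu(x)_j$,
\begin{equation*}
 I_{j,\alpha+2}(\mu(x);\alpha)\le\int_{\eta_t\hat{\uL}_{t,j}-x}^{\infty}w^{-(\alpha+2)}e^{-w^{-\alpha}}\,dw\le\frac{1}{(\alpha+1)(\eta_t\hat{\uL}_{t,j}-x)^{\alpha+1}}.
\end{equation*}
Integrating the bound $\partial_x\phi_j\le\alpha(\eta_t\hat{\uL}_{t,j}-x)^{-(\alpha+1)}$ over $x\in[0,\delta]$ and using $\eta_t\hat{\uL}_{t,j}-\delta\ge(1-\zeta)\eta_t\hat{\uL}_{t,j}$ (valid since $\eta_t\hat{\uL}_{t,j}\ge 1$ and $\delta\le\zeta$) yields $\phi_j(\eta_t\hat{L}_t+\delta e_{i^*};\gF_\alpha)-\phi_j(\eta_t\hat{L}_t;\gF_\alpha)\le\frac{\alpha\delta}{(1-\zeta)^{\alpha+1}(\eta_t\hat{\uL}_{t,j})^{\alpha+1}}$, so on $F_t\cap\{\hat{\ell}_{t,i^*}\le\zeta/\eta_t\}$,
\begin{equation*}
 \phi_{i^*}(\eta_t\hat{L}_t;\gF_\alpha)-\phi_{i^*}(\eta_t(\hat{L}_t+\hat{\ell}_t);\gF_\alpha)\le\frac{\alpha\,\eta_t\hat{\ell}_{t,i^*}}{(1-\zeta)^{\alpha+1}}\sum_{j\neq i^*}\frac{1}{(\eta_t\hat{\uL}_{t,j})^{\alpha+1}}.
\end{equation*}

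Finally I substitute this into the conditional expectation. Since $\sum_{j\neq i^*}(\eta_t\hat{\uL}_{t,j})^{-(\alpha+1)}$ and $F_t$ are $\hat{L}_t$-measurable and $\hat{\ell}_{t,i^*}\cdot\hat{\ell}_{t,i^*}=\ell_{t,i^*}^2(\widehat{w_{t,i^*}^{-1}})^2$, the second-regime contribution is at most $\frac{\alpha\eta_t}{(1-\zeta)^{\alpha+1}}\big(\sum_{j\neq i^*}(\eta_t\hat{\uL}_{t,j})^{-(\alpha+1)}\big)\E[\I[I_t=i^*]\hat{\ell}_{t,i^*}^2\mid\hat{L}_t]$, and $\E[\I[I_t=i^*]\hat{\ell}_{t,i^*}^2\mid\hat{L}_t]=w_{t,i^*}\ell_{t,i^*}^2\E[(\widehat{w_{t,i^*}^{-1}})^2\mid\hat{L}_t,I_t=i^*]\le 2/w_{t,i^*}\le 2e$, the last step using $w_{t,i^*}\ge 1/e$ on $F_t$ from Lemma~\ref{lem: lb frechet}. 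Hence this contribution is at most $\frac{2\alpha e}{(1-\zeta)^{\alpha+1}}\sum_{j\neq i^*}\eta_t^{-\alpha}\hat{\uL}_{t,j}^{-(\alpha+1)}=\frac{2\alpha e}{(1-\zeta)^{\alpha+1}}\sum_{j\neq i^*}\hat{\uL}_{t,j}^{-1}(\eta_t\hat{\uL}_{t,j})^{-\alpha}\le\frac{2\alpha e}{(1-\zeta)^{\alpha+1}}\sum_{j\neq i^*}\hat{\uL}_{t,j}^{-1}$, using $(\eta_t\hat{\uL}_{t,j})^{-\alpha}\le 1$ on $F_t$ once more; adding the two regimes gives the statement. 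The step I expect to be the main obstacle is the per-arm derivative bound in the second paragraph: one must recognize that $\phi_{i^*}$ itself cannot be controlled when $\hat{\uL}_{t,i^*}=0$, redistribute the lost mass over $j\neq i^*$, check that the argmin does not move along the segment (which is exactly what $F_t$ together with $\zeta<1$ guarantees), and extract precisely the first power $\hat{\uL}_{t,j}^{-1}$ with the constant $2\alpha e/(1-\zeta)^{\alpha+1}$; everything else is routine integration and the geometric-resampling second-moment identity.
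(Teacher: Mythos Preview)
Your proof is correct and follows essentially the same structure as the paper: split on $\{\hat{\ell}_{t,i^*}\lessgtr \zeta/\eta_t\}$, invoke Lemma~\ref{lem: hnd stoc} for the large regime, and bound the derivative along the segment for the small regime using $\eta_t\hat{\uL}_{t,j}\ge 1$ on $F_t$, then close with the geometric-resampling second moment and $w_{t,i^*}\ge 1/e$. The only (minor) difference is that the paper differentiates $\phi_{i^*}$ directly—since $\hat{\uL}_{t,i^*}=0$, the integrand factors as $\frac{\alpha}{z^{\alpha+1}}e^{-1/z^{\alpha}}$ times the sum over $j\neq i^*$, and they drop $z$ in $(z+\eta_t(\hat{\uL}_{t,j}-x))^{-(\alpha+1)}$ while keeping the $i^*$ weight $\frac{\alpha}{z^{\alpha+1}}e^{-1/z^{\alpha}}$ which integrates to $1$—whereas you use the sum rule $\sum_j\phi_j=1$ to convert to per-arm increments of $\phi_j$ and bound each via $I_{j,\alpha+2}$; both routes arrive at exactly the same estimate $\alpha\,\eta_t\hat{\ell}_{t,i^*}\,(1-\zeta)^{-(\alpha+1)}\sum_{j\neq i^*}(\eta_t\hat{\uL}_{t,j})^{-(\alpha+1)}$ and the same constant.
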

\begin{proof}
Recall (\ref{eq: key property}), which shows that any $j \ne i^*$ satisfies $\hat{\uL}_{t,j} \geq \frac{1}{\eta_t}$ and $\argmin_{i \in [K]}\hat{L}_t = \hat{L}_{t,i^*}$ holds on $F_t$.
Following \citet{pmlr-v201-honda23a}, we consider the cases (a) $\widehat{w_{t,i^*}^{-1}} \leq \frac{\zeta}{\eta_t}$ and (b) $\widehat{w_{t,i^*}^{-1}} > \frac{\zeta}{\eta_t}$, separately.

(a) Let us consider the first case, where $\argmin_{i \in [K]}\hat{L}_t + x e_{i^*} = i^*$ holds since $\hat{\uL}_{t} \geq \frac{1}{\eta_t}$ and 
\begin{equation*}
    \hat{\ell}_{t,i^*} = \ell_{t,i^*}\widehat{w_{t,i^*}^{-1}} \leq  \frac{\zeta}{\eta_t}< \frac{1}{\eta_t} \leq \min_{i\ne i^*} \hat{\uL}_{t,i}.
\end{equation*}
Therefore, we have for $x \leq \frac{\zeta}{\eta_t}$
\begin{align*}
    \phi_{i^*}(\eta_t (\hat{L}_t + x e_{i^*})) = \int_0^\infty \frac{\alpha}{z^{\alpha+1}} \exp(-\sum_{i\in [K]} \frac{1}{(z+\eta_t(\hat{\uL}_{t,j}-x))^{\alpha}}) \dd z,
\end{align*}
which implies 
\begin{align*}
    \dv{x}\phi_{i^*}(\eta_t(\hat{\uL}_t + xe_{i^*})) &= \int_0^\infty -\frac{\alpha}{z^{\alpha+1}} \sum_{j\ne i^*}\frac{\alpha \eta_t}{(z+\eta_t(\hat{\uL}_{t,j}-x))^{\alpha+1}} \exp(-\sum_{j \ne i^*} \frac{1}{(z+\eta_t(\hat{\uL}_{t,j}-x))^{\alpha}} - \frac{1}{z^\alpha}) \dd z \\
    &\geq \int_0^\infty -\frac{\alpha}{z^{\alpha+1}} \sum_{j\ne i^*}\frac{\alpha \eta_t}{(z+\eta_t(\hat{\uL}_{t,j}-x))^{\alpha+1}} \exp(-\sum_{j \ne i^*} \frac{1}{(z+\eta_t(\hat{\uL}_{t,j}-x))^{\alpha}} - \frac{1}{z^\alpha}) \dd z 
\end{align*}
Then, we obtain
\begin{align*}
    \hat{\ell}_{t,i^*} &\qty( \phi_{i^*}(\eta_t \hat{L}_t) - \phi_{i^*}(\eta_t (\hat{L}_t+\hat{\ell}_t)))  \\ 
    &= \hat{\ell}_{t,i^*} \int_0^{\hat{\ell}_t} -\dv{x}\phi_{i^*}(\eta_t(\hat{\uL}_t + xe_{i^*})) \dd x \\
    &\leq \hat{\ell}_{t,i^*} \int_0^{\hat{\ell}_t} \int_0^\infty \frac{\alpha}{z^{\alpha+1}} \sum_{j\ne i^*}\frac{\alpha \eta_t}{(z+\eta_t(\hat{\uL}_{t,j}-x))^{\alpha+1}} \exp(-\sum_{j \ne i^*} \frac{1}{(z+\eta_t(\hat{\uL}_{t,j}-x))^{\alpha}} - \frac{1}{z^\alpha}) \dd z \dd x \\
    &\leq \hat{\ell}_{t,i^*} \int_0^{\hat{\ell}_t} \int_0^\infty \frac{\alpha}{z^{\alpha+1}} \sum_{j\ne i^*}\frac{\alpha \eta_t}{(z+\eta_t(\hat{\uL}_{t,j}-x))^{\alpha+1}} \exp(- \frac{1}{z^\alpha}) \dd z \dd x \\
    &\leq \hat{\ell}_{t,i^*} \int_0^{\hat{\ell}_t} \int_0^\infty \frac{\alpha}{z^{\alpha+1}} \sum_{j\ne i^*} \frac{1}{(1-\zeta)^{\alpha+1}}\frac{\alpha \eta_t}{(\eta_t\hat{\uL}_{t,j})^{\alpha+1}} \exp(- \frac{1}{z^\alpha}) \dd z \dd x \qquad (\text{by } x \leq \zeta/\eta_t, \text{ and } \hat{L}_{t,i} \geq 1/\eta_t) \\
    &= \hat{\ell}_{t,i^*} \int_0^{\hat{\ell}_t} \sum_{j\ne i^*} \frac{1}{(1-\zeta)^{\alpha+1}}\frac{\alpha \eta_t}{(\eta_t\hat{\uL}_{t,j})^{\alpha+1}} \dd x \\
    &= \hat{\ell}_{t,i^*}^2 \sum_{j\ne i^*} \frac{1}{(1-\zeta)^{\alpha+1}}\frac{\alpha \eta_t}{(\eta_t\hat{\uL}_{t,j})^{\alpha+1}} \\
    &\leq \hat{\ell}_{t,i^*}^2 \sum_{j\ne i^*} \frac{1}{(1-\zeta)^{\alpha+1}}\frac{\alpha}{\hat{\uL}_{t,j}},
\end{align*}
where the last inequality comes from $\hat{\uL}_{t,i} \geq \frac{1}{\eta_t}$.
Therefore, we have 
\begin{align*}
    \E\qty[ \I[\hat{\ell}_{t,i^*} \leq \zeta / \eta_t] \hat{\ell}_{t,i^*} \qty( \phi_{i^*}(\eta_t \hat{L}_t) - \phi_{i^*}(\eta_t (\hat{L}_t+\hat{\ell}_t))) \eval \hat{L}_t] \hspace{-12em}& \\
    &\leq \E\qty[ \I[\hat{\ell}_{t,i^*} \leq \zeta / \eta_t] \hat{\ell}_{t,i^*}^2 \sum_{j\ne i^*} \frac{\alpha}{(1-\zeta)^{\alpha+1}\hat{\uL}_{t,j}} \eval \hat{L}_t] \\
    &\leq \E\qty[ \frac{2\ell_{t,i^*}^2}{w_{t,i^*}} \sum_{j\ne i^*} \frac{\alpha}{(1-\zeta)^{\alpha+1}\hat{\uL}_{t,j}} \eval \hat{L}_t] \\
    &\leq 2\alpha e \sum_{j\ne i^*} \frac{1}{(1-\zeta)^{\alpha+1}\hat{\uL}_{t,j}} .\numberthis{\label{eq: case a}}
\end{align*}
(b) When $\widehat{w_{t,i^*}^{-1}} > \frac{\zeta}{\eta_t}$, by Lemma~\ref{lem: hnd stoc}, we have
\begin{align*}
    \E\qty[ \I[\hat{\ell}_{t,i^*} > \zeta / \eta_t] \hat{\ell}_{t,i^*} \qty( \phi_{i^*}(\eta_t \hat{L}_t) - \phi_{i^*}(\eta_t (\hat{L}_t+\hat{\ell}_t))) \eval \hat{L}_t] &\leq \E\qty[ \I[\hat{\ell}_{t,i^*} > \zeta / \eta_t] \hat{\ell}_{t,i^*} \eval \hat{L}_t] \\
    &\leq \frac{1}{1-e^{-1}} (1-e^{-1})^{\frac{\zeta}{\eta_t}}\qty(\frac{\zeta}{\eta_t} + e). \numberthis{\label{eq: case b}}
\end{align*}
Combining (\ref{eq: case a}) and (\ref{eq: case b}) concludes the proof.
\end{proof}

\begin{lemma}\label{lem: sto opt gen}
    On $D_t$, for any $\zeta \in (0,1)$, $\gD_\alpha \in \fD_{\alpha}$ and $\alpha >1$, we have
    \begin{multline*}
        \E\qty[ \hat{\ell}_{t,i^*} \qty( \phi_{i^*}(\eta_t \hat{L}_t; \gD_\alpha) - \phi_{i^*}(\eta_t (\hat{L}_t+\hat{\ell}_t); \gD_\alpha)) \eval \hat{L}_t] \\
        \leq \frac{14.4 A_u^{\alpha}\rho_1 e (1-e^{-1}) }{(1-\zeta)^{\alpha+1}} \sum_{j\ne i^*} \frac{1}{\hat{\uL}_{t,j}}+ \frac{1}{1-e^{-1}}(1-e^{-1})^{\frac{\zeta}{\eta_t}}\qty(\frac{\zeta}{\eta_t} + e).
    \end{multline*}
\end{lemma}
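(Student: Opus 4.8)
The plan is to mirror the proof of Lemma~\ref{lem: sto opt fr} line for line, splitting on the size of the geometric-resampling count $\widehat{w_{t,i^*}^{-1}}$ relative to $\zeta/\eta_t$, and replacing the exact Fr\'echet-density manipulations by the regular-variation identity $f(y)=S_f(y)/y^{\alpha+1}$ with $S_f=\varrho\,S_F\le\rho_1 A_u^{\alpha}$ --- precisely the envelope already used in (\ref{eq: Sf bound}) via (\ref{eq: Sf and SF}), (\ref{def: bounded rho}), and the boundedness of $S_F$. Throughout we condition on an $\hat L_t$ satisfying $D_t$, so that (\ref{eq: key property}) gives $\hat{\uL}_{t,i^*}=0$ and $\eta_t\hat{\uL}_{t,j}\ge 1$ for all $j\ne i^*$.

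When $\widehat{w_{t,i^*}^{-1}}>\zeta/\eta_t$ the argument is identical to the Fr\'echet case: since $\phi_{i^*}(\eta_t\hat L_t;\gD_\alpha)-\phi_{i^*}(\eta_t(\hat L_t+\hat{\ell}_t);\gD_\alpha)\le 1$, the contribution is at most $\E[\I[\hat{\ell}_{t,i^*}>\zeta/\eta_t]\hat{\ell}_{t,i^*}\mid\hat L_t]$, which Lemma~\ref{lem: hnd stoc} bounds by $\tfrac{1}{1-e^{-1}}(1-e^{-1})^{\zeta/\eta_t}(\zeta/\eta_t+e)$, namely the second term of the claim. On the complementary event $\widehat{w_{t,i^*}^{-1}}\le\zeta/\eta_t$ we have $\hat{\ell}_{t,i^*}=\ell_{t,i^*}\widehat{w_{t,i^*}^{-1}}\le\zeta/\eta_t<1/\eta_t\le\min_{j\ne i^*}\hat{\uL}_{t,j}$, so for every $x\in[0,\hat{\ell}_{t,i^*}]$ the minimizer of $\hat L_t+xe_{i^*}$ is still $i^*$, its underlined version has $i^*$-coordinate $0$ and $j$-coordinate $\eta_t(\hat{\uL}_{t,j}-x)$, and by (\ref{eq: def_phi})
\[
\phi_{i^*}(\eta_t(\hat L_t+xe_{i^*});\gD_\alpha)=\int_{\nu}^{\infty}f(z)\prod_{j\ne i^*}F\big(z+\eta_t(\hat{\uL}_{t,j}-x)\big)\,\dd z .
\]

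Differentiating this in $x$, discarding the remaining product of $F$'s (which is $\le 1$), and combining $\eta_t x\le\zeta\le\zeta\,\eta_t\hat{\uL}_{t,j}$, hence $z+\eta_t(\hat{\uL}_{t,j}-x)\ge(1-\zeta)(z+\eta_t\hat{\uL}_{t,j})\ge(1-\zeta)\eta_t\hat{\uL}_{t,j}$, with $f(y)\le\rho_1 A_u^{\alpha}y^{-\alpha-1}$ and $\int_{\nu}^{\infty}f(z)\,\dd z=1$, one obtains
\[
-\dv{x}\phi_{i^*}(\eta_t(\hat L_t+xe_{i^*});\gD_\alpha)\le\frac{\rho_1 A_u^{\alpha}}{(1-\zeta)^{\alpha+1}}\sum_{j\ne i^*}\frac{\eta_t}{(\eta_t\hat{\uL}_{t,j})^{\alpha+1}} .
\]
Integrating in $x$ over $[0,\hat{\ell}_{t,i^*}]$, multiplying by $\hat{\ell}_{t,i^*}$, and using $(\eta_t\hat{\uL}_{t,j})^{\alpha}\ge 1$ gives, on $\{\hat{\ell}_{t,i^*}\le\zeta/\eta_t\}$, the pointwise bound $\hat{\ell}_{t,i^*}\big(\phi_{i^*}(\eta_t\hat L_t;\gD_\alpha)-\phi_{i^*}(\eta_t(\hat L_t+\hat{\ell}_t);\gD_\alpha)\big)\le\frac{\rho_1 A_u^{\alpha}}{(1-\zeta)^{\alpha+1}}\hat{\ell}_{t,i^*}^{2}\sum_{j\ne i^*}\frac{1}{\hat{\uL}_{t,j}}$. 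Taking $\E[\cdot\mid\hat L_t]$ and using the geometric-resampling second-moment bound $\E[\widehat{w_{t,i^*}^{-1}}^{2}\mid\hat L_t,I_t=i^*]\le 2/w_{t,i^*}^{2}$ (so $\E[\hat{\ell}_{t,i^*}^{2}\mid\hat L_t]\le 2/w_{t,i^*}$) together with the lower bound $w_{t,i^*}\ge e^{-3/2}(1-e^{-1})$ valid on $D_t$ by Lemma~\ref{lem: lb bounded}(i) turns the prefactor into the claimed numerical constant, yielding the first term. Adding the two cases completes the proof.

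I expect the main obstacle to be the replacement of the explicit Fr\'echet density by the regular-variation envelope $f(y)\le\rho_1 A_u^{\alpha}y^{-\alpha-1}$: this is where $\rho_1$ and $A_u$ enter, and one must check that the argument $z+\eta_t(\hat{\uL}_{t,j}-x)$ of $f$ stays in $[\nu,\infty)$, the range on which the envelope is valid --- which is exactly what the event $D_t$ and the choice $\zeta<1$ guarantee --- and that after discarding the $F$-product the leftover $z$-integral is still just the normalising constant $\int_{\nu}^{\infty}f=1$, so that (unlike in the stability analysis) no incomplete-Gamma or $K$-dependent factor appears.
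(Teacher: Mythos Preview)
Your proof is correct and follows the same architecture as the paper's: the same split at $\widehat{w_{t,i^*}^{-1}}\gtrless\zeta/\eta_t$, Lemma~\ref{lem: hnd stoc} for the large case, and in the small case differentiation of $\phi_{i^*}$ followed by the envelope $S_f\le\rho_1 A_u^{\alpha}$ and the lower bound on $w_{t,i^*}$ from Lemma~\ref{lem: lb bounded}(i). The one local difference is that in case (a) the paper replaces $\prod_{l\ne i,i^*}F$ by $\exp(-\sum(1-F))$, restores two factors at the cost of $e^2$, and then integrates $\int f(z)e^{F(z)-1}\,\dd z=1-e^{-1}$, whereas you simply discard the $F$-product as $\le 1$ and integrate $\int f=1$; your route is shorter and in fact yields a smaller numerical prefactor, so the stated inequality holds a fortiori.
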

\begin{proof}
    As the proof of Lemma~\ref{lem: sto opt fr}, we consider two cases (a) $\widehat{w_{t,i^*}^{-1}} \leq \frac{\zeta}{\eta_t}$ and (b) $\widehat{w_{t,i^*}^{-1}} > \frac{\zeta}{\eta_t}$, separately.
    For case (b), one can see that Lemma~\ref{lem: hnd stoc} can be directly applied as Lemma~\ref{lem: sto opt fr}.

    (a) When $\widehat{w_{t,i^*}^{-1}} \leq \frac{\zeta}{\eta_t}$, we have
    \begin{align*}
    \phi_{i^*}(\eta_t (\hat{L}_t + e_{i^*}x); \gD_\alpha) = \int_1^\infty f(z) \prod_{j \ne i^*} F\qty(z+\eta_t (\hat{\uL}_{t,j} -x)) \dd z,
\end{align*}
    which implies for $x \leq \frac{\zeta}{\eta_t}$,
\begin{align*}
    -&\dv{x} \phi_{i^*}\qty(\eta_t (\hat{L}_t + e_{i^*}x); \gD_\alpha)\\
    &= \int_1^\infty f(z) \sum_{i\ne i^*} \qty(\eta_t f\qty(z+\eta_t (\hat{\uL}_{t,j} -x)) \prod_{j \ne i, i^*} F\qty(z+\eta_t (\hat{\uL}_{t,j} -x))) \dd z \\
    &\leq \int_1^\infty f(z) \sum_{i\ne i^*} \qty(\eta_t f\qty(z+\eta_t (\hat{\uL}_{t,i} -x)) \exp(-\sum_{j \ne i, i^*} \qty(1-F\qty(z+\eta_t (\hat{\uL}_{t,j} -x))))) \dd z \\
    &\leq e^2 \int_1^\infty f(z) \sum_{i\ne i^*} \eta_t f\qty(z+\eta_t (\hat{\uL}_{t,i} -x)) \exp(-\sum_{j \ne  i^*} \qty(1-F\qty(z+\eta_t (\hat{\uL}_{t,j} -x))) - (1-F(z))) \dd z \\
    &\leq  e^2 \int_1^\infty f(z) \sum_{i\ne i^*} \eta_t f\qty(z+\eta_t (\hat{\uL}_{t,i} -x)) \exp(-(1-F(z))) \dd z \\
    &= e^2 \int_1^\infty f(z) \sum_{i\ne i^*} \eta_t \frac{S_F(z+\eta_t (\hat{\uL}_{t,i} -x))\varrho(z+\eta_t (\hat{\uL}_{t,i} -x))}{(z+\eta_t (\hat{\uL}_{t,i} -x))^{\alpha+1}} \exp(-(1-F(z))) \dd z \\
    &\leq e^2  \eta_t A_u^{\alpha} \rho_1  \int_1^\infty f(z) \exp(-(1-F(z))) \sum_{i\ne i^*} \frac{1}{(z+\eta_t (\hat{\uL}_{t,i} -x))^{\alpha+1}} \dd z \numberthis{\label{eq: sto gen bnd}}\\
    &\leq e^2  \eta_t A_u^{\alpha} \rho_1  \int_1^\infty f(z) \exp(-(1-F(z))) \sum_{i\ne i^*} \frac{1}{(1-\zeta)^{\alpha+1}(\eta_t\hat{\uL}_{t,i})^{\alpha+1}} \dd z \\
    &\leq A_u^{\alpha} \rho_1  e^2 \sum_{i\ne i^*} \frac{\eta_t}{(1-\zeta)^{\alpha+1}(\eta_t\hat{\uL}_{t,i})^{\alpha+1}} (1-e^{-1}) \\
    &\leq A_u^{\alpha} \rho_1  e^2 (1-e^{-1}) \sum_{i\ne i^*} \frac{1}{(1-\zeta)^{\alpha+1}\hat{\uL}_{t,i}} \qquad (\text{by } \eta_t \hat{\uL}_{t,i}\geq 1, \, \forall i \ne i^*),
\end{align*}
where (\ref{eq: sto gen bnd}) follows from the boundedness of $S_F \leq S$ and Assumption~\ref{asm: bounded hazard}.
Therefore, we have
\begin{multline*}
     \E\qty[\hat{\ell}_{t,i^*} \qty( \phi_{i^*}(\eta_t \hat{L}_t; \gD_\alpha) - \phi_{i^*}(\eta_t (\hat{L}_t+\hat{\ell}_t); \gD_\alpha)) \eval \hat{L}_t] \\
     \leq  \sum_{i\ne i^*} \frac{14.4 A_u^{\alpha} \rho_1  e (1-e^{-1})}{(1-\zeta)^{\alpha+1}\hat{\uL}_{t,i}} + \frac{1}{1-e^{-1}}(1-e^{-1})^{\frac{\zeta}{\eta_t}}\qty(\frac{\zeta}{\eta_t} + e).
\end{multline*}
Here, $14.4$ is introduced by $\frac{2}{0.14}$ by following the same steps in (\ref{eq: case a}).
\end{proof}

\subsection{Proof of Theorems~\ref{thm: sto_all} and~\ref{thm: sto_bad}}\label{app: thm stoc}
Although the overall proofs are identical for $\gF_\alpha$ and $\fD_{\alpha}$ in essential, we provide the proof of $\gF_\alpha$ first and then $\fD_\alpha$ for completeness.

\subsubsection{\Fr distribution with $\alpha \geq 2$}
For simplicity, let $K_\alpha = K^{\frac{1}{\alpha}-\frac{1}{2}}$ so that $\eta_t = \frac{cK_\alpha}{\sqrt{t}}$.
Combining the results obtained thus far, the regret is bounded by
\begin{align*}
    \gR(T) &\leq \sum_{t=1} \E \qty[\inp{\hat{\ell}_t}{w_t - w_{t+1}}] 
    + \sum_{t=1}^T \qty(\frac{1}{\eta_{t+1}} - \frac{1}{\eta_t}) \E \qty[r_{t+1, I_{t+1}}- r_{t+1, i^*}] \\
    & \hspace{5em}+  \frac{K^{\frac{1}{\alpha}}\Gamma\qty(1-\frac{1}{\alpha})}{\eta_1}  + \frac{\alpha}{2} \log(T+1) \tag*{(by Lemmas~\ref{lem: gen_lem3} and~\ref{lem: stab decom})}\\
    &\leq \sum_{t=1}^T \E\qty[\E\qty[\inp{\hat{\ell}_t}{w_t - w_{t+1}} + \qty(\frac{1}{\eta_{t+1}} - \frac{1}{\eta_t})(r_{t+1, I_{t+1}}- r_{t+1, i^*}) \eval \hat{L}_t]] \\
    & \hspace{5em}+ \frac{\sqrt{K} \Gamma\qty(1-\frac{1}{\alpha})}{c} + \frac{\alpha}{2} \log(T+1) \\
    &\leq \sum_{t=1}^T \E\qty[\E\qty[\inp{\hat{\ell}_t}{w_t - w_{t+1}} + \frac{r_{t+1, I_{t+1}}- r_{t+1, i^*}}{2cK_\alpha \sqrt{t} } \eval \hat{L}_t]] + \frac{\sqrt{K} \Gamma\qty(1-\frac{1}{\alpha})}{c} + \frac{\alpha}{2} \log(T+1), \numberthis{\label{eq: stoc Fr term all}}
\end{align*}
where the last inequality follows from
\begin{align*}
    \frac{1}{\eta_{t+1}} - \frac{1}{\eta_t} = \frac{1}{cK_\alpha } (\sqrt{t+1} - \sqrt{t}) = \frac{\sqrt{t}}{cK_\alpha}(\sqrt{1+1/t}-1) \leq \frac{1}{2cK_\alpha\sqrt{t}}.
\end{align*}
Note that $w_t = \phi(\eta_t \hat{L}_t)$ and $w_{t+1} = \phi(\eta_t (\hat{L}_t + \ell_t))$ by definition of $\phi$.

On $F_t$, where $\eta_t \hat{\uL}_{t,j}\geq 1$ for $j\ne i^*$, we have for $\alpha \geq 2$
\begin{align*}
    \E\qty[\inp{\hat{\ell}_t}{w_t - w_{t+1}} + \frac{r_{t+1, I_{t+1}}- r_{t+1, i^*}}{2cK_\alpha \sqrt{t} } \eval \hat{L}_t] \hspace{-13em}&\\
    &\leq \sum_{i\ne i^*} \frac{2\alpha}{\hat{\uL}_{t,i}} + \frac{1}{2cK_\alpha\sqrt{t}} \frac{\alpha}{\alpha-1} \frac{1}{(\eta_t \hat{\uL}_{t,i})^{\alpha-1}} + \frac{2\alpha e}{(1-\zeta)^{\alpha+1}} \frac{1}{\hat{\uL}_{t,i}} + \sum_{t=1}^T \frac{(1-e^{-1})^{\frac{\zeta}{\eta_t}}}{1-e^{-1}} \qty(\frac{\zeta}{\eta_t}+e) \tag*{(by Lemmas~\ref{lem: penalty}, \ref{lem: stability_adv_i}, and~\ref{lem: sto opt fr})}\\
    &\leq \sum_{i\ne i^*} \frac{2\alpha}{\hat{\uL}_{t,i}} + \frac{1}{2cK_\alpha\sqrt{t}} \frac{\alpha}{\alpha-1} \frac{1}{(\eta_t \hat{\uL}_{t,i})} + \frac{2\alpha e}{(1-\zeta)^{\alpha+1}} \frac{1}{\hat{\uL}_{t,i}} + \sum_{t=1}^T \frac{(1-e^{-1})^{\frac{\zeta}{\eta_t}}}{1-e^{-1}} \qty(\frac{\zeta}{\eta_t}+e) \numberthis{\label{eq: Ft sto all 1}} \\
    &\leq \sum_{i\ne i^*} \frac{2\alpha}{\hat{\uL}_{t,i}} + \frac{1}{2(cK_\alpha)^2} \frac{\alpha}{\alpha-1} \frac{1}{\hat{\uL}_{t,i}} + \frac{2\alpha e}{(1-\zeta)^{\alpha+1}} \frac{1}{\hat{\uL}_{t,i}} + \sum_{t=1}^T \frac{(1-e^{-1})^{\frac{\zeta}{\eta_t}}}{1-e^{-1}} \qty(\frac{\zeta}{\eta_t}+e)  \\
    &\leq \sum_{i\ne i^*} \frac{2\alpha + \frac{2\alpha e}{(1-\zeta)^{\alpha}}+ \frac{\alpha}{2(cK_\alpha)^2 (\alpha-1)}}{\hat{L}_{t,i}} + \mathcal{O}\qty(c^2 K_\alpha^2) \tag*{(by Lemma~\ref{lem: hnd stoc})}  \\
    &=\sum_{i\ne i^*} \frac{2\alpha(1+e^2)+ \frac{\alpha}{2(cK_\alpha)^2 (\alpha-1)}}{\hat{L}_{t,i}} + \mathcal{O}\qty(c^2 K_\alpha^2)  \numberthis{\label{eq: stoc Fr term 1}}
\end{align*}
where (\ref{eq: Ft sto all 1}) follows from $\eta_t \hat{\uL}_{t,i} \geq 1$ for all $i \ne i^*$ on $F_t$ and $\alpha \geq 2$ and we chose $\zeta = 1-e^{-\frac{1}{\alpha}} \in (0,1)$ for simplicity.

On $F_t^c$, we have
\begin{align*}
    \E\qty[\inp{\hat{\ell}_t}{w_t - w_{t+1}} + \frac{r_{t+1, I_{t+1}}- r_{t+1, i^*}}{2cK_\alpha \sqrt{t} } \eval \hat{L}_t] \hspace{-10em}&\\
    &\leq \frac{2\alpha(\alpha+1)}{\alpha-1}\Gamma\qty(1+\frac{1}{\alpha})K^{1-\frac{1}{\alpha}} \eta_t + \frac{K^{\frac{1}{\alpha}}}{2cK_\alpha \sqrt{t}} \frac{\alpha^2(2\alpha + e-2)}{(\alpha-1)(2\alpha-1)e} \tag*{(by Lemmas~\ref{lem: penalty} and~\ref{lem: stability})} \\
    &= \qty(\frac{2c\alpha(\alpha+1)}{\alpha-1}\Gamma\qty(1+\frac{1}{\alpha}) + \frac{\alpha^2(2\alpha + e-2)}{2ce(\alpha-1)(2\alpha-1)} )\sqrt{\frac{K}{t}}. \numberthis{\label{eq: stoc Fr term 2}}
\end{align*}
Combining (\ref{eq: stoc Fr term 1}) and (\ref{eq: stoc Fr term 2}) with (\ref{eq: stoc Fr term all}) provides
\begin{align*}
    \gR(T)& \leq \sum_{t=1}^T \E\Bigg[\I[F_t] \sum_{i\ne i^*}\frac{2\alpha(1+e^2)+ \frac{\alpha}{2(cK_\alpha)^2 (\alpha-1)}}{\hat{L}_{t,i}} \\
    &\hspace{7em}+ \I[F_t^c] \qty(\frac{2c\alpha(\alpha+1)}{\alpha-1}\Gamma\qty(1+\frac{1}{\alpha}) + \frac{\alpha^2(2\alpha + e-2)}{2ce(\alpha-1)(2\alpha-1)} )\sqrt{\frac{K}{t}} \Bigg] \\
    &\hspace{15em}+ \frac{\sqrt{K} \Gamma\qty(1-\frac{1}{\alpha})}{c} + \frac{\alpha}{2} \log(T+1)+ \mathcal{O}\qty(c^2 K_\alpha^2). \numberthis{\label{eq: stoc Fr term after}}
\end{align*}
On the other hand, by Lemma~\ref{lem: lb frechet}, we have
\begin{equation}\label{eq: stoc Fr lb term}
    \gR(T) \geq \sum_{t=1}^T \E\qty[\I[F_t] c_{s,1}(\gF_\alpha) \frac{\Delta_i t^{\frac{\alpha}{2}}}{(cK_\alpha \hat{\uL}_{t,i})^\alpha} + \I[F_t^c] \frac{\Delta}{2^{\alpha+1}+1}].
\end{equation}
By applying self-bounding technique, (\ref{eq: stoc Fr term after}) - (\ref{eq: stoc Fr lb term})$/2$, we have
\begin{align*}
    \frac{\gR(T)}{2} &\leq \sum_{t=1}^T \E\qty[\I[F_t] \sum_{i \ne i^*} \qty(\frac{2\alpha(1+e^2)+ \frac{\alpha}{2(cK_\alpha)^2 (\alpha-1)}}{\hat{L}_{t,i}} - c_{s,1}(\gF_\alpha) \frac{\Delta_i t^{\frac{\alpha}{2}}}{2(cK_\alpha \hat{\uL}_{t,i})^\alpha})] \\
    &\hspace{1em}+ \sum_{t=1}^T\E\qty[\I[F_t^c] \qty(\qty(\frac{2c\alpha(\alpha+1)}{\alpha-1}\Gamma\qty(1+\frac{1}{\alpha}) + \frac{\alpha^2(2\alpha + e-2)}{2ce(\alpha-1)(2\alpha-1)} )\sqrt{\frac{K}{t}} - \frac{\Delta}{2^{\alpha+1}+1}) ] \\
    &\hspace{4em} + \frac{\sqrt{K} \Gamma\qty(1-\frac{1}{\alpha})}{c} + \frac{\alpha}{2} \log(T+1)+ \mathcal{O}\qty(c^2 K_\alpha^2). \numberthis{\label{eq: stoc Fr last term}}
\end{align*}
For the first term of (\ref{eq: stoc Fr last term}), we have
\begin{align*}
    \qty(\frac{2\alpha(1+e^2)+ \frac{\alpha}{2(cK_\alpha)^2 (\alpha-1)}}{\hat{L}_{t,i}} - c_{s,1}(\gF_\alpha) \frac{\Delta_i t^{\frac{\alpha}{2}}}{2(cK_\alpha \hat{\uL}_{t,i})^\alpha}) \hspace{-20em}& \\
    &\leq \qty(2\alpha(1+e^2)+ \frac{\alpha}{2(cK_\alpha)^2 (\alpha-1)})\frac{\alpha-1}{\alpha} \qty(\frac{4\alpha(1+e^2)+ \frac{\alpha}{(cK_\alpha)^2 (\alpha-1)}}{\alpha c_{s,1}(\gF_\alpha) \Delta_i})^{\frac{1}{\alpha-1}} \qty(\frac{cK_\alpha}{\sqrt{t}})^{\frac{\alpha}{\alpha-1}} \\
    &= \qty(4(\alpha-1) + \frac{1}{2(cK_\alpha)^2}) \qty(\frac{4\alpha(1+e^2)+ \frac{\alpha}{(cK_\alpha)^2 (\alpha-1)}}{\alpha c_{s,1}(\gF_\alpha) \Delta_i})^{\frac{1}{\alpha-1}} \frac{(cK_\alpha)^{\frac{\alpha}{\alpha-1}}}{t^{\frac{\alpha}{2(\alpha-1)}}} \\
    &= \mathcal{O}\qty(\frac{1}{\Delta_i^{\frac{1}{\alpha-1}}K^{\frac{\alpha-2}{2(\alpha-1)}}t^{\frac{\alpha}{2(\alpha-1)}}}), \numberthis{\label{eq: stoc Fr last term 1}}
\end{align*}
since $Ax - Bx^{\alpha} \leq A \frac{\alpha-1}{\alpha} \qty(\frac{A}{\alpha B})^{\frac{1}{\alpha-1}}$ holds for $A, B>0$ and $\alpha >1$.

For the second term of (\ref{eq: stoc Fr last term}), we have
\begin{align*}
  \sum_{t=1}^T \qty(\frac{2c\alpha(\alpha+1)}{\alpha-1}\Gamma\qty(1+\frac{1}{\alpha}) + \frac{\alpha^2(2\alpha + e-2)}{2ce(\alpha-1)(2\alpha-1)} )\sqrt{\frac{K}{t}} - \frac{\Delta}{2^{\alpha+1}+1} \hspace{-25em}&\\
  &\leq \sum_{t=1}^T \max\qty{\qty(\frac{2c\alpha(\alpha+1)}{\alpha-1}\Gamma\qty(1+\frac{1}{\alpha}) + \frac{\alpha^2(2\alpha + e-2)}{2ce(\alpha-1)(2\alpha-1)} )\sqrt{\frac{K}{t}}-\frac{\Delta}{2^{\alpha+1}+1} , 0} \\
  &\leq \qty(\frac{\frac{2c\alpha(\alpha+1)}{\alpha-1}\Gamma\qty(1+\frac{1}{\alpha}) + \frac{\alpha^2(2\alpha + e-2)}{2ce(\alpha-1)(2\alpha-1)}}{\frac{\Delta}{2^{\alpha+1}+1}})^2 K =\mathcal{O}(K) \numberthis{\label{eq: stoc Fr last term 2}}
\end{align*}
Therefore, by combining (\ref{eq: stoc Fr last term 1}) and (\ref{eq: stoc Fr last term 2}) with (\ref{eq: stoc Fr last term}), we obtain
\begin{align*}
    \frac{\gR(T)}{2} &\leq \mathcal{O}\qty( \sum_{i\ne i^*} \sum_{t=1}^T \frac{1}{\Delta_i^{\frac{1}{\alpha-1}}K^{\frac{\alpha-2}{2(\alpha-1)}}t^{\frac{\alpha}{2(\alpha-1)}}}) + \mathcal{O}(K) +  \frac{\sqrt{K} \Gamma\qty(1-\frac{1}{\alpha})}{c} + \frac{\alpha}{2} \log(T+1)+ \mathcal{O}\qty(c^2 K_\alpha^2) \\
    &\leq  \mathcal{O}(K) +  \frac{\sqrt{K} \Gamma\qty(1-\frac{1}{\alpha})}{c} + \frac{\alpha}{2} \log(T+1) \\
    &\hspace{6em}+ \mathcal{O}\qty(c^2 K_\alpha^2) + \begin{cases}
        \mathcal{O} \qty(\sum_{i\ne i^*} \frac{\log T}{\Delta_i}), &\text{if } \alpha =2 \\
        \mathcal{O}\qty(\sum_{i\ne i^*} \frac{1}{(\alpha-2)}\frac{T^{\frac{\alpha-2}{2(\alpha-1)}}}{\Delta_i^{\frac{1}{\alpha-1}}K^{\frac{\alpha-2}{2(\alpha-1)}}}), &\text{if } \alpha > 2,
    \end{cases}
\end{align*}
which concludes the proof for $\gF_\alpha$ with $\alpha \geq 2$.

\subsubsection{\Fr distribution with $\alpha \in (1,2)$}
The proof for $\alpha \in (1,2)$ begin by modifying (\ref{eq: Ft sto all 1}), where we obtain
\begin{align*}
   & \E\qty[\inp{\hat{\ell}_t}{w_t - w_{t+1}} + \frac{r_{t+1, I_{t+1}}- r_{t+1, i^*}}{2cK_\alpha \sqrt{t} } \eval \hat{L}_t] \\
    &\leq \sum_{i\ne i^*} \frac{2\alpha}{\hat{\uL}_{t,i}} + \frac{1}{2cK_\alpha\sqrt{t}} \frac{\alpha}{\alpha-1} \frac{1}{(\eta_t \hat{\uL}_{t,i})^{\alpha-1}} + \frac{2\alpha e}{(1-\zeta)^{\alpha+1}} \frac{1}{\hat{\uL}_{t,i}} + \sum_{t=1}^T \frac{(1-e^{-1})^{\frac{\zeta}{\eta_t}}}{1-e^{-1}} \qty(\frac{\zeta}{\eta_t}+e)  \\
    &\leq \sum_{i\ne i^*} \frac{2\alpha + \frac{2\alpha e}{(1-\zeta)^{\alpha}}}{\hat{L}_{t,i}} + \frac{1}{2cK_\alpha\sqrt{t}} \frac{\alpha}{\alpha-1} \frac{1}{(\eta_t \hat{\uL}_{t,i})^{\alpha-1}} + \mathcal{O}\qty(c^2 K_\alpha^2) \tag*{(by Lemma~\ref{lem: hnd stoc})}  \\
    &=  \sum_{i\ne i^*}  \frac{2\alpha + \frac{2\alpha e}{(1-\zeta)^{\alpha}}}{\hat{L}_{t,i}} + \frac{1}{2(cK_\alpha)^{\alpha}} \frac{\alpha}{\alpha-1} \frac{1}{t^{1-\frac{\alpha}{2}}(\hat{\uL}_{t,i})^{\alpha-1}} + \mathcal{O}\qty(c^2 K_\alpha^2). 
\end{align*}
By following the same steps from (\ref{eq: stoc Fr term 2}), one can obtain
\begin{align*}
    \frac{\gR(T)}{2} &\leq \sum_{t=1}^T \E\qty[\I[F_t] \sum_{i \ne i^*} \qty(\frac{2\alpha(1+e^2)}{\hat{L}_{t,i}} + \frac{1}{2(cK_\alpha)^{\alpha}} \frac{\alpha}{\alpha-1} \frac{1}{t^{1-\frac{\alpha}{2}}(\hat{\uL}_{t,i})^{\alpha-1}} - c_{s,1}(\gF_\alpha) \frac{\Delta_i t^{\frac{\alpha}{2}}}{2(cK_\alpha \hat{\uL}_{t,i})^\alpha})] \\
    &\hspace{1em}+ \sum_{t=1}^T\E\qty[\I[F_t^c] \qty(\qty(\frac{2c\alpha(\alpha+1)}{\alpha-1}\Gamma\qty(1+\frac{1}{\alpha}) + \frac{\alpha^2(2\alpha + e-2)}{2ce(\alpha-1)(2\alpha-1)} )\sqrt{\frac{K}{t}} - \frac{\Delta}{2^{\alpha+1}+1}) ] \\
    &\hspace{4em} + \frac{\sqrt{K} \Gamma\qty(1-\frac{1}{\alpha})}{c} + \frac{\alpha}{2} \log(T+1)+ \mathcal{O}\qty(c^2 K_\alpha^2).
\end{align*}
Here, the first term can be written as
\begin{align*}
    &\frac{2\alpha(1+e^2)}{\hat{L}_{t,i}} + \frac{1}{2(cK_\alpha)^{\alpha}} \frac{\alpha}{\alpha-1} \frac{1}{t^{1-\frac{\alpha}{2}}(\hat{\uL}_{t,i})^{\alpha-1}} - c_{s,1}(\gF_\alpha) \frac{\Delta_i t^{\frac{\alpha}{2}}}{2(cK_\alpha \hat{\uL}_{t,i})^\alpha} \\
    &= \qty(\frac{2\alpha(1+e^2)}{\hat{L}_{t,i}} - \frac{c_{s,1}(\gF_\alpha)}{2}\frac{\Delta_i t^{\frac{\alpha}{2}}}{(cK_\alpha \hat{\uL}_{t,i})^\alpha}) + \frac{1}{2(cK_\alpha)^\alpha}\qty(\frac{\alpha}{\alpha-1} \frac{1}{t^{1-\frac{\alpha}{2}}(\hat{\uL}_{t,i})^{\alpha-1}} -  c_{s,1}(\gF_\alpha)\frac{\Delta_i t^{\frac{\alpha}{2}}}{(\hat{\uL}_{t,i})^\alpha}) \numberthis{\label{eq: small alpha Fr}}
\end{align*}
The first term of (\ref{eq: small alpha Fr}) can be bounded in the same way of (\ref{eq: stoc Fr last term 1}) and the second term is bounded as
\begin{align*}
    \frac{\alpha}{\alpha-1} \frac{1}{t^{1-\frac{\alpha}{2}}(\hat{\uL}_{t,i})^{\alpha-1}} -  c_{s,1}(\gF_\alpha)\frac{\Delta_i t^{\frac{\alpha}{2}}}{(\hat{\uL}_{t,i})^\alpha} &\leq \frac{1}{\alpha-1}\frac{1}{t^{1-\frac{\alpha}{2}}} \qty(\frac{1}{\Delta_i t c_{s,1}(\gF_\alpha)})^{\alpha-1} \\
    &= \frac{1}{\alpha-1} \frac{1}{(\Delta_i c_{s,1}(\gF_\alpha))^{\alpha-1}} \frac{1}{t^{\frac{\alpha}{2}}},
\end{align*}
by $Ax^{\alpha-1}-Bx^{\alpha} \leq \frac{A}{\alpha}\qty(\frac{\alpha-1}{\alpha} \frac{A}{B})^{\alpha-1}$ for $A,B>0$ and $\alpha >1$.
Therefore, (\ref{eq: small alpha Fr}) is bounded by
\begin{align*}
    &\qty(\frac{2\alpha(1+e^2)}{\hat{L}_{t,i}} - \frac{c_{s,1}(\gF_\alpha)}{2}\frac{\Delta_i t^{\frac{\alpha}{2}}}{(cK_\alpha \hat{\uL}_{t,i})^\alpha}) + \frac{1}{2(cK_\alpha)^\alpha}\qty(\frac{\alpha}{\alpha-1} \frac{1}{t^{1-\frac{\alpha}{2}}(\hat{\uL}_{t,i})^{\alpha-1}} -  c_{s,1}(\gF_\alpha)\frac{\Delta_i t^{\frac{\alpha}{2}}}{(\hat{\uL}_{t,i})^\alpha}) \\
    &\hspace{3em}\leq \qty(4(\alpha-1)) \qty(\frac{2\alpha(1+e^2)}{\alpha c_{s,1}(\gF_\alpha) \Delta_i})^{\frac{1}{\alpha-1}} \frac{(cK_\alpha)^{\frac{\alpha}{\alpha-1}}}{t^{\frac{\alpha}{2(\alpha-1)}}} + \frac{1}{2(cK_\alpha)^\alpha} \frac{1}{\alpha-1} \frac{1}{(\Delta_i c_{s,1}(\gF_\alpha))^{\alpha-1}} \frac{1}{t^{\frac{\alpha}{2}}}. \numberthis{\label{eq: stoc small Fr}}
\end{align*}
Since $\frac{\alpha}{2(\alpha-1)} > 1$, the summation over the first term in (\ref{eq: stoc small Fr}) is constant.
Therefore, by following the same steps from (\ref{eq: stoc Fr last term 2}), we can obtain for $\alpha \in (1,2)$ that
\begin{equation*}
    \gR(T) \leq \mathcal{O}\qty(\sum_{i\ne i^*} \frac{1}{2-\alpha}\frac{1}{c^\alpha K^{1-\frac{\alpha}{2}}} \frac{T^{1-\frac{\alpha}{2}}}{\Delta_i^{\alpha-1}}) + \mathcal{O}(K) +  \frac{\sqrt{K} \Gamma\qty(1-\frac{1}{\alpha})}{c} + \frac{\alpha}{2} \log(T+1),
\end{equation*}
which concludes the proof.

\subsubsection{Fr\'{e}chet-type distributions with bounded slowly varying function}
Let us begin by replacing terms in (\ref{eq: stoc Fr term all}) with the corresponding terms for $\fD_\alpha$, which gives
\begin{multline*}
    \gR(T) \leq \sum_{t=1}^T \E\qty[\E\qty[\inp{\hat{\ell}_t}{w_t - w_{t+1}} + \frac{r_{t+1, I_{t+1}}- r_{t+1, i^*}}{2cK_\alpha \sqrt{t} } \eval \hat{L}_t]] \\ + \frac{MA_u \sqrt{K}}{c} + \frac{\rho_1 (e^2+1)}{2} \log(T+1). \numberthis{\label{eq: stoc gen term all}}
\end{multline*}

On $D_t$, where $\eta_t \hat{\uL}_{t,j}\geq 1$ for $j\ne i^*$, we have for $\alpha \geq 2$
\begin{align*}
    \E\qty[\inp{\hat{\ell}_t}{w_t - w_{t+1}} + \frac{r_{t+1, I_{t+1}}- r_{t+1, i^*}}{2cK_\alpha \sqrt{t} } \eval \hat{L}_t] \hspace{-18em}&\\
    &\leq \sum_{i\ne i^*} \frac{\frac{2 e \alpha A_u \rho_2 }{A_l(\alpha+1)}}{\hat{\uL}_{t,i}} + \frac{1}{2cK_\alpha\sqrt{t}} \frac{e \rho_1 A_u^{\alpha}}{\alpha-1} \frac{1}{(\eta_t \hat{\uL}_{t,i})^{\alpha-1}} + \frac{ 14.4 A_u^{\alpha} \rho_1 e (1-e^{-1})}{(1-\zeta)^{\alpha+1}} \frac{1}{\hat{\uL}_{t,i}} \\
    & \hspace{10em}+ \sum_{t=1}^T \frac{(1-e^{-1})^{\frac{\zeta}{\eta_t}}}{1-e^{-1}} \qty(\frac{\zeta}{\eta_t}+e) \tag*{(by Lemmas~\ref{lem: stability_adv_i},~\ref{lem: penalty} and~\ref{lem: sto opt gen})}\\
    &\leq \sum_{i\ne i^*}\frac{\frac{2 e \alpha A_u \rho_2 }{A_l(\alpha+1)}}{\hat{\uL}_{t,i}} + \frac{1}{2cK_\alpha\sqrt{t}} \frac{e \rho_1 A_u^{\alpha}}{\alpha-1} \frac{1}{\eta_t \hat{\uL}_{t,i}}  + \frac{14.4 A_u^{\alpha} \rho_1 e (1-e^{-1})}{(1-\zeta)^{\alpha+1}} \frac{1}{\hat{\uL}_{t,i}} \\
    &\hspace{10em}+ \sum_{t=1}^T \frac{(1-e^{-1})^{\frac{\zeta}{\eta_t}}}{1-e^{-1}} \qty(\frac{\zeta}{\eta_t}+e) \numberthis{\label{eq: Dt sto all 1}} \\
    &\leq \sum_{i\ne i^*}\frac{\frac{2 e \alpha A_u \rho_2 }{A_l(\alpha+1)}}{\hat{\uL}_{t,i}} + \frac{1}{2(cK_\alpha)^2} \frac{e \rho_1 A_u^{\alpha}}{\alpha-1} \frac{1}{\hat{\uL}_{t,i}}  + \frac{14.4 A_u^{\alpha} \rho_1 e (1-e^{-1})}{(1-\zeta)^{\alpha+1}} \frac{1}{\hat{\uL}_{t,i}} + \mathcal{O}\qty(c^2 K_\alpha^2) \tag*{(by Lemma~\ref{lem: hnd stoc})}  \\
    &=\sum_{i\ne i^*} \frac{ \frac{2 e \alpha \rho_2  A_u}{A_l(\alpha+1)} S^{\frac{1}{\alpha}}+ 14.4 A_u^{\alpha} \rho_1 e^2 (1-e^{-1}) +\frac{e \rho_1  A_u^{\alpha}}{2(cK_\alpha)^2 (\alpha-1)}}{\hat{L}_{t,i}} + \mathcal{O}\qty(c^2 K_\alpha^2), \numberthis{\label{eq: stoc gen term 1}}
\end{align*}
where (\ref{eq: Dt sto all 1}) follows from $\eta_t \hat{\uL}_{t,i} \geq 1$ for all $i \ne i^*$ on $D_t$ and $\alpha \geq 2$ and we chose $\zeta = 1-e^{\frac{-1}{\alpha+1}} \in (0,1)$ in (\ref{eq: stoc gen term 1}) for simplicity.

On $D_t^c$, we have
\begin{align*}
    \E\qty[\inp{\hat{\ell}_t}{w_t - w_{t+1}} + \frac{r_{t+1, I_{t+1}}- r_{t+1, i^*}}{2cK_\alpha \sqrt{t} } \eval \hat{L}_t] \hspace{-8em}&\\ 
    &\leq \frac{2\alpha \rho_1 mA_u}{A_l(\alpha-1)} K^{1-\frac{1}{\alpha}}\eta_t + \frac{K^{\frac{1}{\alpha}}C_{1,1}(\gD_\alpha)}{2cK_\alpha \sqrt{t}} \tag*{(by Lemmas~\ref{lem: stability} and~\ref{lem: penalty})} \\
    &= \qty(\frac{2\alpha \rho_1 mA_u c}{A_l(\alpha-1)} + \frac{C_{1,1}(\gD_\alpha)}{2c})\sqrt{\frac{K}{t}}. \numberthis{\label{eq: stoc gen term 2}}
\end{align*}
Combining (\ref{eq: stoc gen term 1}) and (\ref{eq: stoc gen term 2}) with (\ref{eq: stoc gen term all}) provides
\begin{align*}
    \gR(T)& \leq \sum_{t=1}^T \E\Bigg[\I[F_t] \frac{ \frac{2 e \alpha \rho_2  A_u}{A_l(\alpha+1)} S^{\frac{1}{\alpha}}+ 14.4 A_u^{\alpha} \rho_1 e^2 (1-e^{-1}) +\frac{e \rho_1  A_u^{\alpha}}{2(cK_\alpha)^2 (\alpha-1)}}{\hat{L}_{t,i}} \\
    & \hspace{11em}+  \I[F_t^c] \qty(\frac{2\alpha \rho_1 mA_u c}{A_l(\alpha-1)} + \frac{C_{1,1}(\gD_\alpha)}{2c})\sqrt{\frac{K}{t}} \Bigg] \\ 
    &\hspace{6em}+ \frac{M A_u \sqrt{K}}{c} + \frac{\rho_1 (e^2+1)}{2} \log(T+1)+ \mathcal{O}\qty(c^2 K_\alpha^2). \numberthis{\label{eq: stoc gen term after}}
\end{align*}
On the other hand, by Lemma~\ref{lem: lb bounded}, we have
\begin{equation}\label{eq: stoc gen lb term}
    \gR(T) \geq \sum_{t=1}^T \E\qty[\I[F_t] c_{s,1}(\gD_\alpha) \frac{\Delta_i t^{\frac{\alpha}{2}}}{(cK_\alpha \hat{\uL}_{t,i})^\alpha} + \I[F_t^c] c_{s,2}(\gD_\alpha) \Delta].
\end{equation}
By applying self-bounding technique, (\ref{eq: stoc gen term after}) - (\ref{eq: stoc gen lb term})$/2$, we have
\begin{align*}
    \frac{\gR(T)}{2} &\leq \sum_{t=1}^T \E\Bigg[\I[F_t] \sum_{i \ne i^*} \Bigg(\frac{ \frac{2 e \alpha \rho_2  A_u}{A_l(\alpha+1)} S^{\frac{1}{\alpha}}+ 14.4 A_u^{\alpha} \rho_1 e^2 (1-e^{-1}) +\frac{e \rho_1  A_u^{\alpha}}{2(cK_\alpha)^2 (\alpha-1)}}{\hat{L}_{t,i}} \\
    &\hspace{20em}- c_{s,1}(\gD_\alpha) \frac{\Delta_i t^{\frac{\alpha}{2}}}{2(cK_\alpha \hat{\uL}_{t,i})^\alpha}\Bigg)\Bigg] \\
    &\hspace{2em}+ \sum_{t=1}^T\E\qty[\I[F_t^c] \qty(\qty(\frac{2\alpha \rho_1 mA_u c}{A_l(\alpha-1)} + \frac{C_{1,1}(\gD_\alpha)}{2c})\sqrt{\frac{K}{t}}  -  c_{s,2}(\gD_\alpha) \Delta) ] \\
    &\hspace{10em} +  \frac{M A_u \sqrt{K}}{c}  + \frac{\rho_1 (e^2+1)}{2} \log(T+1)+ \mathcal{O}\qty(c^2 K_\alpha^2).
\end{align*}
Therefore, following the same steps as the \Fr distribution from (\ref{eq: stoc Fr last term}) concludes the proof.
For $\alpha \in (1,2)$, one can follow the same steps in the Fr\'{e}chet case.

\section{Numerical validation}
This section presents simulation results to verify our theoretical findings.
Following \citet{zimmert2021tsallis} and \citet{pmlr-v201-honda23a}, we consider the stochastically constrained adversarial setting.
The results in this section are the averages of $100$ independent trials.
Following \citet{pmlr-v201-honda23a}, we consider FTPL with a stable variant of geometric resampling (GR 10). 
In the stable variant, resampling (Lines 7--9 in Algorithm~\ref{alg: FTPL}) is iterated ten times, and the mean is calculated, leading to a reduction in the variance of $\widehat{w^{-1}_{t,i}}$.
We consider this stable variant to examine the effect of perturbations in FTPL more accurately.
\begin{figure*}
\begin{center}
    \begin{minipage}[b]{0.48\textwidth}
    \centering
    \includegraphics[width=\textwidth]{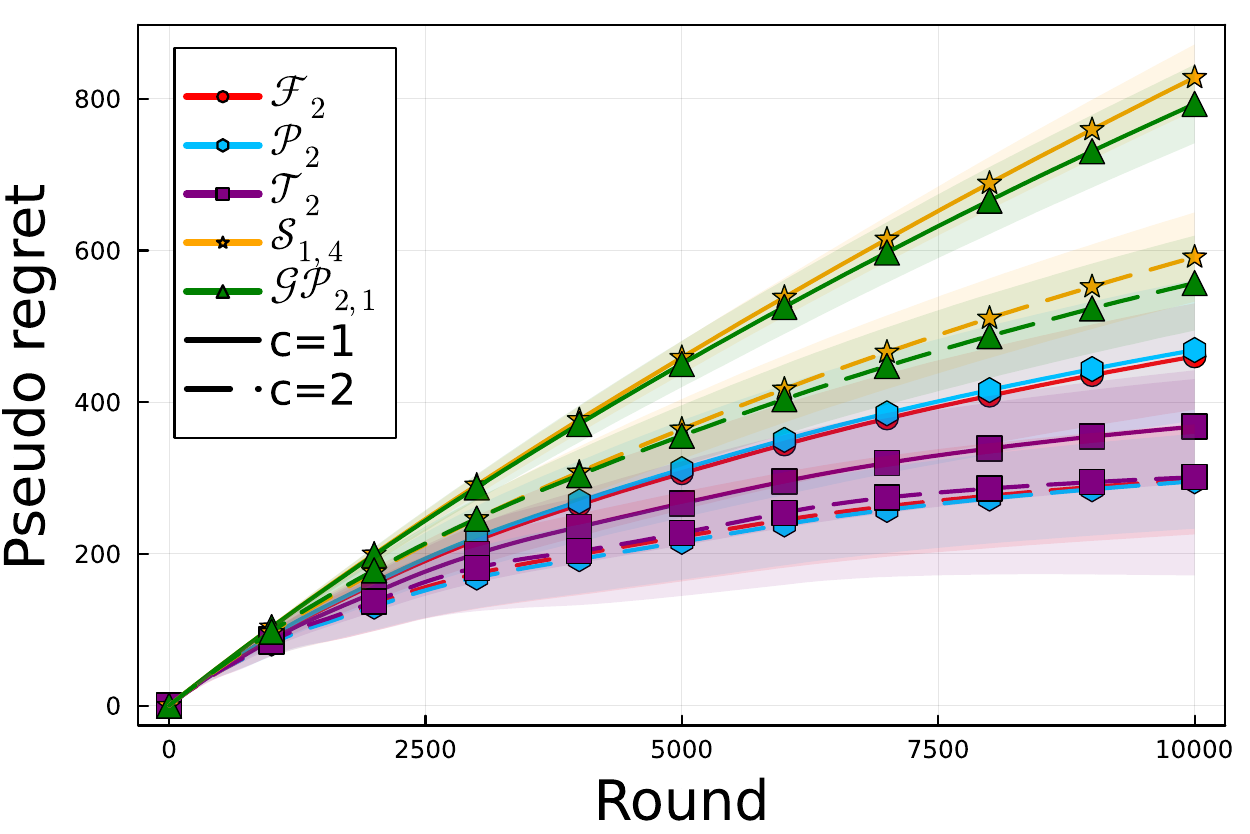}
    \vspace{-2em}
    \caption{Adversarial setting with $K=8$.}
    \label{fig: adv_8}
  \end{minipage}
  \begin{minipage}[b]{0.48\textwidth}
    \centering
    \includegraphics[width=\textwidth]{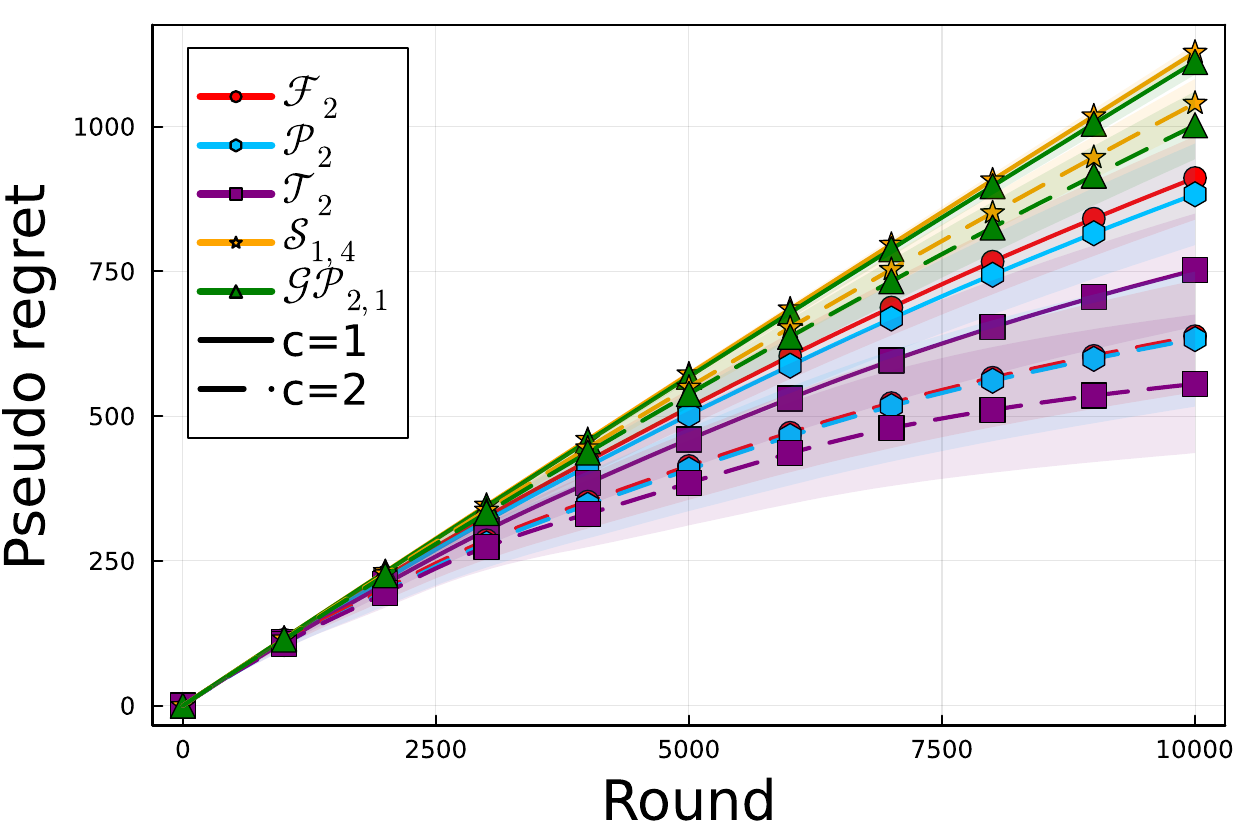}
        \vspace{-2em}
    \caption{Adversarial setting with $K=16$.}
    \label{fig: adv_16}
  \end{minipage}
    \begin{minipage}[b]{0.48\textwidth}
    \centering
    \includegraphics[width=\textwidth]{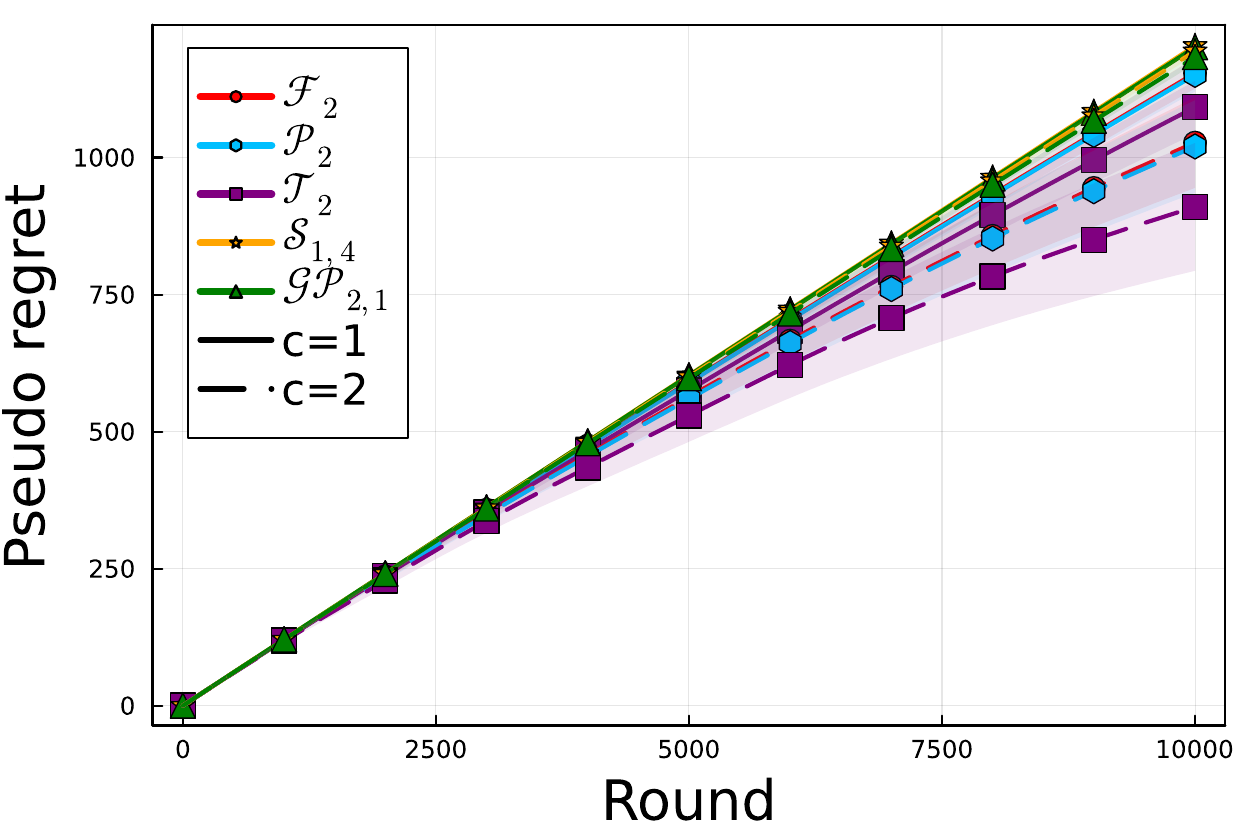}
        \vspace{-2em}
    \caption{Adversarial setting with $K=32$.}
    \label{fig: adv_32}
  \end{minipage}
\end{center}
  \vspace{-1em}
\end{figure*}

Since $K$ perturbations are independently generated from a common distribution, the behavior of FTPL is influenced by the distribution of maximum perturbations.
Therefore, in this experiment, we consider perturbations whose limiting distribution converges to the same Fr\'{e}chet distribution with shape $\alpha$. 
Since one can rewrite (\ref{eq: limiting}) as
\begin{equation*}
    \Pr[M_K/a_K \geq x] \stackrel{ K \to \infty}{\to} \I[x\geq 0] \exp(-x^{-\alpha}),
\end{equation*}
for $a_K = \inf \qty{x : F(x) \geq 1-1/K}$, we use denormalized perturbations $X=r a_K$ instead of $r$ generated from a common distribution $\gD_\alpha$.
This ensures that normalized block maxima of different perturbations converge to the same extreme distribution as $K$ increases.

Figures~\ref{fig: adv_8},~\ref{fig: adv_16}, and~\ref{fig: adv_32} are the results examining the behavior of FTPL in the adversarial setting using distributions from FMDA with index $\alpha=2$.
In these figures, the legends represent the original perturbations denoted by $r$, while FTPL employs denormalized perturbations $X$.
 Despite the absence of variance in $r$ for $\alpha=2$, the behavior of FTPL is almost the same as $K$ becomes sufficiently large.
This experimental observation supports our theoretical findings, demonstrating that the dominating factor in the behavior of FTPL is determined by the limiting distributions.


\section{Technical lemmas}
\begin{lemma}[Equation 8.10.2 of \citet{olver2010nist}]\label{lem: olver_lemma}
    For $x>0$ and $a \geq 1$,
    \begin{equation*}
        \gamma(a,x) \leq \frac{x^{a-1}}{a}(1-e^{-x}).
    \end{equation*}
\end{lemma}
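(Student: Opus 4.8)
The plan is to establish the bound by an elementary one-variable monotonicity argument, treating $a \ge 1$ as a fixed parameter and viewing both sides as functions of $x$; the only nontrivial ingredient will be the inequality $e^{x} \ge 1+x$.

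First I would set $h(x) := \frac{x^{a-1}}{a}\bigl(1 - e^{-x}\bigr) - \gamma(a,x)$ for $x \ge 0$, where $\gamma(a,x) = \int_0^x t^{a-1}e^{-t}\,\dd t$, and record that $h$ is continuous on $[0,\infty)$ with $h(0)=0$: indeed $\gamma(a,0)=0$, and $\frac{x^{a-1}}{a}(1-e^{-x})\to 0$ as $x\to 0^+$ (it is $\mathcal{O}(x^a)$ for $a\ge 1$, and equals $1-e^{-x}$ when $a=1$). It then suffices to show that $h$ is nondecreasing on $[0,\infty)$, since that yields $h(x)\ge h(0)=0$ for all $x>0$, which is exactly the claimed inequality.

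Next I would differentiate, using $\dv{x}\gamma(a,x)=x^{a-1}e^{-x}$, and simplify to
\[
  h'(x) \;=\; \frac{(a-1)\,x^{a-2}}{a}\Bigl(1 - e^{-x}(1+x)\Bigr), \qquad x>0 .
\]
For $a\ge 1$ and $x>0$ the factor $\tfrac{(a-1)x^{a-2}}{a}$ is nonnegative, and $1-e^{-x}(1+x)\ge 0$ follows from $e^{x}\ge 1+x$; hence $h'(x)\ge 0$, so $h$ is nondecreasing and the argument is complete. (When $a=1$ the identity gives $h'\equiv 0$, consistent with the equality case $\gamma(1,x)=1-e^{-x}$.)

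I do not anticipate a genuine obstacle; the one point worth flagging is why the bound is not altogether trivial. The crude estimate $\gamma(a,x)=\int_0^x t^{a-1}e^{-t}\,\dd t \le x^{a-1}\int_0^x e^{-t}\,\dd t = x^{a-1}(1-e^{-x})$, valid since $t^{a-1}\le x^{a-1}$ on $[0,x]$ when $a\ge 1$, is weaker than the claim by exactly the factor $1/a$, and recovering that factor is what the monotonicity computation above delivers. A minor technical point: $x^{a-2}$ is singular at $x=0$ when $1\le a<2$, but this is harmless since the derivative identity is only invoked for $x>0$ while $h$ is continuous up to $0$.
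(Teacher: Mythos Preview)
Your proof is correct: the derivative computation $h'(x)=\tfrac{(a-1)x^{a-2}}{a}\bigl(1-e^{-x}(1+x)\bigr)$ is exactly right, and together with $h(0)=0$ and the elementary bound $e^x\ge 1+x$ it yields the inequality cleanly. The paper does not prove this lemma at all; it simply quotes it as Equation~8.10.2 from the NIST handbook, so your argument actually supplies more detail than the paper itself.
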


\begin{lemma}[Gautschi's inequality]\label{lem: Gautschi_lemma}
    For $x>0$ and $s \in (0,1)$,
    \begin{equation*}
        x^{1-s} < \frac{\Gamma(x+1)}{\Gamma(x+s)} < (x+1)^{1-s}.
    \end{equation*}
\end{lemma}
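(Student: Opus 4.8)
The plan is to derive both inequalities from the (strict) log-convexity of the Gamma function, i.e., from the fact that $t \mapsto \log \Gamma(t)$ is strictly convex on $(0,\infty)$ — a classical fact that follows, e.g., from Hölder's inequality applied to the Euler integral $\Gamma(t) = \int_0^\infty u^{t-1}e^{-u}\,\dd u$, or is part of the Bohr--Mollerup characterization of $\Gamma$. Throughout I would also use the functional equation $\Gamma(t+1) = t\,\Gamma(t)$.

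For the lower bound, I would write the argument $x+s$ as the convex combination $x+s = (1-s)\,x + s\,(x+1)$ of $x$ and $x+1$. Strict log-convexity then gives $\Gamma(x+s) < \Gamma(x)^{1-s}\,\Gamma(x+1)^{s}$, and dividing $\Gamma(x+1)$ by this and using $\Gamma(x+1)/\Gamma(x) = x$ yields
\[
\frac{\Gamma(x+1)}{\Gamma(x+s)} > \frac{\Gamma(x+1)}{\Gamma(x)^{1-s}\Gamma(x+1)^{s}} = \left(\frac{\Gamma(x+1)}{\Gamma(x)}\right)^{1-s} = x^{1-s}.
\]

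For the upper bound, I would instead express $x+1$ as the convex combination $x+1 = s\,(x+s) + (1-s)\,(x+s+1)$ of $x+s$ and $x+s+1$ (the weights are checked by matching constant terms). Strict log-convexity gives $\Gamma(x+1) < \Gamma(x+s)^{s}\,\Gamma(x+s+1)^{1-s}$, and substituting $\Gamma(x+s+1) = (x+s)\,\Gamma(x+s)$ collapses the right-hand side to $\Gamma(x+s)\,(x+s)^{1-s}$; hence $\Gamma(x+1)/\Gamma(x+s) < (x+s)^{1-s} < (x+1)^{1-s}$, where the last step uses $0 < x+s < x+1$ and $1-s > 0$.

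Since $\log\Gamma$ is strictly (not merely weakly) convex, all of the above inequalities are strict for $s \in (0,1)$, matching the claim. There is no genuine obstacle here: the only non-mechanical choices are the two convex combinations, and everything else is the functional equation $\Gamma(t+1)=t\,\Gamma(t)$. If one prefers not to invoke log-convexity as a black box, the same two bounds follow directly by applying Hölder's inequality with conjugate exponents $1/(1-s)$ and $1/s$ to the factorization $u^{x+s-1}e^{-u} = \bigl(u^{x-1}e^{-u}\bigr)^{1-s}\bigl(u^{x}e^{-u}\bigr)^{s}$ inside the Euler integral for $\Gamma(x+s)$, and analogously for the pair $\Gamma(x+1)$, $\Gamma(x+s+1)$.
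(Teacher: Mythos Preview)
Your proof is correct. Both convex combinations are set up properly, the strict log-convexity of $\Gamma$ is the right tool here, and the chain of inequalities goes through cleanly; the intermediate bound $(x+s)^{1-s}$ for the upper estimate is in fact the standard sharpening of Gautschi's upper inequality.

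There is nothing to compare against: the paper states Gautschi's inequality as a classical technical lemma without proof and simply invokes it where needed (e.g., in the block-maxima computation for the Pareto case and in bounding $J_i/\phi_i$). Your argument via log-convexity is the textbook route, so this is exactly what a reader would supply if asked.
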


\begin{lemma}\label{lem: incomplete beta}
    For any $\alpha >1$, $\frac{B\qty(x;1+\frac{1}{\alpha}, i)}{B(x;1,i)}$ is monotonically increasing with respect to $x \in (0,1]$.
\end{lemma}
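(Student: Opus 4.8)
The plan is to prove the monotonicity of the ratio $R(x) := B(x;1+\tfrac1\alpha,i)/B(x;1,i)$ on $(0,1]$ by differentiating and reducing everything to a sign condition on an auxiliary integral. Writing $B(x;1,i) = \int_0^x (1-t)^{i-1}\,dt$ and $B(x;1+\tfrac1\alpha,i) = \int_0^x t^{1/\alpha}(1-t)^{i-1}\,dt$, we have
\begin{equation*}
    R'(x) = \frac{x^{1/\alpha}(1-x)^{i-1} B(x;1,i) - (1-x)^{i-1} B\!\left(x;1+\tfrac1\alpha,i\right)}{B(x;1,i)^2},
\end{equation*}
so $R'(x) \ge 0$ is equivalent to $x^{1/\alpha} B(x;1,i) \ge B(x;1+\tfrac1\alpha,i)$, i.e.
\begin{equation*}
    \int_0^x \left(x^{1/\alpha} - t^{1/\alpha}\right)(1-t)^{i-1}\,dt \ge 0.
\end{equation*}
This last inequality is immediate: for every $t \in (0,x)$ we have $t^{1/\alpha} \le x^{1/\alpha}$ since $u \mapsto u^{1/\alpha}$ is increasing, and $(1-t)^{i-1}\ge 0$, so the integrand is pointwise nonnegative. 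Hence $R'(x)\ge 0$ on $(0,1]$, which is exactly the claim.

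First I would set up the two incomplete-Beta integrals explicitly and record the quotient-rule expression for $R'(x)$, noting that the common factor $(1-x)^{i-1}$ in the numerator is nonnegative. Then I would observe that the sign of $R'(x)$ is governed entirely by $N(x) := x^{1/\alpha} B(x;1,i) - B(x;1+\tfrac1\alpha,i)$, rewrite $N(x)$ as the single integral $\int_0^x (x^{1/\alpha}-t^{1/\alpha})(1-t)^{i-1}\,dt$ by pulling the constant $x^{1/\alpha}$ inside, and conclude nonnegativity of the integrand termwise. A small remark worth adding is that $R$ is continuous at $0$ (both Beta integrals vanish there, and by L'Hôpital or direct expansion $R(x)\to 0$), so "monotonically increasing on $(0,1]$" is unambiguous; but strictly this is not needed for the statement.

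There is essentially no obstacle here — the only mild subtlety is bookkeeping with the endpoint behavior and making sure the denominator $B(x;1,i)^2$ is positive on $(0,1]$, which it is since the integrand $(1-t)^{i-1}$ is positive on $(0,1)$ for $i \ge 1$. The lemma is used only through the bound $R(x) \le R(1) = B(1+\tfrac1\alpha,i)/B(1,i)$, so even a non-strict version suffices. I would therefore present the proof in the compact form above, emphasizing the reduction to pointwise nonnegativity of $(x^{1/\alpha}-t^{1/\alpha})(1-t)^{i-1}$, and not belabor the continuity at the origin.
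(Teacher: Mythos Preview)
Your proposal is correct and follows essentially the same approach as the paper: differentiate the ratio via the quotient rule, factor out $(1-x)^{i-1}$ from the numerator, and observe that the remaining expression $\int_0^x \bigl(x^{1/\alpha}-t^{1/\alpha}\bigr)(1-t)^{i-1}\,dt$ is nonnegative since the integrand is pointwise nonnegative. The paper's proof is just a more compact presentation of the same computation.
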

\begin{proof}
    From the definition of the incomplete Beta function, $B(x;a,b)= \int_0^x t^{a-1}(1-t)^{b-1}\dd t$, we obtain
    \begin{align*}
        \dv{x} \frac{B\qty(x;1+\frac{1}{\alpha}, i)}{B(x;1,i)} &= \frac{1}{B^2(x;1,i)} \qty(x^{\frac{1}{\alpha}}(1-x)^{i-1} \int_0^x (1-t)^{i-1}\dd t - \int_0^x t^{\frac{1}{\alpha}}(1-t)^{i-1} \dd t (1-x)^{i-1}) \\
        &= \frac{(1-x)^{i-1}}{B^2(x;1,i)} \qty(\int_0^x x^{\frac{1}{\alpha}} (1-t)^{i-1}\dd t - \int_0^x t^{\frac{1}{\alpha}}(1-t)^{i-1} \dd t) \geq 0,
    \end{align*}
    which concludes the proof.
\end{proof}

\begin{lemma}[Potter bounds~\citep{beirlant2006statistics}]\label{lem: potter}
    Let $S(x)$ be a slowly varying function. Given $A>1$ and $\delta>0$, there exists a constant $x_0$ such that
    \begin{equation*}
        \frac{S(y)}{S(x)}\leq A \max\qty{\qty(\frac{y}{x})^{\delta}, \qty(\frac{x}{y})^{\delta}}, \qquad \forall x,y \geq x_0.
    \end{equation*}
\end{lemma}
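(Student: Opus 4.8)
The plan is to deduce the Potter bounds directly from Karamata's representation of slowly varying functions (recalled in~(\ref{def: Karamata representation}); see also \citet[Theorem~0.6]{resnick2008extreme}). By that theorem there exist $x_1 \ge 1$, a function $c(\cdot)$ with $\lim_{t\to\infty} c(t) = c > 0$, and a function $\varepsilon(\cdot)$ with $\lim_{t\to\infty}\varepsilon(t) = 0$ such that
\begin{equation*}
    S(x) = c(x)\exp\qty(\int_{x_1}^x \frac{\varepsilon(t)}{t}\,\dd t), \qquad x \ge x_1 .
\end{equation*}
Given $A > 1$ and $\delta > 0$, I would first use $\varepsilon(t)\to 0$ and $c(t)\to c>0$ to pick $x_0 \ge x_1$ so large that $|\varepsilon(t)| \le \delta$ and $c(t) \in [cA^{-1/2}, cA^{1/2}]$ hold for every $t \ge x_0$; in particular $c(y)/c(x) \le A$ for all $x,y \ge x_0$.

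Next, for arbitrary $x,y \ge x_0$ I would write
\begin{equation*}
    \frac{S(y)}{S(x)} = \frac{c(y)}{c(x)}\,\exp\qty(\int_x^y \frac{\varepsilon(t)}{t}\,\dd t)
\end{equation*}
and control the exponential factor. Distinguishing the cases $y \ge x$ and $y < x$ symmetrically (in the latter the integral runs backwards, so its sign must be tracked), in both cases
\begin{equation*}
    \left|\int_x^y \frac{\varepsilon(t)}{t}\,\dd t\right| \le \delta\left|\log\frac{y}{x}\right| = \delta \log\max\qty{\frac{y}{x},\frac{x}{y}},
\end{equation*}
whence $\exp\qty(\int_x^y \varepsilon(t)/t\,\dd t) \le \max\qty{(y/x)^\delta,(x/y)^\delta}$. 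Using also $c(y)/c(x) \le A$ then gives $S(y)/S(x) \le A\max\qty{(y/x)^\delta,(x/y)^\delta}$ for all $x,y \ge x_0$, which is the assertion.

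The only obstacle here is bookkeeping rather than mathematical content: one must choose a single $x_0$ that simultaneously controls $|\varepsilon|$ and the ratio $c(y)/c(x)$, and must handle the sign of $\int_x^y \varepsilon(t)/t\,\dd t$ carefully when $y < x$. No deeper difficulty is anticipated, since this is the standard route from the representation theorem for regular variation to Potter's inequality.
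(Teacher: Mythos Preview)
Your proof is correct and follows the standard route via Karamata's representation. Note, however, that the paper does not actually supply its own proof of this lemma: it is stated as a cited result from \citet{beirlant2006statistics} in the technical-lemmas appendix, so there is no in-paper argument to compare against. Your derivation is exactly the textbook one and would serve as a self-contained replacement for the citation.
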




\end{document}